\documentclass{article}
\PassOptionsToPackage{square,numbers,sort&compress}{natbib}
 \usepackage[preprint]{neurips_2026}

\usepackage[utf8]{inputenc} 
\usepackage[T1]{fontenc}    
\usepackage{hyperref}       
\usepackage{url}            
\usepackage{booktabs}       
\usepackage{amsfonts}       
\usepackage{nicefrac}       
\usepackage{microtype}      
\usepackage{xcolor}         

\usepackage{algorithm, algorithmic}
\usepackage{amsmath}
\usepackage{amssymb}
\usepackage{mathtools}
\usepackage{amsthm}
\usepackage{makecell}
\usepackage{placeins}

\theoremstyle{plain}
\newtheorem{theorem}{Theorem}[section]
\newtheorem{proposition}[theorem]{Proposition}

\theoremstyle{definition}

\newtheorem{example}{Example}
\theoremstyle{remark}

\usepackage{bbm}

\title{Audited Conformal Prediction for Classification under Unknown Distribution Shift}

\author{%
  Yanfei Zhou \\
  Department of Data Sciences and Operations\\
  University of Southern California\\
  Los Angeles, CA 90089 \\
  \texttt{yanfeizh@usc.edu} \\
  \And
  Rizal Fathony\\
  AI Foundations, Capital One \\
  New York, NY 10017 \\
  \texttt{rizal.fathony@capitalone.com}\\
  \And
  Nam H. Nguyen\\
  AI Foundations, Capital One \\
  New York, NY 10017 \\
  \texttt{nam.nguyen@capitalone.com}\\
  \And
  Matteo Sesia \\
  Department of Data Sciences and Operations\\
  Thomas Lord Department of Computer Science\\
  University of Southern California\\
  Los Angeles, CA 90089 \\
  \texttt{sesia@marshall.usc.edu} \\
}

\begin{document}

\maketitle

\begin{abstract}
We consider the problem of uncertainty quantification for a pretrained classification model deployed under unknown distribution shift. We propose Audited Conformal Prediction (ACP), a method that leverages a small labeled dataset from the target population to train an auxiliary \emph{audit} model identifying inputs where the legacy model is likely to fail. By integrating the audit model's outputs into the conformal prediction framework, ACP produces prediction sets that guarantee marginal coverage while achieving substantially higher conditional coverage in practice than existing approaches. We develop and analyze two complementary integration strategies---one targeting marginal coverage with improved conditional performance, the other providing explicit group-conditional coverage guarantees---and establish theoretical guarantees for both. Experiments on synthetic and real-world datasets validate the method and illustrate trade-offs between prediction set size and conditional coverage.
\end{abstract}

\section{Introduction}\label{sec:introduction}
\subsection{Background and Motivation}

In real-world machine learning applications, considerable resources are often dedicated 
to training complex models on large historical datasets. However, the performance of 
these ``legacy'' models can degrade over time when deployed in dynamic environments 
where data distributions shift unpredictably \citep{Nikolaidis2022, Finlayson2021, 
park2021reliable}. A clear example is fraud detection, where fraudulent tactics 
continuously evolve to circumvent detection, causing the underlying data distribution 
to change over time in a nearly adversarial way.
Such distribution shifts challenge not only predictive accuracy 
\citep{candela2009datashift, KARIMIAN2023119946}, but also the reliability of 
uncertainty estimates \citep{ovadia2019trust, Silva2025conformalshift}. While 
regularly retraining on fresh data can mitigate this, labeling new data tends to be 
slow and resource-intensive. In practice, newly labeled data is often scarce relative 
to the large historical datasets used to train the legacy model. Efficiently adapting 
models to distribution shift with limited new labeled samples is therefore an important 
challenge.

While significant prior work has focused on preserving predictive accuracy after 
distribution shifts \citep{TUDORAN2024114527, JACKSON2023103360, masashi2007covariateshift, 
ganin2016dann, long2018conditional}, this paper focuses on maintaining reliable 
uncertainty estimates for classification models via \emph{conformal prediction}.
Conformal prediction \citep{vovk2005algorithmic, shafer2008tutorial, lei2014distribution} 
is a statistical framework for quantifying predictive uncertainty through prediction 
sets with finite-sample coverage guarantees. Set size reflects input-dependent 
uncertainty: smaller sets indicate higher confidence, larger sets greater uncertainty. 
The variant most relevant here is \emph{split} conformal prediction 
\citep{papadopoulos2002inductive}, which calibrates prediction sets using a fixed, 
pretrained model evaluated on an independent dataset from the target population.
Although standard conformal methods guarantee exact marginal coverage regardless of 
model quality, the per-sample quality of prediction sets can degrade when the training 
and calibration distributions differ. Without mitigation, some test instances may 
receive overly conservative prediction sets while others are systematically 
undercovered---reducing the practical usefulness of these uncertainty estimates.

\subsection{Preview of Contributions}

While retraining a legacy model on target-population data may seem intuitive, 
it can be suboptimal when the model is complex and new data are limited. 
We propose an alternative that can leverage limited new labeled data more effectively: 
rather than retraining the model, we introduce an auxiliary \emph{audit} 
model---a binary classifier designed to predict the reliability of the legacy model on 
new inputs. This task is simpler than full retraining and, as we 
demonstrate, sufficient for our purposes.

The audit model offers several key advantages. First, it can leverage information 
already encoded by the legacy model (e.g., logits, confidence patterns, or intermediate 
features). Second, even an imperfect or overconfident audit model can provide useful 
directional signals about legacy model reliability. Crucially, the audit model directly 
targets the core issue---predicting when the legacy model is likely to fail---without 
requiring any assumption about the distribution shift. Unlike methods 
focused solely on detecting distribution shift (which may not always affect model 
performance equally), our focus is on identifying failures that may stem from, but are 
not limited to, such shifts. We illustrate these points with examples and 
empirical evidence in Section~\ref{sec:experiments} and 
Appendix~\ref{app:add_motivating_examples}.

We refer to our approach as \emph{Audited Conformal Prediction} (ACP); a schematic 
workflow is provided in Figure~\ref{fig:AuditedCP_pipeline}. While prior work has 
explored training effective audit models, our key contribution is integrating audit 
model predictions within the conformal prediction framework to balance reliability and 
efficiency---improving conditional coverage while maintaining small, informative 
prediction sets. We validate ACP through extensive numerical experiments in 
Section~\ref{sec:experiments}.

\begin{figure}[!htb]
    \centering
    \includegraphics[width=0.9\linewidth]{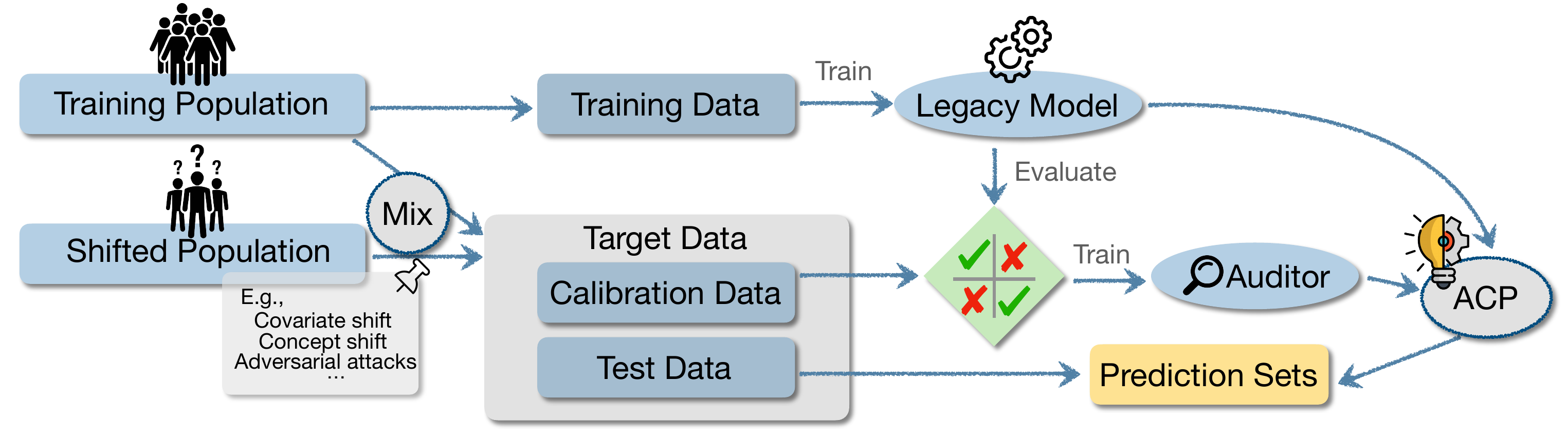}
\caption{Schematic overview of the ACP method for uncertainty-aware classification in the presence of distributional shifts. The method integrates the predictions of a pre-trained legacy model with those of an audit model designed to predict the reliability of the former on new instances.}
    \label{fig:AuditedCP_pipeline}
\end{figure}

\subsection{Related Work}

Conformal prediction is a broad area concerned with uncertainty quantification for 
black-box models, with applications in classification \citep{lei2013distribution, 
sadinle2019least, romano2020classification}, regression \citep{lei2014distribution, 
romano2019conformalized}, and outlier detection \citep{bates2021testing}; see 
\cite{sesia2026elements} for a review. Standard formulations require labeled 
calibration data exchangeable with test data, though recent work relaxes this 
assumption \citep{tibshirani2019conformal, barber2023beyond, gibbs2021adaptive, 
xu2021conformal, yang2022doublyrobust}. While conformal methods guarantee exact 
marginal coverage regardless of model quality, prediction sets can become uninformative 
or unreliable when the underlying model is inaccurate or overconfident 
\citep{kaur2025conformalpredictionsetsimproved, NEURIPS2024_c1c49aba}.

This limitation has motivated work along two main directions: integrating conformal 
ideas into model training to mitigate overconfidence \citep{colombo2020training, 
bellotti2021optimized, stutz2022learning, Einbinder2022, liu2025cadapter, 
bian2023training, xie2024boosted}, and providing conditional guarantees beyond 
marginal coverage for a pretrained model \citep{vovk2003mondrian, 
romano2019malice, Barber2019TheLO, jung2023batch, zhou2024conformal, 
gibbs2025conditional, jung2025speedcpfastkernelbasedconditional}, including 
group-conditional coverage informed by model uncertainty 
\citep{kaur2025conformalpredictionsetsimproved}. In contrast, our method trains an 
audit model on new data from the target distribution, providing a direct 
reliability measure without requiring access to the original training data.
This approach also distinguishes our work from conformal methods that aggregate predictions from 
multiple models or ensembles \citep{carlsson2014aggregated, linusson2020efficient, 
yang2021finite, liang2023ces}: our two models play complementary roles, with one 
predicting labels of interest and the other explicitly designed to anticipate failures of the 
former.

Beyond conformal prediction, our work connects to several broader lines of research. 
One direction addresses performance degradation via transfer learning and fine-tuning 
\citep{Jason2014transfer, kumar2022finetuning}, or by augmenting models with 
predictive abstention \citep{geifman2017selective, geifman19a}. Related approaches train auxiliary predictors 
to identify likely misclassifications from intermediate representations 
\citep{corbiere2019addressing, granese2021doctor}, or use nonparametric confidence 
measures such as trust scores \citep{jiang2018trust}. Another line studies learning 
under structured distribution shifts \citep{masashi2007covariateshift, 
ben-david2006analysis, ganin2016dann, long2018conditional, redko2019optimal}, with 
recent analyses highlighting potential misalignment between standard calibration 
objectives and failure prediction under shift \citep{zhu2023rethinking}. Finally, 
post-hoc calibration methods such as Platt scaling and temperature scaling 
\citep{platt1999probabilistic, guo2017calibration} serve as natural baselines in our 
experiments.

\section{Technical Preliminaries}\label{sec:background}

\subsection{Marginal vs. Conditional Coverage in Conformal Classification}

We consider a dataset $\mathcal{D} = \{(X_i, Y_i)\}_{i=1}^{n+1}$ containing $n+1$ exchangeable (e.g., i.i.d.) samples from an arbitrary \emph{target} distribution $P_T$. 
For each individual $i \in [n+1] := \{1,\ldots,n+1\}$, $X_i \in \mathcal{X}$ is a (possibly high-dimensional) feature vector and $Y_i \in [K] := \{1, \dots, K\}$ a categorical label. The first $n$ samples are fully observed, comprising the calibration dataset $\mathcal{D}_{\text{cal}} = \{(X_i, Y_i)\}_{i=1}^{n}$, while the test label $Y_{n+1}$ is unobserved. Our goal is to predict $Y_{n+1}$ as a function of $X_{n+1}$.

Let $f_0$ denote a classification model (e.g., a logistic regression, random forest, 
or deep neural network) pretrained on a dataset from a source distribution $P_S$ that 
may differ arbitrarily from $P_T$. We assume only that $f_0$ is fixed and outputs an 
estimated conditional distribution $\hat{p}(\cdot \mid X) := f_0(X)$ over labels.
Split conformal prediction quantifies the predictive uncertainty of $f_0$ for 
$Y_{n+1}$ by constructing a prediction set $\hat{C}(X_{n+1}) \subseteq [K]$ as 
follows. For each calibration point $(X_i, Y_i)$, one computes a \emph{nonconformity 
score} $E_i = E(X_i, Y_i, f_0(X_i))$, using a suitable function $E$, so that higher values suggest 
 $Y_i$ is more atypical given $X_i$. For a desired 
miscoverage level $\alpha \in [0,1]$, one computes the threshold
$\hat{q} = \lceil(1-\alpha)(n+1)\rceil$-th smallest value in $\{E_1, \ldots, E_n, +\infty\}$. The prediction set for a new test point $X_{n+1}$ then includes all labels 
whose nonconformity score falls below this threshold:
\begin{equation}\label{eq:predset}
    \hat{C}(X_{n+1}) = \big\{ y \in [K] : E(X_{n+1}, y, f_0(X_{n+1})) \leq \hat{q} \big\}.
\end{equation}
Although several nonconformity score functions $E$ are possible, our experiments adopt the adaptive 
score of \cite{romano2020classification}, reviewed in Appendix~\ref{app:aps_review}. 
By exchangeability of $\mathcal{D}$, this construction guarantees \emph{marginal 
coverage} at level $1-\alpha$:
\begin{equation}\label{eq:marginal_coverage}
    \mathbb{P}\big[ Y_{n+1} \in \hat{C}(X_{n+1}) \big] \geq 1-\alpha,
\end{equation}
where the probability above is taken over the randomness of all data in $\mathcal{D}$. 

Marginal coverage guarantees that the prediction set contains the true label with 
probability at least $1-\alpha$ on average over the target population, but imposes no 
constraints on how coverage is distributed across individual samples. As an 
illustration, consider a population where 90\% of samples were seen during training 
and the model has memorized them without generalizing. A nominal 90\% 
marginal coverage rate is then achieved trivially---by covering all seen samples while 
systematically failing on unseen ones---even though prediction sets are uninformative 
precisely where reliability matters most.

This limitation motivates stronger notions of coverage. An ideal target is 
\emph{feature-conditional coverage},
\begin{equation} \label{eq:feat-cond-coverage}
\mathbb{P}\big[ Y_{n+1} \in \hat{C}(X_{n+1}) \mid X_{n+1} = x \big] \geq 1-\alpha, 
\qquad \forall x \in \mathcal{X},
\end{equation}
which ensures consistent coverage for every possible test input. However, this is 
generally impossible to achieve exactly in finite samples without strong assumptions 
on the data-generating process \citep{barber2019limits} or severe restrictions on 
$\mathcal{X}$ \citep{lee2021distribution}, short of trivially large prediction sets.

A practical compromise is \emph{group-conditional coverage},
\begin{equation}\label{eq:group-cond-coverage}
    \mathbb{P}\big[ Y_{n+1} \in \hat{C}(X_{n+1}) \,\big|\, X_{n+1}\in G \big] \geq 
    1-\alpha, \qquad \forall G \in \mathcal{G},
\end{equation}
for a pre-specified family of groups $\mathcal{G} \subseteq 2^{\mathcal{X}}$. 
\citet{romano2019malice} partition $\mathcal{X}$ into disjoint groups based on 
protected categories, motivated by fairness considerations; this is extended by 
\cite{zhou2024conformal, jin2025selection} to data-driven groups and by 
\cite{gibbs2025conditional} to overlapping groups. Most relevant to our work, 
\cite{kaur2025conformalpredictionsetsimproved} target \eqref{eq:group-cond-coverage} 
for groups defined by uncertainty estimates from the legacy model itself, but assume 
access to the original training data and that the legacy model remains reliable under 
$P_T$. Our approach relaxes both assumptions: we train an audit model on recent 
labeled data from the target distribution to identify inputs where the legacy model is 
likely to fail, without requiring access to the original training data. Moreover, 
rather than enforcing group-conditional coverage for pre-defined groups, we maintain 
marginal coverage guarantees while targeting strong feature-conditional performance in 
practice.

\subsection{Audited Conditional Coverage}\label{sec:approx_cond_cov}

Our goal is to construct conformal prediction sets that achieve high conditional 
coverage across subpopulations with varying levels of legacy model accuracy under 
$P_T$: prediction sets that adaptively expand for test instances where $f_0$ is less 
reliable, while guaranteeing marginal coverage at level $1-\alpha$ and remaining 
maximally informative on average.

To formalize this, we introduce an \emph{oracle audit model} $r^*: \mathcal{X} 
\to [0,1]$, which computes the true conditional probability that the point prediction 
$f_0^{\text{pt}}(X)$ is correct for a new draw $(X,Y) \sim P_T$:
\begin{equation}\label{eq:oracle_audit}
    r^*(x) \;:=\; \mathbb{P}\left[ f_0^{\text{pt}}(X)=Y \mid X = x \right].
\end{equation}
Larger $r^*(x)$ indicates greater reliability of $f_0$ at $X=x$ under $P_T$.
Let $\mathcal{R}$ be a partition of $[0,1]$ into a finite number of (possibly 
overlapping) bins. We define \emph{audited conditional coverage} as:
\begin{equation}\label{eq:audited-cond-coverage}
    \mathbb{P}\big[ Y_{n+1} \in \hat{C}(X_{n+1}) \mid r^*(X_{n+1}) \in R \big] \geq 
    1-\alpha, \qquad \forall R \in \mathcal{R}.
\end{equation}
As shown in Proposition~\ref{prop:implication} (Appendix~\ref{app:proofs}), this is weaker than feature-conditional coverage~\eqref{eq:feat-cond-coverage}.

Conditioning on $r^*(X)$ provides a meaningful dimension reduction: it 
maps the potentially high-dimensional space $\mathcal{X}$ to a scalar that directly 
reflects the predictive reliability of $f_0$, simultaneously capturing intrinsic data 
noise, model limitations, and distribution shift. The granularity of $\mathcal{R}$ 
induces a natural trade-off: finer partitions yield stronger guarantees closer to 
feature-conditional coverage but reduce per-bin sample sizes, leading to more 
conservative prediction sets; coarser partitions tend to produce smaller, 
more efficient sets, but with guarantees closer to marginal coverage.

In practice, the oracle audit model is unavailable since $P_T$ is unknown; 
accordingly, we estimate it from observed data, as described in the next section.

\section{Methodology}\label{sec:methodology}

The proposed ACP method approximates the oracle audit model using a practical audit 
model trained on a subset of $\mathcal{D}_{\text{cal}}$, whose output is then 
integrated with that of the legacy model to construct conformal prediction sets that 
guarantee marginal coverage while approximately satisfying~\eqref{eq:audited-cond-coverage}.

\subsection{Training an Audit Model}\label{sec:train_correctness}

We randomly split $\mathcal{D}_{\text{cal}}$ into two disjoint subsets 
$\mathcal{D}_{\text{cal}}^1$ and $\mathcal{D}_{\text{cal}}^2$, and approximate 
$r^*(x)$ by training $\hat{r} : \mathcal{X} \to [0,1]$ on $\mathcal{D}_{\text{cal}}^1$, 
possibly augmented with features extracted from $f_0$. Specifically, for each 
$(X_i, Y_i) \in \mathcal{D}_{\text{cal}}^1$ we form the binary label
$R_i = \mathbbm{1}[ f_0^{\text{pt}}(X_i) = Y_i]$,
and fit $\hat{r}$ to predict $R_i$ via standard supervised learning (e.g., minimizing 
cross-entropy). Training an auxiliary model to predict base classifier correctness is 
well-established in confidence estimation and selective classification 
\citep{GeifmanElYaniv2017, CattelanSilva2024FixConfidence, ZhuEtAl2022FailurePred}.

The input features for $\hat{r}$ are flexible; we follow common practice by using 
signals correlated with prediction reliability, including: (i) the original covariates 
$X_i$; (ii) distribution proximity measures quantifying how far $X_i$ deviates from 
the training distribution, such as distance-based metrics or trust scores 
\citep{JiangEtAl2018TrustScore}, when a reference set is available; (iii) legacy 
model predictive signals such as maximum softmax probability, entropy, or top-two 
probability margin \citep{HendrycksGimpel2016}; and (iv) model-specific signals such 
as ensemble disagreement \citep{LakshminarayananEtAl2016DeepEnsembles}, OOD scores 
from logits and temperature scaling \citep{LiangEtAl2018ODIN}, or penultimate-layer 
representations.

The architecture of $\hat{r}$ may similarly be chosen based on available features and 
computational budget, ranging from logistic regression and tree-based models (random 
forests, gradient boosting) to neural networks \citep{GeifmanElYaniv2017}. 
Importantly, training $\hat{r}$ requires no modification to $f_0$ and no assumptions 
on the structure of the distribution shift. While a strong audit model improves 
practical effectiveness, our primary contribution is not the design of a new learning 
procedure for $\hat{r}$, but rather its integration into the conformal classification 
framework, described next.

\subsection{Integrating an Audit Model into Conformal Predictions}\label{sec:integration}

\subsubsection{Calibrating an Audited Probabilistic Classifier for Marginal Coverage}\label{sec:ACP-MC}

If the conditional label distribution under $P_T$ were known, existing conformal 
methods would already yield the smallest possible prediction sets with 
feature-conditional coverage \citep{romano2020classification}. This motivates 
leveraging $\hat{r}$ to refine the legacy model's estimate of that distribution, 
which is then calibrated via standard conformal prediction to guarantee marginal 
coverage.

We focus here on binary classification ($K=2$, $Y \in \{1,2\}$), deferring the 
multi-class extension to Appendix~\ref{app:APA_mutlclass}. For any $x \in \mathcal{X}$, define
\begin{equation}\label{eq:binary_comb_rule}
    m(x; \hat{r}, f_0) \;:=\;
    \begin{cases}
        \big(\hat{r}(x),\; 1-\hat{r}(x)\big), & \text{if } f_0^{\text{pt}}(x)=1,\\[4pt]
        \big(1-\hat{r}(x),\; \hat{r}(x)\big), & \text{if } f_0^{\text{pt}}(x)=2.
    \end{cases}
\end{equation}
This gives an estimate of $P_T(Y \mid X=x)$ that is consistent whenever $\hat{r}$ 
consistently estimates $r^*$.
\begin{proposition}\label{prop:consistency_ACP-MC_binary}
In the binary setting, let $m(x;\hat{r}_N,f_0)$ be as in~\eqref{eq:binary_comb_rule}, 
and suppose $\hat{r}_N$, trained on $N$ i.i.d.\ samples from $P_T$, satisfies 
$\hat{r}_N(x)\xrightarrow{p} r^*(x)$ for all $x \in \mathcal{X}$ as $N \to \infty$. Then, for any 
fixed $f_0$,
\[
    m(x;\hat{r}_N,f_0)\xrightarrow{p} P_T(Y\mid X=x),
\]
where $P_T(Y\mid X=x) = \left( \mathbb{P}[Y=1\mid X=x], \mathbb{P}[Y=2\mid X=x] \right)$ under $P_T$.
\end{proposition}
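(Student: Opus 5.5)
Fix an arbitrary $x \in \mathcal{X}$. The approach is a case split on the (deterministic) value of the legacy point prediction $f_0^{\text{pt}}(x)$, followed by the continuous mapping theorem. The key observation is that, because $f_0$ is fixed and does not depend on the $N$ training samples, $f_0^{\text{pt}}(x) \in \{1,2\}$ is a constant. Consequently, $m(x;\hat{r}_N,f_0)$ is a fixed affine function of the scalar random variable $\hat{r}_N(x)$: either $g_1(t) = (t, 1-t)$ or $g_2(t) = (1-t, t)$.

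The first step is to identify the oracle target in each case. Suppose $f_0^{\text{pt}}(x)=1$. The event $\{f_0^{\text{pt}}(X)=Y\}$ conditional on $X=x$ coincides with $\{Y=1\}$, so by the definition \eqref{eq:oracle_audit}, $r^*(x) = \mathbb{P}[Y=1\mid X=x]$. Since $K=2$, it follows that $1-r^*(x) = \mathbb{P}[Y=2\mid X=x]$. Hence $g_1(r^*(x)) = P_T(Y\mid X=x)$. Symmetrically, if $f_0^{\text{pt}}(x)=2$, then $r^*(x)=\mathbb{P}[Y=2\mid X=x]$, and $g_2(r^*(x)) = P_T(Y\mid X=x)$. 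In both cases, therefore, the combination rule \eqref{eq:binary_comb_rule} applied to the oracle $r^*$ reproduces the target conditional distribution exactly.

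The second step transfers this identity to $\hat{r}_N$. By hypothesis $\hat{r}_N(x) \xrightarrow{p} r^*(x)$, and $g_j$ is continuous (indeed $1$-Lipschitz in each coordinate). By the continuous mapping theorem, $g_j(\hat{r}_N(x)) \xrightarrow{p} g_j(r^*(x)) = P_T(Y\mid X=x)$. One can also verify this directly: both coordinates of $m(x;\hat{r}_N,f_0) - P_T(Y\mid X=x)$ equal $\pm(\hat{r}_N(x) - r^*(x))$. Hence, for any $\epsilon>0$, the probability that the Euclidean distance exceeds $\epsilon$ is bounded by $\mathbb{P}[|\hat{r}_N(x)-r^*(x)|>\epsilon/\sqrt{2}] \to 0$.

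There is no real obstacle. The only point needing care is the identification step: we must use that conditioning on $X=x$ makes $f_0^{\text{pt}}(X)$ equal to the constant $f_0^{\text{pt}}(x)$, which holds because $f_0$ is fixed and independent of $(X,Y)$, with tie-breaking for $f_0^{\text{pt}}$ deterministic. We must also use that the binary label space makes "correct" and "$Y$ equals the predicted class" the same event, with the complement being the other class.
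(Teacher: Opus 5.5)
Your proposal is correct and follows essentially the same route as the paper. Both arguments note that $r^*(x)$ is the conditional probability of the predicted class, so that the rule \eqref{eq:binary_comb_rule} with the oracle $r^*$ reproduces $P_T(Y\mid X=x)$, and then apply the continuous mapping theorem to the fixed affine map $t\mapsto(t,1-t)$ or $t\mapsto(1-t,t)$. Your explicit bound, that the error $m(x;\hat{r}_N,f_0)-P_T(Y\mid X=x)$ has Euclidean norm $\sqrt{2}\,|\hat{r}_N(x)-r^*(x)|$, is a valid direct substitute for the continuous mapping theorem.
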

\noindent A proof is given in Appendix~\ref{app:proofs}.

Using $m(x;\hat{r},f_0)$ in place of $f_0$ as an estimate of $P_T(Y \mid X=x)$, prediction 
sets with guaranteed marginal coverage are constructed by applying the standard 
conformal approach (Appendix~\ref{app:aps_review}) to the remaining calibration data 
$\mathcal{D}_{\text{cal}}^2$. Algorithm~\ref{alg:ACP_ACP-MC_binary} summarizes the 
full procedure.

Relative to using $f_0$ directly, this approach lets $\hat{r}$ mitigate 
overconfidence or underconfidence and partially correct for distribution shift. 
Although audited conditional coverage~\eqref{eq:audited-cond-coverage} is not 
explicitly guaranteed, empirical results in Section~\ref{sec:experiments} show this method
substantially improves conditional coverage for the more challenging test points while 
maintaining informative prediction sets.

\begin{algorithm}[!htb]
    \caption{ACP for Binary Classification with Audited Probabilistic Classifiers}
    \label{alg:ACP_ACP-MC_binary}
    \begin{algorithmic}[1]
        \STATE \textbf{Input}: calibration data $\mathcal{D}_{\text{cal}}$; test 
        features $X_{n+1}$; pretrained legacy model $f_0$; score function $E$; 
        level $\alpha \in (0,1)$.
        \STATE Randomly split $\mathcal{D}_{\text{cal}}$ into $\mathcal{D}_{\text{cal}}^1$ 
        and $\mathcal{D}_{\text{cal}}^2$.
        \STATE Compute $R_i = \mathbbm{1}[f_0^{\text{pt}}(X_i)=Y_i]$ for 
        $i \in \mathcal{D}_{\text{cal}}^1$.
        \STATE Train $\hat{r}$ on $\{(\tilde{X}_i, R_i)\}_{i \in \mathcal{D}_{\text{cal}}^1}$, 
        where $\tilde{X}_i$ aggregates $X_i$ and features derived from $f_0$ 
        (Sec.~\ref{sec:train_correctness}).
        \FOR{$i \in \mathcal{D}_{\text{cal}}^2$}
            \STATE Compute $m(X_i;\hat{r},f_0)$ via~\eqref{eq:binary_comb_rule} and 
            nonconformity score $E_i = E(X_i, Y_i, m(X_i;\hat{r},f_0))$.
        \ENDFOR
        \STATE Set $\hat{q}$ to the $\lceil(1-\alpha)(1+|\mathcal{D}_{\text{cal}}^2|)\rceil$-th 
        smallest value of $\{E_1,\ldots,E_{|\mathcal{D}_{\text{cal}}^2|},\infty\}$.
        \STATE \textbf{Output}: $\hat{C}(X_{n+1}) = \{y : E(X_{n+1}, y, 
        m(X_{n+1};\hat{r},f_0)) \leq \hat{q}\}$.
    \end{algorithmic}
\end{algorithm}

\subsubsection{Conformal Prediction with Audited Conditional Coverage}\label{sec:ACP-ACC}

We now present an alternative approach that provides guarantees 
beyond marginal coverage. The key idea is to partition the feature space by binning 
the values of the audit model $\hat{r}$, then apply conformal 
calibration on $\mathcal{D}_{\text{cal}}^2$ to achieve audited conditional 
coverage~\eqref{eq:audited-cond-coverage} with $r^*$ replaced by $\hat{r}$.

Given a finite threshold set $\mathcal{T} = \{t_1 < \cdots < t_m\} \subset (0,1)$, 
define the binary group indicator
\begin{equation}\label{eq:group_indicator}
    \hat{G}_j(x) = \mathbbm{1}\{\hat{r}(x) \le t_j\},
\end{equation}
where $\hat{G}_j(x)=1$ and $\hat{G}_j(x)=0$ correspond to low- and high-confidence 
regions of $f_0$ at threshold $t_j$. Our goal is to construct $\hat{C}(X_{n+1})$ 
satisfying
\begin{equation}\label{eq:audit-eq-cover-2}
    \mathbb{P}\big[Y_{n+1}\in\hat{C}(X_{n+1})\mid\hat{G}_j(X_{n+1})=g\big]\ge 1-\alpha,
    \qquad \forall j\in[m],\; g\in\{0,1\}.
\end{equation}
For example, $\mathcal{T}=\{0.5,0.7,0.9\}$ yields group-conditional coverage over six 
partly overlapping groups \cite{gibbs2025conditional}. When $\mathcal{T}$ is a singleton, the problem reduces to 
equalized coverage over two disjoint groups \cite{vovk2003mondrian}, though prediction sets may be sensitive 
to the choice of threshold. In the limit of an infinitely fine grid, 
\eqref{eq:audit-eq-cover-2} becomes equivalent to coverage conditional on the exact 
value of $\hat{r}(X_{n+1})$---a one-dimensional reduction of full 
feature-conditional coverage~\eqref{eq:feat-cond-coverage}.

We construct prediction sets satisfying~\eqref{eq:audit-eq-cover-2} using the 
conditional conformal calibration strategy of \cite{gibbs2025conditional}, which 
fits an augmented quantile regression over the audit-induced group indicators. Define
\begin{equation}
    g_\beta(x) = \beta_0 + \sum_{j=1}^m \beta_j \hat{G}_j(x).
\end{equation}
For each candidate label $y$, let $E_i = E(X_i,Y_i,f_0(X_i))$ for $i\in 
\mathcal{D}_{\text{cal}}^2$ and $E_{n+1}(y)=E(X_{n+1},y,f_0(X_{n+1}))$. We solve
\begin{equation}\label{eq:condconf_opt}
    \hat\beta(y) = \arg\min_{\beta}\;\frac{1}{|\mathcal{D}_{\text{cal}}^2|+1}
    \left[\sum_{i\in\mathcal{D}_{\text{cal}}^2}
    \ell_\alpha\!\left(g_\beta(X_i),E_i\right)
    +\ell_\alpha\!\left(g_\beta(X_{n+1}),E_{n+1}(y)\right)\right],
\end{equation}
where $\ell_\alpha(g,e)=(1-\alpha)\max\{e-g,0\}+\alpha\max\{g-e,0\}$ is the pinball 
loss, and then output
\begin{equation}\label{eq:condconf_Pset}
    \hat{C}(X_{n+1}) = \big\{y : E_{n+1}(y)\le g_{\hat\beta(y)}(X_{n+1})\big\}.
\end{equation}
Algorithm~\ref{alg:ACP_condconf} summarizes the full procedure.

\begin{algorithm}[!htb]
    \caption{ACP with Conditional Conformal Calibration}
    \label{alg:ACP_condconf}
    \begin{algorithmic}[1]
        \STATE \textbf{Input}: calibration data $\mathcal{D}_{\text{cal}}$; test features $X_{n+1}$; 
        pretrained legacy model $f_0$; threshold set $\mathcal{T}$; score function $E$; 
        level $\alpha\in(0,1)$.
        \STATE Randomly split $\mathcal{D}_{\text{cal}}$ into $\mathcal{D}_{\text{cal}}^1$ and 
        $\mathcal{D}_{\text{cal}}^2$.
        \STATE Compute $R_i=\mathbbm{1}[f_0^{\text{pt}}(X_i)=Y_i]$ for 
        $i\in\mathcal{D}_{\text{cal}}^1$; train $\hat{r}$ on 
        $\{(\tilde{X}_i,R_i)\}$ (Section~\ref{sec:train_correctness}).
        \STATE Compute $E_i=E(X_i,Y_i,f_0(X_i))$ and 
        $\hat{G}_j(X_i)$~\eqref{eq:group_indicator} for $i\in\mathcal{D}_{\text{cal}}^2
        \cup\{n+1\}$, each $t_j\in\mathcal{T}$.
        \FOR{$y\in[K]$}
            \STATE Impute $E_{n+1}(y)=E(X_{n+1},y,f_0(X_{n+1}))$; solve 
            \eqref{eq:condconf_opt} for $\hat\beta(y)$.
        \ENDFOR
        \STATE \textbf{Output}: prediction set $\hat{C}(X_{n+1})$ via~\eqref{eq:condconf_Pset}.
    \end{algorithmic}
\end{algorithm}

If $\hat{r}$ estimates $r^*$ consistently, this method asymptotically achieves 
audited conditional coverage~\eqref{eq:audited-cond-coverage} for the partition 
$\mathcal{R}=\{[0,t_1],(t_1,1],\dots,[0,t_m],(t_m,1]\}$. The following theorem, 
proved in Appendix~\ref{app:proofs}, gives a finite-sample guarantee as a function 
of the discrepancy between $\hat{r}$ and $r^*$.

\begin{theorem}\label{thm:condconf_oracle}
Let $\hat{r}$ be a fixed audit model, $\hat{G}_j(x)=\mathbbm{1}[\hat{r}(x)\le t_j]$, 
and $G_j^*(x)=\mathbbm{1}[r^*(x)\le t_j]$ the corresponding oracle groups, for 
$\mathcal{T}=\{t_1,\dots,t_m\}$. Let $\hat{C}(X_{n+1})$ be the prediction set from 
Algorithm~\ref{alg:ACP_condconf}, and assume $\mathbb{P}[G_j^*(X_{n+1})=g]>0$ for 
all $j\in[m]$, $g\in\{0,1\}$. Then
\[
    \mathbb{P}\!\left[Y_{n+1}\in\hat{C}(X_{n+1})\mid G_j^*(X_{n+1})=g\right]
    \ge 1-\alpha
    -\frac{(1+\alpha)\,\mathbb{P}[\hat{G}_j(X_{n+1})\neq G_j^*(X_{n+1})]}
    {\mathbb{P}[G_j^*(X_{n+1})=g]}.
\]
Moreover, if $\hat{r}(x)\xrightarrow{p}r^*(x)$ for all $x$ and $\mathbb{P}[r^*(X_{n+1})
=t_j]=0$ for all $t_j\in\mathcal{T}$, then $\mathbb{P}[\hat{G}_j(X_{n+1})\neq 
G_j^*(X_{n+1})]\to 0$.
\end{theorem}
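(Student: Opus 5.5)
The plan is to start from the finite-sample guarantee of \cite{gibbs2025conditional} for the function class $\{g_\beta\}$. This class contains the indicators $\hat{G}_j$ and $1-\hat{G}_j$, since the intercept is included. Because $\hat{r}$ is fixed (trained on $\mathcal{D}_{\text{cal}}^1$, independent of $\mathcal{D}_{\text{cal}}^2\cup\{(X_{n+1},Y_{n+1})\}$), the points in $\mathcal{D}_{\text{cal}}^2\cup\{n+1\}$ remain exchangeable. Their result then gives, for every nonnegative $h$ in the class,
\[
\mathbb{E}\big[h(X_{n+1})\big(\mathbbm{1}\{Y_{n+1}\in\hat{C}(X_{n+1})\}-(1-\alpha)\big)\big]\ge 0 .
\]
Taking $h=\mathbbm{1}\{\hat{G}_j=g\}$ yields the estimated-group statement $\mathbb{P}[Y_{n+1}\in\hat{C}(X_{n+1}),\,\hat{G}_j(X_{n+1})=g]\ge(1-\alpha)\,\mathbb{P}[\hat{G}_j(X_{n+1})=g]$. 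This step comes essentially from the first-order (subgradient) optimality condition of the augmented quantile regression~\eqref{eq:condconf_opt} in the direction $h$, followed by exchangeability. I will cite it rather than reprove it.

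Next, I transfer the bound to the oracle groups. Write $A=\{Y_{n+1}\in\hat{C}(X_{n+1})\}$, $B^*=\{G_j^*(X_{n+1})=g\}$, $\hat{B}=\{\hat{G}_j(X_{n+1})=g\}$, and $D=\{\hat{G}_j\ne G_j^*\}$, so that $B^*\triangle\hat{B}\subseteq D$. Then
\[
\mathbb{P}[A\cap B^*]\ge \mathbb{P}[A\cap\hat{B}]-\mathbb{P}[\hat{B}\setminus B^*]\ge (1-\alpha)\mathbb{P}[\hat{B}]-\mathbb{P}[\hat{B}\setminus B^*].
\]
Also $\mathbb{P}[\hat{B}]\ge\mathbb{P}[B^*]-\mathbb{P}[B^*\setminus\hat{B}]$. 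Combining these gives
\[
\mathbb{P}[A\cap B^*]\ge(1-\alpha)\mathbb{P}[B^*]-(1-\alpha)\mathbb{P}[B^*\setminus\hat B]-\mathbb{P}[\hat B\setminus B^*]\ge (1-\alpha)\mathbb{P}[B^*]-(2-\alpha)\mathbb{P}[D].
\]
Dividing by $\mathbb{P}[B^*]>0$ gives a bound with constant $2-\alpha$ rather than the stated $1+\alpha$. This is the main obstacle: matching the stated constant. The loose step is bounding both one-sided differences by $\mathbb{P}[D]$ separately. A sharper route bounds the miscoverage instead:
\[
\mathbb{P}[A^c\cap B^*]\le \mathbb{P}[A^c\cap\hat B]+\mathbb{P}[B^*\setminus\hat B]\le\alpha\mathbb{P}[\hat B]+\mathbb{P}[D]\le \alpha(\mathbb{P}[B^*]+\mathbb{P}[D])+\mathbb{P}[D].
\]
This yields exactly $\mathbb{P}[A\mid B^*]\ge 1-\alpha-(1+\alpha)\mathbb{P}[D]/\mathbb{P}[B^*]$. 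The miscoverage form $\mathbb{P}[A^c\cap\hat B]\le\alpha\mathbb{P}[\hat B]$ is equivalent to the coverage inequality above, so I will use this route.

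For the consistency claim, the plan is to show that $\hat{r}(X_{n+1})\xrightarrow{p}r^*(X_{n+1})$ jointly. Pointwise convergence in probability plus independence of $\hat{r}$ from $X_{n+1}$ gives, by conditioning on $X_{n+1}=x$ and dominated convergence, $\mathbb{P}[|\hat r(X_{n+1})-r^*(X_{n+1})|>\epsilon]\to 0$. Then
\[
D\subseteq\{|\hat r(X_{n+1})-r^*(X_{n+1})|>\epsilon\}\cup\{|r^*(X_{n+1})-t_j|\le\epsilon\},
\]
so $\limsup\mathbb{P}[D]\le\mathbb{P}[|r^*(X_{n+1})-t_j|\le\epsilon]$. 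Letting $\epsilon\downarrow0$, this tends to $\mathbb{P}[r^*(X_{n+1})=t_j]=0$ by continuity of measure.
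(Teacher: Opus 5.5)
Your proposal is correct and follows the paper's proof essentially step for step: Corollary~1 of \citet{gibbs2025conditional} applied to the estimated groups $\{\hat G_j=g\}$, then the miscoverage decomposition $\mathbb{P}[A^c\cap B^*]\le \alpha\,\mathbb{P}[\hat B]+\mathbb{P}[B^*\setminus\hat B]$ with $\mathbb{P}[\hat B]\le \mathbb{P}[B^*]+\mathbb{P}[D]$, and the same $\varepsilon$-splitting plus dominated-convergence argument for the consistency claim. As a side remark, the coverage route you abandoned was loose only because you bounded the two differences separately; $B^*\setminus\hat B$ and $\hat B\setminus B^*$ are disjoint with union exactly $D$, so $(1-\alpha)\mathbb{P}[B^*\setminus\hat B]+\mathbb{P}[\hat B\setminus B^*]\le\mathbb{P}[D]$, and that route actually gives the sharper constant $1$ in place of $1+\alpha$.
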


As shown in Section~\ref{sec:experiments}, this method often achieves substantially 
higher conditional coverage than alternatives, at the cost of solving a convex 
optimization problem per candidate label. The special case $\mathcal{T}=\{t\}$ admits 
an efficient solution via Mondrian conformal prediction \citep{vovk2003mondrian}: 
calibrate using $\{i\in\mathcal{D}_{\text{cal}}^2:\hat{r}(X_i)\le t\}$ if 
$\hat{r}(X_{n+1})\le t$, and using $\{i\in\mathcal{D}_{\text{cal}}^2:\hat{r}(X_i)>t\}$ 
otherwise. This special case is summarized in 
Algorithm~\ref{alg:ACP_AEqualized} (Appendix~\ref{app:add_algorithms}).

\subsection{Adaptive Audited Conformal Prediction}\label{sec:adaptive_ACP}

In practice, practitioners may face competing objectives: a retrained model may 
achieve higher conditional coverage at the cost of larger prediction sets, which may 
be preferable when reliability is prioritized over efficiency. We therefore introduce 
\emph{AACP}, a data-driven procedure for adaptively selecting between ACP and a competing 
baseline (e.g., split conformal prediction based on a retrained model), guided by a 
user-specified criterion such as conditional coverage, average prediction set size, or 
a weighted trade-off between the two.

We illustrate AACP by selecting between retrained-model conformal prediction and ACP, using higher conditional coverage on unreliable groups as the selection criterion. 
To preserve 
exchangeability---which is necessary for marginal coverage---we follow 
\citep{yang2021finite, liang2023ces} and split $\mathcal{D}_{\text{cal}}$ into three 
disjoint subsets $\mathcal{D}_{\text{cal}}^1$, $\mathcal{D}_{\text{cal-select}}^2$, 
and $\mathcal{D}_{\text{cal-calib}}^2$, using $\mathcal{D}_{\text{cal}}^1$ both to train 
$\hat{r}$ and retrain the legacy model.

For a threshold $\epsilon\in[0,1]$, define the ``unreliable'' group
\begin{equation}\label{eq:aacp_groupdef}
    \mathcal{H} = \big\{i\in\mathcal{D}_{\text{cal-select}}^2 : \hat{r}(X_i)\le\epsilon\big\}.
\end{equation}
Smaller $\epsilon$ restricts $\mathcal{H}$ to the most unreliable samples; larger 
$\epsilon$ broadens it to moderately difficult ones. In practice, $\epsilon$ is chosen 
to reflect the user's tolerance for unreliability while ensuring $\mathcal{H}$ is 
large enough for a well-powered subsequent test.

For each candidate method, we estimate its conditional coverage on $\mathcal{H}$:
\begin{equation}\label{eq:aacp_estcondcov}
    \hat\delta_{\text{Retrain}} = \frac{1}{|\mathcal{H}|}\sum_{i\in\mathcal{H}}
    \mathbbm{1}[Y_i\in\hat{C}_{\text{Retrain}}(X_i)], \qquad
    \hat\delta_{\text{ACP}} = \frac{1}{|\mathcal{H}|}\sum_{i\in\mathcal{H}}
    \mathbbm{1}[Y_i\in\hat{C}_{\text{ACP}}(X_i)],
\end{equation}
where $\hat{C}_{\text{Retrain}}$ and $\hat{C}_{\text{ACP}}$ are calibrated on 
nonconformity scores over $\mathcal{D}_{\text{cal-select}}^2$ using the retrained 
model and ACP, respectively. We then apply a one-sided $t$-test for:
\begin{equation}\label{eq:aacp_test}
    H_0: \hat\delta_{\text{ACP}} \le \hat\delta_{\text{Retrain}}
    \quad\text{vs.}\quad
    H_1: \hat\delta_{\text{ACP}} > \hat\delta_{\text{Retrain}}.
\end{equation}
If $H_0$ is rejected at a pre-specified level (e.g., 10\%), we select ACP; otherwise 
we default to retraining. The final prediction set is then constructed by calibrating 
on $\mathcal{D}_{\text{cal-calib}}^2$ with the selected method. Algorithm~\ref{alg:AACP} 
summarizes the procedure; numerical results in 
Figures~\ref{fig:supp_exp_sim_nnew_adaptive_marg}--Table~\ref{tab:supp_exp_sim_nnew_adaptive} 
(Appendix~\ref{app:add_experiments}) confirm that AACP reliably selects the method leading to higher conditional coverage.

\section{Numerical Experiments}\label{sec:experiments}

\subsection{Setup and Benchmarks}

We evaluate three versions of our method: ACP with Marginal Coverage (ACP-MC; 
Section~\ref{sec:ACP-MC}), ACP with Audited Conditional Coverage (ACP-ACC; 
Section~\ref{sec:ACP-ACC}) using threshold set $\mathcal{T}=\{0.5,0.6,0.7,0.8,0.9\}$, 
and a single-threshold ($t=0.7$) special case of ACP-ACC targeting Audited Equalized 
Coverage (ACP-AEC).

We compare against three benchmarks based on standard conformal prediction: 
(i) using the legacy model directly; (ii) using the legacy model recalibrated via 
Adaptive Temperature Scaling (AdaTS; \cite{xie2024calibrating}), which rescales 
predicted probabilities as $\hat\pi_k(x)\mapsto\hat\pi_k(x)^{1/T}/\sum_{j}\hat\pi_j(x)^{1/T}$ 
with $T>0$ learned by minimizing negative log-likelihood on held-out data; and 
(iii) using a retrained model.

For all methods we target miscoverage level $\alpha=0.1$. The legacy model is fixed 
and pretrained as detailed in Appendix~\ref{app:add_experiments}. All 
methods except standard CP use $\mathcal{D}_{\text{cal}}^1$ to train their additional 
components---the audit model for ACP, temperature $T$ for AdaTS, and the retrained 
model for the retraining baseline---and $\mathcal{D}_{\text{cal}}^2$ to compute 
nonconformity scores and calibrate prediction sets. Further details, 
including audit feature construction and data-splitting proportions, are in 
Appendix~\ref{app:add_experiments}.

\subsection{Synthetic Data}

We generate synthetic data in which a proportion $\beta=0.1$ of target samples 
undergoes distribution shift introducing greater intrinsic label uncertainty; the 
data-generating process is described in Example~\ref{exp:concept-shift} 
(Appendix~\ref{app:add_motivating_examples}). We use $100$ features, $K=5$ classes, 
and train the legacy model on $10{,}000$ historical samples. All results are averaged 
over $1{,}000$ test points and $50$ independent runs.

Figure~\ref{fig:main_exp_sim_nnew_overall} reports marginal coverage, average 
prediction set size, and conditional coverage as functions of $|\mathcal{D}_{\text{cal}}|$, 
ranging from $200$ to $5{,}000$. Since the data-generating process is known, we 
stratify results by the oracle score $r^*$ into two bins: $r^*\le 0.5$ (hard, unreliable 
samples) and $r^*>0.5$ (easy samples).

\begin{figure*}[!htb]
    \centering
    \includegraphics[width=\linewidth]{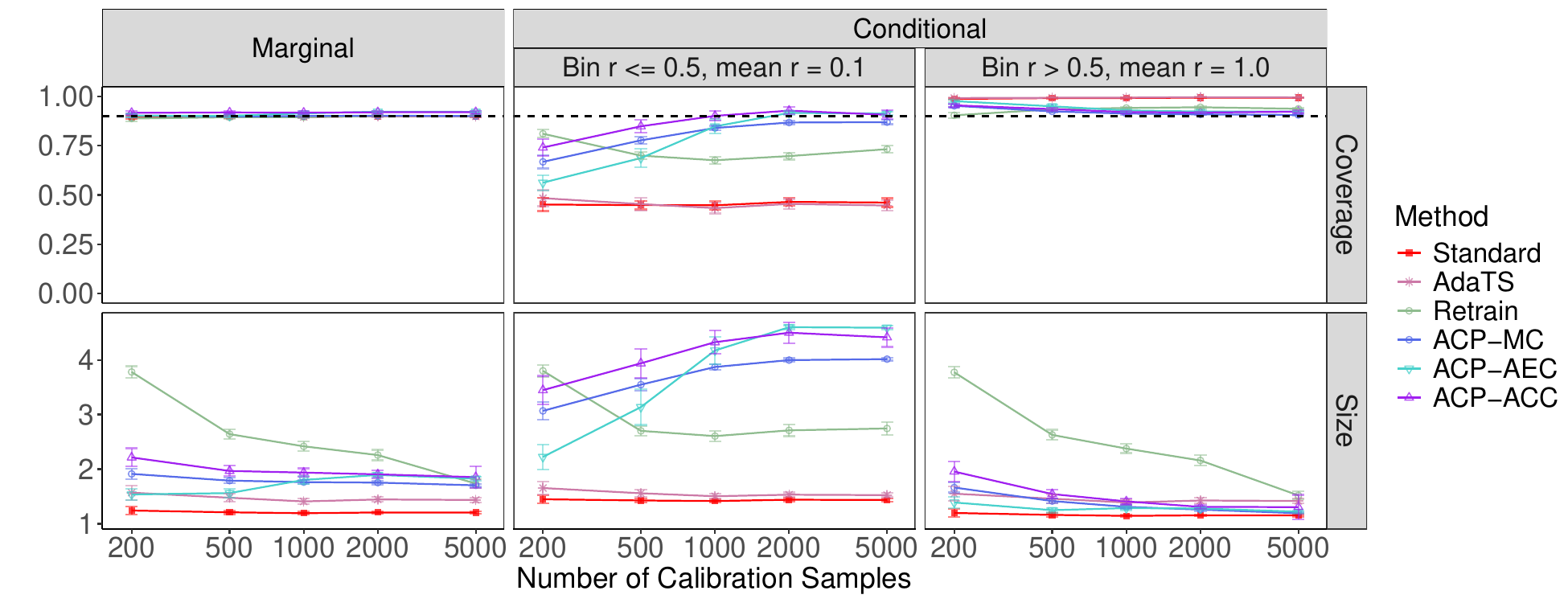}
    \caption{Performance of conformal prediction sets for $5$-class synthetic data 
      as a function of calibration sample size 
    $|\mathcal{D}_{\text{cal}}|$, at miscoverage level $\alpha=0.1$. Left:
    marginal coverage and average prediction set size. All methods achieve the nominal 
    $90\%$ marginal coverage; ACP variants (ACP-MC, ACP-AEC, ACP-ACC) maintain 
    compact set sizes. Right: coverage conditional on oracle reliability bin 
    ($r^*\le 0.5$: hard samples; $r^*>0.5$: easy samples). Standard CP and AdaTS 
    undercover hard samples, while ACP improves conditional coverage on hard 
    samples without inflating set sizes on easy ones. Error bars denote two standard 
    errors. See Table~\ref{tab:main_exp_sim_ndata} for numerical details.}
    \label{fig:main_exp_sim_nnew_overall}
\end{figure*}

Standard CP and AdaTS yield the smallest average set sizes but exhibit substantial 
undercoverage on the hard bin. Retraining improves conditional coverage at the cost 
of overly conservative prediction sets, particularly at small calibration sizes; 
moreover, it produces similar set sizes for both bins at small-to-moderate sample 
sizes, indicating it cannot effectively distinguish unreliable samples until 
sufficient labeled target data are available.

By contrast, all three ACP variants substantially improve coverage on the hard group 
while maintaining compact sets on average, assigning larger sets to hard samples and 
smaller sets to easy ones---reflecting a more accurate characterization of 
input-dependent uncertainty. Among the three, ACP-ACC is most conservative, as it 
enforces coverage over multiple overlapping subgroups. All ACP methods improve with 
calibration size, as more labeled data yield more accurate audit models. Overall, ACP 
achieves a more favorable reliability-efficiency trade-off than existing baselines.

Additional synthetic experiments are in Appendix~\ref{app:add_experiments}:
Figures~\ref{fig:supp_exp_sim_beta_marg}--\ref{fig:supp_exp_sim_beta_cond} and 
Table~\ref{tab:main_exp_sim_beta} vary $\beta$ at fixed calibration size $2{,}000$;
Figures~\ref{fig:supp_exp_sim_K_cond}--\ref{fig:supp_exp_sim_K_marg} and 
Table~\ref{tab:main_exp_sim_nnew_K} vary $K$ from $2$ to $20$ (ACP-MC uses the 
binary rule of Section~\ref{sec:methodology} when $K=2$ and the multiclass rule of 
Appendix~\ref{app:APA_mutlclass} otherwise);
Figures~\ref{fig:supp_exp_sim_n_new_equalized_marg}--\ref{fig:supp_exp_sim_n_new_etacompare_cond} 
and Tables~\ref{tab:main_exp_sim_equalized}--\ref{tab:main_exp_sim_etacompare} 
examine sensitivity to hyperparameters such as the ACP-AEC threshold, revealing a 
trade-off between conditional coverage and set size as $t$ increases;
Figures~\ref{fig:supp_exp_sim_n_new_full_marg}--\ref{fig:supp_exp_sim_n_new_full_cond} 
and Table~\ref{tab:main_exp_sim_nnew_full} compare against additional benchmarks, 
including the approach of \cite{kaur2025conformalpredictionsetsimproved}, temperature-scaling 
variants, and an oracle ACP-MC using $r^*$ directly;
Figures~\ref{fig:supp_exp_sim_nnew_adaptive_marg}--\ref{fig:supp_exp_sim_nnew_adaptive_cond} 
and Table~\ref{tab:supp_exp_sim_nnew_adaptive} cover AACP (Section~\ref{sec:adaptive_ACP});
and Figures~\ref{fig:supp_exp_sim_n_new_covshift_marg}--\ref{fig:supp_exp_sim_covshift_n_new_full_cond} 
and Tables~\ref{tab:supp_exp_sim_nnew_covshift}--\ref{tab:supp_exp_sim_covshift_nnew_full} 
present results under alternative distribution shifts, including covariate shift 
(Appendix~\ref{app:add_motivating_examples}).

\subsection{Real Data}
We evaluate ACP and all benchmarks on two real-world datasets: Camelyon17 \citep{bandi2018detection} from the WILDS benchmark \citep{wilds2021}, and CIFAR-10/CIFAR-10-C \citep{hendrycks2019robustness, krizhevsky2009learning}.
In both settings, we consider a target distribution formed by mixing in-domain and out-of-domain samples to simulate distribution shift. 
Since the oracle audit score $r^*$ is unavailable, we estimate it for each test point 
as the fraction of correct legacy-model predictions among its $50$ nearest neighbors 
(by cosine distance in embedding space) from an independent hold-out set of $5{,}000$ 
target samples, and bin the resulting scores into three equal-width bins.
Results are averaged over $1{,}000$ test points and $30$ runs using independent splits of the data. Full experimental setup details are in 
Appendix~\ref{app:add_experiments}.

We first consider Camelyon17, a binary tumor classification task on $96\times96$ histopathological images. 
Distribution shift arises naturally here, as 
models trained at one hospital are deployed across institutions with differences in 
patient population, staining, and image acquisition. We train the legacy model on 
images from hospital 0 and construct a target distribution comprising $90\%$ 
in-domain samples (hospital 0) and $10\%$ out-of-domain samples (hospital 2), 
following the architecture and hyperparameters of \citet{wilds2021} for both the 
legacy model and the retraining benchmark. 

Figure~\ref{fig:main_exp_real_nnew_overall} and Table~\ref{tab:main_exp_real_ndata} 
report performance as $|\mathcal{D}_{\text{cal}}|$ varies from $200$ to $2{,}000$. 
All methods achieve marginal coverage. Standard CP and AdaTS exhibit substantial 
conditional undercoverage on unreliable samples; retraining produces overly 
conservative sets that also fail to differentiate samples across reliability levels. 
All three ACP variants improve conditional coverage across reliability levels while 
maintaining compact, informative prediction sets comparable in size to standard CP and 
AdaTS. Figure~\ref{fig:supp_exp_real_beta_overall} and 
Table~\ref{tab:supp_exp_real_beta} further vary the proportion of out-of-domain 
samples.

\begin{figure*}[!htb]
    \centering
    \includegraphics[width=\linewidth]{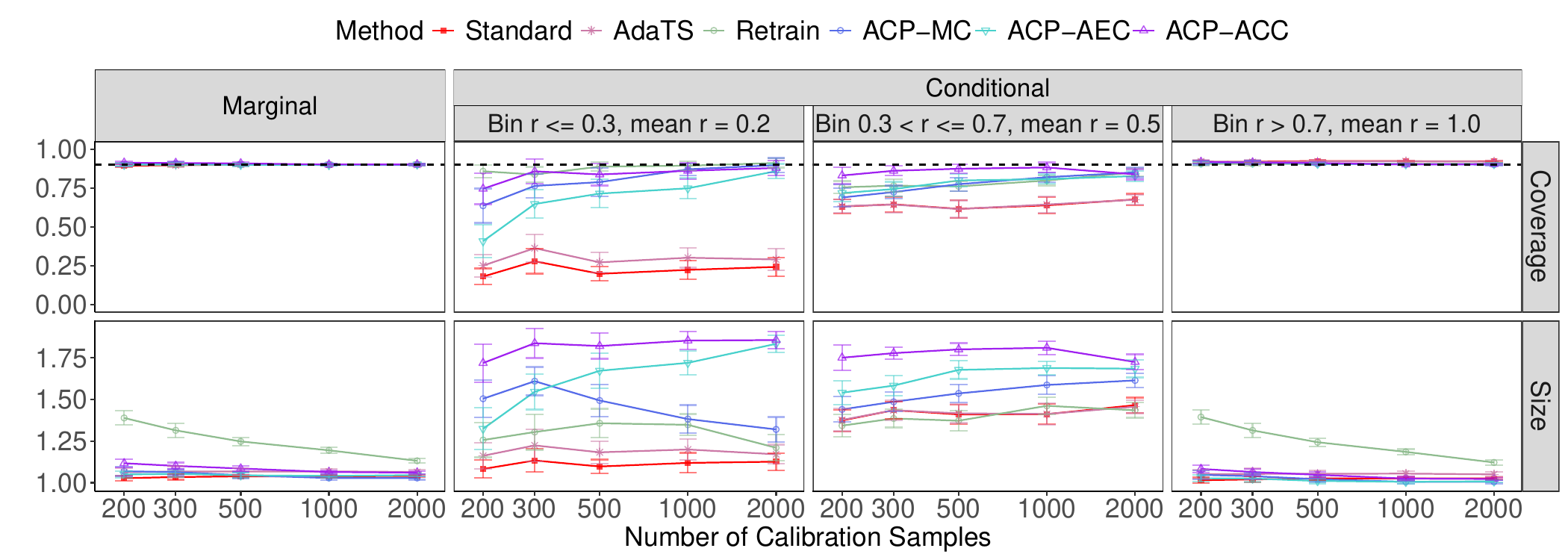}
    \caption{Performance of conformal prediction sets on the Camelyon17 binary tumor 
    classification task \citep{bandi2018detection} as a function of calibration sample 
    size $|\mathcal{D}_{\text{cal}}|$, at miscoverage level $\alpha=0.1$. The target 
    distribution contains $90\%$ in-domain (hospital 0) and $10\%$ out-of-domain 
    (hospital 2) samples. Left: marginal coverage and average prediction set 
    size; all methods achieve nominal $90\%$ marginal coverage, and ACP variants 
    (ACP-MC, ACP-AEC, ACP-ACC) maintain compact set sizes. Right: coverage 
    conditional on estimated reliability bin (fraction of correct legacy-model 
    predictions among $50$ nearest neighbors in embedding space). Standard CP and 
    AdaTS undercover unreliable samples, while ACP variants achieve higher conditional 
    coverage without inflating set sizes. Error bars denote two standard errors. See 
    Table~\ref{tab:main_exp_real_ndata} for numerical details.}
    \label{fig:main_exp_real_nnew_overall}
\end{figure*}

We further evaluate on CIFAR-10, a dataset of $32\times32$ color images from $10$ classes, with distribution shift introduced using CIFAR-10-C, which applies common corruptions to CIFAR-10 test images at varying severity levels. We train the legacy model on clean CIFAR-10 and construct a target distribution comprising $90\%$ in-domain (clean) samples and $10\%$ out-of-domain samples drawn from CIFAR-10-C with contrast corruption at severity level $5$. Both the legacy and retraining models use a ResNet-18 architecture \citep{he2016image} with hyperparameters described in Appendix~\ref{app:add_experiments}.

Figures~\ref{fig:supp_exp_real_nnew_overall_cifar10}--\ref{fig:supp_exp_real_beta_overall_cifar10} and Tables~\ref{tab:supp_exp_real_ndata_cifar10}--\ref{tab:supp_exp_real_beta_cifar10} in Appendix~\ref{app:add_experiments} report performance as $|\mathcal{D}_{\text{cal}}|$ varies from $200$ to $2{,}000$ and as the proportion of out-of-domain samples in the target distribution varies from $0.1$ to $0.4$. The results again show that ACP balances reliability and efficiency, improving conditional coverage while maintaining reasonably small and informative prediction sets.

\section{Discussion}~\label{sec:discussion}

ACP provides a practical and statistically principled approach to uncertainty-aware 
classification under arbitrary distribution shift, improving conditional coverage 
relative to existing conformal prediction methods while maintaining informative 
prediction sets. Its main limitation is dependence on audit model quality, which can 
be difficult to attain in very low-data regimes, compounded by the need to split the labeled 
data from the target distribution to preserve valid inference. Promising directions for future work include 
more data-efficient extensions (e.g., via leave-one-out or cross-validation conformal 
methods \citep{barber2019predictive}), alternative strategies for integrating audit 
models into conformal prediction, and extensions to online or streaming settings 
where distribution shifts evolve in real time.

Software implementing the algorithms and data experiments are available online at \url{https://github.com/FionaZ3696/Audited-Conformal-Prediction}.

\subsection*{Acknowledgements}

The authors thank the Center for Advanced Research Computing at the University of Southern California for providing computational resources and Tianmin Xie at the University of Southern California for helpful comments with manuscript editing.
M.~S.~and Y.~Z.~were partly supported the USC - Capital One CREDIF award.

\newpage


\bibliographystyle{unsrtnat}
\bibliography{ref}

\clearpage
\appendix

\renewcommand{\thesection}{A\arabic{section}}
\renewcommand{\theequation}{A\arabic{equation}}
\renewcommand{\thetheorem}{A\arabic{theorem}}
\renewcommand{\thecorollary}{A\arabic{corollary}}
\renewcommand{\theproposition}{A\arabic{proposition}}
\renewcommand{\thelemma}{A\arabic{lemma}}
\renewcommand{\thetable}{A\arabic{table}}
\renewcommand{\thefigure}{A\arabic{figure}}
\renewcommand{\thealgorithm}{A\arabic{algorithm}}
\setcounter{figure}{0}
\setcounter{table}{0}
\setcounter{theorem}{0}
\setcounter{algorithm}{0}

\clearpage
\section{Review of Adaptive Nonconformity Score (APS) for Classification}\label{app:aps_review}

This section reviews the adaptive nonconformity score (APS) function for constructing conformal classification sets proposed by \citet{romano2020classification}, with slight modifications to the notation to make it consistent with the rest of this paper.

\paragraph{With the oracle classifier.}
Suppose one has access to the oracle conditional class probabilities under the target distribution, namely $
\pi_k(x) = \mathbb{P}_T(Y = k \mid X = x), k \in [K], \; x \in \mathcal{X}.
$
For each fixed \(x\), let $
\pi_{(1)}(x) \ge \pi_{(2)}(x) \ge \cdots \ge \pi_{(K)}(x)
$
denote the ordered statistics for $\pi_k(x)$. For any $\tau \in [0,1]$, define as in~\cite{romano2020classification} the \emph{generalized conditional quantile} function:
\begin{align} \label{eq:oracle-threshold}
  L(x; \pi, \tau) & = \min \{ k \in \{1,\ldots,K\} \ : \ \pi_{(1)}(x) + \pi_{(2)}(x) + \ldots + \pi_{(k)}(x) \geq \tau \}.
  \end{align}
Furthermore, let $U \sim \text{Unif}(0,1)$ and $u$ be a realization of $U$, define a function $\mathcal{S}$ that takes inputs $x. \pi, \tau \in [0,1]$ and $u \in (0,1)$ as:
\begin{align} \label{eq:define-S}
    \mathcal{S}(x, u ; \pi, \tau) & =
    \begin{cases}
    \text{ `$y$' indices of the $L(x ; \pi,\tau)-1$ largest $\pi_{y}(x)$},
    & \text{ if } u \leq V(x ; \pi,\tau) , \\
    \text{ `$y$' indices of the $L(x ; \pi,\tau)$ largest $\pi_{y}(x)$},
    & \text{ otherwise},
    \end{cases}
\end{align}
\begin{align*}
    V(x; \pi, \tau) & =  \frac{1}{\pi_{(L(x ; \pi, \tau))}(x)} \left[\sum_{k=1}^{L(x ; \pi, \tau)} \pi_{(k)}(x) - \tau \right].
\end{align*}
The oracle conformal prediction set, defined as 
\begin{align}
    C_\alpha^{\text{oracle}}(x) = \mathcal{S}(x, U; \pi, 1-\alpha),
\end{align}
is the smallest randomized prediction sets with feature conditional coverage at level $1-\alpha$. 

\paragraph{Estimating the oracle classifier.} In practice, $\pi_k(x)$ is generally unavailable and must be estimated from the data. Let \(\hat{\pi}(x) = (\hat{\pi}_1(x),\ldots,\hat{\pi}_K(x))\) denote the output of a black-box classifier, with $\hat{\pi}_k(x)$ estimates the true unknown class probabilities $\pi_k(x)$ for each $k \in [K]$. 
Define as in \citep{romano2020classification} a \emph{generalized inverse quantile} nonconformity score function $E$ with input $x, y, u, \hat{\pi}$: 
\begin{equation}\label{eq:aps_scores}
    E_i = E(X_i, Y_i, U_i, \hat{\pi}) = \min\big\{ \tau \in [0,1] : Y_i \in \mathcal{S}(X_i, U_i; \hat{\pi}, \tau) \big\}.
\end{equation}
Intuitively, $E_i$ is the smallest threshold $\tau$ such that the prediction set constructed using $\hat{\pi}$ contains the true label $Y_i$.
In the split conformal prediction regime where a hold-out calibration data $\mathcal{D}_{\text{calib}}$ is available. The empirical threshold $\hat{\tau}$ is then chosen as the $\lceil (1-\alpha)(1 + |\mathcal{D}_{\text{cal}}|) \rceil$-th largest element of the set $\{E_i\}_{i \in \mathcal{D}_{\text{cal}}}$, and is subsequently used to construct the prediction set for a test point with feature $X_{n+1}$ as
\begin{align}\label{eq:aps_set}
C_\alpha^{\text{APS}}(x) = \mathcal{S}(X_{n+1}, U_{n+1}; \hat{\pi}, \hat{\tau}). 
\end{align}

\FloatBarrier

\section{Motivating Examples}\label{app:add_motivating_examples}

This section presents motivating examples illustrating settings in which training an audit model may provide more robust and informative uncertainty signals than retraining under various types of distribution shift in the target distribution. Example~\ref{exp:concept-shift} is implemented in our numerical experiments to produce Figures~\ref{fig:main_exp_sim_nnew_overall}--\ref{fig:supp_exp_sim_n_new_full_cond}. Example~\ref{exp:covariate-shift} is implemented to produce Figures~\ref{fig:supp_exp_sim_n_new_covshift_marg}--\ref{fig:supp_exp_sim_n_new_covshift_etacompare_cond}.

\begin{example}[Concept shift; the target distribution contains random samples with greater intrinsic label uncertainty.]\label{exp:concept-shift}
Fix $K \ge 2$ classes and $p$ features, and let $X \in [0, 1]^p$ with $P_S(X) = P_T(X) = \mathrm{Unif}([0, 1]^p)$. Under the historical distribution $P_S$, labels are deterministic functions of the first coordinate: setting $a_k = k/K$ for $k=0,\dots,K$ and $g(x)=k$ if $a_{k-1} < x_1 \le a_k$, we have $P_S(Y=g(x) \mid X=x) = 1$ almost surely. Assume $f_0^{\text{pt}}$ learns this rule exactly, so $f_0^{\text{pt}}(x) = g(x)$.
At deployment, a concept shift introduces intrinsic uncertainty governed by the second coordinate. For $\beta \in (0, 1)$, the target conditional distirbution is defined as:
$$
P_T\big[Y=\cdot \mid X=x\big]=
\begin{cases}
\mathrm{Unif}\{1,\dots,K\}, & x_2 < \beta, \\
\delta_{g(x)}, & x_2 \ge \beta,
\end{cases}$$
where $\delta_{g(x)}$ is a point mass at $g(x)$. Intuitively, in the region $\{x : x_2 < \beta\}$, labels are intrinsically ambiguous and no predictor can outperform random guessing; outside it, the original deterministic rule is preserved. Although $f_0^{\text{pt}}$ is optimal under $P_S$, its correctness probability under $P_T$ satisfies:
$$\mathbb{P}_T\big[r=1 \mid X=x \big]=
\begin{cases}
1/K, & x_2 < \beta, \\
1, & x_2 \ge \beta.
\end{cases}$$
\end{example}
As demonstrated in Section~\ref{sec:experiments}, an audit model can be advantageous when predicting $r$ is easier than relearning the full $K$-class decision boundaries. This may occur when the audit model identifies $r$ as a one-dimensional threshold problem in $x_2$: the legacy model’s predictions are unreliable when $x_2 < \beta$ and reliable otherwise. By contrast, retraining a full $K$-class classifier from scratch—especially with a flexible architecture and limited labeled data—may lead to overfitting on labels from the uncertain region ${x: x_2 < \beta}$, resulting in spuriously confident predictions that could reduce the efficiency and reliability of conformal prediction sets.

\begin{example}[Covariate shift; the target distribution contains a higher proportion of intrinsically uncertain samples]\label{exp:covariate-shift}
Fix $K \ge 2$ classes and $p$ features, and let $X=(X_1,\dots,X_p)\in[0,1]^p$. Under the historical distribution, features are i.i.d.\ uniform:
$$P_S(X)=\mathrm{Unif}([0,1]^p).$$
Let $a_k=k/K$ for $k=0,\dots,K$, and define $g(x)=k$ if $a_{k-1}<x_1\le a_k$. For a fixed $\beta\in(0,1)$, assume the conditional label law is the same under both historical and target distributions:
$$P_S(Y=\cdot\mid X=x)=P_T(Y=\cdot\mid X=x)=
\begin{cases}
\mathrm{Unif}\{1,\dots,K\}, & x_2<\beta,\\
\delta_{g(x)}, & x_2\ge \beta,
\end{cases}$$
where $\delta_{g(x)}$ is a point mass at $g(x)$. Thus, labels are intrinsically ambiguous in the region $\{x:x_2<\beta\}$ and deterministic otherwise. Assume $f_0^{\text{pt}}$ predicts $f_0^{\text{pt}}(x)=g(x)$, which is optimal outside the ambiguous region and cannot be improved upon inside it.

At deployment, we introduce a covariate shift in the second coordinate: for some $a\in(0,1)$,
$$X_2\sim \mathrm{Unif}(0,1)\quad \text{under }P_S,\qquad X_2\sim \mathrm{Unif}(0,a)\quad \text{under }P_T,$$
while all other coordinates remain $\mathrm{Unif}(0,1)$ under both distributions. Hence $P_S(X)\neq P_T(X)$, but $P_S(Y\mid X)=P_T(Y\mid X)$. Notably, the correctness probability of $f_0^{\text{pt}}(x)$ conditional on $X=x$ is unchanged:
$$\mathbb P(r=1\mid X=x)=
\begin{cases}
1/K, & x_2<\beta,\\
1, & x_2\ge \beta.
\end{cases}$$
However, the target distribution places more mass on the intrinsically uncertain region. Under $P_S$,
$$P_S(X_2<\beta)=\beta,$$
whereas under $P_T$,
$$P_T(X_2<\beta)=
\begin{cases}
\beta/a, & \beta<a,\\
1, & \beta\ge a.
\end{cases}$$
Thus, although the pointwise difficulty of prediction does not change, the target population contains a larger proportion of hard samples.

An audit model can be advantageous when it can identify \(r\) as a one-dimensional threshold problem in \(x_2\): the legacy model’s predictions are unreliable when \(x_2 < \beta\) and reliable otherwise. The standard conformal procedure calibrates only for marginal coverage and thus may produce systematically lower coverage rates for samples within the hard region. Retraining a full \(K\)-class predictor does not address this core issue, because the conditional label law is unchanged and the ambiguous region remains intrinsically noisy; the challenge is not to relearn the class boundary, but to recognize that the target population now contains many more samples on which no predictor can be confident.
\end{example}

\begin{example}[Concept shift; the target distribution contains random samples with continuously varying intrinsic uncertainty]\label{exp:continuous-concept-shift}
Fix $K \ge 2$ classes and $p$ features, and let $X \in [0, 1]^p$ with $P_S(X) = P_T(X) = \mathrm{Unif}([0, 1]^p)$. Under $P_S$, labels are nearly deterministic functions of the first coordinate: setting $a_k = k/K$ for $k=0,\dots,K$ and $g(x)=k$ if $a_{k-1} < x_1 \le a_k$, we let
$P_S(Y=g(x) \mid X=x)= 1$ almost surely. Assume $f_0^{\text{pt}}$ learns the historical rule accurately, so $f_0^{\text{pt}}(x) = g(x)$.

At deployment, a concept shift introduces intrinsic uncertainty that varies smoothly with the second coordinate. For some $q_{\max}\in(0,(K-1)/K]$, define
$$
q(x)=q_{\max}x_2,
$$
and let
$$
P_T(Y=\cdot \mid X=x)=
\begin{cases}
1-q(x), & \text{on } g(x),\\
q(x)/(K-1), & \text{on each } j\neq g(x).
\end{cases}
$$
Thus, the target labels remain concentrated on $g(x)$ when $x_2$ is small, but become progressively more ambiguous as $x_2$ increases. In particular, uncertainty is weakest near $x_2=0$ and strongest near $x_2=1$; when $q_{\max}=(K-1)/K$, the hardest points are exactly uniform over the $K$ classes.

Although $f_0^{\text{pt}}$ is optimal under $P_S$, its correctness probability under the target distribution now varies continuously with $x$:
$$
\mathbb{P}_T(r=1 \mid X=x)=1-q_{\max}x_2.
$$
Hence $r^*(x)$ is no longer binary-valued, but a continuous function ranging from $1$ down to $1-q_{\max}$. For example, when $K=5$ and $q_{\max}=0.8$, we have $r^*(x)\in[0.2,1]$.

An audit model can be advantageous here again when it can identify \(r\) as a one-dimensional threshold problem in \(x_2\): using the original features or information from the legacy model, an informative audit model should learn that reliability decreases smoothly as $x_2$ grows. In contrast, a flexible multiclass model re-trained on limited target samples may again overfit local label noise and output spuriously confident predictions.
\end{example}

\begin{example}[Label-noise shift; target labels are randomly corrupted independent of $X$]\label{exp:label-noise-shift}
Fix $K \ge 2$ classes and $p$ features, and let $X \in [0,1]^p$ with $P_S(X)=P_T(X)=\mathrm{Unif}([0,1]^p)$. Under $P_S$, labels are deterministic functions of the first coordinate: setting $a_k = k/K$ for $k=0,\dots,K$ and $g(x)=k$ if $a_{k-1} < x_1 \le a_k$, we have $P_S(Y=g(x)\mid X=x)=1$ almost surely. Assume $f_0^{\text{pt}}$ learns this rule accurately, so $f_0^{\text{pt}}(x) = g(x)$.

At deployment, the feature distribution is unchanged, but labels are subject to symmetric random corruption independent of $X$. Let $Y^\star=g(X)$ denote the latent clean label, and fix a noise rate $\beta\in[0,1]$. Under $P_T$, define
$$
Y=
\begin{cases}
Y^\star, & \text{w.p. }1-\beta,\\
\mathrm{Unif}\big(\{1,\dots,K\}\setminus\{Y^\star\}\big), & \text{w.p. }\beta,
\end{cases}
\qquad \text{independently of }X.
$$
Equivalently, for every $x$,
$$
P_T(Y=\cdot\mid X=x)=
\begin{cases}
1-\beta, & \text{on } g(x),\\
\beta/(K-1), & \text{on each } j\neq g(x).
\end{cases}
$$
Thus $P_T(Y\mid X)\neq P_S(Y\mid X)$, but the shift is entirely due to irreducible label randomness rather than any new structure in the feature space.

Now consider the legacy correctness indicator $r=\mathbbm{1}\{Y=f_0^{\text{pt}}(X)\}$. If $f_0^{\text{pt}}(x)=g(x)$, then under $P_T$,
$$
\mathbb{P}_T(r=1\mid X=x)=\mathbb{P}_T(Y=g(x)\mid X=x)=1-\beta,
$$
which is constant in $x$. In other words, the oracle reliability surface is flat: all target points are equally reliable, and the only source of uncertainty is the homogeneous label noise.

Since the true reliability is constant, an audit model can be advantageous when it successfully identifies this constant by effectively estimating a single Bernoulli rate, which reflects the noise-imposed ceiling on predictive accuracy. By contrast, retraining a flexible multiclass predictor from scratch, especially with limited target labels, may overfit these random perturbations and produce spurious feature dependent confidence.
\end{example}

\begin{example}[No shift] 
$P_T = P_S$ and the legacy model $f_0^{\text{pt}}$ is already a good approximation to $P_T(Y \mid X)$. In 
this case, retraining a flexible model on limited target data might introduce unnecessary estimation variance and risk of overconfidence, making its uncertainty estimates unstable even though the environment did not change. An audit model instead only needs to estimate the legacy model's correctness, which reduce the chance of perturbing a well-functioning legacy predictor.
\end{example}

\FloatBarrier

\section{Extension of Algorithm~\ref{alg:ACP_ACP-MC_binary} to Multiclass Classification}\label{app:APA_mutlclass}

\subsection{Multiclass combination rule}

Extending the binary adjustment rule to a multi-class setting is non-trivial because the event of a correct prediction ($f_0^{\text{pt}}(X) = Y$) no longer uniquely identifies the true outcome $Y$. To address this, we introduce a second oracle quantity, $\eta^*(X)$, which defines how the probability mass should be redistributed when the legacy model is incorrect. Specifically, for any $x \in \mathcal{X}$, let $\eta^*(x)$ denote the oracle conditional distribution of the label $Y$ given that the legacy model’s top prediction $f_0^{\text{pt}}(x)$ is incorrect:
\begin{equation*}
    \eta^*(x)
	\; :=\;
	\mathbb{P}\!\left[Y \,\big|\, X=x,\; Y \neq f_0^{\text{pt}}(x)\right].
\end{equation*}
Intuitively, $\eta^*(x)$ answers the question: \emph{``If the legacy model’s top-1 prediction is wrong, where should the probability mass go?''}. By definition, $\eta^*_k(x) = 0$ whenever $k = f_0^{\text{pt}}(x)$.

For each class label $k \in [K]$, $m_k$ is given by a weighted mixture of the audit score, $r(x)$, and the reallocation score, $\eta(x)$:
\begin{equation}\label{eq:multiclass-comb-rule}
	m_k(x; r, \eta, f_0)
	\;=\;
	r(x)\,\mathbbm{1}\{ k = f_0^{\text{pt}}(x)\}
	\;+\;
	\big(1 - r(x)\big)\,\eta_k(x).
\end{equation}
The first term assigns probability mass $r(x)$ to the predicted class $f_0^{\text{pt}}(x)$, while the second term allocates the remaining mass $1 - r(x)$ across the remaining classes according to $\eta_k(x)$. 
In words, $m(x; \hat{r}, \hat{\eta}, f_0)$ gives an estimate of the conditional distribution of $Y \mid X=x$ under $P_T$ that is consistent if the fitted audit model $\hat{r}$ and the reallocation estimate $\hat{\eta}$ provides a consistent estimate of the oracle audit model $r^*$ and the oracle reallocation model $\eta^*$, for any fixed legacy model $f_0$ and any target distribution $P_T$. This result formally established in Proposition~\ref{prop:consistency_apa_multiclass} and is proved in Appendix~\ref{app:proofs}.

Using $m(x; \hat r, \hat \eta, f_0)$ as an estimate of $P_T(Y \mid X=x)$, prediction sets with guaranteed marginal coverage can be constructed by applying the standard conformal prediction approach, as reviewed in Appendix~\ref{app:aps_review}, using the remaining calibration data in $\mathcal{D}_{\text{cal}}^2$.
Algorithm~\ref{alg:ACP_APA_multiclass} summarizes this procedure.

In practice, the oracle reallocation model $\eta^*(x)$ is unknown and needs to be estimated. We propose several tractable strategies for estimating $\hat{\eta}(x)$, trading off computational complexity with estimation accuracy.  

\begin{enumerate}
    \item \textbf{Residual Renormalization ($\hat{\eta}_{\text{Renorm}}$)}.  
    This is a fast and intuitive approach that relies on the relative confidence of the legacy model $f_0$ across the non-top classes. It assumes that while the top-1 prediction is wrong, the relative rankings of the remaining classes are reliable. Let \(\hat{\pi}_k(x)\) denote the predicted probability assigned by the legacy model to class \(k\). The reallocation model renormalizes the probability mass over the set of alternative classes $k \neq f_0^{\text{pt}}(x)$:
\[
	\hat{\eta}_k (x) 
	\;=\;
	\begin{cases}
	 	0, & \text{if } k = f_0^{\text{pt}}(x),\\[4pt]
	 	\frac{\hat{\pi}_k(x)}{1-\hat{\pi}_{f_0^{\text{pt}}(x)}}, & \text{if } k \neq f_0^{\text{pt}}(x).
	\end{cases}
\] 
This approach is computationally efficient and favored when the legacy model $f_0$ is generally well-behaved, despite its overconfidence on the top-predicted label.
    \item \textbf{Confusion Matrix-based Estimation ($\hat{\eta}_{\text{Counts}}$)}.  
This method reallocates the probability mass by utilizing an empirical estimate of the conditional error distribution. We construct a confusion matrix based on the observed data in $\mathcal{D}_{\text{cal}}^1$ to estimate the distribution of true labels $Y$ given a specific incorrect prediction $f_0^{\text{pt}}$. Specifically, for each predicted class $j$, we estimate the probability $P(Y=k \mid f_0^{\text{pt}}=j, Y \neq j)$ using the empirical counts from samples in $\mathcal{D}_{\text{cal}}^1$. This approach is robust to severe misrankings caused by the legacy model $f_0$ but might prone to biases when calibration sample size is small with many classes.

    \item \textbf{Learned Wrong-Label Model ($\hat{\eta}_{\text{Learned}}$)}. The most flexible approach is to train a dedicated multi-class classifier to predict $\eta^*(x)$. This model, $\hat{\eta}_{\text{Learned}}$, is trained on the subset of $\mathcal{D}_{\text{cal}}^1$ where the legacy prediction was incorrect ($f_0^{\text{pt}}(X_i) \neq Y_i$). Features can include the original input $X$, as well as summary information such as entropy, predicted probability margins, and other indicators used for the audit model $\hat{r}(x)$.
\end{enumerate}

\FloatBarrier

\section{Algorithms}\label{app:add_algorithms}

\begin{algorithm}[!htb]
    \caption{ACP for Multiclass Classification with Audited Probabilistic Classifiers}
    \label{alg:ACP_APA_multiclass}
    \begin{algorithmic}[1]
        \STATE \textbf{Input}: calibration data $\mathcal{D}_{\text{cal}}$; 
        test point with features $X_{n+1}$; 
        pre-trained legacy model $f_0$; \\
        \STATE \textcolor{white}{\textbf{Input}:} fixed rule computing nonconformity score $E$; level $\alpha \in (0,1)$; 
        \STATE \textcolor{white}{\textbf{Input}:} selected $\eta$ estimation method from $(\hat{\eta}_{\text{Renorm}}$, $\hat{\eta}_{\text{Counts}}$, or $\hat{\eta}_{\text{Learned}})$
        \STATE Randomly split $\mathcal{D}_{\text{cal}}$ into disjoint subsets $\mathcal{D}_{\text{cal}}^1$ and $\mathcal{D}_{\text{cal}}^2$.
        \STATE Compute the audit outcome labels $R_i = \mathbbm{1}\{f_0^{\text{pt}}(X_i) = Y_i\}$ for $i \in \mathcal{D}_{\text{cal}}^1$. 
        \STATE Train audit model $\hat{r}$ on $\{(\tilde{X}_i, R_i)\}$ for $i \in \mathcal{D}_{\text{cal}}^1$, where $\tilde{X}_i$ aggregates the original features $X_i$ and additional features derived from $f_0$ as described in Section~\ref{sec:train_correctness}.
        \STATE Estimate the reallocation distribution $\hat{\eta}$ on $\mathcal{D}_{\text{cal}}^1$ using the selected method. 
        \FOR{$i \in \mathcal{D}_{\text{cal}}^2$}
            \STATE Compute $m(X_i; \hat{r}, \hat{\eta}, f_0)$ using~\eqref{eq:multiclass-comb-rule}.
            \STATE Compute nonconformity score $E_i = E(X_i$, $Y_i, m(X_i; \hat{r}, \hat{\eta}, f_0))$.
        \ENDFOR
        \STATE Compute the empirical threshold $\hat{q}$ as the $\lceil(1-\alpha)(1 + |\mathcal{D}_{\text{cal}}^2|)\rceil$-th largest value of $\{E_i\}$.
        \STATE Construct $\hat{C}(X_{n+1})$ as $\hat C(X_{n+1}) = \left\{ y: E(X_{n+1},y, m(X_{n+1}; \hat{r}, \hat{\eta}, f_0)) \le \hat{q}\right\}$. 
        \STATE \textbf{Output}: prediction set $\hat{C}(X_{n+1})$.
    \end{algorithmic}
\end{algorithm}

\begin{algorithm}[!htb]
    \caption{ACP with Audited Equalized Coverage}
    \label{alg:ACP_AEqualized}
    \begin{algorithmic}[1]
        \STATE \textbf{Input}: calibration data $\mathcal{D}_{\text{cal}}$; 
        test point with features $X_{n+1}$; 
        pre-trained legacy model $f_0$; \\
        \STATE \textcolor{white}{\textbf{Input}:} a threshold $t$; fixed nonconformity score function $E$; level $\alpha \in (0,1)$.
        \STATE Randomly split $\mathcal{D}_{\text{cal}}$ into disjoint subsets $\mathcal{D}_{\text{cal}}^1$ and $\mathcal{D}_{\text{cal}}^2$.
        \STATE Compute the audit outcome labels $R_i = \mathbbm{1}\{f_0^{\text{pt}}(X_i) = Y_i\}$ for $i \in \mathcal{D}_{\text{cal}}^1$. 
        \STATE Train audit model $\hat{r}$ on $\{(\tilde{X}_i, R_i)\}$ for $i \in \mathcal{D}_{\text{cal}}^1$, where $\tilde{X}_i$ aggregates the original features $X_i$ and additional features derived from $f_0$ as described in Section~\ref{sec:train_correctness}.
        \STATE Compute nonconformity scores $E_i = E(X_i, Y_i, f_0(X_i))$ for $i \in \mathcal{D}_{\text{cal}}^2$. 
        \IF{$\hat r(X_{n+1})\le t$}
            \STATE Define $\mathcal{D}_{\text{cal}}^{2,\leq t} = \{(X_i,Y_i)\in \mathcal{D}_{\mathrm{cal}}^2:\hat r(X_i)\le t\}$. 
            \STATE Compute empirical threshold $\hat{q}$ as the $\lceil(1-\alpha)(1 + |\mathcal{D}_{\text{cal}}^{2,\leq t}|)\rceil$-th largest value of $\{E_i\}_{i \in \mathcal{D}_{\text{cal}}^{2,\leq t}}$.
        \ELSE 
            \STATE Define $\mathcal{D}_{\text{cal}}^{2, > t} = \{(X_i,Y_i)\in \mathcal{D}_{\mathrm{cal}}^2:\hat r(X_i) > t\}$. 
            \STATE Compute empirical threshold $\hat{q}$ as the $\lceil(1-\alpha)(1 + |\mathcal{D}_{\text{cal}}^{2,>t}|)\rceil$-th largest value of $\{E_i\}_{i \in \mathcal{D}_{\text{cal}}^{2,> t}}$.
        \ENDIF
        \STATE Construct $\hat{C}(X_{n+1})$ as $\hat C(X_{n+1}) = \left\{ y: E(X_{n+1},y, f_0(X_{n+1})) \le \hat{q}\right\}$.
        \STATE \textbf{Output}: prediction set $\hat{C}(X_{n+1})$.
    \end{algorithmic}
\end{algorithm}

\begin{algorithm}[!htb]
    \caption{Adaptive Audited Conformal Prediction (AACP)}
    \label{alg:AACP}
    \begin{algorithmic}[1]
        \STATE \textbf{Input}: calibration data $\mathcal{D}_{\text{cal}}$; 
        test point with features $X_{n+1}$; 
        pre-trained legacy model $f_0$; \\
        \STATE \textcolor{white}{\textbf{Input}:} fixed nonconformity score function $E$; level $\alpha \in (0,1)$; 
        chosen ACP integration method; 
        \STATE \textcolor{white}{\textbf{Input}:} chosen baseline method (e.g., retrained model based conformal prediction);
        \STATE \textcolor{white}{\textbf{Input}:} 
        selection criterion (e.g., higher conditional coverage on unreliable samples);
        \STATE \textcolor{white}{\textbf{Input}:} 
        threshold $\epsilon\in[0,1]$; test significance level $\gamma$. 
        
        \STATE \textbf{(Model training)}
        \STATE Randomly split $\mathcal{D}_{\text{cal}}$ into disjoint subsets $\mathcal{D}_{\text{cal}}^1$ and $\mathcal{D}_{\text{cal}}^2$.
        \STATE Compute the audit outcome labels $R_i = \mathbbm{1}\{f_0^{\text{pt}}(X_i) = Y_i\}$ for $i \in \mathcal{D}_{\text{cal}}^1$. 
        \STATE Train audit model $\hat{r}$ on $\{(\tilde{X}_i, R_i)\}$ for $i \in \mathcal{D}_{\text{cal}}^1$, where $\tilde{X}_i$ aggregates the original features $X_i$ and additional features derived from $f_0$ as described in Section~\ref{sec:train_correctness}.
        \STATE Retrain the legacy model on $\{(X_i, Y_i)\}$ for $i \in \mathcal{D}_{\text{cal}}^1$.
        
        \STATE \textbf{(Data-driven model selection)}
        \STATE Randomly split $\mathcal{D}_{\text{cal}}^2$ into disjoint subsets $\mathcal{D}_{\text{cal-select}}^2$ and $\mathcal{D}_{\text{cal-calib}}^2$.
        \STATE Compute the unreliable group $\mathcal{H}$ according to \eqref{eq:aacp_groupdef} on $\mathcal{D}_{\text{cal-select}}^2$.
        \STATE Construct the candidate prediction sets $\hat{C}_{\text{ACP}}(X_i)$ and $\hat{C}_{\text{Retrain}}(X_i)$ for $i\in\mathcal H$ using $\mathcal{D}_{\text{cal-select}}^2$.
        \STATE Estimate the conditional coverages on $\mathcal{H}$ for ACP and Retrain according to \eqref{eq:aacp_estcondcov}. 
        \STATE Perform a one-sided $t$-test of $H_0$ against $H_1$, as defined in \eqref{eq:aacp_test}, at significance level $\gamma$.
        \IF{$H_0$ is rejected}
            \STATE Set $\mathcal{M}_{\text{sel}}=\text{ACP}$. 
        \ELSE
            \STATE Set $\mathcal{M}_{\text{sel}}=\text{Retrain}$.
        \ENDIF
        
        \STATE \textbf{(Conformal calibration)}
        \STATE Using $\mathcal{D}_{\text{cal-calib}}^2$, compute the nonconformity scores and empirical threshold using the selected method $\mathcal{M}_{\text{sel}}$.
        \STATE Construct the final prediction set $\hat{C}_{\text{AACP}}(X_{n+1})$ for the test point using the selected method $\mathcal{M}_{\text{sel}}$ and derived empirical threshold.
        \STATE \textbf{Output}: prediction set $\hat{C}_{\text{AACP}}(X_{n+1})$.
    \end{algorithmic}
\end{algorithm}

\FloatBarrier

\section{Additional Theorems and Proofs}\label{app:proofs}
\begin{proof}[Proof of Proposition~\ref{prop:consistency_ACP-MC_binary}]
Fix $x \in \mathcal{X}$. By definition, $r^*(x) = \mathbb{P}_T(Y = f_0^{\text{pt}}(x) \mid X = x)$, which implies $\mathbb{P}_T(Y = 1 - f_0^{\text{pt}}(x) \mid X = x) = 1 - r^*(x)$. Thus, evaluating equation \eqref{eq:binary_comb_rule} at $x$ with the oracle audit score $r^*(x)$ yields:
\[
m(x; r^*, f_0) = \big(\mathbb{P}_T(Y=1 \mid X=x), \mathbb{P}_T(Y=2 \mid X=x)\big).
\]

Define the function $g_x : [0,1] \to [0,1]^2$ mapping a scalar score $t$ to the combined probability vector:
\[
g_x(t) := 
\begin{cases} 
    (1-t, t), & \text{if } f_0^{\text{pt}}(x) = 2, \\ 
    (t, 1-t), & \text{if } f_0^{\text{pt}}(x) = 1. 
\end{cases}
\]

Let $\hat r_N$ denote the audit model trained on $N$ i.i.d.\ samples from $P_T$. By definition, $m(x; \hat{r}_N, f_0) = g_x(\hat{r}_N(x))$. Because the legacy model's point prediction $f_0^{\text{pt}}(x)$ are fixed, $g_x$ is affine in $t$ and therefore continuous.

By the assumption that $\hat{r}_N(x) \xrightarrow{p} r^*(x)$, the continuous mapping theorem gives:
\[
m(x; \hat{r}_N, f_0) = g_x(\hat{r}_N(x)) \xrightarrow{p} g_x(r^*(x)) = m(x; r^*, f_0).
\]

Combining this with our initial evaluation yields:
\[
m(x; \hat{r}_N, f_0) \xrightarrow{p} \big(\mathbb{P}_T(Y=0 \mid X=x), \mathbb{P}_T(Y=1 \mid X=x)\big) = P_T(Y \mid X=x).
\]
\end{proof}

\begin{proposition}
\label{prop:consistency_apa_multiclass}
In the multiclass classification setting, define $m(x; \hat r_n, \eta_n, f_0)$ as in \eqref{eq:multiclass-comb-rule}. Assume $\hat r_N$ and $\eta_N$, trained on $N$ i.i.d.\ samples from $P_T$, converges pointwise in probability to $r^*(x)$ and $\eta^*$ in the large-$N$ limit; i.e., $\hat r_N(x)\xrightarrow{p} r^*(x)$ and $\hat \eta_N(x)\xrightarrow{p} \eta^*(x)$
for any $x \in \mathcal X$. Then, for any fixed legacy model $f_0$,
\[
m(x; \hat r_N,f_0)\xrightarrow{p} P_T(Y\mid X=x),
\]
where $P_T(Y\mid X=x) = \left( \mathbb{P}[Y=1\mid X=x], \dots, \mathbb{P}[Y=K\mid X=x] \right)$ under $P_T$.
\end{proposition}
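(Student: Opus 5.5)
The plan follows the binary case (Proposition~\ref{prop:consistency_ACP-MC_binary}): first check that the combination rule \eqref{eq:multiclass-comb-rule} reproduces $P_T(Y\mid X=x)$ exactly when the oracle quantities $r^*$ and $\eta^*$ are plugged in, then transfer consistency through the continuous mapping theorem. Throughout, fix $x\in\mathcal{X}$ and write $k_0 := f_0^{\text{pt}}(x)$, which is deterministic because $f_0$ is fixed.

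\textbf{Step 1 (oracle identity).} We show that $m_k(x; r^*, \eta^*, f_0) = \mathbb{P}_T(Y=k\mid X=x)$ for every $k\in[K]$, splitting into two cases.
\begin{itemize}
\item For $k=k_0$: since $\eta^*_{k_0}(x)=0$, we get $m_{k_0} = r^*(x) = \mathbb{P}_T(Y=k_0\mid X=x)$ by the definition \eqref{eq:oracle_audit}.
\item For $k\neq k_0$: by the definition of conditional probability, $\mathbb{P}_T(Y=k\mid X=x) = \mathbb{P}_T(Y\neq k_0\mid X=x)\,\eta^*_k(x) = (1-r^*(x))\,\eta^*_k(x)$, which is exactly $m_k$ at the oracle.
\end{itemize}
The only care needed is when $r^*(x)=1$. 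Then $\eta^*(x)$ is a conditional distribution given a null event and is undefined. We adopt the convention that $\eta^*(x)$ is any fixed probability vector supported off $k_0$. The identity still holds because that term is multiplied by $1-r^*(x)=0$.

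\textbf{Step 2 (continuity).} Define $h_x:[0,1]\times\mathbb{R}^K\to\mathbb{R}^K$ by
\[
h_x(t,v)_k = t\,\mathbbm{1}\{k=k_0\} + (1-t)\,v_k .
\]
This map is a polynomial, hence jointly continuous, and $m(x;\hat r_N,\hat\eta_N,f_0) = h_x(\hat r_N(x),\hat\eta_N(x))$.

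\textbf{Step 3 (joint convergence and conclusion).} Marginal convergence in probability of each component implies joint convergence in probability of the vector: for instance,
\[
\|(\hat r_N,\hat\eta_N)-(r^*,\eta^*)\| \le |\hat r_N - r^*| + \|\hat\eta_N-\eta^*\|,
\]
and a union bound controls the probability that either term is large. No independence between $\hat r_N$ and $\hat\eta_N$ is needed; this is the point to state carefully, since it is the only place where joint behaviour enters. The continuous mapping theorem then gives
\[
h_x(\hat r_N(x),\hat\eta_N(x)) \xrightarrow{p} h_x(r^*(x),\eta^*(x)),
\]
and Step 1 identifies the right-hand side with $P_T(Y\mid X=x)$.

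\textbf{Main obstacle.} The analysis is routine, so the main obstacle is in Step 1: making $\eta^*$ well defined in the degenerate case $r^*(x)=1$ and confirming that the argument is unaffected. We would also note that the statement's notation ($m(x;\hat r_N,f_0)$ and $\eta_N$) should be read as $m(x;\hat r_N,\hat\eta_N,f_0)$.
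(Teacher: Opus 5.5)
Your proposal is correct and follows the paper's proof essentially step for step. Both arguments check the two-case oracle identity $m(x;r^*,\eta^*,f_0)=P_T(Y\mid X=x)$, use the continuity of the bilinear map $(t,e)\mapsto t\,\mathbbm{1}\{k=f_0^{\text{pt}}(x)\}+(1-t)e_k$, and apply the continuous mapping theorem to the jointly convergent pair $(\hat r_N(x),\hat\eta_N(x))$; your treatment of the degenerate case $r^*(x)=1$ and your union-bound argument for joint convergence are small refinements that the paper leaves implicit.
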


\begin{proof}[Proof of Proposition~\ref{prop:consistency_apa_multiclass}]
Fix $x \in \mathcal{X}$. By definition, the oracle audit score is $r^*(x) = \mathbb{P}_T(Y = f_0^{\text{pt}}(x) \mid X = x)$, which implies $\mathbb{P}_T(Y \neq f_0^{\text{pt}}(x) \mid X = x) = 1 - r^*(x)$. 

Evaluating equation \eqref{eq:multiclass-comb-rule} at $x$ with the oracle quantities $r^*(x)$ and $\eta^*(x)$ for any class $k \in [K]$ yields two cases:

If $k = f_0^{\text{pt}}(x)$, by definition $\eta^*_k(x) = 0$, so:
\[
m_k(x; r^*, \eta^*, f_0) = r^*(x) \cdot 1 + \big(1 - r^*(x)\big) \cdot 0 = \mathbb{P}_T(Y = k \mid X = x).
\]

If $k \neq f_0^{\text{pt}}(x)$, we note that the event $\{Y = k\}$ is a subset of $\{Y \neq f_0^{\text{pt}}(x)\}$. Thus, by the definition of conditional probability:
\[
\mathbb{P}_T(Y = k \mid X = x) = \mathbb{P}_T(Y \neq f_0^{\text{pt}}(x) \mid X = x) \, \mathbb{P}_T(Y = k \mid X = x, \, Y \neq f_0^{\text{pt}}(x)).
\]
Substituting $1 - r^*(x)$ and $\eta^*_k(x)$ into the right side yields:
\[
\mathbb{P}_T(Y = k \mid X = x) = \big(1 - r^*(x)\big) \eta^*_k(x).
\]
Therefore, the multiclass rule gives:
\[
m_k(x; r^*, \eta^*, f_0) = r^*(x) \cdot 0 + \big(1 - r^*(x)\big) \eta^*_k(x) = \mathbb{P}_T(Y = k \mid X = x).
\]
Since this holds for all $k \in [K]$, the combined vector recovers the true conditional distribution: $m(x; r^*, \eta^*, f_0) = \left( P_T( Y =1\mid X=x) ,\dots,P_T( Y =K\mid X=x) \right) = P_T(Y \mid X = x)$.

Next, define the function $g_x : [0,1] \times [0,1]^K \to [0,1]^K$, where for a scalar $t$ and a vector $e \in [0,1]^K$, the $k$-th component of $g_x$ is:
\[
\big[g_x(t, e)\big]_k := t \mathbbm{1}\{k = f_0^{\text{pt}}(x)\} + (1-t)e_k.
\]
Let $\hat r_N$ and $\hat{\eta}$ denote the audit model and the reallocation rule trained on $N$ i.i.d.\ samples from $P_T$. By definition, $m(x; \hat{r}_N, \hat{\eta}_N, f_0) = g_x(\hat{r}_N(x), \hat{\eta}_N(x))$. Because $f_0^{\text{pt}}(x)$ is fixed for this pointwise evaluation, $g_x(t, e)$ is a fixed bilinear function of its inputs $t$ and $e$, meaning it is continuous.

By the assumption that $\hat{r}_N(x) \xrightarrow{p} r^*(x)$ and $\hat{\eta}_N(x) \xrightarrow{p} \eta^*(x)$, the random vector $(\hat{r}_N(x), \hat{\eta}_N(x))$ converges in probability to $(r^*(x), \eta^*(x))$. The continuous mapping theorem then gives:
\[
m(x; \hat{r}_N, \hat{\eta}_N, f_0) = g_x(\hat{r}_N(x), \hat{\eta}_N(x)) \xrightarrow{p} g_x(r^*(x), \eta^*(x)) = m(x; r^*, \eta^*, f_0).
\]
This yields:
\[
m(x; \hat{r}_N, \hat{\eta}_N, f_0) \xrightarrow{p} P_T(Y \mid X = x).
\]
\end{proof}

\begin{proposition}\label{prop:implication}
If a prediction set $\hat{C}(X)$ satisfies feature-conditional coverage at level $1-\alpha$ (i.e., $\mathbb{P}[ Y \in \hat{C}(X) \mid X] \geq 1-\alpha$), then $\hat{C}(X)$ necessarily satisfies audited conditional coverage defined in equation \ref{eq:audited-cond-coverage}.
\end{proposition}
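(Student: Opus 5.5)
The plan is to use the tower property, since the event $\{r^*(X_{n+1})\in R\}$ is measurable with respect to $X_{n+1}$. Here $r^*$ is a fixed, deterministic, measurable function of $x$ defined in \eqref{eq:oracle_audit}, so for each bin $R\in\mathcal{R}$ the set $A_R := \{x\in\mathcal{X} : r^*(x)\in R\}$ is a measurable subset of $\mathcal{X}$. The event $\{r^*(X_{n+1})\in R\}$ coincides with $\{X_{n+1}\in A_R\}$. We restrict attention to bins with $\mathbb{P}[r^*(X_{n+1})\in R]>0$, so that the conditional probability in \eqref{eq:audited-cond-coverage} is well defined; bins of probability zero are vacuous or excluded by convention.

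Fix such an $R$ and write $\phi(X_{n+1}) := \mathbb{P}[Y_{n+1}\in\hat{C}(X_{n+1})\mid X_{n+1}]$. This is a version of the conditional probability, with all the other randomness (calibration data, audit model training, and any randomization $U$) integrated out. The hypothesis of feature-conditional coverage states that $\phi(X_{n+1})\ge 1-\alpha$ almost surely. The tower property then gives
\[
\mathbb{P}\big[Y_{n+1}\in\hat{C}(X_{n+1}),\, X_{n+1}\in A_R\big]
= \mathbb{E}\big[\phi(X_{n+1})\mathbbm{1}\{X_{n+1}\in A_R\}\big]
\ge (1-\alpha)\,\mathbb{P}[X_{n+1}\in A_R].
\]
Dividing by $\mathbb{P}[X_{n+1}\in A_R]=\mathbb{P}[r^*(X_{n+1})\in R]>0$ yields the inequality in \eqref{eq:audited-cond-coverage} for this $R$. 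Since $R$ was arbitrary, the inequality holds for every $R\in\mathcal{R}$. Nothing in this argument uses the structure of $\mathcal{R}$: overlapping bins are handled bin by bin, and the same reasoning applies to any measurable grouping of $\mathcal{X}$.

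The only delicate point is interpretation. Feature-conditional coverage holding "for all $x$" should be read as holding almost surely in $P_T(X)$. We also need $r^*$ to be measurable so that $A_R$ is an event; taking $r^*$ as a regular conditional probability guarantees this. I expect no genuine obstacle beyond stating these measure-theoretic conventions explicitly.
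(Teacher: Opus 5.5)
Your proof is correct and is essentially the paper's argument: both apply the tower property conditioning on $X$, then combine the almost-sure bound $\mathbb{P}[Y\in\hat{C}(X)\mid X]\ge 1-\alpha$ with monotonicity of expectation over the positive-probability event $\{r^*(X)\in R\}$. The differences are cosmetic: you write the step as $\mathbb{E}[\phi(X)\mathbbm{1}\{X\in A_R\}]\ge(1-\alpha)\,\mathbb{P}[X\in A_R]$ and then divide, and you state the measurability and almost-sure conventions explicitly, which the paper leaves implicit.
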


\begin{proof}[Proof of Proposition~\ref{prop:implication}]
Fix any bin \(R\in\mathcal R\) such that \(\mathbb P(r^*(X)\in R)>0\). The tower property gives
\[
\mathbb P\big[Y\in \hat C(X)\mid r^*(X)\in R\big]
=
\mathbb E\!\left[
\mathbb P\big[Y\in \hat C(X)\mid X\big]
\,\middle|\,
r^*(X)\in R
\right].
\]
By feature-conditional coverage,
\[
\mathbb P\big[Y\in \hat C(X)\mid X\big]\ge 1-\alpha
\qquad \text{almost surely}
\]
By monotonicity of conditional expectation,
\[
\mathbb E\!\left[
\mathbb P\big[Y\in \hat C(X)\mid X\big]
\,\middle|\,
r^*(X)\in R
\right]
\ge
\mathbb E\!\left[1-\alpha \mid r^*(X)\in R\right]
=
1-\alpha.
\]
Therefore,
\[
\mathbb P\big(Y\in \hat C(X)\mid r^*(X)\in R\big)\ge 1-\alpha.
\]
Since \(R\in\mathcal R\) was arbitrary, \(\hat C(X)\) satisfies audited conditional coverage.
\end{proof}

\begin{proof}[Proof of Theorem~\ref{thm:condconf_oracle}]
Fix $j\in[m]$ and $g\in\{0,1\}$. Define the events
\[
F:=\{Y_{n+1}\notin \hat C(X_{n+1})\},\qquad
B_g:=\{G_j^*(X_{n+1})=g\},\qquad
C_g:=\{\hat G_j(X_{n+1})=g\}.
\]
Let $p_{j,g}:=\mathbb P(B_g)$, which is positive by assumption. Then
\[
\mathbb P(F\cap B_g)
=
\mathbb P(F\cap B_g\cap C_g)
+
\mathbb P(F\cap B_g\cap C_g^c)
\le
\mathbb P(F\cap C_g)
+
\mathbb P(B_g\cap C_g^c).
\]
Dividing by $p_{j,g}$ gives
\[
\mathbb P(F\mid B_g)
\le
\frac{\mathbb P(F\cap C_g)}{p_{j,g}}
+
\frac{\mathbb P(B_g\cap C_g^c)}{p_{j,g}}.
\]

By Corollary~1 of \citet{gibbs2025conditional}, applied to the groups induced by the fixed fitted audit model $\hat r$,
\[
\mathbb P(F\mid C_g)\le \alpha.
\]
Equivalently,
\[
\mathbb P(F\cap C_g)\le \alpha \mathbb P(C_g).
\]
Therefore,
\[
\mathbb P(F\mid B_g)
\le
\frac{\alpha \mathbb P(C_g)}{p_{j,g}}
+
\frac{\mathbb P(B_g\cap C_g^c)}{p_{j,g}}.
\]

Let
\[
\delta_j := \mathbb P\!\left[\hat G_j(X_{n+1})\neq G_j^*(X_{n+1})\right].
\]
Since $C_g\setminus B_g\subseteq \{\hat G_j(X_{n+1})\neq G_j^*(X_{n+1})\}$ and
$B_g\cap C_g^c\subseteq \{\hat G_j(X_{n+1})\neq G_j^*(X_{n+1})\}$, we have
\[
\mathbb P(C_g)
\le
\mathbb P(B_g)+\delta_j
=
p_{j,g}+\delta_j,
\qquad
\mathbb P(B_g\cap C_g^c)\le \delta_j.
\]
Hence,
\[
\mathbb P(F\mid B_g)
\le
\alpha+\frac{(1+\alpha)\delta_j}{p_{j,g}}.
\]
Equivalently,
\[
\mathbb P\!\left[
Y_{n+1}\in \hat C(X_{n+1})
\,\middle|\,
G_j^*(X_{n+1})=g
\right]
\ge
1-\alpha
-
\frac{(1+\alpha)
\mathbb P[\hat G_j(X_{n+1})\neq G_j^*(X_{n+1})]}
{\mathbb P[G_j^*(X_{n+1})=g]}.
\]

It remains to show that the disagreement probability vanishes under the stated consistency condition. Let $\hat r_N$ denote the audit model trained on $N$ i.i.d.\ samples from $P_T$, and define
\[
\hat G_{j,N}(X_{n+1})
=
\mathbbm{1}\{\hat r_N(X_{n+1})\le t_j\}.
\]
If $\hat G_{j,N}(X_{n+1})\neq G_j^*(X_{n+1})$, then
\[
\bigl|\hat r_N(X_{n+1})-r^*(X_{n+1})\bigr|
\ge
\bigl|r^*(X_{n+1})-t_j\bigr|.
\]
Therefore, for any $\varepsilon>0$,
\[
\left\{\hat G_{j,N}(X_{n+1})\neq G_j^*(X_{n+1})\right\}
\subseteq
\left\{\bigl|\hat r_N(X_{n+1})-r^*(X_{n+1})\bigr|>\varepsilon\right\}
\cup
\left\{\bigl|r^*(X_{n+1})-t_j\bigr|\le \varepsilon\right\}.
\]
Taking probabilities gives
\[
\mathbb P\!\left[
\hat G_{j,N}(X_{n+1})\neq G_j^*(X_{n+1})
\right]
\le
\mathbb P\!\left(
\bigl|\hat r_N(X_{n+1})-r^*(X_{n+1})\bigr|>\varepsilon
\right)
+
\mathbb P\!\left(
\bigl|r^*(X_{n+1})-t_j\bigr|\le \varepsilon
\right).
\]

For the first term, define
\[
q_N(x)
:=
\mathbb P\!\left(
|\hat r_N(x)-r^*(x)|>\varepsilon
\right),
\]
where the probability is over the randomness in the training data used to fit $\hat r_N$. By the assumed pointwise convergence in probability, $q_N(x)\to 0$ for every $x\in\mathcal X$. Since $0\le q_N(x)\le 1$, dominated convergence implies
\[
\mathbb P\!\left(
|\hat r_N(X_{n+1})-r^*(X_{n+1})|>\varepsilon
\right)
=
\mathbb E[q_N(X_{n+1})]
\to 0.
\]
The second term does not depend on $N$, and the no-boundary-mass assumption gives
\[
\lim_{\varepsilon\downarrow 0}
\mathbb P\!\left(
|r^*(X_{n+1})-t_j|\le \varepsilon
\right)
=0.
\]
Thus, for any $\xi>0$, we may first choose $\varepsilon>0$ so that the second term is at most $\xi/2$, and then choose $N$ large enough so that the first term is at most $\xi/2$. Therefore,
\[
\mathbb P\!\left[
\hat G_{j,N}(X_{n+1})\neq G_j^*(X_{n+1})
\right]\to 0.
\]
\end{proof}

\FloatBarrier

\section{Implementation Details and Additional Numerical Experiments}\label{app:add_experiments}

\subsection{Implementation Details}
\paragraph{Model architecture and training details for the legacy model.}

For synthetic experiments, the legacy model is a five-layer fully connected network with ReLU activations and a softmax output layer, trained using the Adam optimizer with learning rate $10^{-3}$, batch size $10$, and weight decay $10^{-4}$, using \texttt{BCEWithLogitsLoss} for binary classification and cross-entropy loss for multiclass classification. Training converges within $30$ epochs in all cases.

For the camelyon17 real data experiments, the legacy model follows the configurations and hyperparameters of \citet{wilds2021}, using the DenseNet-121 architecture, trained using SGD with momentum $0.9$, learning rate $10^{-3}$, weight decay $10^{-2}$, batch size $32$, and cross-entropy loss. 

For the CIFAR-10 and CIFAR-10-C experiments, we follow the standard training protocol \citep{he2016image, zagoruyko2017wideresidualnetworks}: a ResNet-18 trained for 100 epochs with SGD (momentum 0.9, weight decay 5e-4), batch size 128, learning rate 0.1 with cosine annealing \citep{loshchilov2018decoupled}, and augmentations with RandomCrop with 4-pixel padding and RandomHorizontalFlip.

The retraining baseline uses the same architecture and hyperparameters as the legacy model for both synthetic and real experiments. Unless otherwise stated, the target miscoverage level is $\alpha = 0.1$.

\paragraph{Data splitting details.}
Several methods---including adaptive temperature scaling (AdaTS), retraining, and our ACP---require additional splits of the calibration set. For all of these methods, we randomly split \(\mathcal{D}_{\text{cal}}\) into two equal parts. The first half, \(\mathcal{D}_{\text{cal}}^1\), is used to fit the method-specific learning component. In particular, we fit the temperature parameter \(T\) for AdaTS, retrain a model using the same neural network architecture as the original legacy model for the retraining baseline, and train an audit model for ACP. The second half, \(\mathcal{D}_{\text{cal}}^2\), is then used to compute nonconformity scores and estimate the empirical quantile threshold.

For AACP, we first split \(\mathcal{D}_{\text{cal}}\) into two equal subsets, \(\mathcal{D}_{\text{cal}}^1\) and \(\mathcal{D}_{\text{cal}}^2\), as above. The first subset is used to train the audit models \(\hat{r}\) (and \(\hat{\eta}\) for multiclass ACP-MC), as well as to retrain the legacy model from scratch. The second subset, \(\mathcal{D}_{\text{cal}}^2\), is further partitioned into \(\mathcal{D}_{\text{cal-select}}^2\) and \(\mathcal{D}_{\text{cal-calib}}^2\) in a \(25\%/75\%\) split: the smaller portion is used for model selection, while the remaining portion is used to compute nonconformity scores and calibrate the final conformal threshold. For the adaptive selection procedure, we set the threshold \(\epsilon = 0.7\) when partitioning the data used to evaluate the selection criterion. This relatively conservative choice is used uniformly across experiments to prioritize recall, with the goal of capturing as many genuinely hard-to-predict samples as possible.

\paragraph{Model architecture for the audit model.}

The audit model $\hat{r}$ is a random forest with $300$ trees and maximum depth $12$, calibrated via isotonic regression, and trained on binary labels indicating whether the legacy model correctly predicts each sample; for multiclass ACP-MC, $\hat{\eta}$ is additionally fit via multinomial logistic regression with $\ell_2$ regularization, as described in Appendix~\ref{app:APA_mutlclass}. For both synthetic and real data experiments, we use the following feature groups together with the input covariates (raw features for tabular data such as in the synthetic experiments, and embeddings for image data such as for the camelyon17 expeirments):

\begin{itemize}
    \item \textbf{Basic predictive signals}: maximum softmax probability, predictive entropy, top-two probability margin, and second-highest class probability.
    \item \textbf{Distributional shift signals}: Isolation Forest anomaly score, its negation, and a binary in/out-of-distribution indicator, all computed relative to the historical training data.
    \item \textbf{Embedding-space distance features}: distance to the overall historical centroid, minimum distance to a historical sample, and average distance to the $5$ nearest historical neighbors, all measured in the legacy model's penultimate-layer embedding space.
    \item \textbf{Class-conditional embedding features}: distance to the predicted class centroid, distance to the nearest alternative class centroid, their ratio (a trust-score proxy), KNN distances to the predicted and nearest alternative classes, and distances to all $K$ class centroids.
    \item \textbf{Uncertainty estimates}: for models supporting MC Dropout, predictive variance and mean entropy across stochastic forward passes; otherwise, tree-level disagreement variance, agreement rate, and prediction entropy across ensemble members.
    \item \textbf{Embedding-space density features}: local density and average local density estimated from the $k$-nearest-neighbor distances in the legacy model's embedding space.
\end{itemize}

We emphasize that this feature set, while effective, reflects one particular instantiation of a general and flexible toolbox. In particular, although we leverage the historical training data here to compute distributional shift signals and class-conditional statistics, access to historical data is \emph{not} required to apply our method: practitioners can construct a useful audit model using only legacy-model-derived signals (e.g., softmax probabilities, entropy, and confidence margin), which are always available at inference time.

\paragraph{Hyperparameter of ACP.} Unless otherwise specified, for the multiclass ACP-MC method we use $\hat{\eta}_{\text{Counts}}$ (described in Appendix~\ref{app:APA_mutlclass}), and for the ACP-AEC method we set the single threshold to $0.7$. For the ACP-ACC method we set the threshold set to be $\{ 0.5, 0.6, 0.7, 0.8, 0.9\}$. 

\paragraph{Compute resources.} The numerical experiments described in this paper were carried out on a computing cluster. Real-data experiments (Camelyon17 and CIFAR-10/CIFAR-10-C), each involving up to $2{,}000$ calibration samples and $1{,}000$ test samples, averaged about $30$ minutes per run on a single NVIDIA P100 GPU with $16$ GB of memory. Across all configurations and $30$ random seeds, the real-data experiments consumed approximately $400$ GPU-hours.

Synthetic-data experiments, each involving up to $5{,}000$ calibration samples and $1{,}000$ test samples, averaged about $10$ minutes per run on a single CPU node ($4$ cores, no GPU) with $16$ GB of memory. Across all configurations and $100$ random seeds, the synthetic experiments consumed approximately $150$ CPU-hours.

These totals do not include preliminary or failed experiments.

\subsection{Additional Numerical Results}
\subsubsection{Synthetic data experiments}

\begin{table}[!htb]
\centering
    \caption{Performance of conformal prediction sets for $5$-class synthetic data generated from Example~\ref{exp:concept-shift}, as a function of the total calibration sample size. All methods achieve the target marginal coverage of \(90\%\), while our methods (ACP-MC, ACP-AEC, and ACP-ACC) retain practically efficient set sizes and are more uncertainty-aware, assigning larger sets to more unreliable samples and smaller sets to more reliable ones. Red numbers indicate the three smallest average set sizes and the three highest conditional coverages. See the corresponding plots in Figure~\ref{fig:main_exp_sim_nnew_overall}.}
  \label{tab:main_exp_sim_ndata}
\centering
\fontsize{6}{6}\selectfont
\begin{tabular}[t]{rlllllll}
\toprule
\multicolumn{2}{c}{ } & \multicolumn{2}{c}{\makecell{Marginal}} & \multicolumn{2}{c}{\makecell{Hard Bin \\ ($r^* \leq 0.5$)}} & \multicolumn{2}{c}{\makecell{Easy Bin \\ ($r^* > 0.5$)}} \\
\cmidrule(l{3pt}r{3pt}){3-4} \cmidrule(l{3pt}r{3pt}){5-6} \cmidrule(l{3pt}r{3pt}){7-8}
\makecell{Number \\ of \\ Calibration \\ Samples} & Method & Coverage & Size & Coverage & Size & Coverage & Size \\
\midrule
200 & ACP-ACC & \makecell{0.917 \\ (0.005)} & \makecell{2.215 \\ (0.084)} & \textcolor{red}{\makecell{0.741 \\ (0.021)}} & \makecell{3.447 \\ (0.131)} & \makecell{0.955 \\ (0.005)} & \makecell{1.956 \\ (0.090)} \\
 & ACP-AEC & \makecell{0.902 \\ (0.005)} & \textcolor{red}{\makecell{1.537 \\ (0.054)}} & \makecell{0.562 \\ (0.019)} & \makecell{2.224 \\ (0.115)} & \makecell{0.977 \\ (0.004)} & \makecell{1.390 \\ (0.050)} \\
 & ACP-MC & \makecell{0.903 \\ (0.005)} & \makecell{1.914 \\ (0.049)} & \textcolor{red}{\makecell{0.668 \\ (0.017)}} & \makecell{3.068 \\ (0.080)} & \makecell{0.953 \\ (0.004)} & \makecell{1.664 \\ (0.045)} \\
 & AdaTS & \makecell{0.903 \\ (0.005)} & \textcolor{red}{\makecell{1.570 \\ (0.065)}} & \makecell{0.484 \\ (0.021)} & \makecell{1.655 \\ (0.060)} & \makecell{0.992 \\ (0.002)} & \makecell{1.551 \\ (0.066)} \\
 & Retrain & \makecell{0.888 \\ (0.007)} & \makecell{3.779 \\ (0.053)} & \textcolor{red}{\makecell{0.811 \\ (0.010)}} & \makecell{3.800 \\ (0.055)} & \makecell{0.904 \\ (0.007)} & \makecell{3.774 \\ (0.053)} \\
 & Standard & \makecell{0.894 \\ (0.004)} & \textcolor{red}{\makecell{1.243 \\ (0.038)}} & \makecell{0.451 \\ (0.017)} & \makecell{1.449 \\ (0.036)} & \makecell{0.988 \\ (0.002)} & \makecell{1.198 \\ (0.039)} \\
\midrule
500 & ACP-ACC & \makecell{0.919 \\ (0.004)} & \makecell{1.968 \\ (0.047)} & \textcolor{red}{\makecell{0.849 \\ (0.017)}} & \makecell{3.938 \\ (0.133)} & \makecell{0.936 \\ (0.004)} & \makecell{1.545 \\ (0.040)} \\
 & ACP-AEC & \makecell{0.902 \\ (0.004)} & \textcolor{red}{\makecell{1.563 \\ (0.040)}} & \makecell{0.687 \\ (0.023)} & \makecell{3.137 \\ (0.164)} & \makecell{0.950 \\ (0.004)} & \makecell{1.250 \\ (0.021)} \\
 & ACP-MC & \makecell{0.898 \\ (0.003)} & \makecell{1.791 \\ (0.026)} & \textcolor{red}{\makecell{0.777 \\ (0.010)}} & \makecell{3.547 \\ (0.053)} & \makecell{0.924 \\ (0.003)} & \makecell{1.417 \\ (0.021)} \\
 & AdaTS & \makecell{0.899 \\ (0.003)} & \textcolor{red}{\makecell{1.477 \\ (0.039)}} & \makecell{0.454 \\ (0.016)} & \makecell{1.559 \\ (0.034)} & \makecell{0.994 \\ (0.001)} & \makecell{1.460 \\ (0.039)} \\
 & Retrain & \makecell{0.894 \\ (0.003)} & \makecell{2.641 \\ (0.047)} & \textcolor{red}{\makecell{0.700 \\ (0.009)}} & \makecell{2.701 \\ (0.046)} & \makecell{0.935 \\ (0.003)} & \makecell{2.628 \\ (0.047)} \\
 & Standard & \makecell{0.897 \\ (0.003)} & \textcolor{red}{\makecell{1.210 \\ (0.017)}} & \makecell{0.449 \\ (0.011)} & \makecell{1.427 \\ (0.019)} & \makecell{0.992 \\ (0.001)} & \makecell{1.163 \\ (0.016)} \\
\midrule
1000 & ACP-ACC & \makecell{0.918 \\ (0.003)} & \makecell{1.937 \\ (0.039)} & \textcolor{red}{\makecell{0.902 \\ (0.013)}} & \makecell{4.324 \\ (0.108)} & \makecell{0.921 \\ (0.004)} & \makecell{1.410 \\ (0.028)} \\
 & ACP-AEC & \makecell{0.914 \\ (0.004)} & \makecell{1.804 \\ (0.036)} & \textcolor{red}{\makecell{0.848 \\ (0.018)}} & \makecell{4.173 \\ (0.124)} & \makecell{0.929 \\ (0.003)} & \makecell{1.287 \\ (0.018)} \\
 & ACP-MC & \makecell{0.901 \\ (0.002)} & \textcolor{red}{\makecell{1.764 \\ (0.019)}} & \textcolor{red}{\makecell{0.840 \\ (0.005)}} & \makecell{3.871 \\ (0.025)} & \makecell{0.913 \\ (0.002)} & \makecell{1.311 \\ (0.015)} \\
 & AdaTS & \makecell{0.896 \\ (0.003)} & \textcolor{red}{\makecell{1.410 \\ (0.026)}} & \makecell{0.433 \\ (0.014)} & \makecell{1.505 \\ (0.022)} & \makecell{0.994 \\ (0.001)} & \makecell{1.390 \\ (0.026)} \\
 & Retrain & \makecell{0.895 \\ (0.002)} & \makecell{2.418 \\ (0.043)} & \makecell{0.676 \\ (0.009)} & \makecell{2.605 \\ (0.047)} & \makecell{0.942 \\ (0.003)} & \makecell{2.377 \\ (0.042)} \\
 & Standard & \makecell{0.897 \\ (0.002)} & \textcolor{red}{\makecell{1.194 \\ (0.013)}} & \makecell{0.448 \\ (0.011)} & \makecell{1.417 \\ (0.015)} & \makecell{0.992 \\ (0.001)} & \makecell{1.146 \\ (0.011)} \\
\midrule
2000 & ACP-ACC & \makecell{0.920 \\ (0.003)} & \makecell{1.906 \\ (0.038)} & \textcolor{red}{\makecell{0.928 \\ (0.009)}} & \makecell{4.498 \\ (0.095)} & \makecell{0.918 \\ (0.002)} & \makecell{1.309 \\ (0.020)} \\
 & ACP-AEC & \makecell{0.923 \\ (0.002)} & \makecell{1.895 \\ (0.020)} & \textcolor{red}{\makecell{0.917 \\ (0.004)}} & \makecell{4.598 \\ (0.018)} & \makecell{0.924 \\ (0.002)} & \makecell{1.280 \\ (0.012)} \\
 & ACP-MC & \makecell{0.902 \\ (0.002)} & \textcolor{red}{\makecell{1.754 \\ (0.017)}} & \textcolor{red}{\makecell{0.868 \\ (0.006)}} & \makecell{3.996 \\ (0.018)} & \makecell{0.909 \\ (0.002)} & \makecell{1.260 \\ (0.011)} \\
 & AdaTS & \makecell{0.900 \\ (0.002)} & \textcolor{red}{\makecell{1.445 \\ (0.023)}} & \makecell{0.455 \\ (0.013)} & \makecell{1.533 \\ (0.020)} & \makecell{0.997 \\ (0.001)} & \makecell{1.427 \\ (0.023)} \\
 & Retrain & \makecell{0.901 \\ (0.002)} & \makecell{2.258 \\ (0.048)} & \makecell{0.697 \\ (0.009)} & \makecell{2.710 \\ (0.053)} & \makecell{0.945 \\ (0.002)} & \makecell{2.159 \\ (0.049)} \\
 & Standard & \makecell{0.900 \\ (0.002)} & \textcolor{red}{\makecell{1.208 \\ (0.011)}} & \makecell{0.465 \\ (0.010)} & \makecell{1.441 \\ (0.014)} & \makecell{0.994 \\ (0.001)} & \makecell{1.155 \\ (0.009)} \\
\midrule
5000 & ACP-ACC & \makecell{0.920 \\ (0.003)} & \makecell{1.854 \\ (0.098)} & \textcolor{red}{\makecell{0.907 \\ (0.011)}} & \makecell{4.413 \\ (0.087)} & \makecell{0.923 \\ (0.003)} & \makecell{1.306 \\ (0.111)} \\
 & ACP-AEC & \makecell{0.922 \\ (0.001)} & \makecell{1.827 \\ (0.017)} & \textcolor{red}{\makecell{0.913 \\ (0.004)}} & \makecell{4.590 \\ (0.019)} & \makecell{0.923 \\ (0.002)} & \makecell{1.222 \\ (0.008)} \\
 & ACP-MC & \makecell{0.900 \\ (0.002)} & \textcolor{red}{\makecell{1.705 \\ (0.014)}} & \textcolor{red}{\makecell{0.870 \\ (0.005)}} & \makecell{4.014 \\ (0.013)} & \makecell{0.906 \\ (0.002)} & \makecell{1.194 \\ (0.007)} \\
 & AdaTS & \makecell{0.898 \\ (0.002)} & \textcolor{red}{\makecell{1.437 \\ (0.023)}} & \makecell{0.446 \\ (0.013)} & \makecell{1.526 \\ (0.021)} & \makecell{0.995 \\ (0.001)} & \makecell{1.418 \\ (0.023)} \\
 & Retrain & \makecell{0.901 \\ (0.001)} & \makecell{1.736 \\ (0.039)} & \makecell{0.732 \\ (0.009)} & \makecell{2.745 \\ (0.058)} & \makecell{0.938 \\ (0.002)} & \makecell{1.520 \\ (0.038)} \\
 & Standard & \makecell{0.899 \\ (0.002)} & \textcolor{red}{\makecell{1.206 \\ (0.013)}} & \makecell{0.461 \\ (0.011)} & \makecell{1.437 \\ (0.015)} & \makecell{0.994 \\ (0.001)} & \makecell{1.154 \\ (0.011)} \\
\bottomrule
\end{tabular}

\end{table}

\begin{figure*}[tbh]
    \centering
    \includegraphics[width=0.9\linewidth]{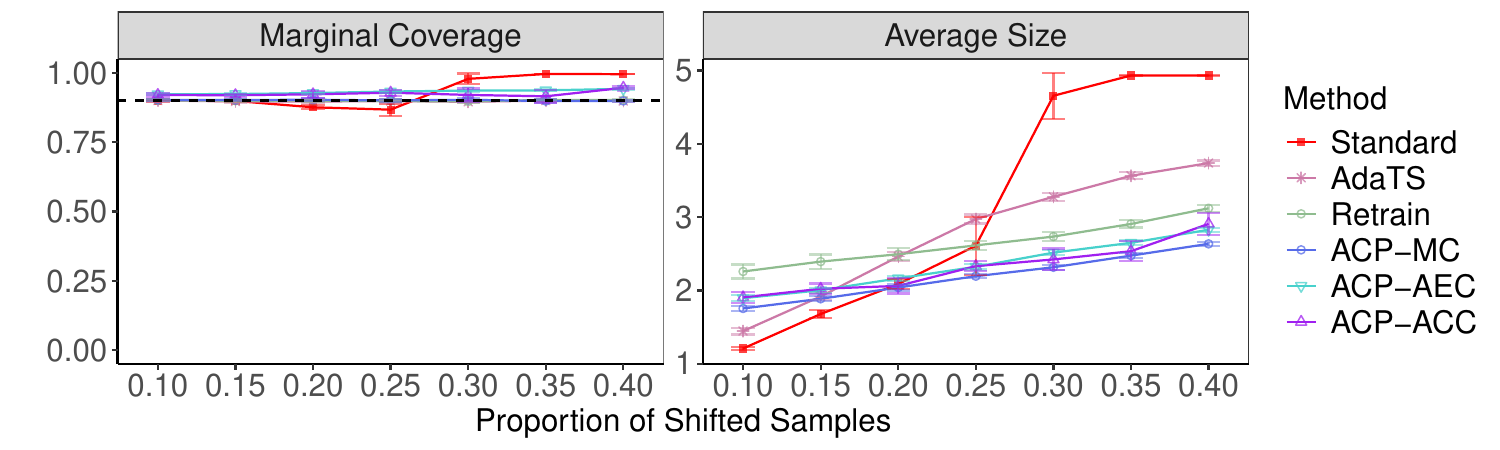}
    \caption{Performance of prediction sets for $5$-class synthetic data generated from Example~\ref{exp:concept-shift} as a function of the proportion of shifted samples with intrinsic uncertainty. All methods achieve the target marginal coverage of $90\%$, while our methods retain practically efficient sets sizes. Error bars denote two standard errors. Corresponding conditional performance are provided in Figure~\ref{fig:supp_exp_sim_beta_cond}. }
    \label{fig:supp_exp_sim_beta_marg}
\end{figure*}

\begin{figure*}[tbh]
    \centering
    \includegraphics[width=0.9\linewidth]{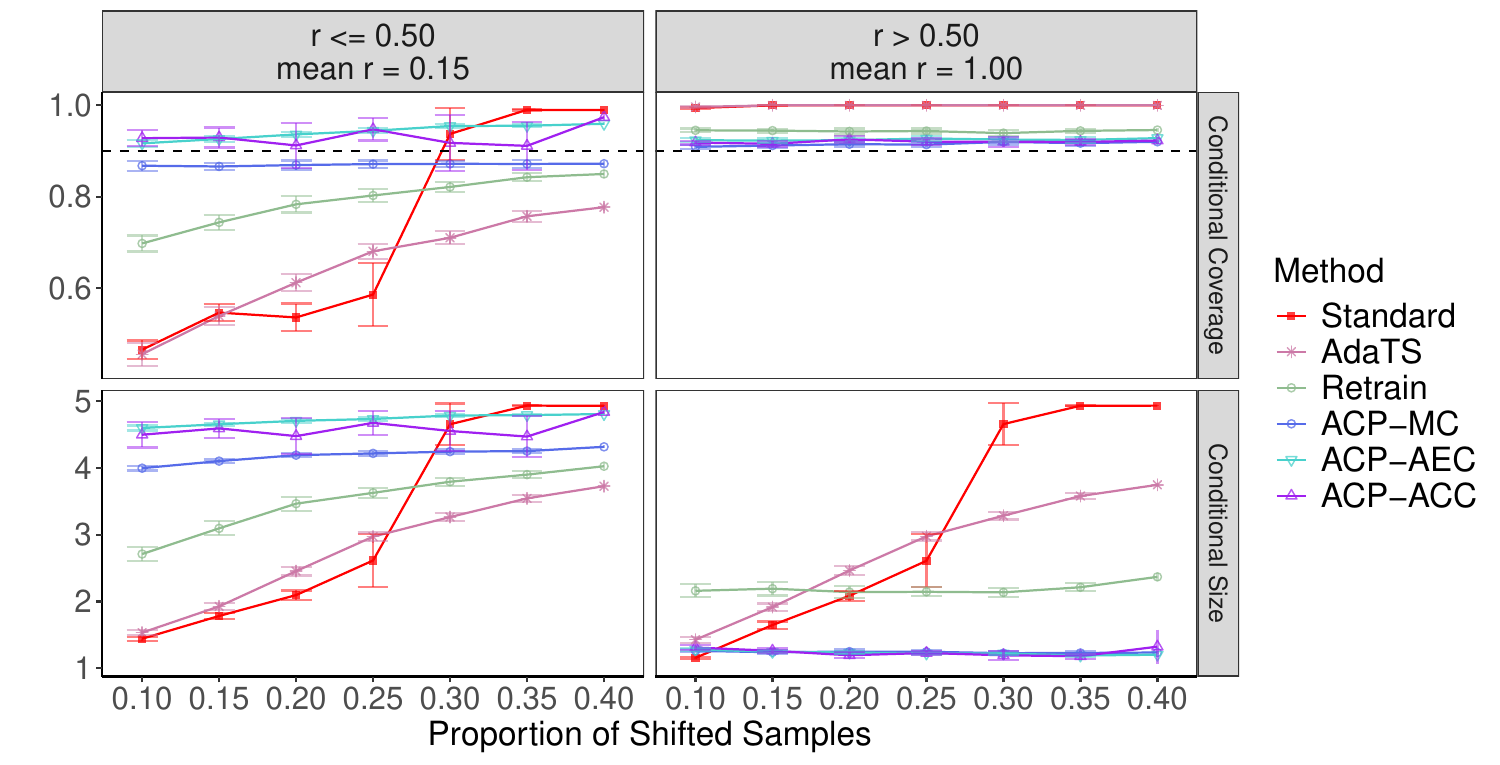}
    \caption{Our methods are more uncertainty-aware: they attain higher conditional coverage on hard samples while assigning larger prediction sets to hard samples and smaller ones to easy samples. Corresponding marginal performance are provided in Figure~\ref{fig:supp_exp_sim_beta_marg} and numerical details are provided in Table~\ref{tab:main_exp_sim_beta}.}
    \label{fig:supp_exp_sim_beta_cond}
\end{figure*}

\begin{table}[!htb]
\centering
    \caption{Performance of conformal prediction sets for $5$-class synthetic data generated from Example~\ref{exp:concept-shift}, as a function the proportion of shifted samples. See the corresponding plots in Figure~\ref{fig:supp_exp_sim_beta_marg}--\ref{fig:supp_exp_sim_beta_cond}.}
  \label{tab:main_exp_sim_beta}
\centering
\fontsize{6}{6}\selectfont
\begin{tabular}[t]{rlllllll}
\toprule
\multicolumn{2}{c}{ } & \multicolumn{2}{c}{\makecell{Marginal}} & \multicolumn{2}{c}{\makecell{Hard Bin \\ ($r^* \leq 0.5$)}} & \multicolumn{2}{c}{\makecell{Easy Bin \\ ($r^* > 0.5$)}} \\
\cmidrule(l{3pt}r{3pt}){3-4} \cmidrule(l{3pt}r{3pt}){5-6} \cmidrule(l{3pt}r{3pt}){7-8}
\makecell{Proportion \\ of \\ Shifted \\ Samples} & Method & Coverage & Size & Coverage & Size & Coverage & Size \\
\midrule
0.1 & ACP-ACC & \makecell{0.920 \\ (0.003)} & \makecell{1.906 \\ (0.038)} & \textcolor{red}{\makecell{0.928 \\ (0.009)}} & \makecell{4.498 \\ (0.095)} & \makecell{0.918 \\ (0.002)} & \makecell{1.309 \\ (0.020)} \\
 & ACP-AEC & \makecell{0.923 \\ (0.002)} & \makecell{1.895 \\ (0.020)} & \textcolor{red}{\makecell{0.917 \\ (0.004)}} & \makecell{4.598 \\ (0.018)} & \makecell{0.924 \\ (0.002)} & \makecell{1.280 \\ (0.012)} \\
 & ACP-MC & \makecell{0.902 \\ (0.002)} & \textcolor{red}{\makecell{1.754 \\ (0.017)}} & \textcolor{red}{\makecell{0.868 \\ (0.006)}} & \makecell{3.996 \\ (0.018)} & \makecell{0.909 \\ (0.002)} & \makecell{1.260 \\ (0.011)} \\
 & AdaTS & \makecell{0.900 \\ (0.002)} & \textcolor{red}{\makecell{1.445 \\ (0.023)}} & \makecell{0.455 \\ (0.013)} & \makecell{1.533 \\ (0.020)} & \makecell{0.997 \\ (0.001)} & \makecell{1.427 \\ (0.023)} \\
 & Retrain & \makecell{0.901 \\ (0.002)} & \makecell{2.258 \\ (0.048)} & \makecell{0.697 \\ (0.009)} & \makecell{2.710 \\ (0.053)} & \makecell{0.945 \\ (0.002)} & \makecell{2.159 \\ (0.049)} \\
 & Standard & \makecell{0.900 \\ (0.002)} & \textcolor{red}{\makecell{1.208 \\ (0.011)}} & \makecell{0.465 \\ (0.010)} & \makecell{1.441 \\ (0.014)} & \makecell{0.994 \\ (0.001)} & \makecell{1.155 \\ (0.009)} \\
\midrule
0.15 & ACP-ACC & \makecell{0.919 \\ (0.003)} & \makecell{2.023 \\ (0.035)} & \textcolor{red}{\makecell{0.930 \\ (0.011)}} & \makecell{4.593 \\ (0.071)} & \makecell{0.916 \\ (0.003)} & \makecell{1.261 \\ (0.017)} \\
 & ACP-AEC & \makecell{0.924 \\ (0.002)} & \makecell{2.003 \\ (0.017)} & \textcolor{red}{\makecell{0.926 \\ (0.003)}} & \makecell{4.655 \\ (0.012)} & \makecell{0.923 \\ (0.002)} & \makecell{1.246 \\ (0.011)} \\
 & ACP-MC & \makecell{0.902 \\ (0.002)} & \textcolor{red}{\makecell{1.886 \\ (0.016)}} & \textcolor{red}{\makecell{0.866 \\ (0.004)}} & \makecell{4.102 \\ (0.013)} & \makecell{0.912 \\ (0.002)} & \makecell{1.239 \\ (0.013)} \\
 & AdaTS & \makecell{0.897 \\ (0.002)} & \textcolor{red}{\makecell{1.918 \\ (0.026)}} & \makecell{0.539 \\ (0.010)} & \makecell{1.926 \\ (0.026)} & \makecell{1.000 \\ (0.000)} & \makecell{1.915 \\ (0.026)} \\
 & Retrain & \makecell{0.899 \\ (0.002)} & \makecell{2.393 \\ (0.049)} & \makecell{0.744 \\ (0.008)} & \makecell{3.095 \\ (0.052)} & \makecell{0.945 \\ (0.002)} & \makecell{2.192 \\ (0.051)} \\
 & Standard & \makecell{0.899 \\ (0.002)} & \textcolor{red}{\makecell{1.678 \\ (0.026)}} & \makecell{0.546 \\ (0.009)} & \makecell{1.782 \\ (0.022)} & \makecell{1.000 \\ (0.000)} & \makecell{1.647 \\ (0.027)} \\
\midrule
0.2 & ACP-ACC & \makecell{0.923 \\ (0.006)} & \textcolor{red}{\makecell{2.062 \\ (0.052)}} & \textcolor{red}{\makecell{0.912 \\ (0.025)}} & \makecell{4.477 \\ (0.137)} & \makecell{0.925 \\ (0.003)} & \makecell{1.193 \\ (0.023)} \\
 & ACP-AEC & \makecell{0.927 \\ (0.002)} & \makecell{2.164 \\ (0.019)} & \textcolor{red}{\makecell{0.936 \\ (0.003)}} & \makecell{4.706 \\ (0.014)} & \makecell{0.924 \\ (0.002)} & \makecell{1.234 \\ (0.012)} \\
 & ACP-MC & \makecell{0.903 \\ (0.003)} & \textcolor{red}{\makecell{2.040 \\ (0.020)}} & \textcolor{red}{\makecell{0.869 \\ (0.005)}} & \makecell{4.192 \\ (0.015)} & \makecell{0.915 \\ (0.002)} & \makecell{1.249 \\ (0.015)} \\
 & AdaTS & \makecell{0.896 \\ (0.002)} & \makecell{2.461 \\ (0.032)} & \makecell{0.612 \\ (0.009)} & \makecell{2.454 \\ (0.033)} & \makecell{1.000 \\ (0.000)} & \makecell{2.464 \\ (0.032)} \\
 & Retrain & \makecell{0.900 \\ (0.002)} & \makecell{2.494 \\ (0.042)} & \makecell{0.783 \\ (0.009)} & \makecell{3.465 \\ (0.052)} & \makecell{0.943 \\ (0.003)} & \makecell{2.140 \\ (0.046)} \\
 & Standard & \makecell{0.876 \\ (0.004)} & \textcolor{red}{\makecell{2.088 \\ (0.036)}} & \makecell{0.535 \\ (0.015)} & \makecell{2.095 \\ (0.035)} & \makecell{1.000 \\ (0.000)} & \makecell{2.085 \\ (0.037)} \\
\midrule
0.25 & ACP-ACC & \makecell{0.928 \\ (0.005)} & \textcolor{red}{\makecell{2.332 \\ (0.033)}} & \textcolor{red}{\makecell{0.947 \\ (0.012)}} & \makecell{4.675 \\ (0.090)} & \makecell{0.920 \\ (0.003)} & \makecell{1.229 \\ (0.016)} \\
 & ACP-AEC & \makecell{0.933 \\ (0.002)} & \textcolor{red}{\makecell{2.323 \\ (0.019)}} & \textcolor{red}{\makecell{0.944 \\ (0.003)}} & \makecell{4.734 \\ (0.013)} & \makecell{0.927 \\ (0.002)} & \makecell{1.221 \\ (0.012)} \\
 & ACP-MC & \makecell{0.901 \\ (0.003)} & \textcolor{red}{\makecell{2.194 \\ (0.015)}} & \textcolor{red}{\makecell{0.871 \\ (0.004)}} & \makecell{4.218 \\ (0.016)} & \makecell{0.914 \\ (0.003)} & \makecell{1.246 \\ (0.014)} \\
 & AdaTS & \makecell{0.898 \\ (0.003)} & \makecell{2.974 \\ (0.031)} & \makecell{0.680 \\ (0.008)} & \makecell{2.971 \\ (0.031)} & \makecell{1.000 \\ (0.000)} & \makecell{2.976 \\ (0.031)} \\
 & Retrain & \makecell{0.899 \\ (0.002)} & \makecell{2.615 \\ (0.030)} & \makecell{0.803 \\ (0.007)} & \makecell{3.627 \\ (0.038)} & \makecell{0.944 \\ (0.003)} & \makecell{2.144 \\ (0.034)} \\
 & Standard & \makecell{0.867 \\ (0.011)} & \makecell{2.610 \\ (0.199)} & \makecell{0.585 \\ (0.035)} & \makecell{2.614 \\ (0.199)} & \makecell{1.000 \\ (0.000)} & \makecell{2.609 \\ (0.199)} \\
\midrule
0.3 & ACP-ACC & \makecell{0.920 \\ (0.012)} & \textcolor{red}{\makecell{2.423 \\ (0.074)}} & \textcolor{red}{\makecell{0.918 \\ (0.031)}} & \makecell{4.554 \\ (0.149)} & \makecell{0.920 \\ (0.005)} & \makecell{1.193 \\ (0.033)} \\
 & ACP-AEC & \makecell{0.936 \\ (0.002)} & \textcolor{red}{\makecell{2.516 \\ (0.017)}} & \textcolor{red}{\makecell{0.955 \\ (0.002)}} & \makecell{4.786 \\ (0.007)} & \makecell{0.925 \\ (0.003)} & \makecell{1.220 \\ (0.011)} \\
 & ACP-MC & \makecell{0.903 \\ (0.002)} & \textcolor{red}{\makecell{2.316 \\ (0.017)}} & \makecell{0.872 \\ (0.003)} & \makecell{4.244 \\ (0.017)} & \makecell{0.920 \\ (0.003)} & \makecell{1.225 \\ (0.016)} \\
 & AdaTS & \makecell{0.895 \\ (0.002)} & \makecell{3.278 \\ (0.028)} & \makecell{0.710 \\ (0.007)} & \makecell{3.264 \\ (0.028)} & \makecell{1.000 \\ (0.000)} & \makecell{3.285 \\ (0.029)} \\
 & Retrain & \makecell{0.897 \\ (0.002)} & \makecell{2.735 \\ (0.029)} & \makecell{0.821 \\ (0.006)} & \makecell{3.794 \\ (0.030)} & \makecell{0.939 \\ (0.003)} & \makecell{2.136 \\ (0.034)} \\
 & Standard & \makecell{0.978 \\ (0.010)} & \makecell{4.656 \\ (0.156)} & \textcolor{red}{\makecell{0.937 \\ (0.029)}} & \makecell{4.655 \\ (0.156)} & \makecell{1.000 \\ (0.000)} & \makecell{4.657 \\ (0.157)} \\
\midrule
0.35 & ACP-ACC & \makecell{0.915 \\ (0.011)} & \textcolor{red}{\makecell{2.535 \\ (0.069)}} & \textcolor{red}{\makecell{0.911 \\ (0.026)}} & \makecell{4.470 \\ (0.154)} & \makecell{0.920 \\ (0.003)} & \makecell{1.180 \\ (0.019)} \\
 & ACP-AEC & \makecell{0.937 \\ (0.002)} & \textcolor{red}{\makecell{2.650 \\ (0.015)}} & \textcolor{red}{\makecell{0.955 \\ (0.002)}} & \makecell{4.792 \\ (0.007)} & \makecell{0.924 \\ (0.003)} & \makecell{1.190 \\ (0.011)} \\
 & ACP-MC & \makecell{0.898 \\ (0.003)} & \textcolor{red}{\makecell{2.475 \\ (0.019)}} & \makecell{0.872 \\ (0.004)} & \makecell{4.252 \\ (0.016)} & \makecell{0.917 \\ (0.003)} & \makecell{1.228 \\ (0.016)} \\
 & AdaTS & \makecell{0.900 \\ (0.002)} & \makecell{3.564 \\ (0.024)} & \makecell{0.757 \\ (0.006)} & \makecell{3.544 \\ (0.025)} & \makecell{1.000 \\ (0.000)} & \makecell{3.579 \\ (0.024)} \\
 & Retrain & \makecell{0.902 \\ (0.001)} & \makecell{2.907 \\ (0.025)} & \makecell{0.843 \\ (0.004)} & \makecell{3.900 \\ (0.024)} & \makecell{0.944 \\ (0.002)} & \makecell{2.213 \\ (0.031)} \\
 & Standard & \makecell{0.996 \\ (0.000)} & \makecell{4.933 \\ (0.002)} & \textcolor{red}{\makecell{0.990 \\ (0.001)}} & \makecell{4.934 \\ (0.003)} & \makecell{1.000 \\ (0.000)} & \makecell{4.932 \\ (0.002)} \\
\midrule
0.4 & ACP-ACC & \makecell{0.946 \\ (0.003)} & \textcolor{red}{\makecell{2.907 \\ (0.076)}} & \textcolor{red}{\makecell{0.974 \\ (0.003)}} & \makecell{4.838 \\ (0.018)} & \makecell{0.923 \\ (0.003)} & \makecell{1.323 \\ (0.127)} \\
 & ACP-AEC & \makecell{0.942 \\ (0.001)} & \textcolor{red}{\makecell{2.828 \\ (0.014)}} & \textcolor{red}{\makecell{0.960 \\ (0.002)}} & \makecell{4.808 \\ (0.009)} & \makecell{0.928 \\ (0.003)} & \makecell{1.202 \\ (0.012)} \\
 & ACP-MC & \makecell{0.899 \\ (0.002)} & \textcolor{red}{\makecell{2.636 \\ (0.015)}} & \makecell{0.872 \\ (0.003)} & \makecell{4.316 \\ (0.011)} & \makecell{0.920 \\ (0.002)} & \makecell{1.235 \\ (0.012)} \\
 & AdaTS & \makecell{0.899 \\ (0.002)} & \makecell{3.738 \\ (0.018)} & \makecell{0.777 \\ (0.004)} & \makecell{3.727 \\ (0.018)} & \makecell{1.000 \\ (0.000)} & \makecell{3.747 \\ (0.017)} \\
 & Retrain & \makecell{0.902 \\ (0.002)} & \makecell{3.121 \\ (0.023)} & \makecell{0.850 \\ (0.004)} & \makecell{4.026 \\ (0.021)} & \makecell{0.946 \\ (0.003)} & \makecell{2.368 \\ (0.032)} \\
 & Standard & \makecell{0.995 \\ (0.000)} & \makecell{4.933 \\ (0.002)} & \textcolor{red}{\makecell{0.990 \\ (0.001)}} & \makecell{4.931 \\ (0.002)} & \makecell{1.000 \\ (0.000)} & \makecell{4.934 \\ (0.003)} \\
\bottomrule
\end{tabular}

\end{table}

\begin{figure*}[tbh]
    \centering
    \includegraphics[width=0.9\linewidth]{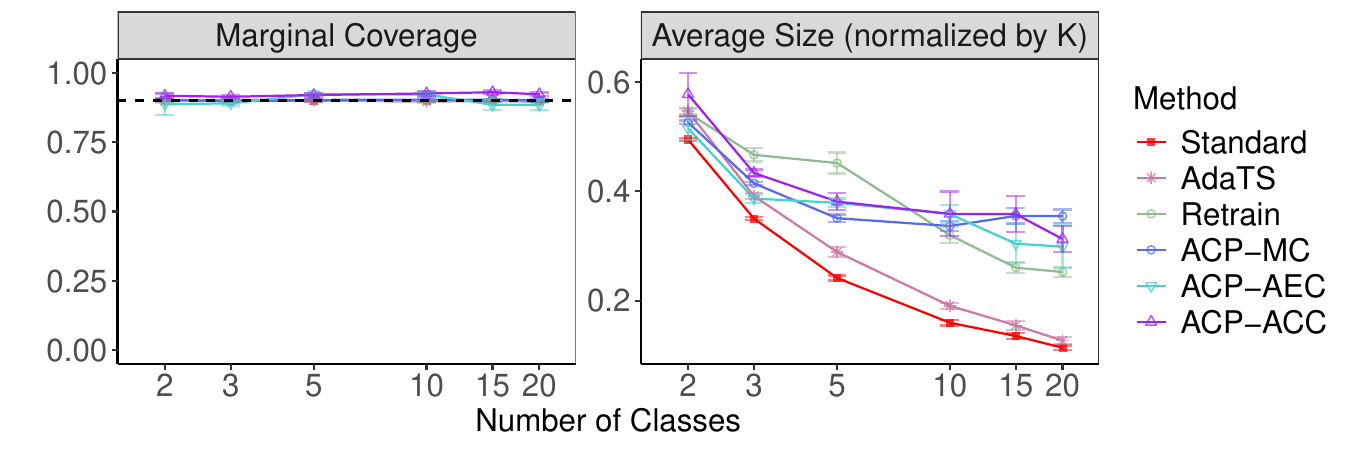}
    \caption{Performance of prediction sets for $5$-class synthetic data generated from Example~\ref{exp:concept-shift} as a function of number of classes. All methods achieve the target marginal coverage of $90\%$, while our methods retain practically efficient sets sizes (relative to the total number of classes). Error bars denote two standard errors. Corresponding conditional performance are provided in Figure~\ref{fig:supp_exp_sim_K_cond}. }
    \label{fig:supp_exp_sim_K_marg}
\end{figure*}

\begin{figure*}[tbh]
    \centering
    \includegraphics[width=0.9\linewidth]{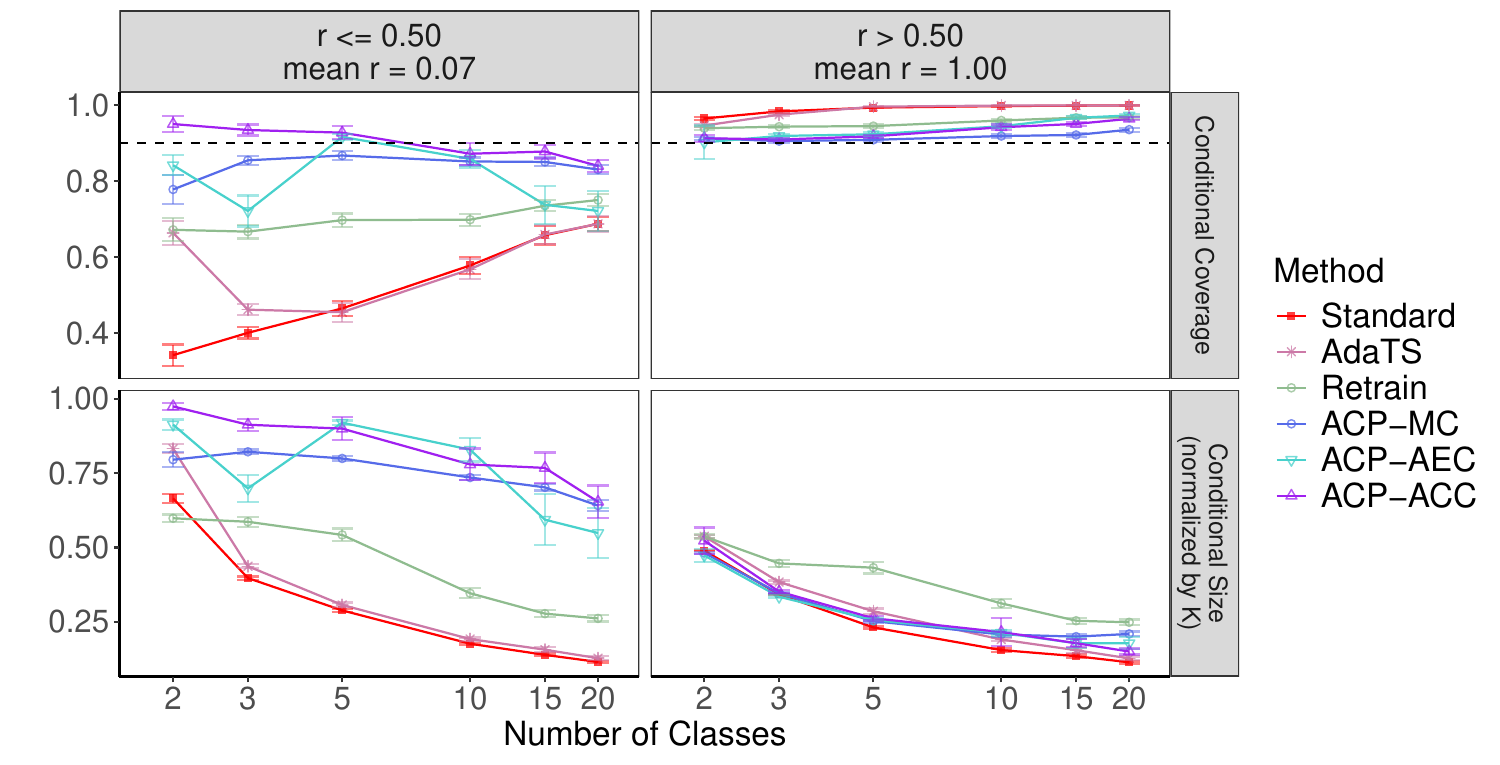}
    \caption{Our methods are more uncertainty-aware: they attain higher conditional coverage on hard samples while assigning larger prediction sets to hard samples and smaller ones to easy samples. Corresponding marginal performance are provided in Figure~\ref{fig:supp_exp_sim_K_marg} and numerical details are provided in Table~\ref{tab:main_exp_sim_nnew_K}. }
    \label{fig:supp_exp_sim_K_cond}
\end{figure*}

\begin{table}[!htb]
\centering
    \caption{Performance of prediction sets for $5$-class synthetic data generated from Example~\ref{exp:concept-shift} as a function of number of classes. See the corresponding plots in Figure~\ref{fig:supp_exp_sim_K_marg}--\ref{fig:supp_exp_sim_K_cond}.}
  \label{tab:main_exp_sim_nnew_K}
\centering
\fontsize{6}{6}\selectfont
\begin{tabular}[t]{rlllllll}
\toprule
\multicolumn{2}{c}{ } & \multicolumn{2}{c}{\makecell{Marginal}} & \multicolumn{2}{c}{\makecell{Hard Bin \\ ($r^* \leq 0.5$)}} & \multicolumn{2}{c}{\makecell{Easy Bin \\ ($r^* > 0.5$)}} \\
\cmidrule(l{3pt}r{3pt}){3-4} \cmidrule(l{3pt}r{3pt}){5-6} \cmidrule(l{3pt}r{3pt}){7-8}
\makecell{Number \\ of \\ Classes} & Method & Coverage & Size & Coverage & Size & Coverage & Size \\
\midrule
2 & ACP-ACC & \makecell{0.917 \\ (0.004)} & \makecell{1.155 \\ (0.039)} & \textcolor{red}{\makecell{0.951 \\ (0.010)}} & \makecell{1.949 \\ (0.011)} & \makecell{0.913 \\ (0.005)} & \makecell{1.045 \\ (0.044)} \\
 & ACP-AEC & \makecell{0.887 \\ (0.020)} & \textcolor{red}{\makecell{1.030 \\ (0.021)}} & \textcolor{red}{\makecell{0.842 \\ (0.013)}} & \makecell{1.826 \\ (0.018)} & \makecell{0.902 \\ (0.022)} & \makecell{0.946 \\ (0.022)} \\
 & ACP-MC & \makecell{0.903 \\ (0.002)} & \textcolor{red}{\makecell{1.052 \\ (0.003)}} & \textcolor{red}{\makecell{0.778 \\ (0.019)}} & \makecell{1.590 \\ (0.025)} & \makecell{0.913 \\ (0.002)} & \makecell{0.964 \\ (0.003)} \\
 & AdaTS & \makecell{0.899 \\ (0.002)} & \makecell{1.093 \\ (0.006)} & \makecell{0.664 \\ (0.015)} & \makecell{1.664 \\ (0.015)} & \makecell{0.947 \\ (0.002)} & \makecell{1.075 \\ (0.006)} \\
 & Retrain & \makecell{0.897 \\ (0.002)} & \makecell{1.087 \\ (0.007)} & \makecell{0.672 \\ (0.015)} & \makecell{1.196 \\ (0.013)} & \makecell{0.940 \\ (0.002)} & \makecell{1.074 \\ (0.007)} \\
 & Standard & \makecell{0.901 \\ (0.002)} & \textcolor{red}{\makecell{0.989 \\ (0.003)}} & \makecell{0.341 \\ (0.014)} & \makecell{1.330 \\ (0.015)} & \makecell{0.965 \\ (0.002)} & \makecell{0.978 \\ (0.002)} \\
\midrule
3 & ACP-ACC & \makecell{0.914 \\ (0.002)} & \makecell{1.300 \\ (0.009)} & \textcolor{red}{\makecell{0.935 \\ (0.007)}} & \makecell{2.738 \\ (0.031)} & \makecell{0.910 \\ (0.002)} & \makecell{1.055 \\ (0.008)} \\
 & ACP-AEC & \makecell{0.890 \\ (0.004)} & \textcolor{red}{\makecell{1.160 \\ (0.013)}} & \textcolor{red}{\makecell{0.722 \\ (0.021)}} & \makecell{2.096 \\ (0.069)} & \makecell{0.919 \\ (0.002)} & \makecell{1.002 \\ (0.006)} \\
 & ACP-MC & \makecell{0.899 \\ (0.002)} & \makecell{1.244 \\ (0.005)} & \textcolor{red}{\makecell{0.855 \\ (0.006)}} & \makecell{2.465 \\ (0.012)} & \makecell{0.906 \\ (0.002)} & \makecell{1.036 \\ (0.005)} \\
 & AdaTS & \makecell{0.901 \\ (0.002)} & \textcolor{red}{\makecell{1.174 \\ (0.008)}} & \makecell{0.461 \\ (0.007)} & \makecell{1.310 \\ (0.010)} & \makecell{0.975 \\ (0.001)} & \makecell{1.150 \\ (0.008)} \\
 & Retrain & \makecell{0.904 \\ (0.002)} & \makecell{1.400 \\ (0.018)} & \makecell{0.667 \\ (0.009)} & \makecell{1.759 \\ (0.025)} & \makecell{0.944 \\ (0.002)} & \makecell{1.339 \\ (0.018)} \\
 & Standard & \makecell{0.899 \\ (0.002)} & \textcolor{red}{\makecell{1.050 \\ (0.004)}} & \makecell{0.400 \\ (0.008)} & \makecell{1.191 \\ (0.008)} & \makecell{0.984 \\ (0.001)} & \makecell{1.026 \\ (0.003)} \\
\midrule
5 & ACP-ACC & \makecell{0.920 \\ (0.003)} & \makecell{1.906 \\ (0.038)} & \textcolor{red}{\makecell{0.928 \\ (0.009)}} & \makecell{4.498 \\ (0.095)} & \makecell{0.918 \\ (0.002)} & \makecell{1.309 \\ (0.020)} \\
 & ACP-AEC & \makecell{0.923 \\ (0.002)} & \makecell{1.895 \\ (0.020)} & \textcolor{red}{\makecell{0.917 \\ (0.004)}} & \makecell{4.598 \\ (0.018)} & \makecell{0.924 \\ (0.002)} & \makecell{1.280 \\ (0.012)} \\
 & ACP-MC & \makecell{0.902 \\ (0.002)} & \textcolor{red}{\makecell{1.754 \\ (0.017)}} & \textcolor{red}{\makecell{0.868 \\ (0.006)}} & \makecell{3.996 \\ (0.018)} & \makecell{0.909 \\ (0.002)} & \makecell{1.260 \\ (0.011)} \\
 & AdaTS & \makecell{0.900 \\ (0.002)} & \textcolor{red}{\makecell{1.445 \\ (0.023)}} & \makecell{0.455 \\ (0.013)} & \makecell{1.533 \\ (0.020)} & \makecell{0.997 \\ (0.001)} & \makecell{1.427 \\ (0.023)} \\
 & Retrain & \makecell{0.901 \\ (0.002)} & \makecell{2.258 \\ (0.048)} & \makecell{0.697 \\ (0.009)} & \makecell{2.710 \\ (0.053)} & \makecell{0.945 \\ (0.002)} & \makecell{2.159 \\ (0.049)} \\
 & Standard & \makecell{0.900 \\ (0.002)} & \textcolor{red}{\makecell{1.208 \\ (0.011)}} & \makecell{0.465 \\ (0.010)} & \makecell{1.441 \\ (0.014)} & \makecell{0.994 \\ (0.001)} & \makecell{1.155 \\ (0.009)} \\
\midrule
10 & ACP-ACC & \makecell{0.926 \\ (0.004)} & \makecell{3.588 \\ (0.204)} & \textcolor{red}{\makecell{0.872 \\ (0.015)}} & \makecell{7.791 \\ (0.268)} & \makecell{0.942 \\ (0.004)} & \makecell{2.153 \\ (0.232)} \\
 & ACP-AEC & \makecell{0.922 \\ (0.004)} & \makecell{3.585 \\ (0.080)} & \textcolor{red}{\makecell{0.859 \\ (0.011)}} & \makecell{8.288 \\ (0.199)} & \makecell{0.945 \\ (0.002)} & \makecell{2.130 \\ (0.047)} \\
 & ACP-MC & \makecell{0.903 \\ (0.002)} & \makecell{3.365 \\ (0.047)} & \textcolor{red}{\makecell{0.852 \\ (0.005)}} & \makecell{7.352 \\ (0.044)} & \makecell{0.919 \\ (0.002)} & \makecell{2.072 \\ (0.037)} \\
 & AdaTS & \makecell{0.897 \\ (0.002)} & \textcolor{red}{\makecell{1.906 \\ (0.029)}} & \makecell{0.568 \\ (0.013)} & \makecell{1.918 \\ (0.027)} & \makecell{1.000 \\ (0.000)} & \makecell{1.903 \\ (0.029)} \\
 & Retrain & \makecell{0.898 \\ (0.002)} & \textcolor{red}{\makecell{3.199 \\ (0.074)}} & \makecell{0.699 \\ (0.008)} & \makecell{3.459 \\ (0.083)} & \makecell{0.960 \\ (0.002)} & \makecell{3.119 \\ (0.073)} \\
 & Standard & \makecell{0.898 \\ (0.002)} & \textcolor{red}{\makecell{1.598 \\ (0.025)}} & \makecell{0.578 \\ (0.011)} & \makecell{1.762 \\ (0.016)} & \makecell{0.998 \\ (0.000)} & \makecell{1.548 \\ (0.027)} \\
\midrule
15 & ACP-ACC & \makecell{0.930 \\ (0.003)} & \makecell{5.374 \\ (0.245)} & \textcolor{red}{\makecell{0.878 \\ (0.009)}} & \makecell{11.512 \\ (0.386)} & \makecell{0.951 \\ (0.003)} & \makecell{2.669 \\ (0.105)} \\
 & ACP-AEC & \makecell{0.885 \\ (0.010)} & \makecell{4.563 \\ (0.267)} & \textcolor{red}{\makecell{0.738 \\ (0.025)}} & \makecell{8.900 \\ (0.644)} & \makecell{0.967 \\ (0.002)} & \makecell{2.666 \\ (0.127)} \\
 & ACP-MC & \makecell{0.903 \\ (0.002)} & \makecell{5.324 \\ (0.104)} & \textcolor{red}{\makecell{0.851 \\ (0.006)}} & \makecell{10.530 \\ (0.080)} & \makecell{0.922 \\ (0.003)} & \makecell{3.000 \\ (0.056)} \\
 & AdaTS & \makecell{0.900 \\ (0.002)} & \textcolor{red}{\makecell{2.324 \\ (0.062)}} & \makecell{0.659 \\ (0.013)} & \makecell{2.340 \\ (0.063)} & \makecell{1.000 \\ (0.000)} & \makecell{2.316 \\ (0.062)} \\
 & Retrain & \makecell{0.899 \\ (0.002)} & \textcolor{red}{\makecell{3.906 \\ (0.078)}} & \makecell{0.735 \\ (0.007)} & \makecell{4.159 \\ (0.086)} & \makecell{0.967 \\ (0.002)} & \makecell{3.801 \\ (0.077)} \\
 & Standard & \makecell{0.899 \\ (0.002)} & \textcolor{red}{\makecell{2.033 \\ (0.040)}} & \makecell{0.658 \\ (0.012)} & \makecell{2.081 \\ (0.036)} & \makecell{0.999 \\ (0.000)} & \makecell{2.014 \\ (0.041)} \\
\midrule
20 & ACP-ACC & \makecell{0.923 \\ (0.003)} & \makecell{6.254 \\ (0.236)} & \textcolor{red}{\makecell{0.841 \\ (0.008)}} & \makecell{13.069 \\ (0.545)} & \makecell{0.965 \\ (0.002)} & \makecell{2.997 \\ (0.103)} \\
 & ACP-AEC & \makecell{0.885 \\ (0.009)} & \makecell{5.976 \\ (0.396)} & \makecell{0.722 \\ (0.027)} & \makecell{10.970 \\ (0.848)} & \makecell{0.973 \\ (0.002)} & \makecell{3.558 \\ (0.198)} \\
 & ACP-MC & \makecell{0.902 \\ (0.002)} & \makecell{7.093 \\ (0.119)} & \textcolor{red}{\makecell{0.831 \\ (0.006)}} & \makecell{12.818 \\ (0.180)} & \makecell{0.936 \\ (0.003)} & \makecell{4.182 \\ (0.082)} \\
 & AdaTS & \makecell{0.898 \\ (0.002)} & \textcolor{red}{\makecell{2.540 \\ (0.065)}} & \makecell{0.688 \\ (0.011)} & \makecell{2.548 \\ (0.066)} & \makecell{1.000 \\ (0.000)} & \makecell{2.534 \\ (0.063)} \\
 & Retrain & \makecell{0.897 \\ (0.002)} & \textcolor{red}{\makecell{5.048 \\ (0.097)}} & \textcolor{red}{\makecell{0.750 \\ (0.008)}} & \makecell{5.224 \\ (0.104)} & \makecell{0.968 \\ (0.002)} & \makecell{4.964 \\ (0.097)} \\
 & Standard & \makecell{0.897 \\ (0.002)} & \textcolor{red}{\makecell{2.281 \\ (0.043)}} & \makecell{0.688 \\ (0.009)} & \makecell{2.294 \\ (0.043)} & \makecell{1.000 \\ (0.000)} & \makecell{2.273 \\ (0.043)} \\
\bottomrule
\end{tabular}

\end{table}

\begin{figure*}[tbh]
    \centering
    \includegraphics[width=0.9\linewidth]{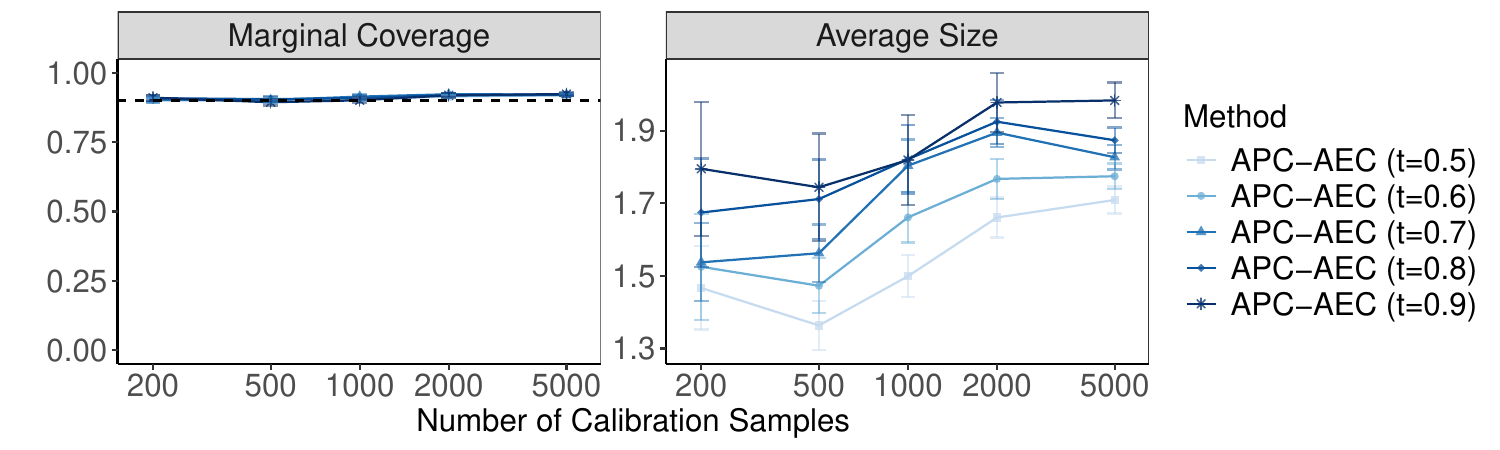}
    \caption{Performance of prediction sets for our AEqualized method with different thresholds for $5$-class synthetic data generated from Example~\ref{exp:concept-shift} as a function calibration sample sizes. All methods achieve the target marginal coverage of $90\%$, smaller thresholds lead to smaller average set sizes. Error bars denote two standard errors. Corresponding conditional performance are provided in Figure~\ref{fig:supp_exp_sim_n_new_equalized_cond}. }
\label{fig:supp_exp_sim_n_new_equalized_marg}
\end{figure*}

\begin{figure*}[tbh]
    \centering
    \includegraphics[width=0.9\linewidth]{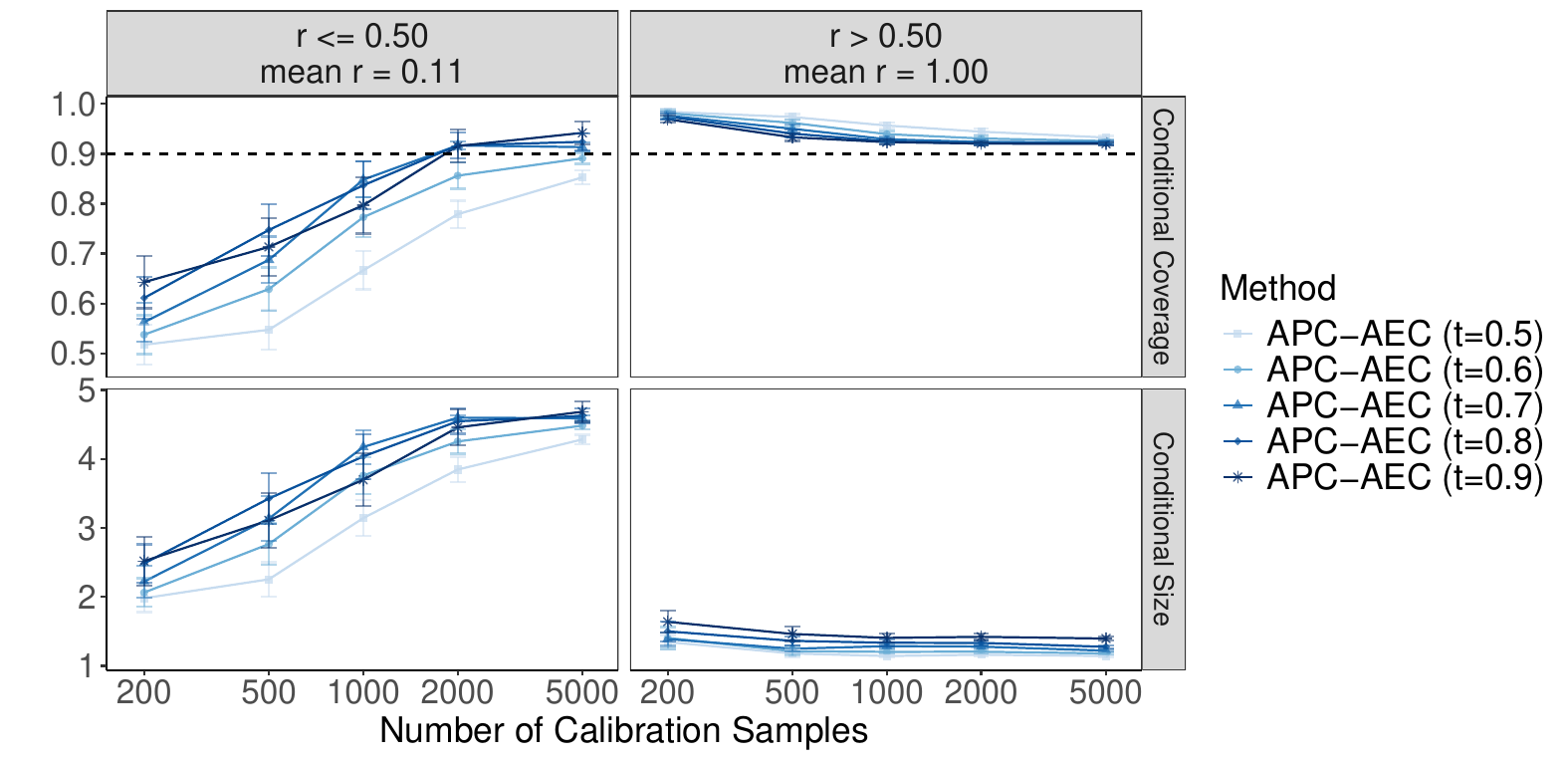}
    \caption{AEqualized uncertainty-aware: they attain higher conditional coverage on hard samples while assigning larger prediction sets to hard samples and smaller ones to easy samples. Higher thresholds lead to higher conditional coverage on the hard samples. Corresponding marginal performance are provided in Figure~\ref{fig:supp_exp_sim_n_new_equalized_marg} and numerical details are provided in Table~\ref{tab:main_exp_sim_equalized}. }
    \label{fig:supp_exp_sim_n_new_equalized_cond}
\end{figure*}

\begin{table}[!htb]
\centering
    \caption{Performance of conformal prediction sets for our AEqualized method with different thresholds for $5$-class synthetic data generated from Example~\ref{exp:concept-shift}, as a function the of the total calibration sample size. See the corresponding plots in Figure~\ref{fig:supp_exp_sim_n_new_equalized_marg}--\ref{fig:supp_exp_sim_n_new_equalized_cond}.}
  \label{tab:main_exp_sim_equalized}
\centering
\fontsize{6}{6}\selectfont
\begin{tabular}[t]{rlllllll}
\toprule
\multicolumn{2}{c}{ } & \multicolumn{2}{c}{\makecell{Marginal}} & \multicolumn{2}{c}{\makecell{Hard Bin \\ ($r^* \leq 0.5$)}} & \multicolumn{2}{c}{\makecell{Easy Bin \\ ($r^* > 0.5$)}} \\
\cmidrule(l{3pt}r{3pt}){3-4} \cmidrule(l{3pt}r{3pt}){5-6} \cmidrule(l{3pt}r{3pt}){7-8}
\makecell{Number \\ of \\ Calibration \\ Samples} & Method & Coverage & Size & Coverage & Size & Coverage & Size \\
\midrule
200 & APC-AEC (t=0.5) & \makecell{0.901 \\ (0.004)} & \textcolor{red}{\makecell{1.467 \\ (0.057)}} & \makecell{0.517 \\ (0.020)} & \makecell{1.982 \\ (0.100)} & \makecell{0.984 \\ (0.003)} & \makecell{1.346 \\ (0.052)} \\
 & APC-AEC (t=0.6) & \makecell{0.902 \\ (0.004)} & \textcolor{red}{\makecell{1.525 \\ (0.073)}} & \makecell{0.537 \\ (0.020)} & \makecell{2.064 \\ (0.104)} & \makecell{0.981 \\ (0.003)} & \makecell{1.406 \\ (0.079)} \\
 & APC-AEC (t=0.7) & \makecell{0.902 \\ (0.005)} & \textcolor{red}{\makecell{1.537 \\ (0.054)}} & \textcolor{red}{\makecell{0.562 \\ (0.019)}} & \makecell{2.224 \\ (0.115)} & \makecell{0.977 \\ (0.004)} & \makecell{1.390 \\ (0.050)} \\
 & APC-AEC (t=0.8) & \makecell{0.909 \\ (0.005)} & \makecell{1.675 \\ (0.075)} & \textcolor{red}{\makecell{0.611 \\ (0.021)}} & \makecell{2.486 \\ (0.138)} & \makecell{0.975 \\ (0.003)} & \makecell{1.502 \\ (0.073)} \\
 & APC-AEC (t=0.9) & \makecell{0.910 \\ (0.005)} & \makecell{1.795 \\ (0.092)} & \textcolor{red}{\makecell{0.643 \\ (0.026)}} & \makecell{2.519 \\ (0.177)} & \makecell{0.969 \\ (0.004)} & \makecell{1.640 \\ (0.080)} \\
\midrule
500 & APC-AEC (t=0.5) & \makecell{0.899 \\ (0.004)} & \textcolor{red}{\makecell{1.364 \\ (0.034)}} & \makecell{0.547 \\ (0.020)} & \makecell{2.256 \\ (0.123)} & \makecell{0.974 \\ (0.003)} & \makecell{1.181 \\ (0.026)} \\
 & APC-AEC (t=0.6) & \makecell{0.903 \\ (0.004)} & \textcolor{red}{\makecell{1.473 \\ (0.038)}} & \makecell{0.629 \\ (0.022)} & \makecell{2.767 \\ (0.150)} & \makecell{0.962 \\ (0.004)} & \makecell{1.213 \\ (0.027)} \\
 & APC-AEC (t=0.7) & \makecell{0.902 \\ (0.004)} & \textcolor{red}{\makecell{1.563 \\ (0.040)}} & \textcolor{red}{\makecell{0.687 \\ (0.023)}} & \makecell{3.137 \\ (0.164)} & \makecell{0.950 \\ (0.004)} & \makecell{1.250 \\ (0.021)} \\
 & APC-AEC (t=0.8) & \makecell{0.905 \\ (0.005)} & \makecell{1.712 \\ (0.055)} & \textcolor{red}{\makecell{0.747 \\ (0.026)}} & \makecell{3.427 \\ (0.184)} & \makecell{0.940 \\ (0.004)} & \makecell{1.364 \\ (0.032)} \\
 & APC-AEC (t=0.9) & \makecell{0.894 \\ (0.006)} & \makecell{1.744 \\ (0.074)} & \textcolor{red}{\makecell{0.713 \\ (0.029)}} & \makecell{3.111 \\ (0.199)} & \makecell{0.933 \\ (0.004)} & \makecell{1.464 \\ (0.051)} \\
\midrule
1000 & APC-AEC (t=0.5) & \makecell{0.904 \\ (0.003)} & \textcolor{red}{\makecell{1.499 \\ (0.029)}} & \makecell{0.666 \\ (0.019)} & \makecell{3.143 \\ (0.130)} & \makecell{0.957 \\ (0.003)} & \makecell{1.144 \\ (0.009)} \\
 & APC-AEC (t=0.6) & \makecell{0.909 \\ (0.004)} & \textcolor{red}{\makecell{1.661 \\ (0.035)}} & \makecell{0.773 \\ (0.020)} & \makecell{3.759 \\ (0.136)} & \makecell{0.939 \\ (0.003)} & \makecell{1.204 \\ (0.014)} \\
 & APC-AEC (t=0.7) & \makecell{0.914 \\ (0.004)} & \textcolor{red}{\makecell{1.804 \\ (0.036)}} & \textcolor{red}{\makecell{0.848 \\ (0.018)}} & \makecell{4.173 \\ (0.124)} & \makecell{0.929 \\ (0.003)} & \makecell{1.287 \\ (0.018)} \\
 & APC-AEC (t=0.8) & \makecell{0.909 \\ (0.004)} & \makecell{1.821 \\ (0.048)} & \textcolor{red}{\makecell{0.837 \\ (0.024)}} & \makecell{4.036 \\ (0.162)} & \makecell{0.925 \\ (0.003)} & \makecell{1.336 \\ (0.023)} \\
 & APC-AEC (t=0.9) & \makecell{0.901 \\ (0.005)} & \makecell{1.819 \\ (0.062)} & \textcolor{red}{\makecell{0.797 \\ (0.028)}} & \makecell{3.698 \\ (0.192)} & \makecell{0.923 \\ (0.003)} & \makecell{1.407 \\ (0.033)} \\
\midrule
2000 & APC-AEC (t=0.5) & \makecell{0.914 \\ (0.002)} & \textcolor{red}{\makecell{1.661 \\ (0.028)}} & \makecell{0.779 \\ (0.014)} & \makecell{3.846 \\ (0.093)} & \makecell{0.944 \\ (0.003)} & \makecell{1.163 \\ (0.009)} \\
 & APC-AEC (t=0.6) & \makecell{0.917 \\ (0.003)} & \textcolor{red}{\makecell{1.767 \\ (0.028)}} & \makecell{0.856 \\ (0.013)} & \makecell{4.253 \\ (0.088)} & \makecell{0.931 \\ (0.003)} & \makecell{1.207 \\ (0.012)} \\
 & APC-AEC (t=0.7) & \makecell{0.923 \\ (0.002)} & \textcolor{red}{\makecell{1.895 \\ (0.020)}} & \textcolor{red}{\makecell{0.917 \\ (0.004)}} & \makecell{4.598 \\ (0.018)} & \makecell{0.924 \\ (0.002)} & \makecell{1.280 \\ (0.012)} \\
 & APC-AEC (t=0.8) & \makecell{0.919 \\ (0.003)} & \makecell{1.925 \\ (0.030)} & \textcolor{red}{\makecell{0.917 \\ (0.013)}} & \makecell{4.546 \\ (0.091)} & \makecell{0.920 \\ (0.002)} & \makecell{1.334 \\ (0.017)} \\
 & APC-AEC (t=0.9) & \makecell{0.919 \\ (0.004)} & \makecell{1.978 \\ (0.041)} & \textcolor{red}{\makecell{0.915 \\ (0.016)}} & \makecell{4.457 \\ (0.129)} & \makecell{0.921 \\ (0.002)} & \makecell{1.423 \\ (0.023)} \\
\midrule
5000 & APC-AEC (t=0.5) & \makecell{0.919 \\ (0.001)} & \textcolor{red}{\makecell{1.709 \\ (0.019)}} & \makecell{0.853 \\ (0.007)} & \makecell{4.283 \\ (0.033)} & \makecell{0.933 \\ (0.002)} & \makecell{1.144 \\ (0.007)} \\
 & APC-AEC (t=0.6) & \makecell{0.920 \\ (0.001)} & \textcolor{red}{\makecell{1.774 \\ (0.017)}} & \makecell{0.891 \\ (0.005)} & \makecell{4.483 \\ (0.025)} & \makecell{0.926 \\ (0.001)} & \makecell{1.181 \\ (0.007)} \\
 & APC-AEC (t=0.7) & \makecell{0.922 \\ (0.001)} & \textcolor{red}{\makecell{1.827 \\ (0.017)}} & \textcolor{red}{\makecell{0.913 \\ (0.004)}} & \makecell{4.590 \\ (0.019)} & \makecell{0.923 \\ (0.002)} & \makecell{1.222 \\ (0.008)} \\
 & APC-AEC (t=0.8) & \makecell{0.921 \\ (0.002)} & \makecell{1.874 \\ (0.018)} & \textcolor{red}{\makecell{0.924 \\ (0.008)}} & \makecell{4.623 \\ (0.053)} & \makecell{0.921 \\ (0.002)} & \makecell{1.278 \\ (0.010)} \\
 & APC-AEC (t=0.9) & \makecell{0.924 \\ (0.003)} & \makecell{1.983 \\ (0.025)} & \textcolor{red}{\makecell{0.942 \\ (0.011)}} & \makecell{4.685 \\ (0.074)} & \makecell{0.920 \\ (0.002)} & \makecell{1.398 \\ (0.015)} \\
\bottomrule
\end{tabular}

\end{table}

\begin{figure*}[tbh]
    \centering
    \includegraphics[width=0.9\linewidth]{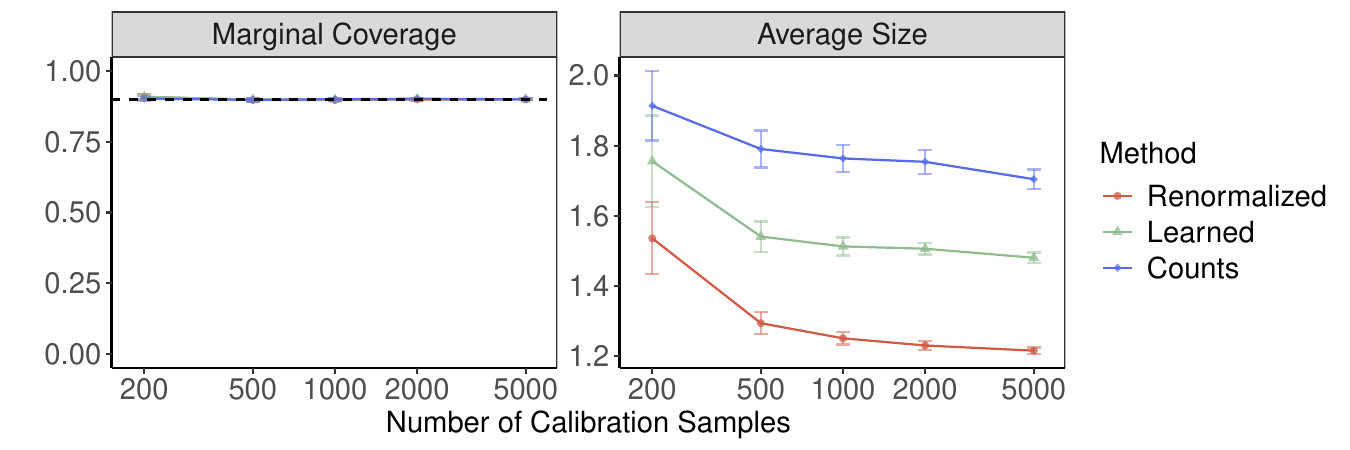}
    \caption{Performance of prediction sets for $5$-class synthetic data generated from Example~\ref{exp:concept-shift} using our ACP-MC method as a function of the total calibration sample size. In particular, we compare the different estimation to $\eta$ as described in Appendix~\ref{app:APA_mutlclass}. All $\eta$ choices achieve the target marginal coverage of $90\%$, while Renormalized achieves smallest prediction set size. Error bars denote two standard errors. Corresponding conditional performance are provided in Figure~\ref{fig:supp_exp_sim_n_new_etacompare_cond}. }
    \label{fig:supp_exp_sim_n_new_etacompare_marg}
\end{figure*}

\begin{figure*}[tbh]
    \centering
    \includegraphics[width=0.9\linewidth]{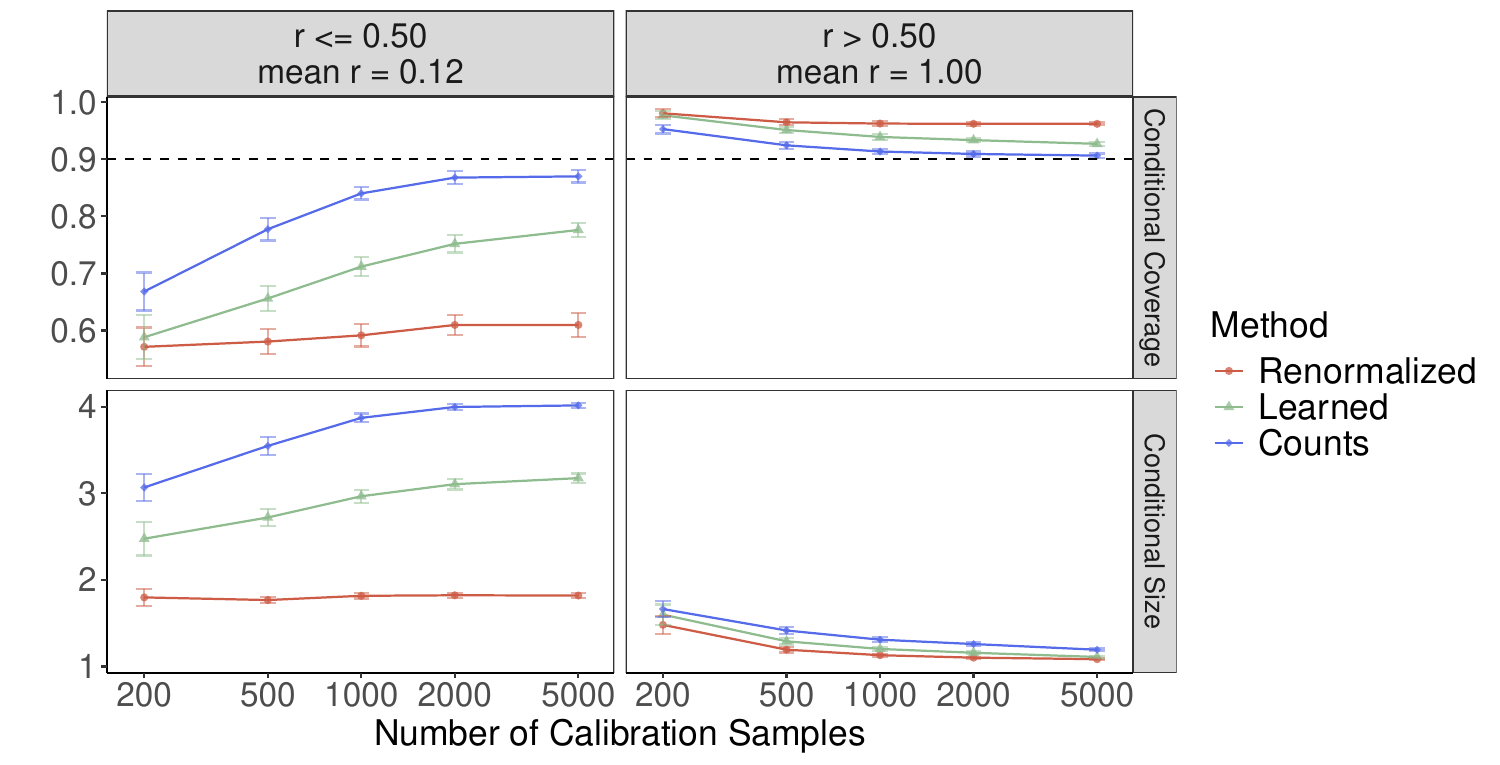}
    \caption{Our ACP-MC with all $\eta$ choices are uncertainty-aware: they attain high conditional coverage on hard samples while assigning larger prediction sets to hard samples and smaller ones to easy samples. Among them, Counts achieves highest conditional coverage on the hard bin. Corresponding marginal performance are provided in Figure~\ref{fig:supp_exp_sim_n_new_etacompare_marg} and numerical details are provided in Table~\ref{tab:main_exp_sim_etacompare}. }
    \label{fig:supp_exp_sim_n_new_etacompare_cond}
\end{figure*}

\begin{table}[!htb]
\centering
    \caption{Performance of conformal prediction sets for $5$-class synthetic data generated from Example~\ref{exp:concept-shift} using our ACP-MC method, as a function the of the total calibration sample size. See the corresponding plots in Figure~\ref{fig:supp_exp_sim_n_new_etacompare_marg}--\ref{fig:supp_exp_sim_n_new_etacompare_cond}.}
  \label{tab:main_exp_sim_etacompare}
\centering
\fontsize{6}{6}\selectfont
\begin{tabular}[t]{rlllllll}
\toprule
\multicolumn{2}{c}{ } & \multicolumn{2}{c}{\makecell{Marginal}} & \multicolumn{2}{c}{\makecell{$r^* \leq 0.50$ \\ mean $r^* = 0.12$}} & \multicolumn{2}{c}{\makecell{$r^* > 0.50$ \\ mean $r^* = 1.00$}} \\
\cmidrule(l{3pt}r{3pt}){3-4} \cmidrule(l{3pt}r{3pt}){5-6} \cmidrule(l{3pt}r{3pt}){7-8}
\makecell{Number \\ of \\ Calibration \\ Samples} & Method & Coverage & Size & Coverage & Size & Coverage & Size \\
\midrule
200 & Counts & \makecell{0.903 \\ (0.005)} & \textcolor{red}{\makecell{1.914 \\ (0.049)}} & \textcolor{red}{\makecell{0.668 \\ (0.017)}} & \makecell{3.068 \\ (0.080)} & \textcolor{red}{\makecell{0.953 \\ (0.004)}} & \makecell{1.664 \\ (0.045)} \\
 & Learned & \makecell{0.909 \\ (0.005)} & \textcolor{red}{\makecell{1.756 \\ (0.065)}} & \textcolor{red}{\makecell{0.588 \\ (0.019)}} & \makecell{2.476 \\ (0.097)} & \textcolor{red}{\makecell{0.977 \\ (0.003)}} & \makecell{1.600 \\ (0.059)} \\
 & Renormalized & \makecell{0.909 \\ (0.005)} & \textcolor{red}{\makecell{1.536 \\ (0.051)}} & \textcolor{red}{\makecell{0.571 \\ (0.017)}} & \makecell{1.798 \\ (0.048)} & \textcolor{red}{\makecell{0.980 \\ (0.003)}} & \makecell{1.481 \\ (0.053)} \\
\midrule
500 & Counts & \makecell{0.898 \\ (0.003)} & \textcolor{red}{\makecell{1.791 \\ (0.026)}} & \textcolor{red}{\makecell{0.777 \\ (0.010)}} & \makecell{3.547 \\ (0.053)} & \textcolor{red}{\makecell{0.924 \\ (0.003)}} & \makecell{1.417 \\ (0.021)} \\
 & Learned & \makecell{0.900 \\ (0.003)} & \textcolor{red}{\makecell{1.541 \\ (0.022)}} & \textcolor{red}{\makecell{0.656 \\ (0.011)}} & \makecell{2.721 \\ (0.049)} & \textcolor{red}{\makecell{0.951 \\ (0.003)}} & \makecell{1.292 \\ (0.018)} \\
 & Renormalized & \makecell{0.898 \\ (0.003)} & \textcolor{red}{\makecell{1.294 \\ (0.016)}} & \textcolor{red}{\makecell{0.581 \\ (0.011)}} & \makecell{1.768 \\ (0.019)} & \textcolor{red}{\makecell{0.965 \\ (0.003)}} & \makecell{1.195 \\ (0.016)} \\
\midrule
1000 & Counts & \makecell{0.901 \\ (0.002)} & \textcolor{red}{\makecell{1.764 \\ (0.019)}} & \textcolor{red}{\makecell{0.840 \\ (0.005)}} & \makecell{3.871 \\ (0.025)} & \textcolor{red}{\makecell{0.913 \\ (0.002)}} & \makecell{1.311 \\ (0.015)} \\
 & Learned & \makecell{0.899 \\ (0.003)} & \textcolor{red}{\makecell{1.513 \\ (0.013)}} & \textcolor{red}{\makecell{0.712 \\ (0.008)}} & \makecell{2.966 \\ (0.038)} & \textcolor{red}{\makecell{0.939 \\ (0.002)}} & \makecell{1.204 \\ (0.011)} \\
 & Renormalized & \makecell{0.898 \\ (0.002)} & \textcolor{red}{\makecell{1.251 \\ (0.009)}} & \textcolor{red}{\makecell{0.592 \\ (0.010)}} & \makecell{1.817 \\ (0.016)} & \textcolor{red}{\makecell{0.962 \\ (0.002)}} & \makecell{1.131 \\ (0.008)} \\
\midrule
2000 & Counts & \makecell{0.902 \\ (0.002)} & \textcolor{red}{\makecell{1.754 \\ (0.017)}} & \textcolor{red}{\makecell{0.868 \\ (0.006)}} & \makecell{3.996 \\ (0.018)} & \textcolor{red}{\makecell{0.909 \\ (0.002)}} & \makecell{1.260 \\ (0.011)} \\
 & Learned & \makecell{0.901 \\ (0.002)} & \textcolor{red}{\makecell{1.506 \\ (0.008)}} & \textcolor{red}{\makecell{0.752 \\ (0.008)}} & \makecell{3.106 \\ (0.030)} & \textcolor{red}{\makecell{0.933 \\ (0.002)}} & \makecell{1.159 \\ (0.008)} \\
 & Renormalized & \makecell{0.900 \\ (0.002)} & \textcolor{red}{\makecell{1.231 \\ (0.006)}} & \textcolor{red}{\makecell{0.610 \\ (0.009)}} & \makecell{1.823 \\ (0.014)} & \textcolor{red}{\makecell{0.962 \\ (0.002)}} & \makecell{1.103 \\ (0.006)} \\
\midrule
5000 & Counts & \makecell{0.900 \\ (0.002)} & \textcolor{red}{\makecell{1.705 \\ (0.014)}} & \textcolor{red}{\makecell{0.870 \\ (0.005)}} & \makecell{4.014 \\ (0.013)} & \textcolor{red}{\makecell{0.906 \\ (0.002)}} & \makecell{1.194 \\ (0.007)} \\
 & Learned & \makecell{0.900 \\ (0.002)} & \textcolor{red}{\makecell{1.481 \\ (0.008)}} & \textcolor{red}{\makecell{0.776 \\ (0.006)}} & \makecell{3.175 \\ (0.027)} & \textcolor{red}{\makecell{0.927 \\ (0.002)}} & \makecell{1.111 \\ (0.006)} \\
 & Renormalized & \makecell{0.900 \\ (0.002)} & \textcolor{red}{\makecell{1.216 \\ (0.004)}} & \textcolor{red}{\makecell{0.610 \\ (0.010)}} & \makecell{1.820 \\ (0.015)} & \textcolor{red}{\makecell{0.962 \\ (0.001)}} & \makecell{1.086 \\ (0.004)} \\
\bottomrule
\end{tabular}

\end{table}

\begin{figure*}[tbh]
    \centering
    \includegraphics[width=0.9\linewidth]{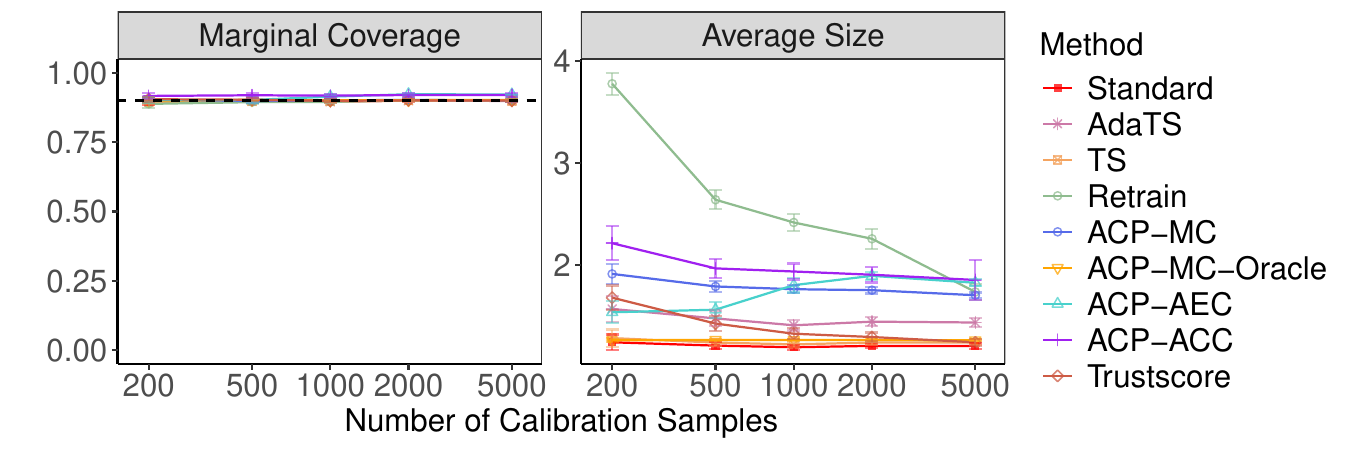}
    \caption{Performance of prediction sets constructed with additional benchmarks for $5$-class synthetic data generated from Example~\ref{exp:concept-shift} as a function of the total calibration sample size. All methods achieve the target marginal coverage of $90\%$, while our methods retain practically efficient sets sizes. Error bars denote two standard errors. Corresponding conditional performance are provided in Figure~\ref{fig:supp_exp_sim_n_new_full_cond}. }
    \label{fig:supp_exp_sim_n_new_full_marg}
\end{figure*}

\begin{figure*}[tbh]
    \centering
    \includegraphics[width=0.9\linewidth]{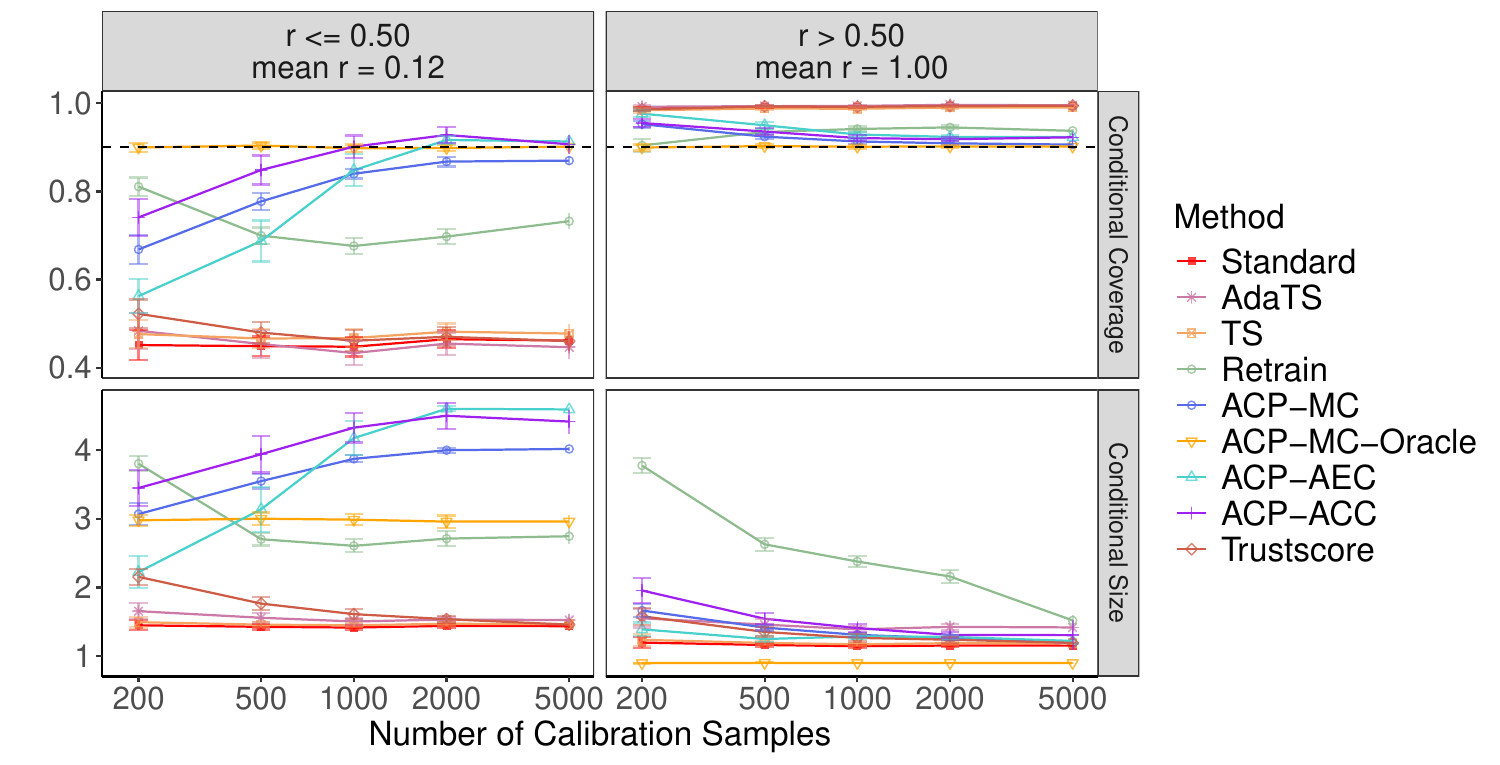}
    \caption{Our methods are more uncertainty-aware: they attain higher conditional coverage on hard samples while assigning larger prediction sets to hard samples and smaller ones to easy samples. Corresponding marginal performance are provided in Figure~\ref{fig:supp_exp_sim_n_new_full_marg} and numerical details are provided in Table~\ref{tab:main_exp_sim_nnew_full}. }
    \label{fig:supp_exp_sim_n_new_full_cond}
\end{figure*}

\begin{table}[!htb]
\centering
    \caption{Performance of conformal prediction sets constructed with additional benchmarks for $5$-class synthetic data generated from Example~\ref{exp:concept-shift}, as a function the of the total calibration sample size. See the corresponding plots in Figure~\ref{fig:supp_exp_sim_n_new_full_marg}--\ref{fig:supp_exp_sim_n_new_full_cond}.}
  \label{tab:main_exp_sim_nnew_full}
\centering
\fontsize{6}{6}\selectfont
\begin{tabular}[t]{rlllllll}
\toprule
\multicolumn{2}{c}{ } & \multicolumn{2}{c}{\makecell{Marginal}} & \multicolumn{2}{c}{\makecell{Hard Bin \\ ($r^* \leq 0.5$)}} & \multicolumn{2}{c}{\makecell{Easy Bin \\ ($r^* > 0.5$)}} \\
\cmidrule(l{3pt}r{3pt}){3-4} \cmidrule(l{3pt}r{3pt}){5-6} \cmidrule(l{3pt}r{3pt}){7-8}
\makecell{Number \\ of \\ Calibration \\ Samples} & Method & Coverage & Size & Coverage & Size & Coverage & Size \\
\midrule
200 & ACP-ACC & \makecell{0.917 \\ (0.005)} & \makecell{2.215 \\ (0.084)} & \textcolor{red}{\makecell{0.741 \\ (0.021)}} & \makecell{3.447 \\ (0.131)} & \makecell{0.955 \\ (0.005)} & \makecell{1.956 \\ (0.090)} \\
 & ACP-AEC & \makecell{0.902 \\ (0.005)} & \makecell{1.537 \\ (0.054)} & \makecell{0.562 \\ (0.019)} & \makecell{2.224 \\ (0.115)} & \makecell{0.977 \\ (0.004)} & \makecell{1.390 \\ (0.050)} \\
 & ACP-MC & \makecell{0.903 \\ (0.005)} & \makecell{1.914 \\ (0.049)} & \makecell{0.668 \\ (0.017)} & \makecell{3.068 \\ (0.080)} & \makecell{0.953 \\ (0.004)} & \makecell{1.664 \\ (0.045)} \\
 & ACP-MC-Oracle & \makecell{0.899 \\ (0.003)} & \textcolor{red}{\makecell{1.262 \\ (0.006)}} & \textcolor{red}{\makecell{0.900 \\ (0.005)}} & \makecell{2.977 \\ (0.039)} & \makecell{0.900 \\ (0.003)} & \makecell{0.900 \\ (0.003)} \\
 & AdaTS & \makecell{0.903 \\ (0.005)} & \makecell{1.570 \\ (0.065)} & \makecell{0.484 \\ (0.021)} & \makecell{1.655 \\ (0.060)} & \makecell{0.992 \\ (0.002)} & \makecell{1.551 \\ (0.066)} \\
 & Retrain & \makecell{0.888 \\ (0.007)} & \makecell{3.779 \\ (0.053)} & \textcolor{red}{\makecell{0.811 \\ (0.010)}} & \makecell{3.800 \\ (0.055)} & \makecell{0.904 \\ (0.007)} & \makecell{3.774 \\ (0.053)} \\
 & Standard & \makecell{0.894 \\ (0.004)} & \textcolor{red}{\makecell{1.243 \\ (0.038)}} & \makecell{0.451 \\ (0.017)} & \makecell{1.449 \\ (0.036)} & \makecell{0.988 \\ (0.002)} & \makecell{1.198 \\ (0.039)} \\
 & TS & \makecell{0.895 \\ (0.004)} & \textcolor{red}{\makecell{1.284 \\ (0.042)}} & \makecell{0.476 \\ (0.016)} & \makecell{1.495 \\ (0.038)} & \makecell{0.984 \\ (0.003)} & \makecell{1.238 \\ (0.043)} \\
 & Trustscore & \makecell{0.905 \\ (0.004)} & \makecell{1.681 \\ (0.060)} & \makecell{0.522 \\ (0.016)} & \makecell{2.154 \\ (0.057)} & \makecell{0.986 \\ (0.002)} & \makecell{1.581 \\ (0.062)} \\
\midrule
500 & ACP-ACC & \makecell{0.919 \\ (0.004)} & \makecell{1.968 \\ (0.047)} & \textcolor{red}{\makecell{0.849 \\ (0.017)}} & \makecell{3.938 \\ (0.133)} & \makecell{0.936 \\ (0.004)} & \makecell{1.545 \\ (0.040)} \\
 & ACP-AEC & \makecell{0.902 \\ (0.004)} & \makecell{1.563 \\ (0.040)} & \makecell{0.687 \\ (0.023)} & \makecell{3.137 \\ (0.164)} & \makecell{0.950 \\ (0.004)} & \makecell{1.250 \\ (0.021)} \\
 & ACP-MC & \makecell{0.898 \\ (0.003)} & \makecell{1.791 \\ (0.026)} & \textcolor{red}{\makecell{0.777 \\ (0.010)}} & \makecell{3.547 \\ (0.053)} & \makecell{0.924 \\ (0.003)} & \makecell{1.417 \\ (0.021)} \\
 & ACP-MC-Oracle & \makecell{0.903 \\ (0.002)} & \textcolor{red}{\makecell{1.267 \\ (0.005)}} & \textcolor{red}{\makecell{0.904 \\ (0.003)}} & \makecell{3.000 \\ (0.045)} & \makecell{0.903 \\ (0.002)} & \makecell{0.903 \\ (0.002)} \\
 & AdaTS & \makecell{0.899 \\ (0.003)} & \makecell{1.477 \\ (0.039)} & \makecell{0.454 \\ (0.016)} & \makecell{1.559 \\ (0.034)} & \makecell{0.994 \\ (0.001)} & \makecell{1.460 \\ (0.039)} \\
 & Retrain & \makecell{0.894 \\ (0.003)} & \makecell{2.641 \\ (0.047)} & \makecell{0.700 \\ (0.009)} & \makecell{2.701 \\ (0.046)} & \makecell{0.935 \\ (0.003)} & \makecell{2.628 \\ (0.047)} \\
 & Standard & \makecell{0.897 \\ (0.003)} & \textcolor{red}{\makecell{1.210 \\ (0.017)}} & \makecell{0.449 \\ (0.011)} & \makecell{1.427 \\ (0.019)} & \makecell{0.992 \\ (0.001)} & \makecell{1.163 \\ (0.016)} \\
 & TS & \makecell{0.897 \\ (0.002)} & \textcolor{red}{\makecell{1.240 \\ (0.017)}} & \makecell{0.466 \\ (0.010)} & \makecell{1.458 \\ (0.018)} & \makecell{0.988 \\ (0.002)} & \makecell{1.193 \\ (0.016)} \\
 & Trustscore & \makecell{0.903 \\ (0.003)} & \makecell{1.426 \\ (0.035)} & \makecell{0.480 \\ (0.012)} & \makecell{1.765 \\ (0.049)} & \makecell{0.993 \\ (0.001)} & \makecell{1.353 \\ (0.033)} \\
\midrule
1000 & ACP-ACC & \makecell{0.918 \\ (0.003)} & \makecell{1.937 \\ (0.039)} & \textcolor{red}{\makecell{0.902 \\ (0.013)}} & \makecell{4.324 \\ (0.108)} & \makecell{0.921 \\ (0.004)} & \makecell{1.410 \\ (0.028)} \\
 & ACP-AEC & \makecell{0.914 \\ (0.004)} & \makecell{1.804 \\ (0.036)} & \textcolor{red}{\makecell{0.848 \\ (0.018)}} & \makecell{4.173 \\ (0.124)} & \makecell{0.929 \\ (0.003)} & \makecell{1.287 \\ (0.018)} \\
 & ACP-MC & \makecell{0.901 \\ (0.002)} & \makecell{1.764 \\ (0.019)} & \makecell{0.840 \\ (0.005)} & \makecell{3.871 \\ (0.025)} & \makecell{0.913 \\ (0.002)} & \makecell{1.311 \\ (0.015)} \\
 & ACP-MC-Oracle & \makecell{0.901 \\ (0.002)} & \textcolor{red}{\makecell{1.265 \\ (0.005)}} & \textcolor{red}{\makecell{0.898 \\ (0.004)}} & \makecell{2.986 \\ (0.041)} & \makecell{0.901 \\ (0.002)} & \makecell{0.901 \\ (0.002)} \\
 & AdaTS & \makecell{0.896 \\ (0.003)} & \makecell{1.410 \\ (0.026)} & \makecell{0.433 \\ (0.014)} & \makecell{1.505 \\ (0.022)} & \makecell{0.994 \\ (0.001)} & \makecell{1.390 \\ (0.026)} \\
 & Retrain & \makecell{0.895 \\ (0.002)} & \makecell{2.418 \\ (0.043)} & \makecell{0.676 \\ (0.009)} & \makecell{2.605 \\ (0.047)} & \makecell{0.942 \\ (0.003)} & \makecell{2.377 \\ (0.042)} \\
 & Standard & \makecell{0.897 \\ (0.002)} & \textcolor{red}{\makecell{1.194 \\ (0.013)}} & \makecell{0.448 \\ (0.011)} & \makecell{1.417 \\ (0.015)} & \makecell{0.992 \\ (0.001)} & \makecell{1.146 \\ (0.011)} \\
 & TS & \makecell{0.896 \\ (0.002)} & \textcolor{red}{\makecell{1.222 \\ (0.012)}} & \makecell{0.467 \\ (0.010)} & \makecell{1.449 \\ (0.014)} & \makecell{0.988 \\ (0.001)} & \makecell{1.173 \\ (0.011)} \\
 & Trustscore & \makecell{0.899 \\ (0.002)} & \makecell{1.327 \\ (0.027)} & \makecell{0.461 \\ (0.012)} & \makecell{1.612 \\ (0.036)} & \makecell{0.992 \\ (0.001)} & \makecell{1.266 \\ (0.026)} \\
\midrule
2000 & ACP-ACC & \makecell{0.920 \\ (0.003)} & \makecell{1.906 \\ (0.038)} & \textcolor{red}{\makecell{0.928 \\ (0.009)}} & \makecell{4.498 \\ (0.095)} & \makecell{0.918 \\ (0.002)} & \makecell{1.309 \\ (0.020)} \\
 & ACP-AEC & \makecell{0.923 \\ (0.002)} & \makecell{1.895 \\ (0.020)} & \textcolor{red}{\makecell{0.917 \\ (0.004)}} & \makecell{4.598 \\ (0.018)} & \makecell{0.924 \\ (0.002)} & \makecell{1.280 \\ (0.012)} \\
 & ACP-MC & \makecell{0.902 \\ (0.002)} & \makecell{1.754 \\ (0.017)} & \makecell{0.868 \\ (0.006)} & \makecell{3.996 \\ (0.018)} & \makecell{0.909 \\ (0.002)} & \makecell{1.260 \\ (0.011)} \\
 & ACP-MC-Oracle & \makecell{0.902 \\ (0.002)} & \textcolor{red}{\makecell{1.264 \\ (0.005)}} & \textcolor{red}{\makecell{0.898 \\ (0.004)}} & \makecell{2.958 \\ (0.045)} & \makecell{0.902 \\ (0.002)} & \makecell{0.902 \\ (0.002)} \\
 & AdaTS & \makecell{0.900 \\ (0.002)} & \makecell{1.445 \\ (0.023)} & \makecell{0.455 \\ (0.013)} & \makecell{1.533 \\ (0.020)} & \makecell{0.997 \\ (0.001)} & \makecell{1.427 \\ (0.023)} \\
 & Retrain & \makecell{0.901 \\ (0.002)} & \makecell{2.258 \\ (0.048)} & \makecell{0.697 \\ (0.009)} & \makecell{2.710 \\ (0.053)} & \makecell{0.945 \\ (0.002)} & \makecell{2.159 \\ (0.049)} \\
 & Standard & \makecell{0.900 \\ (0.002)} & \textcolor{red}{\makecell{1.208 \\ (0.011)}} & \makecell{0.465 \\ (0.010)} & \makecell{1.441 \\ (0.014)} & \makecell{0.994 \\ (0.001)} & \makecell{1.155 \\ (0.009)} \\
 & TS & \makecell{0.900 \\ (0.002)} & \textcolor{red}{\makecell{1.241 \\ (0.010)}} & \makecell{0.482 \\ (0.009)} & \makecell{1.471 \\ (0.013)} & \makecell{0.990 \\ (0.001)} & \makecell{1.189 \\ (0.009)} \\
 & Trustscore & \makecell{0.901 \\ (0.002)} & \makecell{1.295 \\ (0.023)} & \makecell{0.470 \\ (0.011)} & \makecell{1.539 \\ (0.025)} & \makecell{0.994 \\ (0.001)} & \makecell{1.240 \\ (0.022)} \\
\midrule
5000 & ACP-ACC & \makecell{0.920 \\ (0.003)} & \makecell{1.854 \\ (0.098)} & \textcolor{red}{\makecell{0.907 \\ (0.011)}} & \makecell{4.413 \\ (0.087)} & \makecell{0.923 \\ (0.003)} & \makecell{1.306 \\ (0.111)} \\
 & ACP-AEC & \makecell{0.922 \\ (0.001)} & \makecell{1.827 \\ (0.017)} & \textcolor{red}{\makecell{0.913 \\ (0.004)}} & \makecell{4.590 \\ (0.019)} & \makecell{0.923 \\ (0.002)} & \makecell{1.222 \\ (0.008)} \\
 & ACP-MC & \makecell{0.900 \\ (0.002)} & \makecell{1.705 \\ (0.014)} & \makecell{0.870 \\ (0.005)} & \makecell{4.014 \\ (0.013)} & \makecell{0.906 \\ (0.002)} & \makecell{1.194 \\ (0.007)} \\
 & ACP-MC-Oracle & \makecell{0.902 \\ (0.002)} & \makecell{1.264 \\ (0.005)} & \textcolor{red}{\makecell{0.902 \\ (0.004)}} & \makecell{2.959 \\ (0.047)} & \makecell{0.902 \\ (0.002)} & \makecell{0.902 \\ (0.002)} \\
 & AdaTS & \makecell{0.898 \\ (0.002)} & \makecell{1.437 \\ (0.023)} & \makecell{0.446 \\ (0.013)} & \makecell{1.526 \\ (0.021)} & \makecell{0.995 \\ (0.001)} & \makecell{1.418 \\ (0.023)} \\
 & Retrain & \makecell{0.901 \\ (0.001)} & \makecell{1.736 \\ (0.039)} & \makecell{0.732 \\ (0.009)} & \makecell{2.745 \\ (0.058)} & \makecell{0.938 \\ (0.002)} & \makecell{1.520 \\ (0.038)} \\
 & Standard & \makecell{0.899 \\ (0.002)} & \textcolor{red}{\makecell{1.206 \\ (0.013)}} & \makecell{0.461 \\ (0.011)} & \makecell{1.437 \\ (0.015)} & \makecell{0.994 \\ (0.001)} & \makecell{1.154 \\ (0.011)} \\
 & TS & \makecell{0.899 \\ (0.002)} & \textcolor{red}{\makecell{1.240 \\ (0.013)}} & \makecell{0.477 \\ (0.011)} & \makecell{1.466 \\ (0.014)} & \makecell{0.990 \\ (0.001)} & \makecell{1.189 \\ (0.011)} \\
 & Trustscore & \makecell{0.900 \\ (0.001)} & \textcolor{red}{\makecell{1.243 \\ (0.017)}} & \makecell{0.460 \\ (0.011)} & \makecell{1.459 \\ (0.017)} & \makecell{0.995 \\ (0.001)} & \makecell{1.194 \\ (0.016)} \\
\bottomrule
\end{tabular}

\end{table}


\begin{figure*}[tbh]
    \centering
    \includegraphics[width=0.9\linewidth]{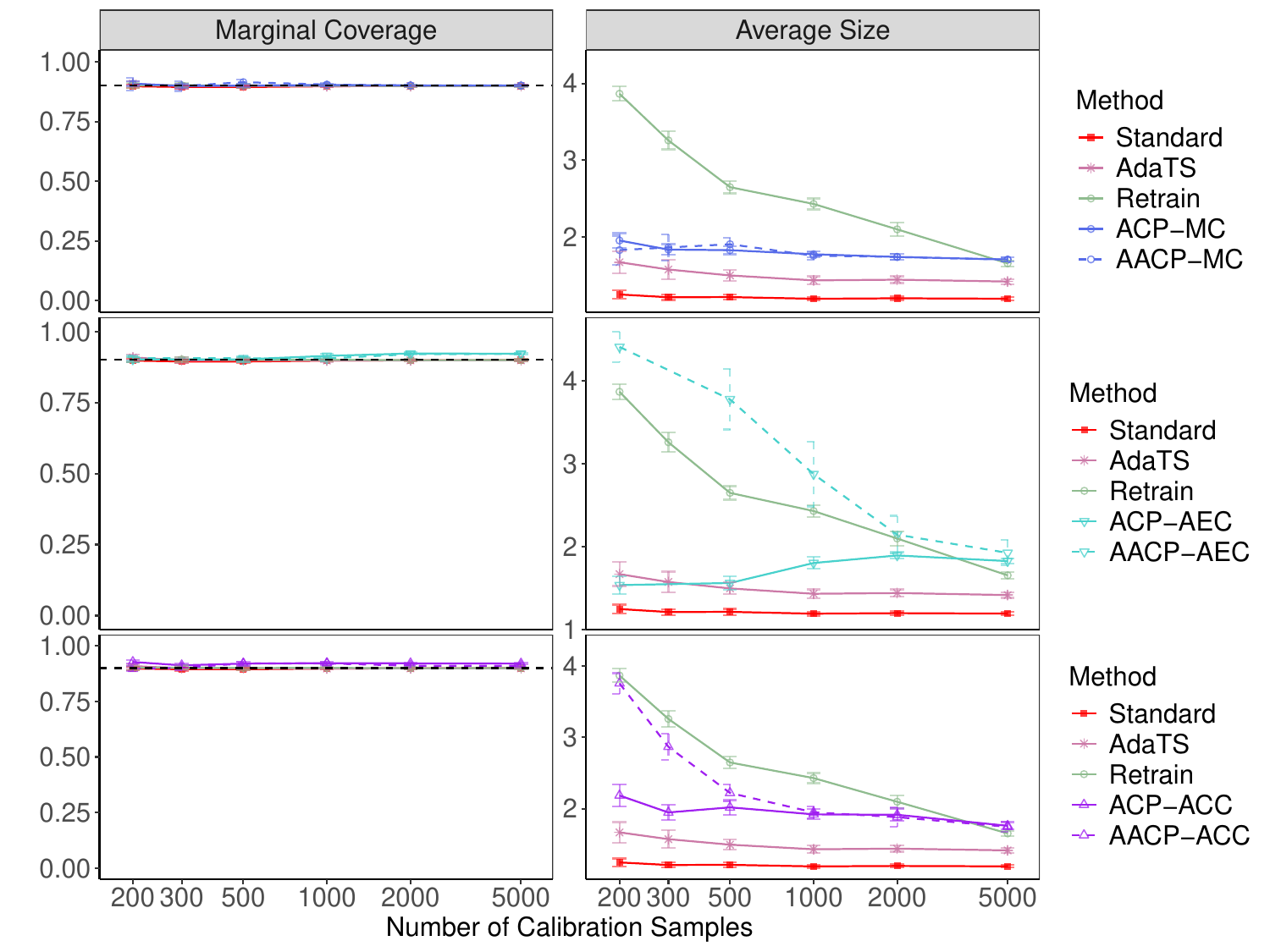}
    \caption{Performance of prediction sets constructed with different methods (including the Adaptive ACP (AACP) variants) for $5$-class synthetic data generated from Example~\ref{exp:concept-shift} as a function of the total calibration sample size. The AACP variants (AACP-MC, AACP-AEC, and AACP-ACC), compare retrain with the ACP methods and select the method that achieves higher conditional coverage. In this experiments, we set the selection threshold $\epsilon = 0.7$. Error bars denote two standard errors. Corresponding conditional performance are provided in Figure~\ref{fig:supp_exp_sim_nnew_adaptive_cond}. }
    \label{fig:supp_exp_sim_nnew_adaptive_marg}
\end{figure*}

\begin{figure*}[tbh]
    \centering
    \includegraphics[width=0.9\linewidth]{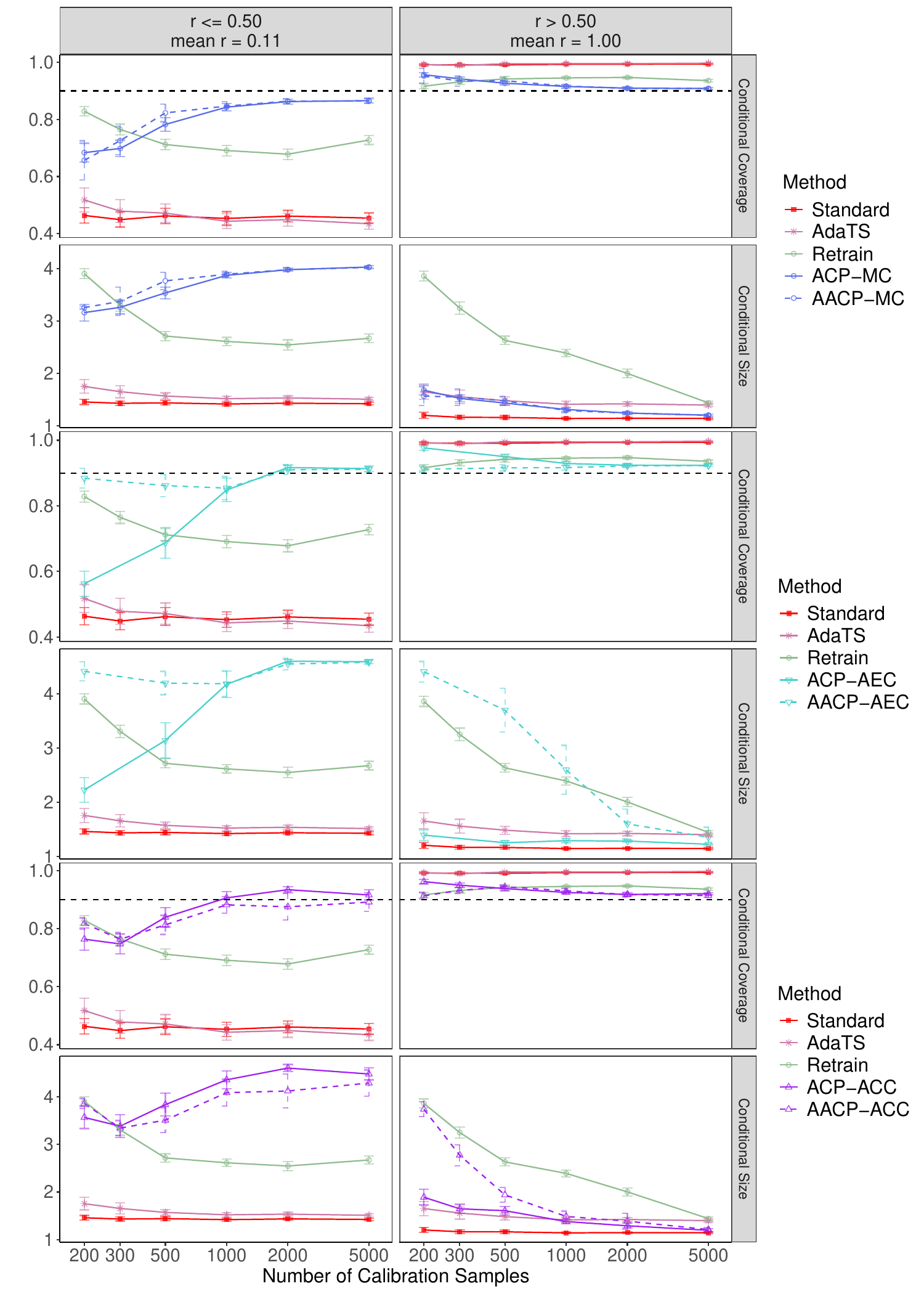}
    \caption{Conditional performance of the benchmark methods and the Adaptive ACP (AACP) variants. Corresponding marginal performance are provided in Figure~\ref{fig:supp_exp_sim_nnew_adaptive_marg} and numerical details are provided in Table~\ref{tab:supp_exp_sim_nnew_adaptive}. }
    \label{fig:supp_exp_sim_nnew_adaptive_cond}
\end{figure*}

\begin{table}[!htb]
\centering
    \caption{Performance of conformal prediction sets constructed with different methods including adaptive ACP (AACP) for $5$-class synthetic data generated from Example~\ref{exp:concept-shift}, as a function the of the total calibration sample size. See the corresponding plots in Figure~\ref{fig:supp_exp_sim_nnew_adaptive_marg}--\ref{fig:supp_exp_sim_nnew_adaptive_cond}.}
  \label{tab:supp_exp_sim_nnew_adaptive}
\centering
\fontsize{5}{5}\selectfont
\begin{tabular}[t]{rlllllll}
\toprule
\multicolumn{2}{c}{ } & \multicolumn{2}{c}{\makecell{Marginal}} & \multicolumn{2}{c}{\makecell{Hard Bin \\ ($r^* \leq 0.5$)}} & \multicolumn{2}{c}{\makecell{Easy Bin \\ ($r^* > 0.5$)}} \\
\cmidrule(l{3pt}r{3pt}){3-4} \cmidrule(l{3pt}r{3pt}){5-6} \cmidrule(l{3pt}r{3pt}){7-8}
\makecell{Number of Calibration Samples} & Method & Coverage & Size & Coverage & Size & Coverage & Size \\
\midrule
200 & AACP-ACC & \makecell{0.897 \\ (0.006)} & \makecell{3.756 \\ (0.072)} & \textcolor{red}{\makecell{0.818 \\ (0.010)}} & \makecell{3.857 \\ (0.053)} & \makecell{0.913 \\ (0.005)} & \makecell{3.737 \\ (0.078)} \\
 & AACP-AEC & \makecell{0.906 \\ (0.005)} & \makecell{4.407 \\ (0.093)} & \textcolor{red}{\makecell{0.884 \\ (0.015)}} & \makecell{4.412 \\ (0.088)} & \makecell{0.912 \\ (0.005)} & \makecell{4.405 \\ (0.094)} \\
 & AACP-MC & \makecell{0.907 \\ (0.013)} & \makecell{1.828 \\ (0.094)} & \makecell{0.657 \\ (0.034)} & \makecell{3.259 \\ (0.014)} & \makecell{0.952 \\ (0.013)} & \makecell{1.568 \\ (0.087)} \\
 & ACP-ACC & \makecell{0.927 \\ (0.004)} & \makecell{2.186 \\ (0.077)} & \makecell{0.764 \\ (0.019)} & \makecell{3.565 \\ (0.116)} & \makecell{0.962 \\ (0.004)} & \makecell{1.890 \\ (0.082)} \\
 & ACP-AEC & \makecell{0.902 \\ (0.005)} & \textcolor{red}{\makecell{1.537 \\ (0.054)}} & \makecell{0.562 \\ (0.019)} & \makecell{2.224 \\ (0.115)} & \makecell{0.977 \\ (0.004)} & \makecell{1.390 \\ (0.050)} \\
 & ACP-MC & \makecell{0.909 \\ (0.005)} & \makecell{1.953 \\ (0.048)} & \makecell{0.684 \\ (0.016)} & \makecell{3.162 \\ (0.078)} & \makecell{0.957 \\ (0.004)} & \makecell{1.683 \\ (0.045)} \\
 & AdaTS & \makecell{0.909 \\ (0.005)} & \textcolor{red}{\makecell{1.670 \\ (0.072)}} & \makecell{0.518 \\ (0.021)} & \makecell{1.754 \\ (0.065)} & \makecell{0.992 \\ (0.002)} & \makecell{1.652 \\ (0.074)} \\
 & Retrain & \makecell{0.901 \\ (0.005)} & \makecell{3.866 \\ (0.047)} & \textcolor{red}{\makecell{0.828 \\ (0.008)}} & \makecell{3.903 \\ (0.046)} & \makecell{0.916 \\ (0.005)} & \makecell{3.858 \\ (0.047)} \\
 & Standard & \makecell{0.898 \\ (0.003)} & \textcolor{red}{\makecell{1.249 \\ (0.027)}} & \makecell{0.463 \\ (0.013)} & \makecell{1.458 \\ (0.024)} & \makecell{0.991 \\ (0.001)} & \makecell{1.203 \\ (0.028)} \\
\midrule
300 & AACP-ACC & \makecell{0.901 \\ (0.006)} & \makecell{2.869 \\ (0.093)} & \textcolor{red}{\makecell{0.763 \\ (0.011)}} & \makecell{3.340 \\ (0.080)} & \makecell{0.931 \\ (0.006)} & \makecell{2.771 \\ (0.110)} \\
 & AACP-MC & \makecell{0.898 \\ (0.010)} & \makecell{1.862 \\ (0.084)} & \makecell{0.726 \\ (0.025)} & \makecell{3.376 \\ (0.136)} & \makecell{0.934 \\ (0.009)} & \makecell{1.553 \\ (0.080)} \\
 & ACP-ACC & \makecell{0.912 \\ (0.005)} & \makecell{1.949 \\ (0.052)} & \textcolor{red}{\makecell{0.747 \\ (0.017)}} & \makecell{3.381 \\ (0.119)} & \makecell{0.950 \\ (0.004)} & \makecell{1.647 \\ (0.047)} \\
 & ACP-MC & \makecell{0.899 \\ (0.004)} & \textcolor{red}{\makecell{1.834 \\ (0.035)}} & \makecell{0.698 \\ (0.014)} & \makecell{3.263 \\ (0.062)} & \makecell{0.942 \\ (0.003)} & \makecell{1.525 \\ (0.033)} \\
 & AdaTS & \makecell{0.899 \\ (0.005)} & \textcolor{red}{\makecell{1.575 \\ (0.063)}} & \makecell{0.479 \\ (0.020)} & \makecell{1.654 \\ (0.057)} & \makecell{0.989 \\ (0.003)} & \makecell{1.557 \\ (0.064)} \\
 & Retrain & \makecell{0.902 \\ (0.004)} & \makecell{3.258 \\ (0.058)} & \textcolor{red}{\makecell{0.765 \\ (0.009)}} & \makecell{3.304 \\ (0.057)} & \makecell{0.931 \\ (0.005)} & \makecell{3.248 \\ (0.059)} \\
 & Standard & \makecell{0.895 \\ (0.003)} & \textcolor{red}{\makecell{1.213 \\ (0.018)}} & \makecell{0.449 \\ (0.013)} & \makecell{1.432 \\ (0.021)} & \makecell{0.991 \\ (0.002)} & \makecell{1.164 \\ (0.018)} \\
\midrule
500 & AACP-ACC & \makecell{0.920 \\ (0.003)} & \makecell{2.222 \\ (0.059)} & \makecell{0.813 \\ (0.016)} & \makecell{3.508 \\ (0.132)} & \makecell{0.944 \\ (0.003)} & \makecell{1.939 \\ (0.078)} \\
 & AACP-AEC & \makecell{0.906 \\ (0.004)} & \makecell{3.776 \\ (0.182)} & \textcolor{red}{\makecell{0.862 \\ (0.017)}} & \makecell{4.196 \\ (0.108)} & \makecell{0.916 \\ (0.005)} & \makecell{3.695 \\ (0.201)} \\
 & AACP-MC & \makecell{0.915 \\ (0.006)} & \makecell{1.905 \\ (0.041)} & \textcolor{red}{\makecell{0.823 \\ (0.015)}} & \makecell{3.768 \\ (0.081)} & \makecell{0.936 \\ (0.005)} & \makecell{1.480 \\ (0.041)} \\
 & ACP-ACC & \makecell{0.920 \\ (0.005)} & \makecell{2.020 \\ (0.051)} & \textcolor{red}{\makecell{0.839 \\ (0.017)}} & \makecell{3.833 \\ (0.119)} & \makecell{0.938 \\ (0.004)} & \makecell{1.607 \\ (0.044)} \\
 & ACP-AEC & \makecell{0.902 \\ (0.004)} & \textcolor{red}{\makecell{1.563 \\ (0.040)}} & \makecell{0.687 \\ (0.023)} & \makecell{3.137 \\ (0.164)} & \makecell{0.950 \\ (0.004)} & \makecell{1.250 \\ (0.021)} \\
 & ACP-MC & \makecell{0.901 \\ (0.003)} & \makecell{1.827 \\ (0.027)} & \makecell{0.782 \\ (0.011)} & \makecell{3.538 \\ (0.057)} & \makecell{0.927 \\ (0.003)} & \makecell{1.437 \\ (0.022)} \\
 & AdaTS & \makecell{0.900 \\ (0.003)} & \textcolor{red}{\makecell{1.497 \\ (0.036)}} & \makecell{0.472 \\ (0.016)} & \makecell{1.571 \\ (0.031)} & \makecell{0.995 \\ (0.001)} & \makecell{1.482 \\ (0.037)} \\
 & Retrain & \makecell{0.900 \\ (0.004)} & \makecell{2.649 \\ (0.041)} & \makecell{0.712 \\ (0.009)} & \makecell{2.715 \\ (0.043)} & \makecell{0.942 \\ (0.004)} & \makecell{2.632 \\ (0.041)} \\
 & Standard & \makecell{0.894 \\ (0.003)} & \textcolor{red}{\makecell{1.215 \\ (0.018)}} & \makecell{0.462 \\ (0.014)} & \makecell{1.440 \\ (0.021)} & \makecell{0.991 \\ (0.002)} & \makecell{1.163 \\ (0.017)} \\
\midrule
1000 & AACP-ACC & \makecell{0.921 \\ (0.004)} & \makecell{1.950 \\ (0.042)} & \textcolor{red}{\makecell{0.882 \\ (0.015)}} & \makecell{4.085 \\ (0.140)} & \makecell{0.930 \\ (0.004)} & \makecell{1.486 \\ (0.055)} \\
 & AACP-AEC & \makecell{0.905 \\ (0.004)} & \makecell{2.874 \\ (0.195)} & \textcolor{red}{\makecell{0.854 \\ (0.017)}} & \makecell{4.182 \\ (0.117)} & \makecell{0.916 \\ (0.004)} & \makecell{2.594 \\ (0.227)} \\
 & AACP-MC & \makecell{0.905 \\ (0.003)} & \textcolor{red}{\makecell{1.755 \\ (0.025)}} & \makecell{0.847 \\ (0.008)} & \makecell{3.894 \\ (0.028)} & \makecell{0.917 \\ (0.003)} & \makecell{1.294 \\ (0.019)} \\
 & ACP-ACC & \makecell{0.922 \\ (0.003)} & \makecell{1.922 \\ (0.036)} & \textcolor{red}{\makecell{0.906 \\ (0.011)}} & \makecell{4.355 \\ (0.093)} & \makecell{0.925 \\ (0.002)} & \makecell{1.379 \\ (0.023)} \\
 & ACP-AEC & \makecell{0.914 \\ (0.004)} & \makecell{1.804 \\ (0.036)} & \makecell{0.848 \\ (0.018)} & \makecell{4.173 \\ (0.124)} & \makecell{0.929 \\ (0.003)} & \makecell{1.287 \\ (0.018)} \\
 & ACP-MC & \makecell{0.903 \\ (0.002)} & \makecell{1.774 \\ (0.020)} & \makecell{0.843 \\ (0.006)} & \makecell{3.873 \\ (0.025)} & \makecell{0.915 \\ (0.003)} & \makecell{1.314 \\ (0.016)} \\
 & AdaTS & \makecell{0.898 \\ (0.002)} & \textcolor{red}{\makecell{1.433 \\ (0.027)}} & \makecell{0.443 \\ (0.013)} & \makecell{1.520 \\ (0.023)} & \makecell{0.995 \\ (0.001)} & \makecell{1.415 \\ (0.027)} \\
 & Retrain & \makecell{0.901 \\ (0.002)} & \makecell{2.430 \\ (0.036)} & \makecell{0.691 \\ (0.009)} & \makecell{2.612 \\ (0.038)} & \makecell{0.945 \\ (0.002)} & \makecell{2.390 \\ (0.036)} \\
 & Standard & \makecell{0.898 \\ (0.002)} & \textcolor{red}{\makecell{1.193 \\ (0.012)}} & \makecell{0.453 \\ (0.012)} & \makecell{1.419 \\ (0.016)} & \makecell{0.993 \\ (0.001)} & \makecell{1.142 \\ (0.009)} \\
\midrule
2000 & AACP-ACC & \makecell{0.910 \\ (0.004)} & \makecell{1.883 \\ (0.067)} & \makecell{0.875 \\ (0.023)} & \makecell{4.120 \\ (0.177)} & \makecell{0.917 \\ (0.004)} & \makecell{1.387 \\ (0.083)} \\
 & AACP-AEC & \makecell{0.920 \\ (0.002)} & \makecell{2.145 \\ (0.114)} & \textcolor{red}{\makecell{0.910 \\ (0.007)}} & \makecell{4.547 \\ (0.053)} & \makecell{0.922 \\ (0.003)} & \makecell{1.597 \\ (0.137)} \\
 & AACP-MC & \makecell{0.901 \\ (0.002)} & \makecell{1.738 \\ (0.019)} & \makecell{0.864 \\ (0.005)} & \makecell{3.981 \\ (0.018)} & \makecell{0.909 \\ (0.003)} & \makecell{1.246 \\ (0.013)} \\
 & ACP-ACC & \makecell{0.921 \\ (0.002)} & \makecell{1.916 \\ (0.043)} & \textcolor{red}{\makecell{0.934 \\ (0.005)}} & \makecell{4.601 \\ (0.038)} & \makecell{0.918 \\ (0.003)} & \makecell{1.290 \\ (0.026)} \\
 & ACP-AEC & \makecell{0.923 \\ (0.002)} & \makecell{1.895 \\ (0.020)} & \textcolor{red}{\makecell{0.917 \\ (0.004)}} & \makecell{4.598 \\ (0.018)} & \makecell{0.924 \\ (0.002)} & \makecell{1.280 \\ (0.012)} \\
 & ACP-MC & \makecell{0.901 \\ (0.002)} & \textcolor{red}{\makecell{1.738 \\ (0.018)}} & \makecell{0.862 \\ (0.004)} & \makecell{3.980 \\ (0.014)} & \makecell{0.909 \\ (0.002)} & \makecell{1.246 \\ (0.012)} \\
 & AdaTS & \makecell{0.898 \\ (0.002)} & \textcolor{red}{\makecell{1.442 \\ (0.022)}} & \makecell{0.449 \\ (0.012)} & \makecell{1.535 \\ (0.019)} & \makecell{0.995 \\ (0.001)} & \makecell{1.422 \\ (0.022)} \\
 & Retrain & \makecell{0.899 \\ (0.002)} & \makecell{2.099 \\ (0.043)} & \makecell{0.678 \\ (0.009)} & \makecell{2.546 \\ (0.049)} & \makecell{0.947 \\ (0.002)} & \makecell{2.001 \\ (0.042)} \\
 & Standard & \makecell{0.899 \\ (0.002)} & \textcolor{red}{\makecell{1.199 \\ (0.011)}} & \makecell{0.461 \\ (0.010)} & \makecell{1.434 \\ (0.014)} & \makecell{0.993 \\ (0.001)} & \makecell{1.147 \\ (0.009)} \\
\midrule
5000 & AACP-ACC & \makecell{0.910 \\ (0.004)} & \makecell{1.748 \\ (0.037)} & \makecell{0.891 \\ (0.016)} & \makecell{4.286 \\ (0.137)} & \makecell{0.914 \\ (0.003)} & \makecell{1.211 \\ (0.024)} \\
 & AACP-AEC & \makecell{0.921 \\ (0.002)} & \makecell{1.926 \\ (0.076)} & \textcolor{red}{\makecell{0.912 \\ (0.004)}} & \makecell{4.580 \\ (0.019)} & \makecell{0.923 \\ (0.002)} & \makecell{1.351 \\ (0.092)} \\
 & AACP-MC & \makecell{0.901 \\ (0.002)} & \makecell{1.703 \\ (0.013)} & \makecell{0.864 \\ (0.005)} & \makecell{4.024 \\ (0.013)} & \makecell{0.908 \\ (0.002)} & \makecell{1.204 \\ (0.010)} \\
 & ACP-ACC & \makecell{0.920 \\ (0.002)} & \makecell{1.762 \\ (0.026)} & \textcolor{red}{\makecell{0.916 \\ (0.009)}} & \makecell{4.476 \\ (0.062)} & \makecell{0.920 \\ (0.002)} & \makecell{1.194 \\ (0.015)} \\
 & ACP-AEC & \makecell{0.922 \\ (0.001)} & \makecell{1.827 \\ (0.017)} & \textcolor{red}{\makecell{0.913 \\ (0.004)}} & \makecell{4.590 \\ (0.019)} & \makecell{0.923 \\ (0.002)} & \makecell{1.222 \\ (0.008)} \\
 & ACP-MC & \makecell{0.901 \\ (0.002)} & \makecell{1.705 \\ (0.013)} & \makecell{0.866 \\ (0.004)} & \makecell{4.032 \\ (0.012)} & \makecell{0.908 \\ (0.002)} & \makecell{1.205 \\ (0.010)} \\
 & AdaTS & \makecell{0.899 \\ (0.002)} & \textcolor{red}{\makecell{1.418 \\ (0.017)}} & \makecell{0.434 \\ (0.010)} & \makecell{1.510 \\ (0.015)} & \makecell{0.997 \\ (0.001)} & \makecell{1.398 \\ (0.017)} \\
 & Retrain & \makecell{0.899 \\ (0.002)} & \textcolor{red}{\makecell{1.655 \\ (0.019)}} & \makecell{0.728 \\ (0.008)} & \makecell{2.671 \\ (0.040)} & \makecell{0.936 \\ (0.002)} & \makecell{1.441 \\ (0.018)} \\
 & Standard & \makecell{0.899 \\ (0.002)} & \textcolor{red}{\makecell{1.194 \\ (0.009)}} & \makecell{0.454 \\ (0.009)} & \makecell{1.424 \\ (0.013)} & \makecell{0.994 \\ (0.001)} & \makecell{1.144 \\ (0.008)} \\
\bottomrule
\end{tabular}

\end{table}


\begin{figure*}[tbh]
    \centering
    \includegraphics[width=0.9\linewidth]{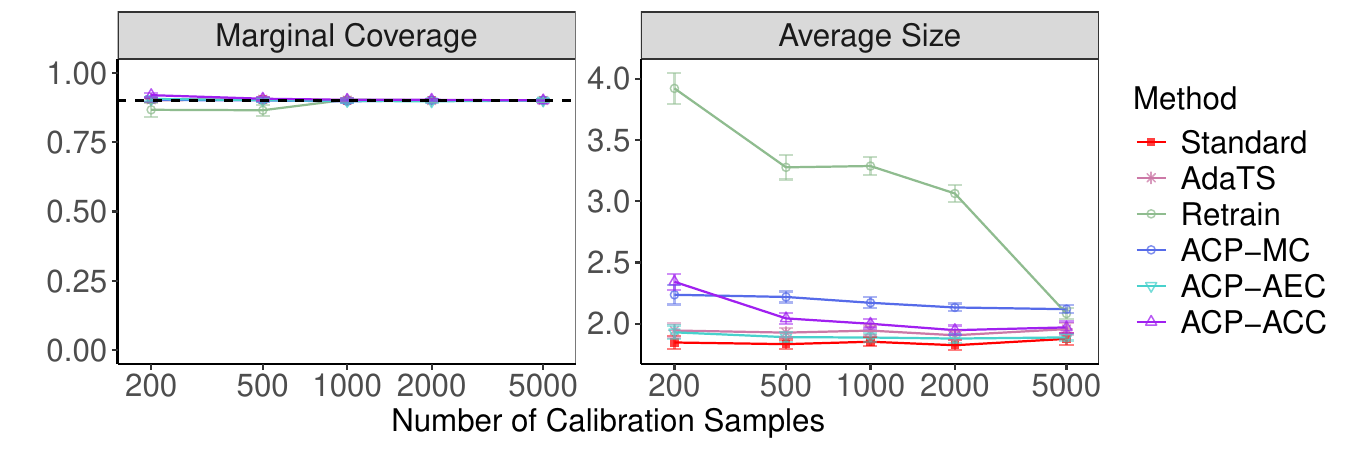}
    \caption{Performance of prediction sets for $5$-class synthetic data generated from Example~\ref{exp:covariate-shift} with covariate shift at $a = 0.5$, as a function of the total calibration sample size. All methods achieve the target marginal coverage of $90\%$, while our methods retain practically efficient sets sizes. Error bars denote two standard errors. Corresponding conditional performance are provided in Figure~\ref{fig:supp_exp_sim_n_new_covshift_cond}. }
\label{fig:supp_exp_sim_n_new_covshift_marg}
\end{figure*}

\begin{figure*}[tbh]
    \centering
    \includegraphics[width=0.9\linewidth]{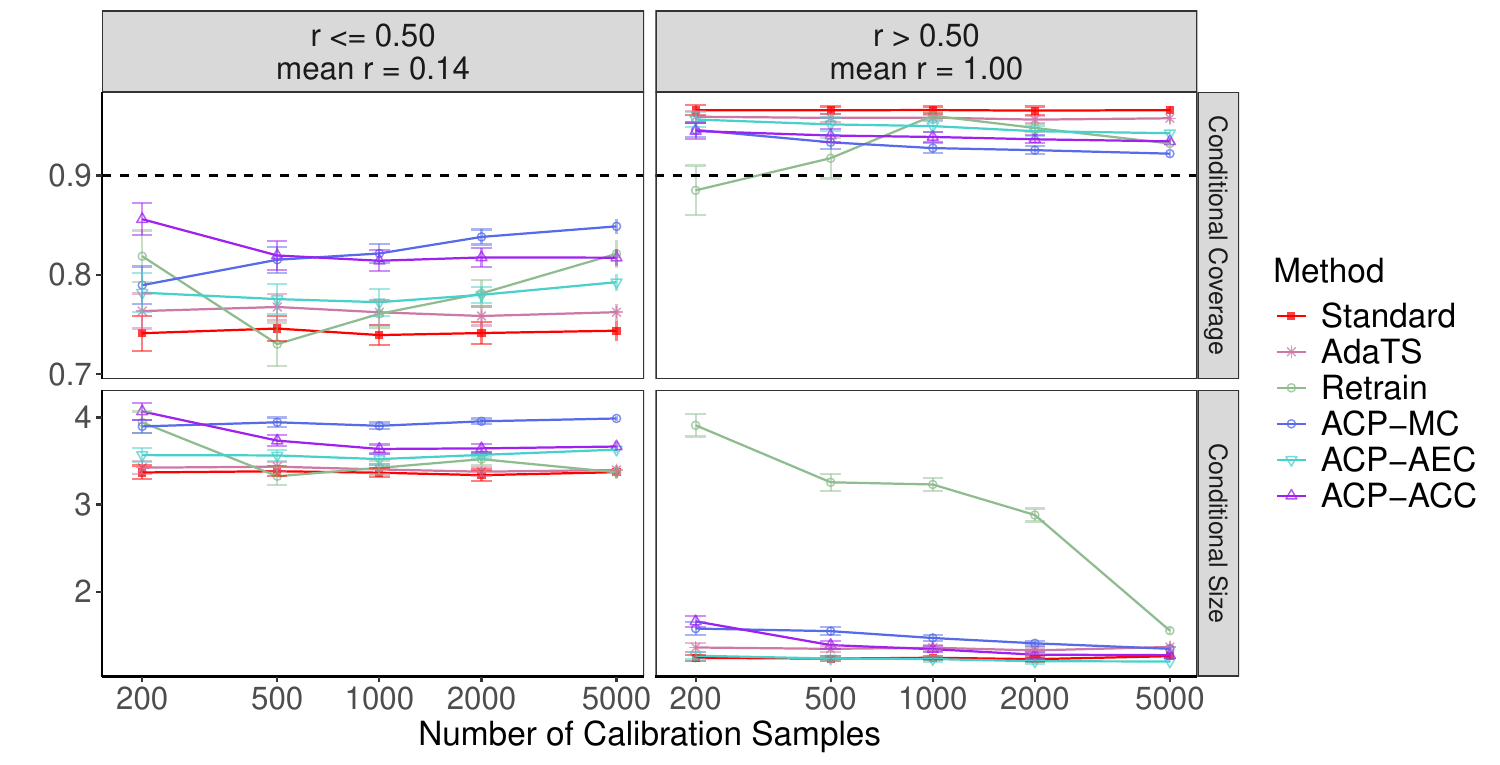}
    \caption{Our methods are more uncertainty-aware: they attain higher conditional coverage on hard samples while assigning larger prediction sets to hard samples and smaller ones to easy samples. Corresponding marginal performance are provided in Figure~\ref{fig:supp_exp_sim_n_new_covshift_marg} and numerical details are provided in Table~\ref{tab:supp_exp_sim_nnew_covshift}. }
    \label{fig:supp_exp_sim_n_new_covshift_cond}
\end{figure*}

\begin{table}[!htb]
\centering
    \caption{Performance of prediction sets for $5$-class synthetic data generated from Example~\ref{exp:covariate-shift} with covariate shift at $a = 0.5$, as a function the of the total calibration sample size. See the corresponding plots in Figure~\ref{fig:supp_exp_sim_n_new_covshift_marg}--\ref{fig:supp_exp_sim_n_new_covshift_cond}.}
  \label{tab:supp_exp_sim_nnew_covshift}
\centering
\fontsize{6}{6}\selectfont
\begin{tabular}[t]{rlllllll}
\toprule
\multicolumn{2}{c}{ } & \multicolumn{2}{c}{\makecell{Marginal}} & \multicolumn{2}{c}{\makecell{Hard Bin \\ ($r^* \leq 0.5$)}} & \multicolumn{2}{c}{\makecell{Easy Bin \\ ($r^* > 0.5$)}} \\
\cmidrule(l{3pt}r{3pt}){3-4} \cmidrule(l{3pt}r{3pt}){5-6} \cmidrule(l{3pt}r{3pt}){7-8}
\makecell{Number \\ of \\ Calibration \\ Samples} & Method & Coverage & Size & Coverage & Size & Coverage & Size \\
\midrule
200 & ACP-ACC & \makecell{0.920 \\ (0.004)} & \makecell{2.341 \\ (0.033)} & \textcolor{red}{\makecell{0.856 \\ (0.008)}} & \makecell{4.070 \\ (0.047)} & \makecell{0.945 \\ (0.004)} & \makecell{1.659 \\ (0.033)} \\
 & ACP-AEC & \makecell{0.906 \\ (0.004)} & \textcolor{red}{\makecell{1.929 \\ (0.028)}} & \makecell{0.782 \\ (0.010)} & \makecell{3.569 \\ (0.041)} & \makecell{0.957 \\ (0.004)} & \makecell{1.264 \\ (0.022)} \\
 & ACP-MC & \makecell{0.902 \\ (0.004)} & \makecell{2.236 \\ (0.040)} & \textcolor{red}{\makecell{0.790 \\ (0.009)}} & \makecell{3.898 \\ (0.039)} & \makecell{0.946 \\ (0.004)} & \makecell{1.576 \\ (0.038)} \\
 & AdaTS & \makecell{0.903 \\ (0.004)} & \textcolor{red}{\makecell{1.944 \\ (0.028)}} & \makecell{0.763 \\ (0.009)} & \makecell{3.424 \\ (0.037)} & \makecell{0.959 \\ (0.003)} & \makecell{1.359 \\ (0.026)} \\
 & Retrain & \makecell{0.867 \\ (0.012)} & \makecell{3.921 \\ (0.063)} & \textcolor{red}{\makecell{0.819 \\ (0.013)}} & \makecell{3.944 \\ (0.063)} & \makecell{0.885 \\ (0.013)} & \makecell{3.911 \\ (0.064)} \\
 & Standard & \makecell{0.902 \\ (0.003)} & \textcolor{red}{\makecell{1.846 \\ (0.026)}} & \makecell{0.741 \\ (0.009)} & \makecell{3.369 \\ (0.040)} & \makecell{0.966 \\ (0.002)} & \makecell{1.240 \\ (0.018)} \\
\midrule
500 & ACP-ACC & \makecell{0.907 \\ (0.003)} & \makecell{2.043 \\ (0.023)} & \textcolor{red}{\makecell{0.819 \\ (0.007)}} & \makecell{3.735 \\ (0.031)} & \makecell{0.940 \\ (0.003)} & \makecell{1.386 \\ (0.021)} \\
 & ACP-AEC & \makecell{0.902 \\ (0.003)} & \textcolor{red}{\makecell{1.890 \\ (0.021)}} & \textcolor{red}{\makecell{0.775 \\ (0.008)}} & \makecell{3.565 \\ (0.033)} & \makecell{0.952 \\ (0.003)} & \makecell{1.232 \\ (0.015)} \\
 & ACP-MC & \makecell{0.901 \\ (0.003)} & \makecell{2.218 \\ (0.023)} & \textcolor{red}{\makecell{0.815 \\ (0.007)}} & \makecell{3.945 \\ (0.027)} & \makecell{0.934 \\ (0.004)} & \makecell{1.547 \\ (0.024)} \\
 & AdaTS & \makecell{0.904 \\ (0.003)} & \textcolor{red}{\makecell{1.927 \\ (0.019)}} & \makecell{0.767 \\ (0.007)} & \makecell{3.435 \\ (0.027)} & \makecell{0.958 \\ (0.002)} & \makecell{1.343 \\ (0.018)} \\
 & Retrain & \makecell{0.865 \\ (0.010)} & \makecell{3.276 \\ (0.050)} & \makecell{0.730 \\ (0.011)} & \makecell{3.325 \\ (0.052)} & \makecell{0.918 \\ (0.010)} & \makecell{3.256 \\ (0.050)} \\
 & Standard & \makecell{0.904 \\ (0.002)} & \textcolor{red}{\makecell{1.833 \\ (0.019)}} & \makecell{0.746 \\ (0.006)} & \makecell{3.383 \\ (0.029)} & \makecell{0.966 \\ (0.002)} & \makecell{1.231 \\ (0.014)} \\
\midrule
1000 & ACP-ACC & \makecell{0.903 \\ (0.002)} & \makecell{1.999 \\ (0.019)} & \textcolor{red}{\makecell{0.814 \\ (0.005)}} & \makecell{3.639 \\ (0.026)} & \makecell{0.939 \\ (0.003)} & \makecell{1.339 \\ (0.016)} \\
 & ACP-AEC & \makecell{0.899 \\ (0.003)} & \textcolor{red}{\makecell{1.887 \\ (0.018)}} & \textcolor{red}{\makecell{0.772 \\ (0.007)}} & \makecell{3.523 \\ (0.031)} & \makecell{0.950 \\ (0.003)} & \makecell{1.226 \\ (0.014)} \\
 & ACP-MC & \makecell{0.897 \\ (0.002)} & \makecell{2.170 \\ (0.022)} & \textcolor{red}{\makecell{0.821 \\ (0.005)}} & \makecell{3.905 \\ (0.019)} & \makecell{0.928 \\ (0.003)} & \makecell{1.468 \\ (0.019)} \\
 & AdaTS & \makecell{0.902 \\ (0.002)} & \textcolor{red}{\makecell{1.943 \\ (0.020)}} & \makecell{0.762 \\ (0.006)} & \makecell{3.403 \\ (0.028)} & \makecell{0.958 \\ (0.002)} & \makecell{1.356 \\ (0.016)} \\
 & Retrain & \makecell{0.903 \\ (0.002)} & \makecell{3.287 \\ (0.036)} & \makecell{0.761 \\ (0.007)} & \makecell{3.422 \\ (0.037)} & \makecell{0.960 \\ (0.002)} & \makecell{3.231 \\ (0.036)} \\
 & Standard & \makecell{0.901 \\ (0.002)} & \textcolor{red}{\makecell{1.852 \\ (0.019)}} & \makecell{0.739 \\ (0.005)} & \makecell{3.366 \\ (0.027)} & \makecell{0.966 \\ (0.002)} & \makecell{1.241 \\ (0.012)} \\
\midrule
2000 & ACP-ACC & \makecell{0.903 \\ (0.002)} & \makecell{1.947 \\ (0.019)} & \textcolor{red}{\makecell{0.817 \\ (0.005)}} & \makecell{3.646 \\ (0.023)} & \makecell{0.937 \\ (0.002)} & \makecell{1.276 \\ (0.017)} \\
 & ACP-AEC & \makecell{0.897 \\ (0.002)} & \textcolor{red}{\makecell{1.878 \\ (0.017)}} & \makecell{0.780 \\ (0.004)} & \makecell{3.571 \\ (0.015)} & \makecell{0.945 \\ (0.003)} & \makecell{1.200 \\ (0.013)} \\
 & ACP-MC & \makecell{0.901 \\ (0.002)} & \makecell{2.132 \\ (0.016)} & \textcolor{red}{\makecell{0.838 \\ (0.004)}} & \makecell{3.957 \\ (0.016)} & \makecell{0.926 \\ (0.002)} & \makecell{1.407 \\ (0.012)} \\
 & AdaTS & \makecell{0.900 \\ (0.002)} & \textcolor{red}{\makecell{1.905 \\ (0.018)}} & \makecell{0.758 \\ (0.005)} & \makecell{3.378 \\ (0.027)} & \makecell{0.956 \\ (0.002)} & \makecell{1.325 \\ (0.017)} \\
 & Retrain & \makecell{0.901 \\ (0.002)} & \makecell{3.063 \\ (0.036)} & \textcolor{red}{\makecell{0.781 \\ (0.007)}} & \makecell{3.522 \\ (0.035)} & \makecell{0.948 \\ (0.002)} & \makecell{2.881 \\ (0.037)} \\
 & Standard & \makecell{0.901 \\ (0.001)} & \textcolor{red}{\makecell{1.824 \\ (0.020)}} & \makecell{0.741 \\ (0.006)} & \makecell{3.338 \\ (0.032)} & \makecell{0.965 \\ (0.002)} & \makecell{1.225 \\ (0.015)} \\
\midrule
5000 & ACP-ACC & \makecell{0.900 \\ (0.001)} & \makecell{1.969 \\ (0.024)} & \textcolor{red}{\makecell{0.817 \\ (0.004)}} & \makecell{3.667 \\ (0.024)} & \makecell{0.934 \\ (0.001)} & \makecell{1.271 \\ (0.016)} \\
 & ACP-AEC & \makecell{0.900 \\ (0.002)} & \textcolor{red}{\makecell{1.888 \\ (0.013)}} & \makecell{0.792 \\ (0.004)} & \makecell{3.631 \\ (0.017)} & \makecell{0.943 \\ (0.002)} & \makecell{1.199 \\ (0.010)} \\
 & ACP-MC & \makecell{0.901 \\ (0.001)} & \makecell{2.118 \\ (0.016)} & \textcolor{red}{\makecell{0.849 \\ (0.004)}} & \makecell{3.989 \\ (0.014)} & \makecell{0.922 \\ (0.002)} & \makecell{1.344 \\ (0.011)} \\
 & AdaTS & \makecell{0.900 \\ (0.002)} & \textcolor{red}{\makecell{1.956 \\ (0.024)}} & \makecell{0.762 \\ (0.004)} & \makecell{3.397 \\ (0.030)} & \makecell{0.958 \\ (0.002)} & \makecell{1.365 \\ (0.021)} \\
 & Retrain & \makecell{0.900 \\ (0.002)} & \makecell{2.082 \\ (0.024)} & \textcolor{red}{\makecell{0.821 \\ (0.007)}} & \makecell{3.371 \\ (0.037)} & \makecell{0.932 \\ (0.003)} & \makecell{1.552 \\ (0.021)} \\
 & Standard & \makecell{0.901 \\ (0.002)} & \textcolor{red}{\makecell{1.876 \\ (0.025)}} & \makecell{0.744 \\ (0.005)} & \makecell{3.372 \\ (0.034)} & \makecell{0.966 \\ (0.002)} & \makecell{1.259 \\ (0.017)} \\
\bottomrule
\end{tabular}

\end{table}


\begin{figure*}[tbh]
    \centering
    \includegraphics[width=0.9\linewidth]{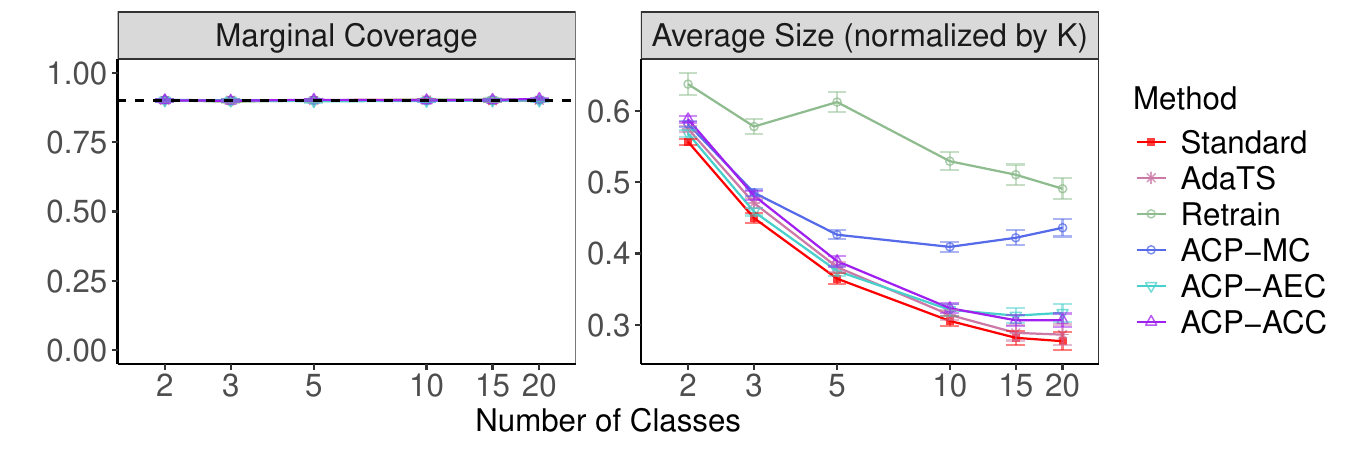}
    \caption{Performance of prediction sets for $5$-class synthetic data generated from Example~\ref{exp:covariate-shift} with covariate shift at $a = 0.5$, as a function of number of classes. All methods achieve the target marginal coverage of $90\%$, while our methods retain practically efficient sets sizes. Error bars denote two standard errors. Corresponding conditional performance are provided in Figure~\ref{fig:supp_exp_sim_K_covshift_cond}. }
\label{fig:supp_exp_sim_K_covshift_marg}
\end{figure*}

\begin{figure*}[tbh]
    \centering
    \includegraphics[width=0.9\linewidth]{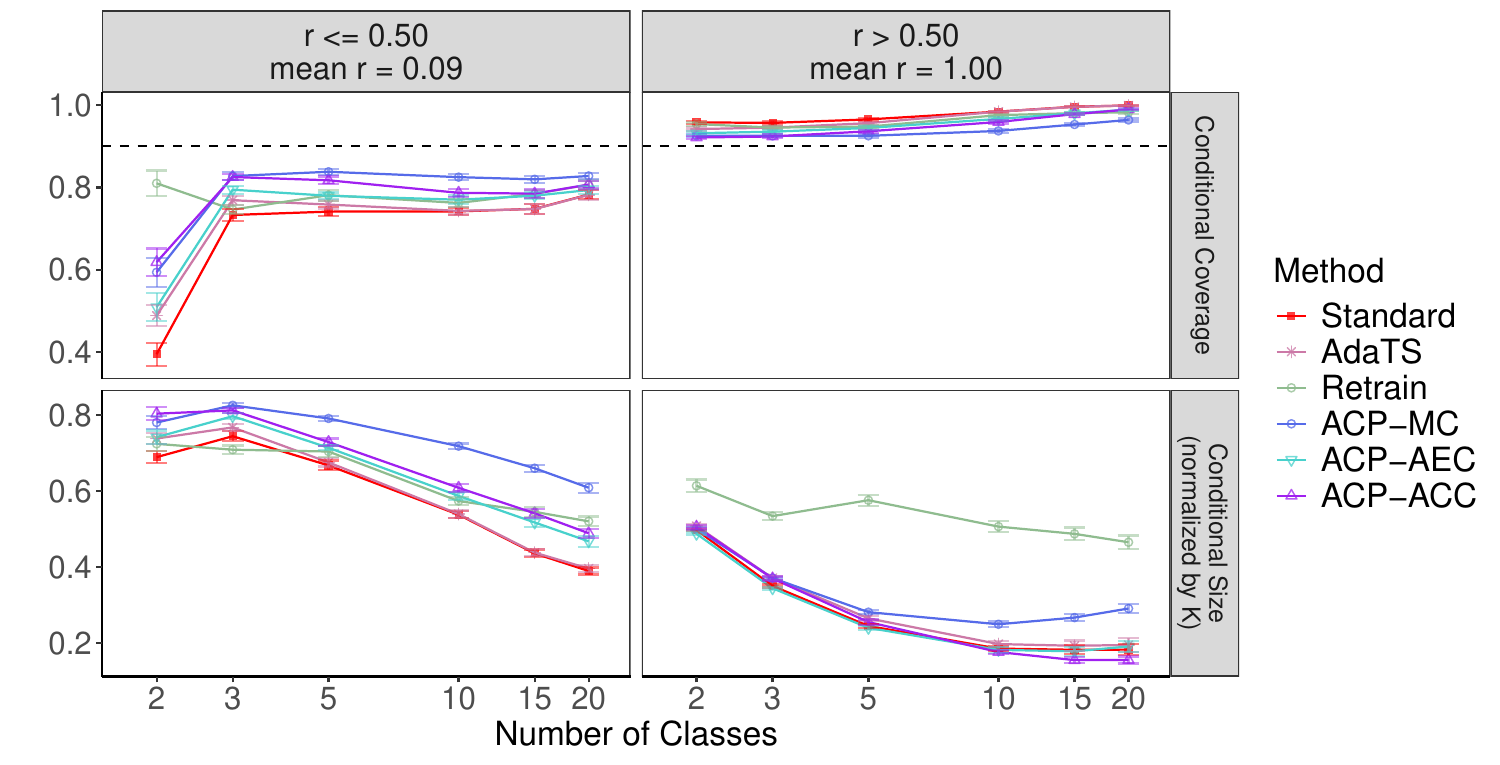}
    \caption{Our methods are more uncertainty-aware: they attain higher conditional coverage on hard samples while assigning larger prediction sets to hard samples and smaller ones to easy samples. Corresponding marginal performance are provided in Figure~\ref{fig:supp_exp_sim_K_covshift_marg} and numerical details are provided in Table~\ref{tab:supp_exp_sim_K_covshift}. }
    \label{fig:supp_exp_sim_K_covshift_cond}
\end{figure*}

\begin{table}[!htb]
\centering
    \caption{Performance of prediction sets for $5$-class synthetic data generated from Example~\ref{exp:covariate-shift} with covariate shift at $a = 0.5$, as a function of number of classes. See the corresponding plots in Figure~\ref{fig:supp_exp_sim_K_covshift_marg}--\ref{fig:supp_exp_sim_K_covshift_cond}.}
  \label{tab:supp_exp_sim_K_covshift}
\centering
\fontsize{6}{6}\selectfont
\begin{tabular}[t]{rlllllll}
\toprule
\multicolumn{2}{c}{ } & \multicolumn{2}{c}{\makecell{Marginal}} & \multicolumn{2}{c}{\makecell{Hard Bin \\ ($r^* \leq 0.5$)}} & \multicolumn{2}{c}{\makecell{Easy Bin \\ ($r^* > 0.5$)}} \\
\cmidrule(l{3pt}r{3pt}){3-4} \cmidrule(l{3pt}r{3pt}){5-6} \cmidrule(l{3pt}r{3pt}){7-8}
\makecell{Number \\ of \\ Classes} & Method & Coverage & Size & Coverage & Size & Coverage & Size \\
\midrule
2 & ACP-ACC & \makecell{0.901 \\ (0.002)} & \makecell{1.177 \\ (0.005)} & \textcolor{red}{\makecell{0.618 \\ (0.017)}} & \makecell{1.608 \\ (0.017)} & \makecell{0.921 \\ (0.002)} & \makecell{1.012 \\ (0.005)} \\
 & ACP-AEC & \makecell{0.900 \\ (0.002)} & \textcolor{red}{\makecell{1.137 \\ (0.005)}} & \makecell{0.509 \\ (0.017)} & \makecell{1.484 \\ (0.018)} & \makecell{0.932 \\ (0.002)} & \makecell{0.975 \\ (0.003)} \\
 & ACP-MC & \makecell{0.902 \\ (0.002)} & \makecell{1.163 \\ (0.004)} & \textcolor{red}{\makecell{0.594 \\ (0.018)}} & \makecell{1.561 \\ (0.016)} & \makecell{0.926 \\ (0.002)} & \makecell{1.001 \\ (0.004)} \\
 & AdaTS & \makecell{0.901 \\ (0.002)} & \textcolor{red}{\makecell{1.150 \\ (0.004)}} & \makecell{0.488 \\ (0.013)} & \makecell{1.477 \\ (0.014)} & \makecell{0.942 \\ (0.002)} & \makecell{1.016 \\ (0.004)} \\
 & Retrain & \makecell{0.902 \\ (0.003)} & \makecell{1.276 \\ (0.016)} & \textcolor{red}{\makecell{0.810 \\ (0.015)}} & \makecell{1.449 \\ (0.019)} & \makecell{0.954 \\ (0.002)} & \makecell{1.228 \\ (0.017)} \\
 & Standard & \makecell{0.901 \\ (0.002)} & \textcolor{red}{\makecell{1.114 \\ (0.004)}} & \makecell{0.394 \\ (0.014)} & \makecell{1.379 \\ (0.015)} & \makecell{0.958 \\ (0.002)} & \makecell{0.995 \\ (0.003)} \\
\midrule
3 & ACP-ACC & \makecell{0.899 \\ (0.002)} & \makecell{1.444 \\ (0.009)} & \textcolor{red}{\makecell{0.825 \\ (0.004)}} & \makecell{2.438 \\ (0.010)} & \makecell{0.924 \\ (0.002)} & \makecell{1.110 \\ (0.008)} \\
 & ACP-AEC & \makecell{0.900 \\ (0.002)} & \textcolor{red}{\makecell{1.376 \\ (0.008)}} & \textcolor{red}{\makecell{0.795 \\ (0.005)}} & \makecell{2.392 \\ (0.012)} & \makecell{0.936 \\ (0.002)} & \makecell{1.033 \\ (0.005)} \\
 & ACP-MC & \makecell{0.901 \\ (0.002)} & \makecell{1.456 \\ (0.007)} & \textcolor{red}{\makecell{0.828 \\ (0.005)}} & \makecell{2.478 \\ (0.008)} & \makecell{0.925 \\ (0.002)} & \makecell{1.113 \\ (0.006)} \\
 & AdaTS & \makecell{0.901 \\ (0.002)} & \textcolor{red}{\makecell{1.413 \\ (0.010)}} & \makecell{0.769 \\ (0.005)} & \makecell{2.303 \\ (0.013)} & \makecell{0.945 \\ (0.002)} & \makecell{1.113 \\ (0.008)} \\
 & Retrain & \makecell{0.895 \\ (0.002)} & \makecell{1.734 \\ (0.015)} & \makecell{0.746 \\ (0.006)} & \makecell{2.127 \\ (0.016)} & \makecell{0.946 \\ (0.002)} & \makecell{1.603 \\ (0.017)} \\
 & Standard & \makecell{0.901 \\ (0.002)} & \textcolor{red}{\makecell{1.349 \\ (0.011)}} & \makecell{0.733 \\ (0.007)} & \makecell{2.234 \\ (0.019)} & \makecell{0.957 \\ (0.002)} & \makecell{1.051 \\ (0.006)} \\
\midrule
5 & ACP-ACC & \makecell{0.903 \\ (0.002)} & \makecell{1.947 \\ (0.019)} & \textcolor{red}{\makecell{0.817 \\ (0.005)}} & \makecell{3.646 \\ (0.023)} & \makecell{0.937 \\ (0.002)} & \makecell{1.276 \\ (0.017)} \\
 & ACP-AEC & \makecell{0.897 \\ (0.002)} & \textcolor{red}{\makecell{1.878 \\ (0.017)}} & \makecell{0.780 \\ (0.004)} & \makecell{3.571 \\ (0.015)} & \makecell{0.945 \\ (0.003)} & \makecell{1.200 \\ (0.013)} \\
 & ACP-MC & \makecell{0.901 \\ (0.002)} & \makecell{2.132 \\ (0.016)} & \textcolor{red}{\makecell{0.838 \\ (0.004)}} & \makecell{3.957 \\ (0.016)} & \makecell{0.926 \\ (0.002)} & \makecell{1.407 \\ (0.012)} \\
 & AdaTS & \makecell{0.900 \\ (0.002)} & \textcolor{red}{\makecell{1.905 \\ (0.018)}} & \makecell{0.758 \\ (0.005)} & \makecell{3.378 \\ (0.027)} & \makecell{0.956 \\ (0.002)} & \makecell{1.325 \\ (0.017)} \\
 & Retrain & \makecell{0.901 \\ (0.002)} & \makecell{3.063 \\ (0.036)} & \textcolor{red}{\makecell{0.781 \\ (0.007)}} & \makecell{3.522 \\ (0.035)} & \makecell{0.948 \\ (0.002)} & \makecell{2.881 \\ (0.037)} \\
 & Standard & \makecell{0.901 \\ (0.001)} & \textcolor{red}{\makecell{1.824 \\ (0.020)}} & \makecell{0.741 \\ (0.006)} & \makecell{3.338 \\ (0.032)} & \makecell{0.965 \\ (0.002)} & \makecell{1.225 \\ (0.015)} \\
\midrule
10 & ACP-ACC & \makecell{0.901 \\ (0.002)} & \makecell{3.234 \\ (0.033)} & \textcolor{red}{\makecell{0.787 \\ (0.005)}} & \makecell{6.088 \\ (0.052)} & \makecell{0.959 \\ (0.002)} & \makecell{1.763 \\ (0.029)} \\
 & ACP-AEC & \makecell{0.899 \\ (0.002)} & \textcolor{red}{\makecell{3.211 \\ (0.041)}} & \textcolor{red}{\makecell{0.770 \\ (0.005)}} & \makecell{5.873 \\ (0.054)} & \makecell{0.967 \\ (0.002)} & \makecell{1.821 \\ (0.041)} \\
 & ACP-MC & \makecell{0.899 \\ (0.002)} & \makecell{4.094 \\ (0.034)} & \textcolor{red}{\makecell{0.825 \\ (0.004)}} & \makecell{7.185 \\ (0.034)} & \makecell{0.937 \\ (0.002)} & \makecell{2.495 \\ (0.037)} \\
 & AdaTS & \makecell{0.902 \\ (0.002)} & \textcolor{red}{\makecell{3.139 \\ (0.036)}} & \makecell{0.743 \\ (0.004)} & \makecell{5.397 \\ (0.048)} & \makecell{0.984 \\ (0.001)} & \makecell{1.974 \\ (0.037)} \\
 & Retrain & \makecell{0.904 \\ (0.002)} & \makecell{5.295 \\ (0.064)} & \makecell{0.762 \\ (0.005)} & \makecell{5.738 \\ (0.054)} & \makecell{0.976 \\ (0.002)} & \makecell{5.066 \\ (0.071)} \\
 & Standard & \makecell{0.902 \\ (0.001)} & \textcolor{red}{\makecell{3.055 \\ (0.033)}} & \makecell{0.742 \\ (0.004)} & \makecell{5.379 \\ (0.046)} & \makecell{0.984 \\ (0.001)} & \makecell{1.854 \\ (0.029)} \\
\midrule
15 & ACP-ACC & \makecell{0.902 \\ (0.002)} & \textcolor{red}{\makecell{4.596 \\ (0.056)}} & \textcolor{red}{\makecell{0.785 \\ (0.005)}} & \makecell{8.124 \\ (0.087)} & \makecell{0.978 \\ (0.001)} & \makecell{2.327 \\ (0.057)} \\
 & ACP-AEC & \makecell{0.900 \\ (0.002)} & \makecell{4.694 \\ (0.078)} & \makecell{0.780 \\ (0.004)} & \makecell{7.763 \\ (0.093)} & \makecell{0.980 \\ (0.001)} & \makecell{2.673 \\ (0.089)} \\
 & ACP-MC & \makecell{0.901 \\ (0.002)} & \makecell{6.333 \\ (0.080)} & \textcolor{red}{\makecell{0.820 \\ (0.004)}} & \makecell{9.904 \\ (0.068)} & \makecell{0.953 \\ (0.002)} & \makecell{4.006 \\ (0.073)} \\
 & AdaTS & \makecell{0.899 \\ (0.002)} & \textcolor{red}{\makecell{4.332 \\ (0.081)}} & \makecell{0.748 \\ (0.006)} & \makecell{6.571 \\ (0.068)} & \makecell{0.996 \\ (0.001)} & \makecell{2.893 \\ (0.102)} \\
 & Retrain & \makecell{0.904 \\ (0.002)} & \makecell{7.658 \\ (0.108)} & \textcolor{red}{\makecell{0.785 \\ (0.005)}} & \makecell{8.182 \\ (0.099)} & \makecell{0.981 \\ (0.002)} & \makecell{7.313 \\ (0.124)} \\
 & Standard & \makecell{0.899 \\ (0.002)} & \textcolor{red}{\makecell{4.228 \\ (0.074)}} & \makecell{0.748 \\ (0.006)} & \makecell{6.541 \\ (0.077)} & \makecell{0.996 \\ (0.001)} & \makecell{2.734 \\ (0.089)} \\
\midrule
20 & ACP-ACC & \makecell{0.906 \\ (0.002)} & \textcolor{red}{\makecell{6.132 \\ (0.093)}} & \textcolor{red}{\makecell{0.807 \\ (0.005)}} & \makecell{9.783 \\ (0.121)} & \makecell{0.990 \\ (0.001)} & \makecell{3.106 \\ (0.093)} \\
 & ACP-AEC & \makecell{0.900 \\ (0.002)} & \makecell{6.338 \\ (0.124)} & \makecell{0.794 \\ (0.005)} & \makecell{9.358 \\ (0.145)} & \makecell{0.989 \\ (0.001)} & \makecell{3.821 \\ (0.138)} \\
 & ACP-MC & \makecell{0.902 \\ (0.002)} & \makecell{8.725 \\ (0.123)} & \textcolor{red}{\makecell{0.828 \\ (0.004)}} & \makecell{12.179 \\ (0.129)} & \makecell{0.964 \\ (0.002)} & \makecell{5.818 \\ (0.110)} \\
 & AdaTS & \makecell{0.901 \\ (0.002)} & \textcolor{red}{\makecell{5.726 \\ (0.142)}} & \makecell{0.782 \\ (0.006)} & \makecell{7.924 \\ (0.091)} & \makecell{0.999 \\ (0.000)} & \makecell{3.899 \\ (0.176)} \\
 & Retrain & \makecell{0.903 \\ (0.002)} & \makecell{9.819 \\ (0.149)} & \textcolor{red}{\makecell{0.807 \\ (0.005)}} & \makecell{10.409 \\ (0.128)} & \makecell{0.983 \\ (0.002)} & \makecell{9.308 \\ (0.178)} \\
 & Standard & \makecell{0.901 \\ (0.001)} & \textcolor{red}{\makecell{5.542 \\ (0.125)}} & \makecell{0.783 \\ (0.006)} & \makecell{7.794 \\ (0.092)} & \makecell{0.999 \\ (0.000)} & \makecell{3.655 \\ (0.151)} \\
\bottomrule
\end{tabular}

\end{table}


\begin{figure*}[tbh]
    \centering
    \includegraphics[width=0.9\linewidth]{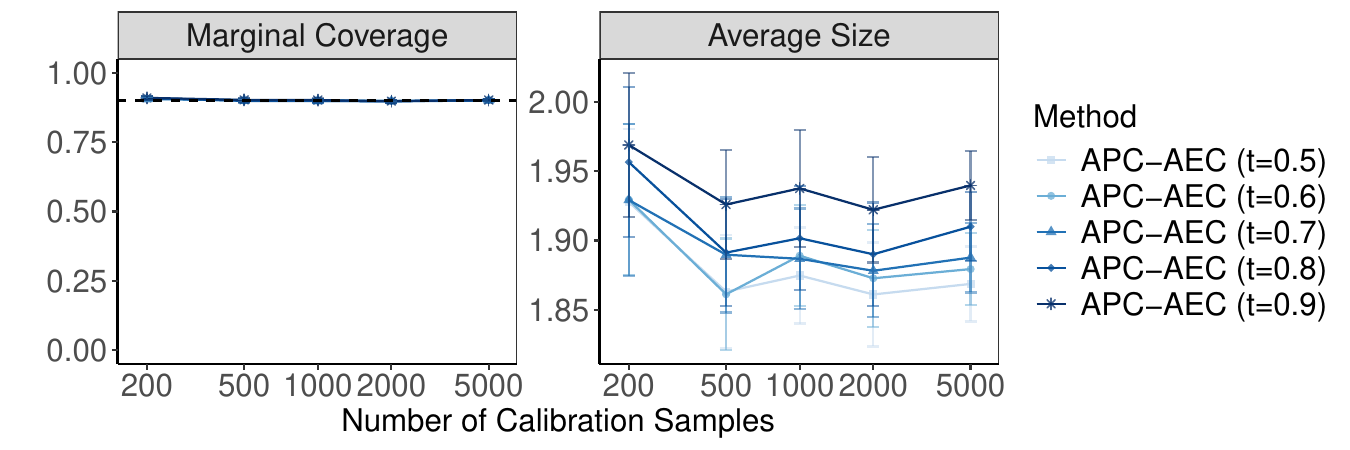}
    \caption{Performance of prediction sets for our AEqualized method with different thresholds for $5$-class synthetic data generated from Example~\ref{exp:covariate-shift} with covariate shift at $a = 0.5$, as a function calibration sample sizes. All methods achieve the target marginal coverage of $90\%$, smaller thresholds lead to smaller set sizes \emph{on average}. Error bars denote two standard errors. Corresponding conditional performance are provided in Figure~\ref{fig:supp_exp_sim_n_new_covshift_equalized_cond}. }
    \label{fig:supp_exp_sim_n_new_covshift_equalized_marg}
\end{figure*}

\begin{figure*}[tbh]
    \centering
    \includegraphics[width=0.9\linewidth]{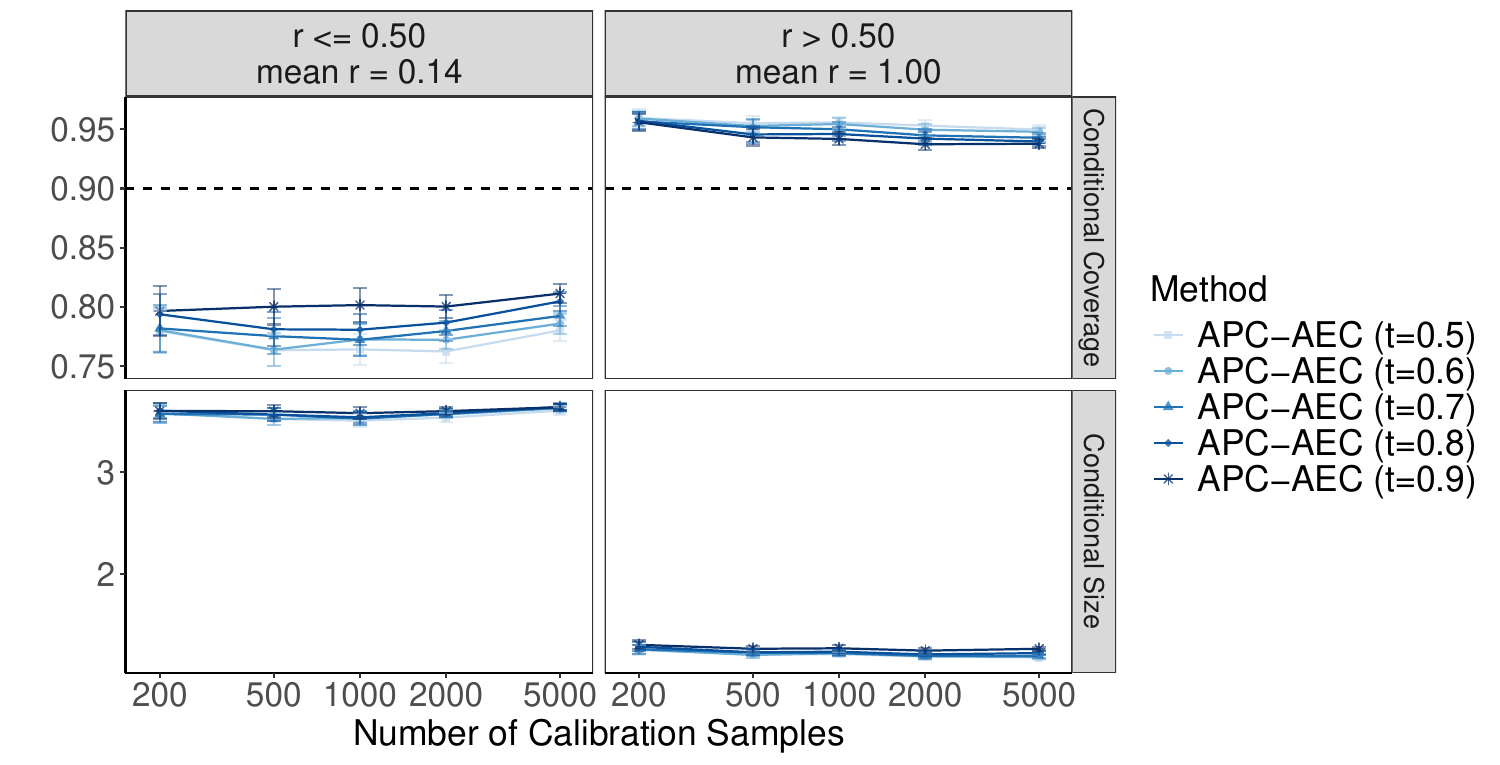}
    \caption{AEqualized uncertainty-aware: they attain higher conditional coverage on hard samples while assigning larger prediction sets to hard samples and smaller ones to easy samples. Higher thresholds lead to higher conditional coverage on the hard samples \emph{on average}. Corresponding marginal performance are provided in Figure~\ref{fig:supp_exp_sim_n_new_covshift_equalized_marg} and numerical details are provided in Table~\ref{tab:supp_exp_sim_covshift_n_new_equalized}. }
    \label{fig:supp_exp_sim_n_new_covshift_equalized_cond}
\end{figure*}

\begin{table}[!htb]
\centering
    \caption{Performance of prediction sets for our AEqualized method with different thresholds for $5$-class synthetic data generated from Example~\ref{exp:covariate-shift} with covariate shift at $a = 0.5$, as a function calibration sample sizes. See the corresponding plots in Figure~\ref{fig:supp_exp_sim_n_new_covshift_equalized_marg}--\ref{fig:supp_exp_sim_n_new_covshift_equalized_cond}.}
  \label{tab:supp_exp_sim_covshift_n_new_equalized}
\centering
\fontsize{6}{6}\selectfont
\begin{tabular}[t]{rlllllll}
\toprule
\multicolumn{2}{c}{ } & \multicolumn{2}{c}{\makecell{Marginal}} & \multicolumn{2}{c}{\makecell{Hard Bin \\ ($r^* \leq 0.5$)}} & \multicolumn{2}{c}{\makecell{Easy Bin \\ ($r^* > 0.5$)}} \\
\cmidrule(l{3pt}r{3pt}){3-4} \cmidrule(l{3pt}r{3pt}){5-6} \cmidrule(l{3pt}r{3pt}){7-8}
\makecell{Number \\ of \\ Calibration \\ Samples} & Method & Coverage & Size & Coverage & Size & Coverage & Size \\
\midrule
200 & APC-AEC (t=0.5) & \makecell{0.907 \\ (0.004)} & \textcolor{red}{\makecell{1.928 \\ (0.026)}} & \makecell{0.780 \\ (0.009)} & \makecell{3.583 \\ (0.040)} & \makecell{0.959 \\ (0.004)} & \makecell{1.258 \\ (0.023)} \\
 & APC-AEC (t=0.6) & \makecell{0.907 \\ (0.004)} & \textcolor{red}{\makecell{1.929 \\ (0.027)}} & \makecell{0.780 \\ (0.010)} & \makecell{3.575 \\ (0.043)} & \makecell{0.959 \\ (0.003)} & \makecell{1.262 \\ (0.021)} \\
 & APC-AEC (t=0.7) & \makecell{0.906 \\ (0.004)} & \textcolor{red}{\makecell{1.929 \\ (0.028)}} & \textcolor{red}{\makecell{0.782 \\ (0.010)}} & \makecell{3.569 \\ (0.041)} & \makecell{0.957 \\ (0.004)} & \makecell{1.264 \\ (0.022)} \\
 & APC-AEC (t=0.8) & \makecell{0.910 \\ (0.004)} & \makecell{1.957 \\ (0.027)} & \textcolor{red}{\makecell{0.794 \\ (0.009)}} & \makecell{3.602 \\ (0.037)} & \makecell{0.957 \\ (0.004)} & \makecell{1.289 \\ (0.022)} \\
 & APC-AEC (t=0.9) & \makecell{0.910 \\ (0.004)} & \makecell{1.969 \\ (0.026)} & \textcolor{red}{\makecell{0.797 \\ (0.010)}} & \makecell{3.598 \\ (0.038)} & \makecell{0.956 \\ (0.004)} & \makecell{1.308 \\ (0.021)} \\
\midrule
500 & APC-AEC (t=0.5) & \makecell{0.901 \\ (0.003)} & \textcolor{red}{\makecell{1.863 \\ (0.020)}} & \makecell{0.764 \\ (0.007)} & \makecell{3.520 \\ (0.031)} & \makecell{0.955 \\ (0.003)} & \makecell{1.212 \\ (0.016)} \\
 & APC-AEC (t=0.6) & \makecell{0.899 \\ (0.003)} & \textcolor{red}{\makecell{1.861 \\ (0.020)}} & \makecell{0.764 \\ (0.007)} & \makecell{3.522 \\ (0.030)} & \makecell{0.953 \\ (0.003)} & \makecell{1.209 \\ (0.014)} \\
 & APC-AEC (t=0.7) & \makecell{0.902 \\ (0.003)} & \textcolor{red}{\makecell{1.890 \\ (0.021)}} & \textcolor{red}{\makecell{0.775 \\ (0.008)}} & \makecell{3.565 \\ (0.033)} & \makecell{0.952 \\ (0.003)} & \makecell{1.232 \\ (0.015)} \\
 & APC-AEC (t=0.8) & \makecell{0.899 \\ (0.003)} & \makecell{1.891 \\ (0.019)} & \textcolor{red}{\makecell{0.781 \\ (0.007)}} & \makecell{3.564 \\ (0.031)} & \makecell{0.946 \\ (0.003)} & \makecell{1.235 \\ (0.014)} \\
 & APC-AEC (t=0.9) & \makecell{0.902 \\ (0.003)} & \makecell{1.926 \\ (0.020)} & \textcolor{red}{\makecell{0.800 \\ (0.008)}} & \makecell{3.597 \\ (0.031)} & \makecell{0.943 \\ (0.003)} & \makecell{1.269 \\ (0.015)} \\
\midrule
1000 & APC-AEC (t=0.5) & \makecell{0.900 \\ (0.002)} & \textcolor{red}{\makecell{1.875 \\ (0.017)}} & \makecell{0.764 \\ (0.006)} & \makecell{3.504 \\ (0.032)} & \makecell{0.956 \\ (0.002)} & \makecell{1.217 \\ (0.013)} \\
 & APC-AEC (t=0.6) & \makecell{0.902 \\ (0.003)} & \textcolor{red}{\makecell{1.889 \\ (0.018)}} & \textcolor{red}{\makecell{0.773 \\ (0.007)}} & \makecell{3.531 \\ (0.033)} & \makecell{0.954 \\ (0.003)} & \makecell{1.226 \\ (0.014)} \\
 & APC-AEC (t=0.7) & \makecell{0.899 \\ (0.003)} & \textcolor{red}{\makecell{1.887 \\ (0.018)}} & \makecell{0.772 \\ (0.007)} & \makecell{3.523 \\ (0.031)} & \makecell{0.950 \\ (0.003)} & \makecell{1.226 \\ (0.014)} \\
 & APC-AEC (t=0.8) & \makecell{0.899 \\ (0.003)} & \makecell{1.902 \\ (0.019)} & \textcolor{red}{\makecell{0.781 \\ (0.007)}} & \makecell{3.537 \\ (0.029)} & \makecell{0.946 \\ (0.003)} & \makecell{1.241 \\ (0.014)} \\
 & APC-AEC (t=0.9) & \makecell{0.902 \\ (0.003)} & \makecell{1.938 \\ (0.021)} & \textcolor{red}{\makecell{0.802 \\ (0.007)}} & \makecell{3.577 \\ (0.030)} & \makecell{0.942 \\ (0.003)} & \makecell{1.274 \\ (0.015)} \\
\midrule
2000 & APC-AEC (t=0.5) & \makecell{0.898 \\ (0.002)} & \textcolor{red}{\makecell{1.861 \\ (0.019)}} & \makecell{0.763 \\ (0.005)} & \makecell{3.534 \\ (0.022)} & \makecell{0.953 \\ (0.002)} & \makecell{1.192 \\ (0.015)} \\
 & APC-AEC (t=0.6) & \makecell{0.899 \\ (0.002)} & \textcolor{red}{\makecell{1.873 \\ (0.017)}} & \makecell{0.772 \\ (0.004)} & \makecell{3.565 \\ (0.016)} & \makecell{0.950 \\ (0.003)} & \makecell{1.195 \\ (0.014)} \\
 & APC-AEC (t=0.7) & \makecell{0.897 \\ (0.002)} & \textcolor{red}{\makecell{1.878 \\ (0.017)}} & \textcolor{red}{\makecell{0.780 \\ (0.004)}} & \makecell{3.571 \\ (0.015)} & \makecell{0.945 \\ (0.003)} & \makecell{1.200 \\ (0.013)} \\
 & APC-AEC (t=0.8) & \makecell{0.898 \\ (0.003)} & \makecell{1.890 \\ (0.019)} & \textcolor{red}{\makecell{0.787 \\ (0.005)}} & \makecell{3.577 \\ (0.020)} & \makecell{0.942 \\ (0.003)} & \makecell{1.214 \\ (0.014)} \\
 & APC-AEC (t=0.9) & \makecell{0.898 \\ (0.002)} & \makecell{1.922 \\ (0.019)} & \textcolor{red}{\makecell{0.800 \\ (0.005)}} & \makecell{3.597 \\ (0.018)} & \makecell{0.937 \\ (0.002)} & \makecell{1.251 \\ (0.015)} \\
\midrule
5000 & APC-AEC (t=0.5) & \makecell{0.902 \\ (0.002)} & \textcolor{red}{\makecell{1.869 \\ (0.014)}} & \makecell{0.780 \\ (0.005)} & \makecell{3.603 \\ (0.019)} & \makecell{0.950 \\ (0.002)} & \makecell{1.184 \\ (0.012)} \\
 & APC-AEC (t=0.6) & \makecell{0.902 \\ (0.002)} & \textcolor{red}{\makecell{1.879 \\ (0.013)}} & \makecell{0.786 \\ (0.004)} & \makecell{3.626 \\ (0.018)} & \makecell{0.948 \\ (0.002)} & \makecell{1.190 \\ (0.011)} \\
 & APC-AEC (t=0.7) & \makecell{0.900 \\ (0.002)} & \textcolor{red}{\makecell{1.888 \\ (0.013)}} & \textcolor{red}{\makecell{0.792 \\ (0.004)}} & \makecell{3.631 \\ (0.017)} & \makecell{0.943 \\ (0.002)} & \makecell{1.199 \\ (0.010)} \\
 & APC-AEC (t=0.8) & \makecell{0.901 \\ (0.002)} & \makecell{1.910 \\ (0.012)} & \textcolor{red}{\makecell{0.805 \\ (0.004)}} & \makecell{3.638 \\ (0.017)} & \makecell{0.940 \\ (0.002)} & \makecell{1.227 \\ (0.010)} \\
 & APC-AEC (t=0.9) & \makecell{0.902 \\ (0.002)} & \makecell{1.940 \\ (0.012)} & \textcolor{red}{\makecell{0.811 \\ (0.004)}} & \makecell{3.636 \\ (0.016)} & \makecell{0.938 \\ (0.002)} & \makecell{1.269 \\ (0.010)} \\
\bottomrule
\end{tabular}

\end{table} 


\begin{figure*}[tbh]
    \centering
    \includegraphics[width=0.9\linewidth]{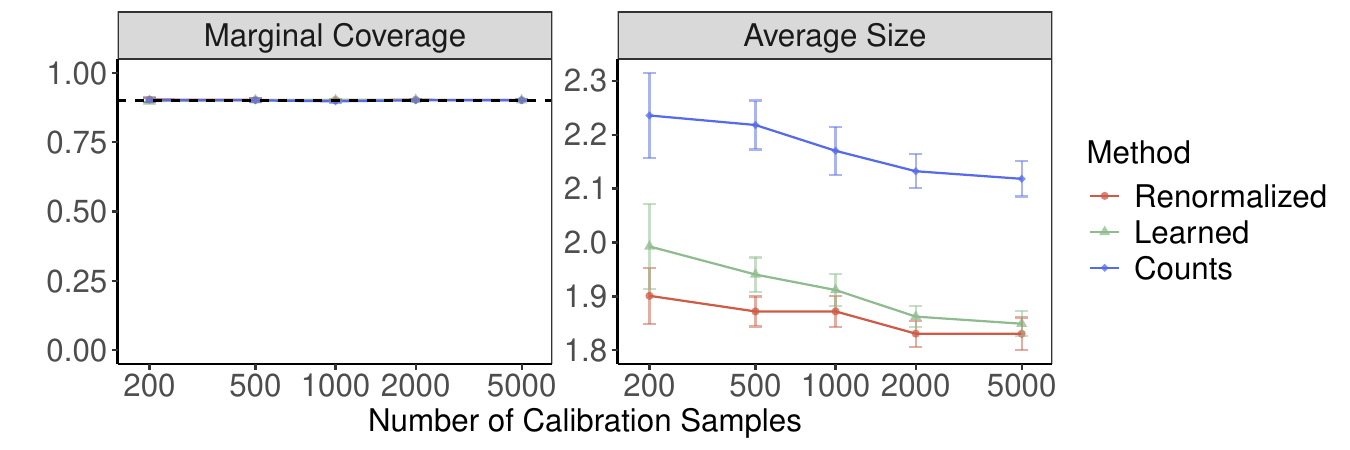}
    \caption{Performance of prediction sets for $5$-class synthetic data generated from Example~\ref{exp:covariate-shift} using our ACP-MC method as a function of the total calibration sample size. In particular, we compare the different estimation to $\eta$ as described in Appendix~\ref{app:APA_mutlclass}. All $\eta$ choices achieve the target marginal coverage of $90\%$, while Renormalized achieves smallest prediction set size. Error bars denote two standard errors. Corresponding conditional performance are provided in Figure~\ref{fig:supp_exp_sim_n_new_covshift_etacompare_cond}. }
    \label{fig:supp_exp_sim_n_new_covshift_etacompare_marg}
\end{figure*}

\begin{figure*}[tbh]
    \centering
    \includegraphics[width=0.9\linewidth]{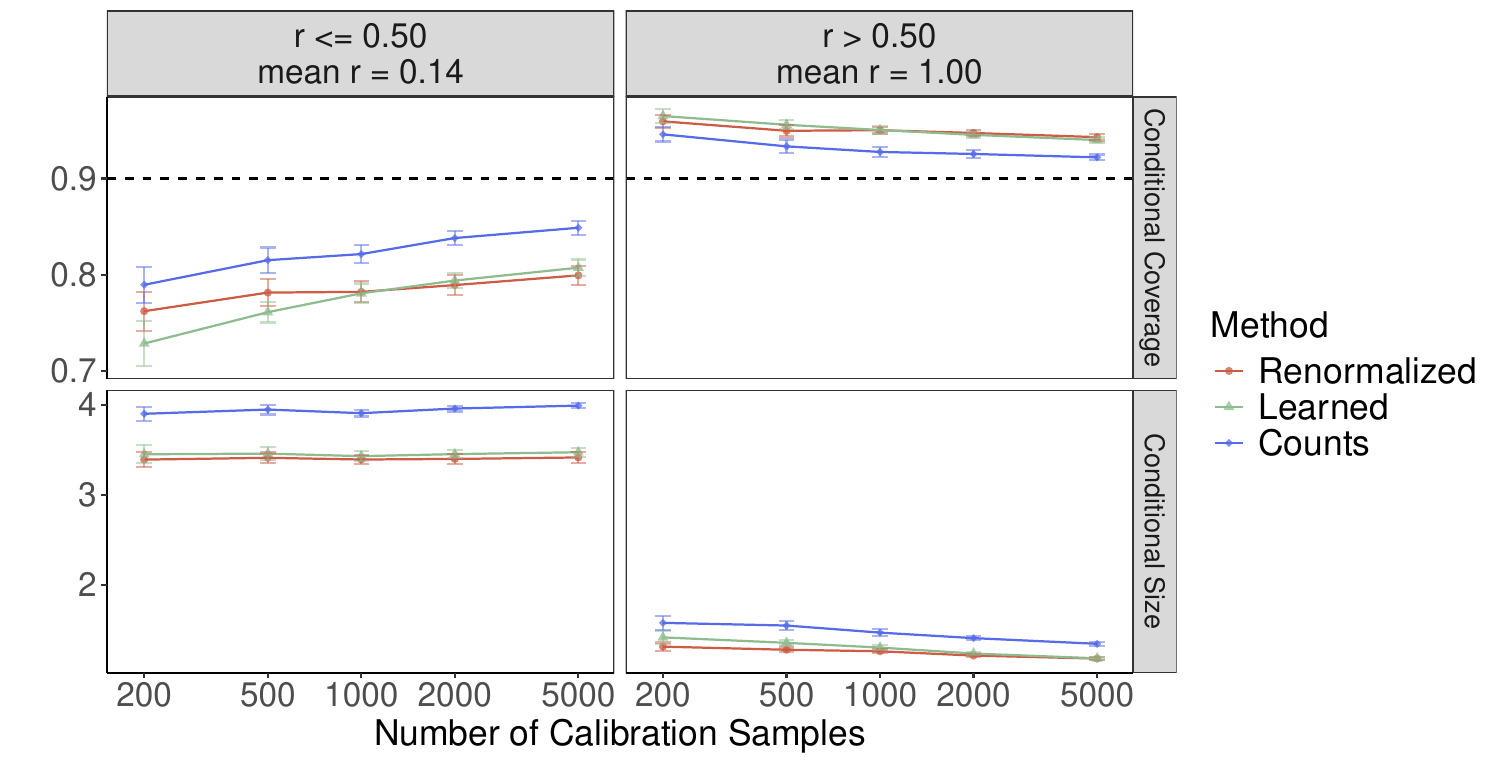}
    \caption{Our ACP-MC with all $\eta$ choices are uncertainty-aware: they attain high conditional coverage on hard samples while assigning larger prediction sets to hard samples and smaller ones to easy samples. Among them, Counts achieves highest conditional coverage on the hard bin. Corresponding marginal performance are provided in Figure~\ref{fig:supp_exp_sim_n_new_covshift_etacompare_marg} and numerical details are provided in Table~\ref{tab:supp_exp_sim_covshift_n_new_etacompare_apa}. }
    \label{fig:supp_exp_sim_n_new_covshift_etacompare_cond}
\end{figure*}

\begin{table}[!htb]
\centering
    \caption{Performance of prediction sets for $5$-class synthetic data generated from Example~\ref{exp:covariate-shift} using our ACP-MC method as a function of the total calibration sample size. See the corresponding plots in Figure~\ref{fig:supp_exp_sim_n_new_covshift_etacompare_marg}--\ref{fig:supp_exp_sim_n_new_covshift_etacompare_cond}.}
  \label{tab:supp_exp_sim_covshift_n_new_etacompare_apa}
\centering
\fontsize{6}{6}\selectfont
\begin{tabular}[t]{rlllllll}
\toprule
\multicolumn{2}{c}{ } & \multicolumn{2}{c}{\makecell{Marginal}} & \multicolumn{2}{c}{\makecell{Hard Bin \\ ($r^* \leq 0.5$)}} & \multicolumn{2}{c}{\makecell{Easy Bin \\ ($r^* > 0.5$)}} \\
\cmidrule(l{3pt}r{3pt}){3-4} \cmidrule(l{3pt}r{3pt}){5-6} \cmidrule(l{3pt}r{3pt}){7-8}
\makecell{Number \\ of \\ Calibration \\ Samples} & Method & Coverage & Size & Coverage & Size & Coverage & Size \\
\midrule
200 & Counts & \makecell{0.902 \\ (0.004)} & \textcolor{red}{\makecell{2.236 \\ (0.040)}} & \textcolor{red}{\makecell{0.790 \\ (0.009)}} & \makecell{3.898 \\ (0.039)} & \makecell{0.946 \\ (0.004)} & \makecell{1.576 \\ (0.038)} \\
 & Learned & \makecell{0.898 \\ (0.005)} & \textcolor{red}{\makecell{1.993 \\ (0.040)}} & \textcolor{red}{\makecell{0.728 \\ (0.012)}} & \makecell{3.449 \\ (0.050)} & \makecell{0.965 \\ (0.003)} & \makecell{1.415 \\ (0.035)} \\
 & Renormalized & \makecell{0.904 \\ (0.004)} & \textcolor{red}{\makecell{1.901 \\ (0.026)}} & \textcolor{red}{\makecell{0.762 \\ (0.010)}} & \makecell{3.389 \\ (0.041)} & \makecell{0.960 \\ (0.003)} & \makecell{1.312 \\ (0.023)} \\
\midrule
500 & Counts & \makecell{0.901 \\ (0.003)} & \textcolor{red}{\makecell{2.218 \\ (0.023)}} & \textcolor{red}{\makecell{0.815 \\ (0.007)}} & \makecell{3.945 \\ (0.027)} & \makecell{0.934 \\ (0.004)} & \makecell{1.547 \\ (0.024)} \\
 & Learned & \makecell{0.902 \\ (0.003)} & \textcolor{red}{\makecell{1.941 \\ (0.016)}} & \textcolor{red}{\makecell{0.761 \\ (0.005)}} & \makecell{3.456 \\ (0.035)} & \makecell{0.956 \\ (0.002)} & \makecell{1.355 \\ (0.016)} \\
 & Renormalized & \makecell{0.903 \\ (0.003)} & \textcolor{red}{\makecell{1.872 \\ (0.014)}} & \textcolor{red}{\makecell{0.781 \\ (0.007)}} & \makecell{3.409 \\ (0.030)} & \makecell{0.950 \\ (0.003)} & \makecell{1.278 \\ (0.013)} \\
\midrule
1000 & Counts & \makecell{0.897 \\ (0.002)} & \textcolor{red}{\makecell{2.170 \\ (0.022)}} & \textcolor{red}{\makecell{0.821 \\ (0.005)}} & \makecell{3.905 \\ (0.019)} & \makecell{0.928 \\ (0.003)} & \makecell{1.468 \\ (0.019)} \\
 & Learned & \makecell{0.902 \\ (0.002)} & \textcolor{red}{\makecell{1.912 \\ (0.015)}} & \textcolor{red}{\makecell{0.781 \\ (0.005)}} & \makecell{3.428 \\ (0.027)} & \makecell{0.951 \\ (0.002)} & \makecell{1.302 \\ (0.012)} \\
 & Renormalized & \makecell{0.902 \\ (0.002)} & \textcolor{red}{\makecell{1.872 \\ (0.015)}} & \textcolor{red}{\makecell{0.782 \\ (0.005)}} & \makecell{3.390 \\ (0.026)} & \makecell{0.950 \\ (0.002)} & \makecell{1.261 \\ (0.011)} \\
\midrule
2000 & Counts & \makecell{0.901 \\ (0.002)} & \textcolor{red}{\makecell{2.132 \\ (0.016)}} & \textcolor{red}{\makecell{0.838 \\ (0.004)}} & \makecell{3.957 \\ (0.016)} & \makecell{0.926 \\ (0.002)} & \makecell{1.407 \\ (0.012)} \\
 & Learned & \makecell{0.903 \\ (0.002)} & \textcolor{red}{\makecell{1.863 \\ (0.010)}} & \textcolor{red}{\makecell{0.794 \\ (0.004)}} & \makecell{3.450 \\ (0.024)} & \makecell{0.945 \\ (0.002)} & \makecell{1.236 \\ (0.009)} \\
 & Renormalized & \makecell{0.903 \\ (0.002)} & \textcolor{red}{\makecell{1.831 \\ (0.012)}} & \textcolor{red}{\makecell{0.789 \\ (0.005)}} & \makecell{3.397 \\ (0.029)} & \makecell{0.947 \\ (0.002)} & \makecell{1.213 \\ (0.009)} \\
\midrule
5000 & Counts & \makecell{0.901 \\ (0.001)} & \textcolor{red}{\makecell{2.118 \\ (0.016)}} & \textcolor{red}{\makecell{0.849 \\ (0.004)}} & \makecell{3.989 \\ (0.014)} & \makecell{0.922 \\ (0.002)} & \makecell{1.344 \\ (0.011)} \\
 & Learned & \makecell{0.902 \\ (0.002)} & \textcolor{red}{\makecell{1.849 \\ (0.011)}} & \textcolor{red}{\makecell{0.807 \\ (0.004)}} & \makecell{3.471 \\ (0.025)} & \makecell{0.940 \\ (0.001)} & \makecell{1.182 \\ (0.007)} \\
 & Renormalized & \makecell{0.902 \\ (0.002)} & \textcolor{red}{\makecell{1.831 \\ (0.015)}} & \textcolor{red}{\makecell{0.799 \\ (0.005)}} & \makecell{3.413 \\ (0.032)} & \makecell{0.943 \\ (0.002)} & \makecell{1.180 \\ (0.007)} \\
\bottomrule
\end{tabular}

\end{table}

\begin{figure*}[tbh]
    \centering
    \includegraphics[width=0.9\linewidth]{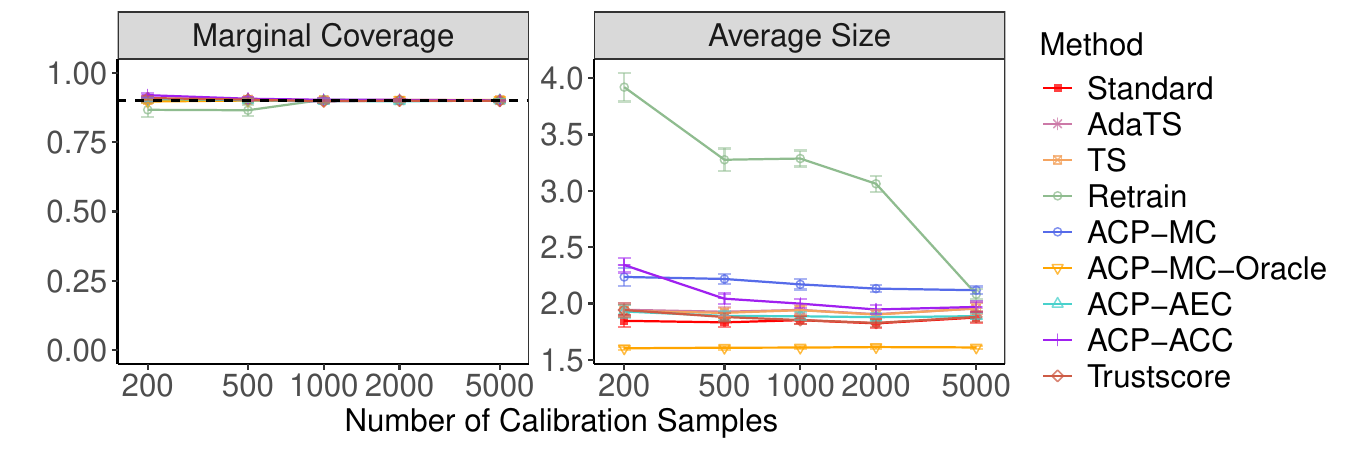}
    \caption{Performance of prediction sets constructed with additional benchmarks for $5$-class synthetic data generated from Example~\ref{exp:covariate-shift} with covariate shift at $a = 0.5$, as a function of the total calibration sample size. All methods achieve the target marginal coverage of $90\%$, while our methods retain practically efficient sets sizes. Error bars denote two standard errors. Corresponding conditional performance are provided in Figure~\ref{fig:supp_exp_sim_covshift_n_new_full_cond}. }
    \label{fig:supp_exp_sim_covshift_n_new_full_marg}
\end{figure*}

\begin{figure*}[tbh]
    \centering
    \includegraphics[width=0.9\linewidth]{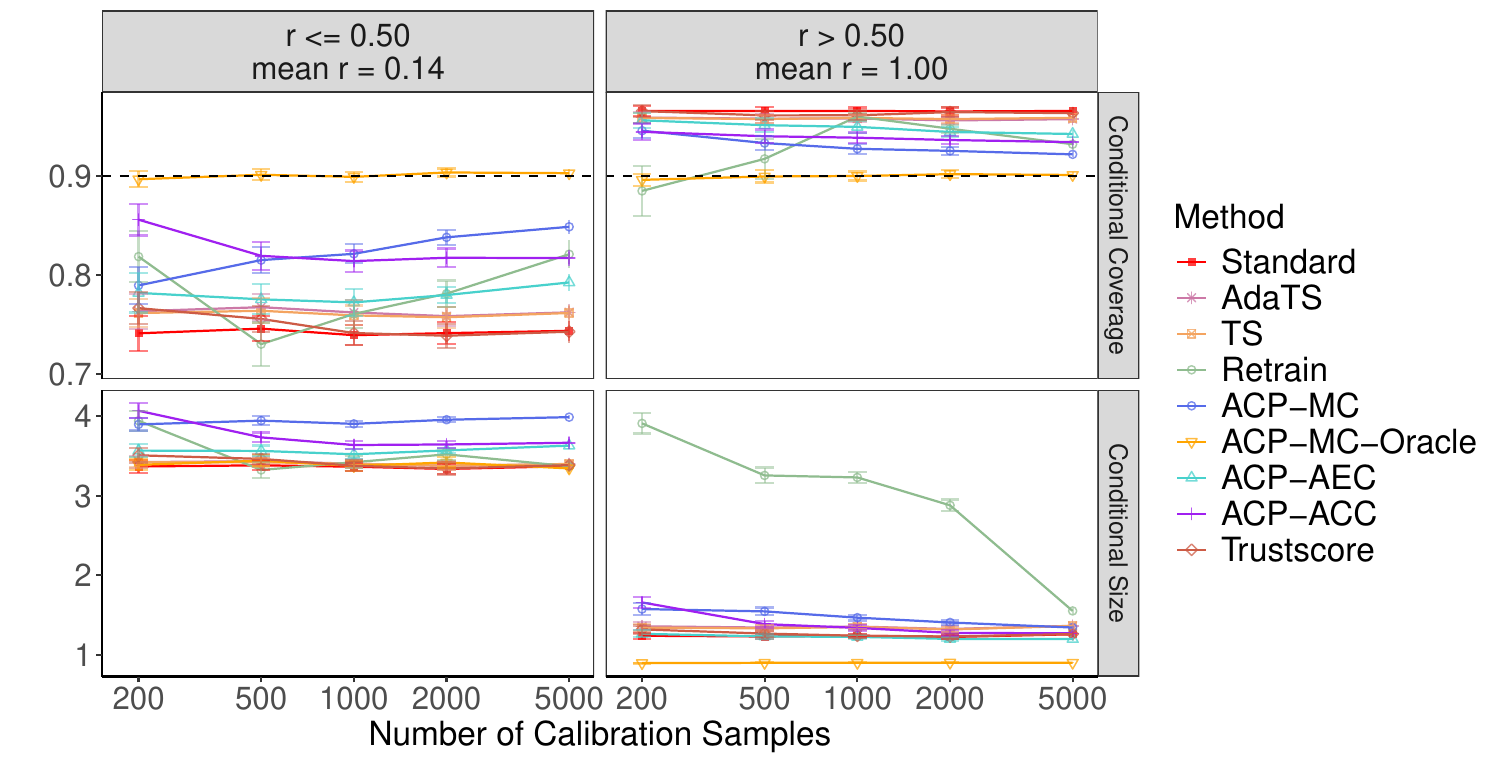}
    \caption{Our methods are more uncertainty-aware: they attain higher conditional coverage on hard samples while assigning larger prediction sets to hard samples and smaller ones to easy samples. Corresponding marginal performance are provided in Figure~\ref{fig:supp_exp_sim_covshift_n_new_full_marg} and numerical details are provided in Table~\ref{tab:supp_exp_sim_covshift_nnew_full}. }
    \label{fig:supp_exp_sim_covshift_n_new_full_cond}
\end{figure*}

\begin{table}[!htb]
\centering
    \caption{Performance of conformal prediction sets constructed with additional benchmarks for $5$-class synthetic data generated from Example~\ref{exp:covariate-shift} with covariate shift at $a = 0.5$, as a function the of the total calibration sample size. See the corresponding plots in Figure~\ref{fig:supp_exp_sim_n_new_full_marg}--\ref{fig:supp_exp_sim_n_new_full_cond}.}
  \label{tab:supp_exp_sim_covshift_nnew_full}
\centering
\fontsize{6}{6}\selectfont
\begin{tabular}[t]{rlllllll}
\toprule
\multicolumn{2}{c}{ } & \multicolumn{2}{c}{\makecell{Marginal}} & \multicolumn{2}{c}{\makecell{Hard Bin \\ ($r^* \leq 0.5$)}} & \multicolumn{2}{c}{\makecell{Easy Bin \\ ($r^* > 0.5$)}} \\
\cmidrule(l{3pt}r{3pt}){3-4} \cmidrule(l{3pt}r{3pt}){5-6} \cmidrule(l{3pt}r{3pt}){7-8}
\makecell{Number \\ of \\ Calibration \\ Samples} & Method & Coverage & Size & Coverage & Size & Coverage & Size \\
\midrule
200 & ACP-ACC & \makecell{0.920 \\ (0.004)} & \makecell{2.341 \\ (0.033)} & \textcolor{red}{\makecell{0.856 \\ (0.008)}} & \makecell{4.070 \\ (0.047)} & \makecell{0.945 \\ (0.004)} & \makecell{1.659 \\ (0.033)} \\
 & ACP-AEC & \makecell{0.906 \\ (0.004)} & \textcolor{red}{\makecell{1.929 \\ (0.028)}} & \makecell{0.782 \\ (0.010)} & \makecell{3.569 \\ (0.041)} & \makecell{0.957 \\ (0.004)} & \makecell{1.264 \\ (0.022)} \\
 & ACP-MC & \makecell{0.902 \\ (0.004)} & \makecell{2.236 \\ (0.040)} & \makecell{0.790 \\ (0.009)} & \makecell{3.898 \\ (0.039)} & \makecell{0.946 \\ (0.004)} & \makecell{1.576 \\ (0.038)} \\
 & ACP-MC-Oracle & \makecell{0.897 \\ (0.003)} & \textcolor{red}{\makecell{1.602 \\ (0.008)}} & \textcolor{red}{\makecell{0.897 \\ (0.004)}} & \makecell{3.390 \\ (0.034)} & \makecell{0.896 \\ (0.003)} & \makecell{0.896 \\ (0.003)} \\
 & AdaTS & \makecell{0.903 \\ (0.004)} & \makecell{1.944 \\ (0.028)} & \makecell{0.763 \\ (0.009)} & \makecell{3.424 \\ (0.037)} & \makecell{0.959 \\ (0.003)} & \makecell{1.359 \\ (0.026)} \\
 & Retrain & \makecell{0.867 \\ (0.012)} & \makecell{3.921 \\ (0.063)} & \textcolor{red}{\makecell{0.819 \\ (0.013)}} & \makecell{3.944 \\ (0.063)} & \makecell{0.885 \\ (0.013)} & \makecell{3.911 \\ (0.064)} \\
 & Standard & \makecell{0.902 \\ (0.003)} & \textcolor{red}{\makecell{1.846 \\ (0.026)}} & \makecell{0.741 \\ (0.009)} & \makecell{3.369 \\ (0.040)} & \makecell{0.966 \\ (0.002)} & \makecell{1.240 \\ (0.018)} \\
 & TS & \makecell{0.903 \\ (0.003)} & \makecell{1.933 \\ (0.026)} & \makecell{0.762 \\ (0.007)} & \makecell{3.421 \\ (0.032)} & \makecell{0.959 \\ (0.002)} & \makecell{1.343 \\ (0.022)} \\
 & Trustscore & \makecell{0.909 \\ (0.003)} & \makecell{1.942 \\ (0.029)} & \makecell{0.766 \\ (0.008)} & \makecell{3.509 \\ (0.045)} & \makecell{0.966 \\ (0.003)} & \makecell{1.320 \\ (0.025)} \\
\midrule
500 & ACP-ACC & \makecell{0.907 \\ (0.003)} & \makecell{2.043 \\ (0.023)} & \textcolor{red}{\makecell{0.819 \\ (0.007)}} & \makecell{3.735 \\ (0.031)} & \makecell{0.940 \\ (0.003)} & \makecell{1.386 \\ (0.021)} \\
 & ACP-AEC & \makecell{0.902 \\ (0.003)} & \makecell{1.890 \\ (0.021)} & \makecell{0.775 \\ (0.008)} & \makecell{3.565 \\ (0.033)} & \makecell{0.952 \\ (0.003)} & \makecell{1.232 \\ (0.015)} \\
 & ACP-MC & \makecell{0.901 \\ (0.003)} & \makecell{2.218 \\ (0.023)} & \textcolor{red}{\makecell{0.815 \\ (0.007)}} & \makecell{3.945 \\ (0.027)} & \makecell{0.934 \\ (0.004)} & \makecell{1.547 \\ (0.024)} \\
 & ACP-MC-Oracle & \makecell{0.900 \\ (0.003)} & \textcolor{red}{\makecell{1.608 \\ (0.008)}} & \textcolor{red}{\makecell{0.901 \\ (0.003)}} & \makecell{3.441 \\ (0.036)} & \makecell{0.900 \\ (0.003)} & \makecell{0.900 \\ (0.003)} \\
 & AdaTS & \makecell{0.904 \\ (0.003)} & \makecell{1.927 \\ (0.019)} & \makecell{0.767 \\ (0.007)} & \makecell{3.435 \\ (0.027)} & \makecell{0.958 \\ (0.002)} & \makecell{1.343 \\ (0.018)} \\
 & Retrain & \makecell{0.865 \\ (0.010)} & \makecell{3.276 \\ (0.050)} & \makecell{0.730 \\ (0.011)} & \makecell{3.325 \\ (0.052)} & \makecell{0.918 \\ (0.010)} & \makecell{3.256 \\ (0.050)} \\
 & Standard & \makecell{0.904 \\ (0.002)} & \textcolor{red}{\makecell{1.833 \\ (0.019)}} & \makecell{0.746 \\ (0.006)} & \makecell{3.383 \\ (0.029)} & \makecell{0.966 \\ (0.002)} & \makecell{1.231 \\ (0.014)} \\
 & TS & \makecell{0.903 \\ (0.003)} & \makecell{1.917 \\ (0.021)} & \makecell{0.764 \\ (0.006)} & \makecell{3.422 \\ (0.027)} & \makecell{0.958 \\ (0.002)} & \makecell{1.333 \\ (0.019)} \\
 & Trustscore & \makecell{0.903 \\ (0.003)} & \textcolor{red}{\makecell{1.880 \\ (0.020)}} & \makecell{0.756 \\ (0.007)} & \makecell{3.464 \\ (0.033)} & \makecell{0.962 \\ (0.002)} & \makecell{1.268 \\ (0.020)} \\
\midrule
1000 & ACP-ACC & \makecell{0.903 \\ (0.002)} & \makecell{1.999 \\ (0.019)} & \textcolor{red}{\makecell{0.814 \\ (0.005)}} & \makecell{3.639 \\ (0.026)} & \makecell{0.939 \\ (0.003)} & \makecell{1.339 \\ (0.016)} \\
 & ACP-AEC & \makecell{0.899 \\ (0.003)} & \makecell{1.887 \\ (0.018)} & \makecell{0.772 \\ (0.007)} & \makecell{3.523 \\ (0.031)} & \makecell{0.950 \\ (0.003)} & \makecell{1.226 \\ (0.014)} \\
 & ACP-MC & \makecell{0.897 \\ (0.002)} & \makecell{2.170 \\ (0.022)} & \textcolor{red}{\makecell{0.821 \\ (0.005)}} & \makecell{3.905 \\ (0.019)} & \makecell{0.928 \\ (0.003)} & \makecell{1.468 \\ (0.019)} \\
 & ACP-MC-Oracle & \makecell{0.900 \\ (0.002)} & \textcolor{red}{\makecell{1.609 \\ (0.007)}} & \textcolor{red}{\makecell{0.899 \\ (0.002)}} & \makecell{3.376 \\ (0.033)} & \makecell{0.900 \\ (0.002)} & \makecell{0.900 \\ (0.002)} \\
 & AdaTS & \makecell{0.902 \\ (0.002)} & \makecell{1.943 \\ (0.020)} & \makecell{0.762 \\ (0.006)} & \makecell{3.403 \\ (0.028)} & \makecell{0.958 \\ (0.002)} & \makecell{1.356 \\ (0.016)} \\
 & Retrain & \makecell{0.903 \\ (0.002)} & \makecell{3.287 \\ (0.036)} & \makecell{0.761 \\ (0.007)} & \makecell{3.422 \\ (0.037)} & \makecell{0.960 \\ (0.002)} & \makecell{3.231 \\ (0.036)} \\
 & Standard & \makecell{0.901 \\ (0.002)} & \textcolor{red}{\makecell{1.852 \\ (0.019)}} & \makecell{0.739 \\ (0.005)} & \makecell{3.366 \\ (0.027)} & \makecell{0.966 \\ (0.002)} & \makecell{1.241 \\ (0.012)} \\
 & TS & \makecell{0.901 \\ (0.002)} & \makecell{1.941 \\ (0.020)} & \makecell{0.759 \\ (0.005)} & \makecell{3.403 \\ (0.025)} & \makecell{0.959 \\ (0.002)} & \makecell{1.351 \\ (0.016)} \\
 & Trustscore & \makecell{0.898 \\ (0.002)} & \textcolor{red}{\makecell{1.853 \\ (0.019)}} & \makecell{0.741 \\ (0.006)} & \makecell{3.377 \\ (0.033)} & \makecell{0.962 \\ (0.002)} & \makecell{1.240 \\ (0.013)} \\
\midrule
2000 & ACP-ACC & \makecell{0.903 \\ (0.002)} & \makecell{1.947 \\ (0.019)} & \textcolor{red}{\makecell{0.817 \\ (0.005)}} & \makecell{3.646 \\ (0.023)} & \makecell{0.937 \\ (0.002)} & \makecell{1.276 \\ (0.017)} \\
 & ACP-AEC & \makecell{0.897 \\ (0.002)} & \makecell{1.878 \\ (0.017)} & \makecell{0.780 \\ (0.004)} & \makecell{3.571 \\ (0.015)} & \makecell{0.945 \\ (0.003)} & \makecell{1.200 \\ (0.013)} \\
 & ACP-MC & \makecell{0.901 \\ (0.002)} & \makecell{2.132 \\ (0.016)} & \textcolor{red}{\makecell{0.838 \\ (0.004)}} & \makecell{3.957 \\ (0.016)} & \makecell{0.926 \\ (0.002)} & \makecell{1.407 \\ (0.012)} \\
 & ACP-MC-Oracle & \makecell{0.903 \\ (0.002)} & \textcolor{red}{\makecell{1.613 \\ (0.007)}} & \textcolor{red}{\makecell{0.904 \\ (0.002)}} & \makecell{3.419 \\ (0.032)} & \makecell{0.902 \\ (0.002)} & \makecell{0.902 \\ (0.002)} \\
 & AdaTS & \makecell{0.900 \\ (0.002)} & \makecell{1.905 \\ (0.018)} & \makecell{0.758 \\ (0.005)} & \makecell{3.378 \\ (0.027)} & \makecell{0.956 \\ (0.002)} & \makecell{1.325 \\ (0.017)} \\
 & Retrain & \makecell{0.901 \\ (0.002)} & \makecell{3.063 \\ (0.036)} & \makecell{0.781 \\ (0.007)} & \makecell{3.522 \\ (0.035)} & \makecell{0.948 \\ (0.002)} & \makecell{2.881 \\ (0.037)} \\
 & Standard & \makecell{0.901 \\ (0.001)} & \textcolor{red}{\makecell{1.824 \\ (0.020)}} & \makecell{0.741 \\ (0.006)} & \makecell{3.338 \\ (0.032)} & \makecell{0.965 \\ (0.002)} & \makecell{1.225 \\ (0.015)} \\
 & TS & \makecell{0.900 \\ (0.002)} & \makecell{1.902 \\ (0.020)} & \makecell{0.757 \\ (0.005)} & \makecell{3.378 \\ (0.027)} & \makecell{0.958 \\ (0.002)} & \makecell{1.319 \\ (0.017)} \\
 & Trustscore & \makecell{0.900 \\ (0.002)} & \textcolor{red}{\makecell{1.827 \\ (0.019)}} & \makecell{0.739 \\ (0.006)} & \makecell{3.343 \\ (0.038)} & \makecell{0.965 \\ (0.003)} & \makecell{1.231 \\ (0.016)} \\
\midrule
5000 & ACP-ACC & \makecell{0.900 \\ (0.001)} & \makecell{1.969 \\ (0.024)} & \makecell{0.817 \\ (0.004)} & \makecell{3.667 \\ (0.024)} & \makecell{0.934 \\ (0.001)} & \makecell{1.271 \\ (0.016)} \\
 & ACP-AEC & \makecell{0.900 \\ (0.002)} & \makecell{1.888 \\ (0.013)} & \makecell{0.792 \\ (0.004)} & \makecell{3.631 \\ (0.017)} & \makecell{0.943 \\ (0.002)} & \makecell{1.199 \\ (0.010)} \\
 & ACP-MC & \makecell{0.901 \\ (0.001)} & \makecell{2.118 \\ (0.016)} & \textcolor{red}{\makecell{0.849 \\ (0.004)}} & \makecell{3.989 \\ (0.014)} & \makecell{0.922 \\ (0.002)} & \makecell{1.344 \\ (0.011)} \\
 & ACP-MC-Oracle & \makecell{0.902 \\ (0.001)} & \textcolor{red}{\makecell{1.610 \\ (0.007)}} & \textcolor{red}{\makecell{0.903 \\ (0.002)}} & \makecell{3.345 \\ (0.036)} & \makecell{0.901 \\ (0.002)} & \makecell{0.901 \\ (0.002)} \\
 & AdaTS & \makecell{0.900 \\ (0.002)} & \makecell{1.956 \\ (0.024)} & \makecell{0.762 \\ (0.004)} & \makecell{3.397 \\ (0.030)} & \makecell{0.958 \\ (0.002)} & \makecell{1.365 \\ (0.021)} \\
 & Retrain & \makecell{0.900 \\ (0.002)} & \makecell{2.082 \\ (0.024)} & \textcolor{red}{\makecell{0.821 \\ (0.007)}} & \makecell{3.371 \\ (0.037)} & \makecell{0.932 \\ (0.003)} & \makecell{1.552 \\ (0.021)} \\
 & Standard & \makecell{0.901 \\ (0.002)} & \textcolor{red}{\makecell{1.876 \\ (0.025)}} & \makecell{0.744 \\ (0.005)} & \makecell{3.372 \\ (0.034)} & \makecell{0.966 \\ (0.002)} & \makecell{1.259 \\ (0.017)} \\
 & TS & \makecell{0.901 \\ (0.002)} & \makecell{1.956 \\ (0.026)} & \makecell{0.762 \\ (0.004)} & \makecell{3.401 \\ (0.028)} & \makecell{0.959 \\ (0.002)} & \makecell{1.363 \\ (0.020)} \\
 & Trustscore & \makecell{0.899 \\ (0.001)} & \textcolor{red}{\makecell{1.881 \\ (0.024)}} & \makecell{0.743 \\ (0.006)} & \makecell{3.386 \\ (0.038)} & \makecell{0.964 \\ (0.002)} & \makecell{1.262 \\ (0.016)} \\
\bottomrule
\end{tabular}

\end{table}

\clearpage

\subsubsection{Real data experiments}

\begin{table}[!htb]
\centering
    \caption{Performance of conformal prediction sets on the Camelyon17 dataset as a function of the total calibration sample size. All methods achieve the target marginal coverage of \(90\%\), while our methods (ACP-MC, ACP-AEC, and ACP-ACC) retain practically efficient set sizes and are more uncertainty-aware, assigning larger sets to more unreliable samples and smaller sets to more reliable ones. Red numbers indicate the three smallest average set sizes and the three highest conditional coverages. See the corresponding plots in Figure~\ref{fig:main_exp_sim_nnew_overall}.}
  \label{tab:main_exp_real_ndata}
\centering
\fontsize{6}{6}\selectfont
\begin{tabular}[t]{rlllllllll}
\toprule
\multicolumn{2}{c}{ } & \multicolumn{2}{c}{\makecell{Marginal}} & \multicolumn{2}{c}{\makecell{$r^* \leq 0.33$ \\ mean $r^* = 0.17$}} & \multicolumn{2}{c}{\makecell{$0.33 < r^* \leq 0.67$ \\ mean $r^* = 0.53$}} & \multicolumn{2}{c}{\makecell{$r^* > 0.67$ \\ mean $r^* = 0.97$}} \\
\cmidrule(l{3pt}r{3pt}){3-4} \cmidrule(l{3pt}r{3pt}){5-6} \cmidrule(l{3pt}r{3pt}){7-8} \cmidrule(l{3pt}r{3pt}){9-10}
\makecell{Number \\ of \\ Calibration \\ Samples} & Method & Coverage & Size & Coverage & Size & Coverage & Size & Coverage & Size \\
\midrule
200 & ACP-ACC & \makecell{0.912 \\ (0.006)} & \makecell{1.117 \\ (0.012)} & \textcolor{red}{\makecell{0.745 \\ (0.050)}} & \makecell{1.719 \\ (0.057)} & \textcolor{red}{\makecell{0.831 \\ (0.027)}} & \makecell{1.751 \\ (0.038)} & \textcolor{red}{\makecell{0.918 \\ (0.005)}} & \makecell{1.083 \\ (0.011)} \\
 & ACP-AEC & \makecell{0.900 \\ (0.004)} & \textcolor{red}{\makecell{1.050 \\ (0.010)}} & \makecell{0.407 \\ (0.053)} & \makecell{1.324 \\ (0.063)} & \textcolor{red}{\makecell{0.717 \\ (0.027)}} & \makecell{1.541 \\ (0.035)} & \makecell{0.915 \\ (0.005)} & \makecell{1.027 \\ (0.010)} \\
 & ACP-MC & \makecell{0.900 \\ (0.005)} & \makecell{1.069 \\ (0.014)} & \textcolor{red}{\makecell{0.636 \\ (0.055)}} & \makecell{1.505 \\ (0.056)} & \makecell{0.689 \\ (0.030)} & \makecell{1.441 \\ (0.038)} & \makecell{0.911 \\ (0.005)} & \makecell{1.048 \\ (0.014)} \\
 & AdaTS & \makecell{0.899 \\ (0.005)} & \textcolor{red}{\makecell{1.063 \\ (0.013)}} & \makecell{0.250 \\ (0.036)} & \makecell{1.162 \\ (0.039)} & \makecell{0.633 \\ (0.022)} & \makecell{1.378 \\ (0.034)} & \textcolor{red}{\makecell{0.921 \\ (0.005)}} & \makecell{1.051 \\ (0.013)} \\
 & Retrain & \makecell{0.899 \\ (0.005)} & \makecell{1.389 \\ (0.021)} & \textcolor{red}{\makecell{0.858 \\ (0.021)}} & \makecell{1.257 \\ (0.052)} & \textcolor{red}{\makecell{0.754 \\ (0.020)}} & \makecell{1.343 \\ (0.034)} & \makecell{0.905 \\ (0.005)} & \makecell{1.395 \\ (0.021)} \\
 & Standard & \makecell{0.893 \\ (0.004)} & \textcolor{red}{\makecell{1.028 \\ (0.008)}} & \makecell{0.179 \\ (0.026)} & \makecell{1.083 \\ (0.026)} & \makecell{0.630 \\ (0.022)} & \makecell{1.374 \\ (0.032)} & \textcolor{red}{\makecell{0.917 \\ (0.005)}} & \makecell{1.016 \\ (0.008)} \\
\midrule
300 & ACP-ACC & \makecell{0.911 \\ (0.005)} & \makecell{1.101 \\ (0.010)} & \textcolor{red}{\makecell{0.857 \\ (0.040)}} & \makecell{1.838 \\ (0.044)} & \textcolor{red}{\makecell{0.861 \\ (0.016)}} & \makecell{1.779 \\ (0.019)} & \makecell{0.914 \\ (0.005)} & \makecell{1.065 \\ (0.010)} \\
 & ACP-AEC & \makecell{0.905 \\ (0.006)} & \textcolor{red}{\makecell{1.054 \\ (0.010)}} & \makecell{0.647 \\ (0.045)} & \makecell{1.547 \\ (0.053)} & \textcolor{red}{\makecell{0.743 \\ (0.023)}} & \makecell{1.584 \\ (0.031)} & \makecell{0.914 \\ (0.006)} & \makecell{1.025 \\ (0.010)} \\
 & ACP-MC & \makecell{0.907 \\ (0.005)} & \textcolor{red}{\makecell{1.064 \\ (0.010)}} & \textcolor{red}{\makecell{0.763 \\ (0.039)}} & \makecell{1.610 \\ (0.043)} & \makecell{0.724 \\ (0.021)} & \makecell{1.487 \\ (0.030)} & \textcolor{red}{\makecell{0.916 \\ (0.005)}} & \makecell{1.040 \\ (0.009)} \\
 & AdaTS & \makecell{0.897 \\ (0.006)} & \makecell{1.067 \\ (0.016)} & \makecell{0.362 \\ (0.045)} & \makecell{1.224 \\ (0.048)} & \makecell{0.643 \\ (0.026)} & \makecell{1.436 \\ (0.031)} & \textcolor{red}{\makecell{0.917 \\ (0.006)}} & \makecell{1.053 \\ (0.016)} \\
 & Retrain & \makecell{0.899 \\ (0.005)} & \makecell{1.315 \\ (0.021)} & \textcolor{red}{\makecell{0.837 \\ (0.025)}} & \makecell{1.305 \\ (0.053)} & \textcolor{red}{\makecell{0.765 \\ (0.020)}} & \makecell{1.386 \\ (0.027)} & \makecell{0.905 \\ (0.005)} & \makecell{1.315 \\ (0.021)} \\
 & Standard & \makecell{0.897 \\ (0.003)} & \textcolor{red}{\makecell{1.034 \\ (0.008)}} & \makecell{0.278 \\ (0.040)} & \makecell{1.134 \\ (0.035)} & \makecell{0.645 \\ (0.024)} & \makecell{1.436 \\ (0.026)} & \textcolor{red}{\makecell{0.919 \\ (0.004)}} & \makecell{1.020 \\ (0.008)} \\
\midrule
500 & ACP-ACC & \makecell{0.909 \\ (0.004)} & \makecell{1.086 \\ (0.008)} & \textcolor{red}{\makecell{0.837 \\ (0.036)}} & \makecell{1.822 \\ (0.038)} & \textcolor{red}{\makecell{0.874 \\ (0.015)}} & \makecell{1.801 \\ (0.019)} & \textcolor{red}{\makecell{0.911 \\ (0.004)}} & \makecell{1.048 \\ (0.008)} \\
 & ACP-AEC & \makecell{0.900 \\ (0.004)} & \textcolor{red}{\makecell{1.044 \\ (0.008)}} & \makecell{0.714 \\ (0.046)} & \makecell{1.673 \\ (0.053)} & \textcolor{red}{\makecell{0.798 \\ (0.020)}} & \makecell{1.678 \\ (0.027)} & \makecell{0.906 \\ (0.004)} & \makecell{1.011 \\ (0.008)} \\
 & ACP-MC & \makecell{0.901 \\ (0.003)} & \textcolor{red}{\makecell{1.046 \\ (0.009)}} & \textcolor{red}{\makecell{0.788 \\ (0.046)}} & \makecell{1.494 \\ (0.048)} & \textcolor{red}{\makecell{0.774 \\ (0.022)}} & \makecell{1.537 \\ (0.027)} & \makecell{0.906 \\ (0.003)} & \makecell{1.022 \\ (0.008)} \\
 & AdaTS & \makecell{0.900 \\ (0.005)} & \makecell{1.068 \\ (0.011)} & \makecell{0.271 \\ (0.032)} & \makecell{1.183 \\ (0.033)} & \makecell{0.614 \\ (0.028)} & \makecell{1.417 \\ (0.029)} & \textcolor{red}{\makecell{0.921 \\ (0.005)}} & \makecell{1.054 \\ (0.010)} \\
 & Retrain & \makecell{0.901 \\ (0.003)} & \makecell{1.248 \\ (0.012)} & \textcolor{red}{\makecell{0.888 \\ (0.014)}} & \makecell{1.358 \\ (0.043)} & \makecell{0.760 \\ (0.015)} & \makecell{1.373 \\ (0.030)} & \makecell{0.906 \\ (0.003)} & \makecell{1.244 \\ (0.012)} \\
 & Standard & \makecell{0.901 \\ (0.003)} & \textcolor{red}{\makecell{1.039 \\ (0.005)}} & \makecell{0.197 \\ (0.023)} & \makecell{1.098 \\ (0.021)} & \makecell{0.616 \\ (0.029)} & \makecell{1.411 \\ (0.029)} & \textcolor{red}{\makecell{0.924 \\ (0.003)}} & \makecell{1.026 \\ (0.005)} \\
\midrule
1000 & ACP-ACC & \makecell{0.901 \\ (0.002)} & \makecell{1.065 \\ (0.005)} & \textcolor{red}{\makecell{0.861 \\ (0.027)}} & \makecell{1.854 \\ (0.027)} & \textcolor{red}{\makecell{0.882 \\ (0.016)}} & \makecell{1.811 \\ (0.020)} & \makecell{0.902 \\ (0.002)} & \makecell{1.025 \\ (0.006)} \\
 & ACP-AEC & \makecell{0.899 \\ (0.003)} & \textcolor{red}{\makecell{1.041 \\ (0.005)}} & \makecell{0.748 \\ (0.033)} & \makecell{1.721 \\ (0.036)} & \textcolor{red}{\makecell{0.807 \\ (0.015)}} & \makecell{1.689 \\ (0.020)} & \makecell{0.903 \\ (0.003)} & \makecell{1.006 \\ (0.005)} \\
 & ACP-MC & \makecell{0.900 \\ (0.002)} & \textcolor{red}{\makecell{1.029 \\ (0.006)}} & \textcolor{red}{\makecell{0.870 \\ (0.019)}} & \makecell{1.383 \\ (0.042)} & \textcolor{red}{\makecell{0.819 \\ (0.016)}} & \makecell{1.588 \\ (0.028)} & \makecell{0.903 \\ (0.002)} & \makecell{1.005 \\ (0.005)} \\
 & AdaTS & \makecell{0.901 \\ (0.003)} & \makecell{1.068 \\ (0.007)} & \makecell{0.301 \\ (0.032)} & \makecell{1.200 \\ (0.031)} & \makecell{0.644 \\ (0.026)} & \makecell{1.413 \\ (0.032)} & \textcolor{red}{\makecell{0.922 \\ (0.003)}} & \makecell{1.055 \\ (0.008)} \\
 & Retrain & \makecell{0.902 \\ (0.003)} & \makecell{1.195 \\ (0.008)} & \textcolor{red}{\makecell{0.893 \\ (0.014)}} & \makecell{1.348 \\ (0.032)} & \makecell{0.798 \\ (0.016)} & \makecell{1.462 \\ (0.025)} & \textcolor{red}{\makecell{0.905 \\ (0.003)}} & \makecell{1.186 \\ (0.008)} \\
 & Standard & \makecell{0.900 \\ (0.002)} & \textcolor{red}{\makecell{1.039 \\ (0.005)}} & \makecell{0.222 \\ (0.029)} & \makecell{1.119 \\ (0.030)} & \makecell{0.638 \\ (0.026)} & \makecell{1.412 \\ (0.030)} & \textcolor{red}{\makecell{0.923 \\ (0.003)}} & \makecell{1.026 \\ (0.005)} \\
\midrule
2000 & ACP-ACC & \makecell{0.902 \\ (0.002)} & \makecell{1.061 \\ (0.004)} & \textcolor{red}{\makecell{0.879 \\ (0.023)}} & \makecell{1.857 \\ (0.025)} & \textcolor{red}{\makecell{0.838 \\ (0.017)}} & \makecell{1.726 \\ (0.023)} & \makecell{0.904 \\ (0.002)} & \makecell{1.023 \\ (0.004)} \\
 & ACP-AEC & \makecell{0.902 \\ (0.003)} & \textcolor{red}{\makecell{1.046 \\ (0.004)}} & \makecell{0.858 \\ (0.024)} & \makecell{1.835 \\ (0.026)} & \makecell{0.827 \\ (0.017)} & \makecell{1.685 \\ (0.026)} & \textcolor{red}{\makecell{0.905 \\ (0.003)}} & \makecell{1.009 \\ (0.004)} \\
 & ACP-MC & \makecell{0.899 \\ (0.003)} & \textcolor{red}{\makecell{1.029 \\ (0.005)}} & \textcolor{red}{\makecell{0.897 \\ (0.023)}} & \makecell{1.321 \\ (0.037)} & \textcolor{red}{\makecell{0.851 \\ (0.016)}} & \makecell{1.615 \\ (0.022)} & \makecell{0.901 \\ (0.003)} & \makecell{1.005 \\ (0.005)} \\
 & AdaTS & \makecell{0.900 \\ (0.002)} & \makecell{1.065 \\ (0.006)} & \makecell{0.289 \\ (0.035)} & \makecell{1.171 \\ (0.029)} & \makecell{0.674 \\ (0.016)} & \makecell{1.457 \\ (0.019)} & \textcolor{red}{\makecell{0.920 \\ (0.003)}} & \makecell{1.051 \\ (0.007)} \\
 & Retrain & \makecell{0.902 \\ (0.003)} & \makecell{1.132 \\ (0.007)} & \textcolor{red}{\makecell{0.911 \\ (0.015)}} & \makecell{1.210 \\ (0.039)} & \textcolor{red}{\makecell{0.850 \\ (0.013)}} & \makecell{1.435 \\ (0.023)} & \makecell{0.904 \\ (0.003)} & \makecell{1.122 \\ (0.007)} \\
 & Standard & \makecell{0.901 \\ (0.002)} & \textcolor{red}{\makecell{1.041 \\ (0.004)}} & \makecell{0.241 \\ (0.030)} & \makecell{1.127 \\ (0.026)} & \makecell{0.677 \\ (0.018)} & \makecell{1.466 \\ (0.023)} & \textcolor{red}{\makecell{0.923 \\ (0.003)}} & \makecell{1.026 \\ (0.004)} \\
\bottomrule
\end{tabular}

\end{table}



\begin{figure*}[tbh]
    \centering
    \includegraphics[width=\linewidth]{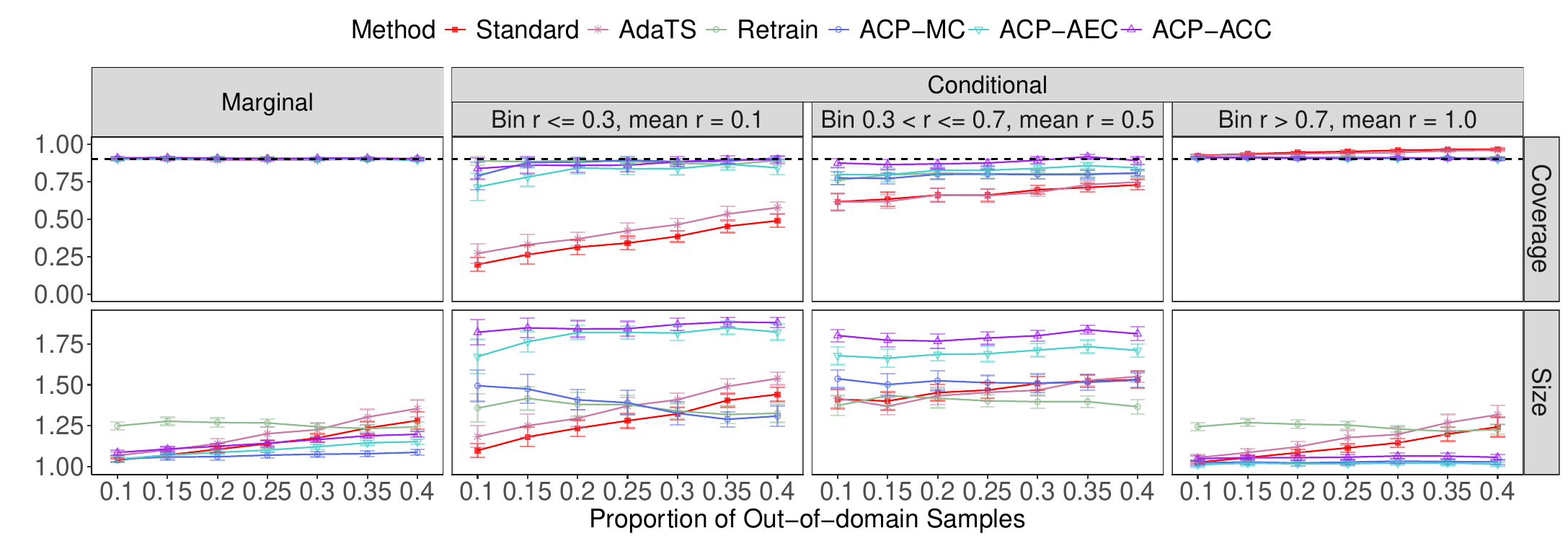}
    \caption{Performance of conformal prediction sets for Camelyon17 dataset as a function of the total calibration sample size. Left Panel: marginal coverage and average prediction set size. All methods achieve the marginal coverage of $90\%$, while our methods (ACP-MC, ACP-AEC, ACP-ACC) retain practically efficient sets sizes. Right Panel: coverage conditional on estimated reliability bin. Our methods achieve higher conditional coverage.
    Error bars denote two standard errors. See Table~\ref{tab:supp_exp_real_beta} for further details.}
    \label{fig:supp_exp_real_beta_overall}
\end{figure*}

\begin{table}[!htb]
\centering
    \caption{Performance of conformal prediction sets on the Camelyon17 dataset as a function of the proportion of out-of-domain samples. All methods achieve the target marginal coverage of \(90\%\), while our methods (ACP-MC, ACP-AEC, and ACP-ACC) retain practically efficient set sizes and are more uncertainty-aware, assigning larger sets to more unreliable samples and smaller sets to more reliable ones. Red numbers indicate the three smallest average set sizes and the three highest conditional coverages. See the corresponding plots in Figure~\ref{fig:supp_exp_real_beta_overall}.}
  \label{tab:supp_exp_real_beta}
\centering
\fontsize{6}{6}\selectfont
\begin{tabular}[t]{rlllllllll}
\toprule
\multicolumn{2}{c}{ } & \multicolumn{2}{c}{\makecell{Marginal}} & \multicolumn{2}{c}{\makecell{$r^* \leq 0.33$ \\ mean $r^* = 0.14$}} & \multicolumn{2}{c}{\makecell{$0.33 < r^* \leq 0.67$ \\ mean $r^* = 0.52$}} & \multicolumn{2}{c}{\makecell{$r^* > 0.67$ \\ mean $r^* = 0.97$}} \\
\cmidrule(l{3pt}r{3pt}){3-4} \cmidrule(l{3pt}r{3pt}){5-6} \cmidrule(l{3pt}r{3pt}){7-8} \cmidrule(l{3pt}r{3pt}){9-10}
\makecell{Proportion \\ of \\ Hard \\ samples} & Method & Coverage & Size & Coverage & Size & Coverage & Size & Coverage & Size \\
\midrule
0.1 & ACP-ACC & \makecell{0.909 \\ (0.004)} & \makecell{1.086 \\ (0.008)} & \textcolor{red}{\makecell{0.837 \\ (0.036)}} & \makecell{1.822 \\ (0.038)} & \textcolor{red}{\makecell{0.874 \\ (0.015)}} & \makecell{1.801 \\ (0.019)} & \textcolor{red}{\makecell{0.911 \\ (0.004)}} & \makecell{1.048 \\ (0.008)} \\
 & ACP-AEC & \makecell{0.900 \\ (0.004)} & \textcolor{red}{\makecell{1.044 \\ (0.008)}} & \makecell{0.714 \\ (0.046)} & \makecell{1.673 \\ (0.053)} & \textcolor{red}{\makecell{0.798 \\ (0.020)}} & \makecell{1.678 \\ (0.027)} & \makecell{0.906 \\ (0.004)} & \makecell{1.011 \\ (0.008)} \\
 & ACP-MC & \makecell{0.901 \\ (0.003)} & \textcolor{red}{\makecell{1.046 \\ (0.009)}} & \textcolor{red}{\makecell{0.788 \\ (0.046)}} & \makecell{1.494 \\ (0.048)} & \textcolor{red}{\makecell{0.774 \\ (0.022)}} & \makecell{1.537 \\ (0.027)} & \makecell{0.906 \\ (0.003)} & \makecell{1.022 \\ (0.008)} \\
 & AdaTS & \makecell{0.900 \\ (0.005)} & \makecell{1.068 \\ (0.011)} & \makecell{0.271 \\ (0.032)} & \makecell{1.183 \\ (0.033)} & \makecell{0.614 \\ (0.028)} & \makecell{1.417 \\ (0.029)} & \textcolor{red}{\makecell{0.921 \\ (0.005)}} & \makecell{1.054 \\ (0.010)} \\
 & Retrain & \makecell{0.901 \\ (0.003)} & \makecell{1.248 \\ (0.012)} & \textcolor{red}{\makecell{0.888 \\ (0.014)}} & \makecell{1.358 \\ (0.043)} & \makecell{0.760 \\ (0.015)} & \makecell{1.373 \\ (0.030)} & \makecell{0.906 \\ (0.003)} & \makecell{1.244 \\ (0.012)} \\
 & Standard & \makecell{0.901 \\ (0.003)} & \textcolor{red}{\makecell{1.039 \\ (0.005)}} & \makecell{0.197 \\ (0.023)} & \makecell{1.098 \\ (0.021)} & \makecell{0.616 \\ (0.029)} & \makecell{1.411 \\ (0.029)} & \textcolor{red}{\makecell{0.924 \\ (0.003)}} & \makecell{1.026 \\ (0.005)} \\
\midrule
0.15 & ACP-ACC & \makecell{0.910 \\ (0.003)} & \makecell{1.107 \\ (0.006)} & \textcolor{red}{\makecell{0.860 \\ (0.028)}} & \makecell{1.849 \\ (0.029)} & \textcolor{red}{\makecell{0.864 \\ (0.016)}} & \makecell{1.773 \\ (0.021)} & \makecell{0.913 \\ (0.004)} & \makecell{1.054 \\ (0.007)} \\
 & ACP-AEC & \makecell{0.905 \\ (0.004)} & \textcolor{red}{\makecell{1.070 \\ (0.008)}} & \makecell{0.780 \\ (0.031)} & \makecell{1.763 \\ (0.031)} & \textcolor{red}{\makecell{0.796 \\ (0.023)}} & \makecell{1.663 \\ (0.028)} & \makecell{0.912 \\ (0.004)} & \makecell{1.020 \\ (0.008)} \\
 & ACP-MC & \makecell{0.902 \\ (0.004)} & \textcolor{red}{\makecell{1.058 \\ (0.009)}} & \textcolor{red}{\makecell{0.877 \\ (0.014)}} & \makecell{1.475 \\ (0.044)} & \makecell{0.771 \\ (0.017)} & \makecell{1.501 \\ (0.033)} & \makecell{0.908 \\ (0.004)} & \makecell{1.027 \\ (0.008)} \\
 & AdaTS & \makecell{0.899 \\ (0.004)} & \makecell{1.101 \\ (0.011)} & \makecell{0.330 \\ (0.034)} & \makecell{1.249 \\ (0.035)} & \makecell{0.617 \\ (0.023)} & \makecell{1.369 \\ (0.027)} & \textcolor{red}{\makecell{0.930 \\ (0.004)}} & \makecell{1.085 \\ (0.011)} \\
 & Retrain & \makecell{0.909 \\ (0.004)} & \makecell{1.277 \\ (0.012)} & \textcolor{red}{\makecell{0.883 \\ (0.011)}} & \makecell{1.417 \\ (0.036)} & \textcolor{red}{\makecell{0.794 \\ (0.015)}} & \makecell{1.433 \\ (0.032)} & \textcolor{red}{\makecell{0.914 \\ (0.003)}} & \makecell{1.268 \\ (0.012)} \\
 & Standard & \makecell{0.901 \\ (0.003)} & \textcolor{red}{\makecell{1.070 \\ (0.008)}} & \makecell{0.263 \\ (0.031)} & \makecell{1.181 \\ (0.030)} & \makecell{0.633 \\ (0.023)} & \makecell{1.401 \\ (0.028)} & \textcolor{red}{\makecell{0.934 \\ (0.003)}} & \makecell{1.054 \\ (0.008)} \\
\midrule
0.2 & ACP-ACC & \makecell{0.906 \\ (0.004)} & \makecell{1.124 \\ (0.008)} & \textcolor{red}{\makecell{0.858 \\ (0.023)}} & \makecell{1.842 \\ (0.025)} & \textcolor{red}{\makecell{0.868 \\ (0.015)}} & \makecell{1.767 \\ (0.021)} & \makecell{0.908 \\ (0.004)} & \makecell{1.054 \\ (0.007)} \\
 & ACP-AEC & \makecell{0.903 \\ (0.004)} & \textcolor{red}{\makecell{1.086 \\ (0.007)}} & \makecell{0.841 \\ (0.020)} & \makecell{1.819 \\ (0.022)} & \textcolor{red}{\makecell{0.826 \\ (0.017)}} & \makecell{1.686 \\ (0.019)} & \makecell{0.908 \\ (0.004)} & \makecell{1.017 \\ (0.006)} \\
 & ACP-MC & \makecell{0.900 \\ (0.004)} & \textcolor{red}{\makecell{1.059 \\ (0.010)}} & \textcolor{red}{\makecell{0.880 \\ (0.015)}} & \makecell{1.408 \\ (0.031)} & \makecell{0.798 \\ (0.017)} & \makecell{1.525 \\ (0.030)} & \makecell{0.904 \\ (0.004)} & \makecell{1.021 \\ (0.009)} \\
 & AdaTS & \makecell{0.894 \\ (0.004)} & \makecell{1.140 \\ (0.015)} & \makecell{0.368 \\ (0.023)} & \makecell{1.295 \\ (0.026)} & \makecell{0.660 \\ (0.022)} & \makecell{1.433 \\ (0.027)} & \textcolor{red}{\makecell{0.934 \\ (0.004)}} & \makecell{1.121 \\ (0.016)} \\
 & Retrain & \makecell{0.908 \\ (0.004)} & \makecell{1.269 \\ (0.014)} & \textcolor{red}{\makecell{0.891 \\ (0.008)}} & \makecell{1.380 \\ (0.036)} & \textcolor{red}{\makecell{0.806 \\ (0.018)}} & \makecell{1.418 \\ (0.030)} & \textcolor{red}{\makecell{0.913 \\ (0.004)}} & \makecell{1.259 \\ (0.014)} \\
 & Standard & \makecell{0.900 \\ (0.003)} & \textcolor{red}{\makecell{1.105 \\ (0.010)}} & \makecell{0.312 \\ (0.024)} & \makecell{1.234 \\ (0.025)} & \makecell{0.660 \\ (0.022)} & \makecell{1.452 \\ (0.025)} & \textcolor{red}{\makecell{0.945 \\ (0.004)}} & \makecell{1.085 \\ (0.010)} \\
\midrule
0.25 & ACP-ACC & \makecell{0.903 \\ (0.004)} & \makecell{1.142 \\ (0.008)} & \textcolor{red}{\makecell{0.859 \\ (0.022)}} & \makecell{1.844 \\ (0.022)} & \textcolor{red}{\makecell{0.874 \\ (0.012)}} & \makecell{1.786 \\ (0.020)} & \makecell{0.905 \\ (0.004)} & \makecell{1.057 \\ (0.008)} \\
 & ACP-AEC & \makecell{0.897 \\ (0.004)} & \textcolor{red}{\makecell{1.100 \\ (0.007)}} & \makecell{0.836 \\ (0.019)} & \makecell{1.820 \\ (0.020)} & \textcolor{red}{\makecell{0.826 \\ (0.016)}} & \makecell{1.690 \\ (0.024)} & \makecell{0.904 \\ (0.004)} & \makecell{1.017 \\ (0.007)} \\
 & ACP-MC & \makecell{0.900 \\ (0.003)} & \textcolor{red}{\makecell{1.069 \\ (0.009)}} & \textcolor{red}{\makecell{0.893 \\ (0.012)}} & \makecell{1.390 \\ (0.037)} & \makecell{0.800 \\ (0.017)} & \makecell{1.513 \\ (0.022)} & \makecell{0.905 \\ (0.004)} & \makecell{1.027 \\ (0.009)} \\
 & AdaTS & \makecell{0.895 \\ (0.004)} & \makecell{1.201 \\ (0.021)} & \makecell{0.422 \\ (0.026)} & \makecell{1.370 \\ (0.026)} & \makecell{0.656 \\ (0.022)} & \makecell{1.454 \\ (0.026)} & \textcolor{red}{\makecell{0.940 \\ (0.006)}} & \makecell{1.177 \\ (0.022)} \\
 & Retrain & \makecell{0.909 \\ (0.003)} & \makecell{1.266 \\ (0.012)} & \textcolor{red}{\makecell{0.884 \\ (0.012)}} & \makecell{1.379 \\ (0.028)} & \textcolor{red}{\makecell{0.803 \\ (0.015)}} & \makecell{1.402 \\ (0.019)} & \textcolor{red}{\makecell{0.915 \\ (0.003)}} & \makecell{1.254 \\ (0.012)} \\
 & Standard & \makecell{0.898 \\ (0.003)} & \textcolor{red}{\makecell{1.139 \\ (0.012)}} & \makecell{0.340 \\ (0.023)} & \makecell{1.280 \\ (0.023)} & \makecell{0.660 \\ (0.021)} & \makecell{1.468 \\ (0.024)} & \textcolor{red}{\makecell{0.951 \\ (0.004)}} & \makecell{1.114 \\ (0.013)} \\
\midrule
0.3 & ACP-ACC & \makecell{0.906 \\ (0.003)} & \textcolor{red}{\makecell{1.165 \\ (0.008)}} & \textcolor{red}{\makecell{0.883 \\ (0.016)}} & \makecell{1.870 \\ (0.019)} & \textcolor{red}{\makecell{0.892 \\ (0.011)}} & \makecell{1.800 \\ (0.017)} & \makecell{0.908 \\ (0.004)} & \makecell{1.064 \\ (0.008)} \\
 & ACP-AEC & \makecell{0.897 \\ (0.004)} & \textcolor{red}{\makecell{1.121 \\ (0.008)}} & \makecell{0.837 \\ (0.021)} & \makecell{1.817 \\ (0.023)} & \textcolor{red}{\makecell{0.839 \\ (0.012)}} & \makecell{1.712 \\ (0.020)} & \makecell{0.903 \\ (0.004)} & \makecell{1.021 \\ (0.007)} \\
 & ACP-MC & \makecell{0.899 \\ (0.003)} & \textcolor{red}{\makecell{1.075 \\ (0.008)}} & \textcolor{red}{\makecell{0.883 \\ (0.010)}} & \makecell{1.326 \\ (0.036)} & \makecell{0.797 \\ (0.015)} & \makecell{1.511 \\ (0.028)} & \makecell{0.905 \\ (0.003)} & \makecell{1.033 \\ (0.007)} \\
 & AdaTS & \makecell{0.894 \\ (0.003)} & \makecell{1.225 \\ (0.022)} & \makecell{0.463 \\ (0.020)} & \makecell{1.409 \\ (0.021)} & \makecell{0.678 \\ (0.013)} & \makecell{1.467 \\ (0.018)} & \textcolor{red}{\makecell{0.944 \\ (0.005)}} & \makecell{1.196 \\ (0.023)} \\
 & Retrain & \makecell{0.902 \\ (0.003)} & \makecell{1.243 \\ (0.009)} & \textcolor{red}{\makecell{0.873 \\ (0.008)}} & \makecell{1.339 \\ (0.023)} & \textcolor{red}{\makecell{0.800 \\ (0.016)}} & \makecell{1.397 \\ (0.021)} & \textcolor{red}{\makecell{0.909 \\ (0.003)}} & \makecell{1.229 \\ (0.010)} \\
 & Standard & \makecell{0.900 \\ (0.003)} & \makecell{1.174 \\ (0.013)} & \makecell{0.385 \\ (0.019)} & \makecell{1.322 \\ (0.019)} & \makecell{0.695 \\ (0.015)} & \makecell{1.508 \\ (0.021)} & \textcolor{red}{\makecell{0.958 \\ (0.004)}} & \makecell{1.144 \\ (0.013)} \\
\midrule
0.35 & ACP-ACC & \makecell{0.907 \\ (0.003)} & \textcolor{red}{\makecell{1.188 \\ (0.007)}} & \textcolor{red}{\makecell{0.897 \\ (0.013)}} & \makecell{1.884 \\ (0.014)} & \textcolor{red}{\makecell{0.916 \\ (0.008)}} & \makecell{1.837 \\ (0.013)} & \makecell{0.906 \\ (0.004)} & \makecell{1.063 \\ (0.010)} \\
 & ACP-AEC & \makecell{0.901 \\ (0.004)} & \textcolor{red}{\makecell{1.145 \\ (0.008)}} & \textcolor{red}{\makecell{0.866 \\ (0.019)}} & \makecell{1.849 \\ (0.020)} & \textcolor{red}{\makecell{0.856 \\ (0.011)}} & \makecell{1.734 \\ (0.020)} & \makecell{0.906 \\ (0.004)} & \makecell{1.022 \\ (0.008)} \\
 & ACP-MC & \makecell{0.899 \\ (0.004)} & \textcolor{red}{\makecell{1.078 \\ (0.009)}} & \textcolor{red}{\makecell{0.886 \\ (0.009)}} & \makecell{1.289 \\ (0.023)} & \makecell{0.798 \\ (0.013)} & \makecell{1.515 \\ (0.024)} & \makecell{0.906 \\ (0.004)} & \makecell{1.030 \\ (0.008)} \\
 & AdaTS & \makecell{0.903 \\ (0.004)} & \makecell{1.303 \\ (0.023)} & \makecell{0.534 \\ (0.025)} & \makecell{1.491 \\ (0.024)} & \makecell{0.730 \\ (0.013)} & \makecell{1.527 \\ (0.018)} & \textcolor{red}{\makecell{0.956 \\ (0.004)}} & \makecell{1.268 \\ (0.025)} \\
 & Retrain & \makecell{0.898 \\ (0.003)} & \makecell{1.232 \\ (0.011)} & \makecell{0.863 \\ (0.009)} & \makecell{1.318 \\ (0.028)} & \textcolor{red}{\makecell{0.803 \\ (0.012)}} & \makecell{1.397 \\ (0.017)} & \textcolor{red}{\makecell{0.907 \\ (0.003)}} & \makecell{1.215 \\ (0.011)} \\
 & Standard & \makecell{0.901 \\ (0.003)} & \makecell{1.235 \\ (0.019)} & \makecell{0.452 \\ (0.021)} & \makecell{1.405 \\ (0.021)} & \makecell{0.711 \\ (0.014)} & \makecell{1.522 \\ (0.019)} & \textcolor{red}{\makecell{0.964 \\ (0.003)}} & \makecell{1.199 \\ (0.021)} \\
\midrule
0.4 & ACP-ACC & \makecell{0.902 \\ (0.004)} & \textcolor{red}{\makecell{1.197 \\ (0.009)}} & \textcolor{red}{\makecell{0.895 \\ (0.013)}} & \makecell{1.881 \\ (0.016)} & \textcolor{red}{\makecell{0.891 \\ (0.012)}} & \makecell{1.812 \\ (0.021)} & \makecell{0.902 \\ (0.004)} & \makecell{1.057 \\ (0.009)} \\
 & ACP-AEC & \makecell{0.892 \\ (0.004)} & \textcolor{red}{\makecell{1.152 \\ (0.007)}} & \makecell{0.843 \\ (0.023)} & \makecell{1.823 \\ (0.025)} & \textcolor{red}{\makecell{0.843 \\ (0.012)}} & \makecell{1.711 \\ (0.020)} & \makecell{0.898 \\ (0.004)} & \makecell{1.013 \\ (0.006)} \\
 & ACP-MC & \makecell{0.897 \\ (0.004)} & \textcolor{red}{\makecell{1.087 \\ (0.009)}} & \textcolor{red}{\makecell{0.908 \\ (0.005)}} & \makecell{1.308 \\ (0.031)} & \textcolor{red}{\makecell{0.808 \\ (0.011)}} & \makecell{1.528 \\ (0.022)} & \makecell{0.901 \\ (0.005)} & \makecell{1.028 \\ (0.008)} \\
 & AdaTS & \makecell{0.904 \\ (0.004)} & \makecell{1.354 \\ (0.026)} & \makecell{0.577 \\ (0.019)} & \makecell{1.538 \\ (0.019)} & \makecell{0.747 \\ (0.013)} & \makecell{1.550 \\ (0.018)} & \textcolor{red}{\makecell{0.959 \\ (0.004)}} & \makecell{1.317 \\ (0.028)} \\
 & Retrain & \makecell{0.904 \\ (0.004)} & \makecell{1.243 \\ (0.015)} & \textcolor{red}{\makecell{0.890 \\ (0.006)}} & \makecell{1.326 \\ (0.028)} & \makecell{0.804 \\ (0.012)} & \makecell{1.366 \\ (0.022)} & \textcolor{red}{\makecell{0.912 \\ (0.004)}} & \makecell{1.226 \\ (0.014)} \\
 & Standard & \makecell{0.899 \\ (0.003)} & \makecell{1.281 \\ (0.026)} & \makecell{0.489 \\ (0.022)} & \makecell{1.442 \\ (0.022)} & \makecell{0.728 \\ (0.017)} & \makecell{1.532 \\ (0.026)} & \textcolor{red}{\makecell{0.966 \\ (0.003)}} & \makecell{1.242 \\ (0.030)} \\
\bottomrule
\end{tabular}

\end{table}


\begin{figure*}[!htb]
    \centering
    \includegraphics[width=\linewidth]{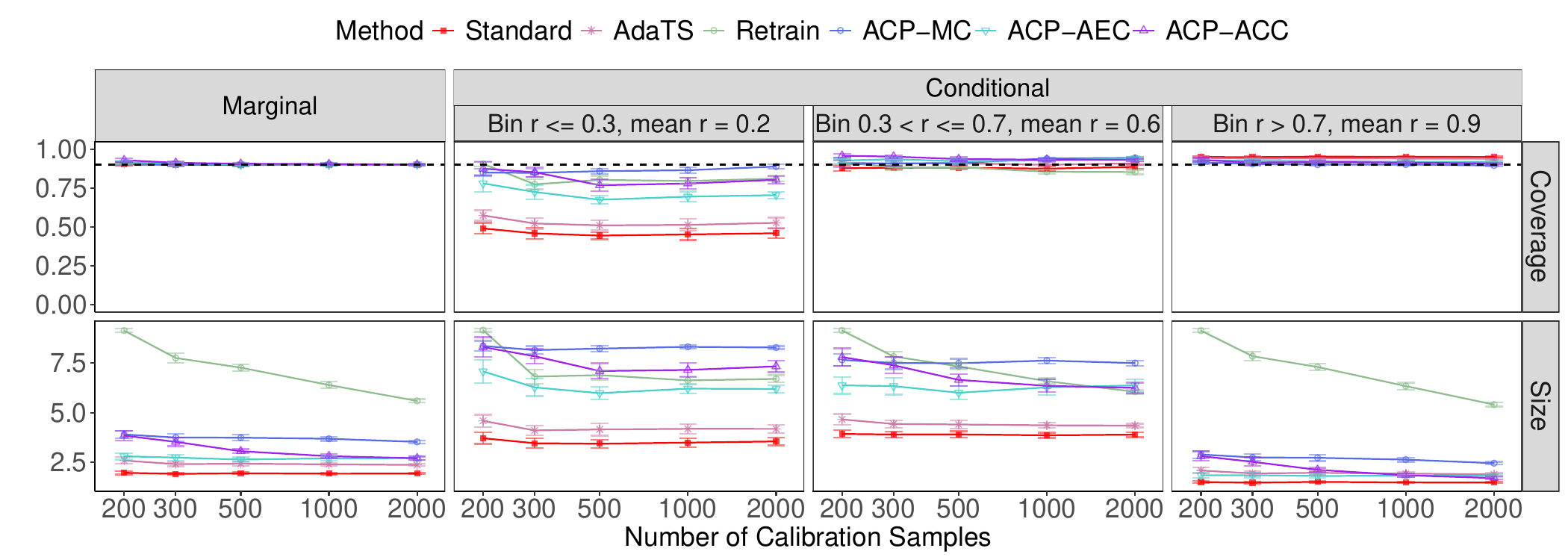}
    \caption{Performance of conformal prediction sets on the CIFAR-10 and CIFAR-10-C image classification task \citep{hendrycks2019robustness} as a function of calibration sample size $|\mathcal{D}_{\text{cal}}|$, at miscoverage level $\alpha=0.1$. The target distribution contains $90\%$ in-domain (CIFAR-10) and $10\%$ out-of-domain 
    (CIFAR-10-C) samples. Left: marginal coverage and average prediction set 
    size; all methods achieve nominal $90\%$ marginal coverage, and ACP variants 
    (ACP-MC, ACP-AEC, ACP-ACC) maintain compact set sizes. Right: coverage 
    conditional on estimated reliability bin (fraction of correct legacy-model 
    predictions among $50$ nearest neighbors in embedding space). Standard CP and 
    AdaTS undercover unreliable samples, while ACP variants achieve higher conditional 
    coverage without inflating set sizes. Error bars denote two standard errors. See 
    Table~\ref{tab:supp_exp_real_ndata_cifar10} for numerical details.}
    \label{fig:supp_exp_real_nnew_overall_cifar10}
\end{figure*}

\begin{table}[!htb]
\centering
    \caption{Performance of conformal prediction sets on the CIFAR-10 and CIFAR-10-C datasets as a function of the total calibration sample size. All methods achieve the target marginal coverage of \(90\%\), while our methods (ACP-MC, ACP-AEC, and ACP-ACC) retain practically efficient set sizes and are more uncertainty-aware, assigning larger sets to more unreliable samples and smaller sets to more reliable ones. Red numbers indicate the three smallest average set sizes and the three highest conditional coverages. See the corresponding plots in Figure~\ref{fig:supp_exp_real_nnew_overall_cifar10}.}
  \label{tab:supp_exp_real_ndata_cifar10}
\centering
\fontsize{6}{6}\selectfont
\begin{tabular}[t]{rlllllllll}
\toprule
\multicolumn{2}{c}{ } & \multicolumn{2}{c}{\makecell{Marginal}} & \multicolumn{2}{c}{\makecell{$r^* \leq 0.33$ \\ mean $r^* = 0.18$}} & \multicolumn{2}{c}{\makecell{$0.33 < r^* \leq 0.67$ \\ mean $r^* = 0.56$}} & \multicolumn{2}{c}{\makecell{$r^* > 0.67$ \\ mean $r^* = 0.92$}} \\
\cmidrule(l{3pt}r{3pt}){3-4} \cmidrule(l{3pt}r{3pt}){5-6} \cmidrule(l{3pt}r{3pt}){7-8} \cmidrule(l{3pt}r{3pt}){9-10}
\makecell{Calibration \\ Size} & Method & Coverage & Size & Coverage & Size & Coverage & Size & Coverage & Size \\
\midrule
200 & ACP-ACC & \makecell{0.929 \\ (0.005)} & \makecell{3.843 \\ (0.122)} & \textcolor{red}{\makecell{0.874 \\ (0.022)}} & \makecell{8.302 \\ (0.254)} & \textcolor{red}{\makecell{0.958 \\ (0.007)}} & \makecell{7.798 \\ (0.222)} & \textcolor{red}{\makecell{0.932 \\ (0.006)}} & \makecell{2.809 \\ (0.121)} \\
 & ACP-AEC & \makecell{0.915 \\ (0.005)} & \textcolor{red}{\makecell{2.801 \\ (0.084)}} & \makecell{0.780 \\ (0.027)} & \makecell{7.078 \\ (0.290)} & \textcolor{red}{\makecell{0.927 \\ (0.009)}} & \makecell{6.374 \\ (0.214)} & \makecell{0.928 \\ (0.006)} & \makecell{1.844 \\ (0.062)} \\
 & ACP-MC & \makecell{0.908 \\ (0.005)} & \makecell{3.900 \\ (0.098)} & \textcolor{red}{\makecell{0.849 \\ (0.012)}} & \makecell{8.360 \\ (0.130)} & \makecell{0.904 \\ (0.011)} & \makecell{7.658 \\ (0.156)} & \makecell{0.915 \\ (0.005)} & \makecell{2.890 \\ (0.095)} \\
 & AdaTS & \makecell{0.909 \\ (0.005)} & \textcolor{red}{\makecell{2.588 \\ (0.085)}} & \makecell{0.573 \\ (0.017)} & \makecell{4.587 \\ (0.153)} & \makecell{0.914 \\ (0.009)} & \makecell{4.659 \\ (0.136)} & \textcolor{red}{\makecell{0.944 \\ (0.005)}} & \makecell{2.082 \\ (0.077)} \\
 & Retrain & \makecell{0.911 \\ (0.004)} & \makecell{9.136 \\ (0.045)} & \textcolor{red}{\makecell{0.906 \\ (0.008)}} & \makecell{9.156 \\ (0.044)} & \textcolor{red}{\makecell{0.916 \\ (0.010)}} & \makecell{9.141 \\ (0.046)} & \makecell{0.910 \\ (0.004)} & \makecell{9.133 \\ (0.045)} \\
 & Standard & \makecell{0.902 \\ (0.003)} & \textcolor{red}{\makecell{1.968 \\ (0.043)}} & \makecell{0.489 \\ (0.018)} & \makecell{3.712 \\ (0.142)} & \makecell{0.877 \\ (0.008)} & \makecell{3.926 \\ (0.094)} & \textcolor{red}{\makecell{0.950 \\ (0.003)}} & \makecell{1.505 \\ (0.030)} \\
\midrule
300 & ACP-ACC & \makecell{0.912 \\ (0.005)} & \makecell{3.529 \\ (0.109)} & \textcolor{red}{\makecell{0.852 \\ (0.016)}} & \makecell{7.843 \\ (0.183)} & \textcolor{red}{\makecell{0.952 \\ (0.005)}} & \makecell{7.388 \\ (0.209)} & \makecell{0.914 \\ (0.006)} & \makecell{2.522 \\ (0.111)} \\
 & ACP-AEC & \makecell{0.909 \\ (0.004)} & \textcolor{red}{\makecell{2.736 \\ (0.078)}} & \makecell{0.724 \\ (0.023)} & \makecell{6.271 \\ (0.220)} & \textcolor{red}{\makecell{0.936 \\ (0.007)}} & \makecell{6.337 \\ (0.215)} & \textcolor{red}{\makecell{0.925 \\ (0.004)}} & \makecell{1.850 \\ (0.058)} \\
 & ACP-MC & \makecell{0.900 \\ (0.005)} & \makecell{3.746 \\ (0.091)} & \textcolor{red}{\makecell{0.848 \\ (0.013)}} & \makecell{8.152 \\ (0.101)} & \textcolor{red}{\makecell{0.909 \\ (0.009)}} & \makecell{7.502 \\ (0.157)} & \makecell{0.905 \\ (0.005)} & \makecell{2.745 \\ (0.088)} \\
 & AdaTS & \makecell{0.900 \\ (0.004)} & \textcolor{red}{\makecell{2.404 \\ (0.055)}} & \makecell{0.522 \\ (0.018)} & \makecell{4.104 \\ (0.129)} & \makecell{0.907 \\ (0.006)} & \makecell{4.431 \\ (0.095)} & \textcolor{red}{\makecell{0.939 \\ (0.004)}} & \makecell{1.936 \\ (0.051)} \\
 & Retrain & \makecell{0.900 \\ (0.005)} & \makecell{7.754 \\ (0.116)} & \textcolor{red}{\makecell{0.774 \\ (0.016)}} & \makecell{6.816 \\ (0.177)} & \makecell{0.881 \\ (0.008)} & \makecell{7.828 \\ (0.122)} & \makecell{0.917 \\ (0.005)} & \makecell{7.842 \\ (0.117)} \\
 & Standard & \makecell{0.899 \\ (0.002)} & \textcolor{red}{\makecell{1.917 \\ (0.033)}} & \makecell{0.458 \\ (0.017)} & \makecell{3.459 \\ (0.122)} & \makecell{0.880 \\ (0.006)} & \makecell{3.901 \\ (0.078)} & \textcolor{red}{\makecell{0.949 \\ (0.002)}} & \makecell{1.470 \\ (0.023)} \\
\midrule
500 & ACP-ACC & \makecell{0.906 \\ (0.003)} & \makecell{3.055 \\ (0.054)} & \textcolor{red}{\makecell{0.768 \\ (0.019)}} & \makecell{7.097 \\ (0.194)} & \textcolor{red}{\makecell{0.936 \\ (0.006)}} & \makecell{6.650 \\ (0.149)} & \makecell{0.917 \\ (0.003)} & \makecell{2.118 \\ (0.054)} \\
 & ACP-AEC & \makecell{0.902 \\ (0.003)} & \textcolor{red}{\makecell{2.636 \\ (0.055)}} & \makecell{0.674 \\ (0.013)} & \makecell{5.982 \\ (0.159)} & \textcolor{red}{\makecell{0.922 \\ (0.008)}} & \makecell{6.002 \\ (0.175)} & \textcolor{red}{\makecell{0.924 \\ (0.004)}} & \makecell{1.807 \\ (0.043)} \\
 & ACP-MC & \makecell{0.898 \\ (0.003)} & \makecell{3.738 \\ (0.077)} & \textcolor{red}{\makecell{0.859 \\ (0.009)}} & \makecell{8.227 \\ (0.072)} & \textcolor{red}{\makecell{0.911 \\ (0.007)}} & \makecell{7.488 \\ (0.113)} & \makecell{0.900 \\ (0.004)} & \makecell{2.724 \\ (0.082)} \\
 & AdaTS & \makecell{0.899 \\ (0.004)} & \textcolor{red}{\makecell{2.425 \\ (0.054)}} & \makecell{0.509 \\ (0.016)} & \makecell{4.152 \\ (0.155)} & \makecell{0.909 \\ (0.007)} & \makecell{4.403 \\ (0.106)} & \textcolor{red}{\makecell{0.940 \\ (0.004)}} & \makecell{1.968 \\ (0.051)} \\
 & Retrain & \makecell{0.900 \\ (0.003)} & \makecell{7.268 \\ (0.084)} & \textcolor{red}{\makecell{0.802 \\ (0.013)}} & \makecell{6.891 \\ (0.119)} & \makecell{0.884 \\ (0.006)} & \makecell{7.331 \\ (0.088)} & \makecell{0.913 \\ (0.003)} & \makecell{7.300 \\ (0.084)} \\
 & Standard & \makecell{0.900 \\ (0.003)} & \textcolor{red}{\makecell{1.943 \\ (0.026)}} & \makecell{0.443 \\ (0.012)} & \makecell{3.433 \\ (0.103)} & \makecell{0.881 \\ (0.006)} & \makecell{3.904 \\ (0.080)} & \textcolor{red}{\makecell{0.952 \\ (0.002)}} & \makecell{1.512 \\ (0.023)} \\
\midrule
1000 & ACP-ACC & \makecell{0.904 \\ (0.003)} & \makecell{2.807 \\ (0.050)} & \textcolor{red}{\makecell{0.779 \\ (0.018)}} & \makecell{7.152 \\ (0.179)} & \textcolor{red}{\makecell{0.932 \\ (0.005)}} & \makecell{6.351 \\ (0.158)} & \makecell{0.914 \\ (0.003)} & \makecell{1.845 \\ (0.049)} \\
 & ACP-AEC & \makecell{0.900 \\ (0.003)} & \textcolor{red}{\makecell{2.698 \\ (0.054)}} & \makecell{0.694 \\ (0.016)} & \makecell{6.208 \\ (0.119)} & \textcolor{red}{\makecell{0.928 \\ (0.005)}} & \makecell{6.273 \\ (0.197)} & \textcolor{red}{\makecell{0.918 \\ (0.003)}} & \makecell{1.823 \\ (0.039)} \\
 & ACP-MC & \makecell{0.902 \\ (0.003)} & \makecell{3.684 \\ (0.054)} & \textcolor{red}{\makecell{0.864 \\ (0.009)}} & \makecell{8.315 \\ (0.047)} & \textcolor{red}{\makecell{0.940 \\ (0.004)}} & \makecell{7.623 \\ (0.073)} & \makecell{0.901 \\ (0.004)} & \makecell{2.632 \\ (0.054)} \\
 & AdaTS & \makecell{0.899 \\ (0.003)} & \textcolor{red}{\makecell{2.396 \\ (0.036)}} & \makecell{0.512 \\ (0.020)} & \makecell{4.184 \\ (0.125)} & \makecell{0.899 \\ (0.006)} & \makecell{4.363 \\ (0.069)} & \textcolor{red}{\makecell{0.939 \\ (0.002)}} & \makecell{1.927 \\ (0.033)} \\
 & Retrain & \makecell{0.900 \\ (0.003)} & \makecell{6.392 \\ (0.081)} & \textcolor{red}{\makecell{0.795 \\ (0.010)}} & \makecell{6.624 \\ (0.098)} & \makecell{0.855 \\ (0.007)} & \makecell{6.585 \\ (0.076)} & \makecell{0.917 \\ (0.003)} & \makecell{6.338 \\ (0.088)} \\
 & Standard & \makecell{0.900 \\ (0.002)} & \textcolor{red}{\makecell{1.924 \\ (0.025)}} & \makecell{0.451 \\ (0.018)} & \makecell{3.490 \\ (0.116)} & \makecell{0.875 \\ (0.008)} & \makecell{3.859 \\ (0.064)} & \textcolor{red}{\makecell{0.951 \\ (0.002)}} & \makecell{1.484 \\ (0.019)} \\
\midrule
2000 & ACP-ACC & \makecell{0.901 \\ (0.002)} & \textcolor{red}{\makecell{2.701 \\ (0.043)}} & \textcolor{red}{\makecell{0.803 \\ (0.012)}} & \makecell{7.331 \\ (0.146)} & \textcolor{red}{\makecell{0.931 \\ (0.004)}} & \makecell{6.247 \\ (0.131)} & \makecell{0.908 \\ (0.002)} & \makecell{1.701 \\ (0.039)} \\
 & ACP-AEC & \makecell{0.902 \\ (0.002)} & \makecell{2.709 \\ (0.046)} & \makecell{0.704 \\ (0.011)} & \makecell{6.192 \\ (0.100)} & \textcolor{red}{\makecell{0.941 \\ (0.005)}} & \makecell{6.376 \\ (0.149)} & \textcolor{red}{\makecell{0.918 \\ (0.003)}} & \makecell{1.818 \\ (0.039)} \\
 & ACP-MC & \makecell{0.900 \\ (0.002)} & \makecell{3.526 \\ (0.037)} & \textcolor{red}{\makecell{0.888 \\ (0.007)}} & \makecell{8.280 \\ (0.036)} & \textcolor{red}{\makecell{0.944 \\ (0.004)}} & \makecell{7.496 \\ (0.063)} & \makecell{0.895 \\ (0.003)} & \makecell{2.453 \\ (0.037)} \\
 & AdaTS & \makecell{0.899 \\ (0.002)} & \textcolor{red}{\makecell{2.365 \\ (0.028)}} & \makecell{0.526 \\ (0.017)} & \makecell{4.182 \\ (0.104)} & \makecell{0.909 \\ (0.006)} & \makecell{4.346 \\ (0.048)} & \textcolor{red}{\makecell{0.939 \\ (0.002)}} & \makecell{1.891 \\ (0.032)} \\
 & Retrain & \makecell{0.900 \\ (0.002)} & \makecell{5.591 \\ (0.044)} & \textcolor{red}{\makecell{0.810 \\ (0.010)}} & \makecell{6.701 \\ (0.092)} & \makecell{0.853 \\ (0.007)} & \makecell{6.053 \\ (0.052)} & \makecell{0.917 \\ (0.003)} & \makecell{5.406 \\ (0.051)} \\
 & Standard & \makecell{0.900 \\ (0.002)} & \textcolor{red}{\makecell{1.929 \\ (0.021)}} & \makecell{0.458 \\ (0.015)} & \makecell{3.549 \\ (0.098)} & \makecell{0.886 \\ (0.007)} & \makecell{3.893 \\ (0.065)} & \textcolor{red}{\makecell{0.950 \\ (0.002)}} & \makecell{1.478 \\ (0.018)} \\
\bottomrule
\end{tabular}

\end{table}

\begin{figure*}[tbh]
    \centering
    \includegraphics[width=\linewidth]{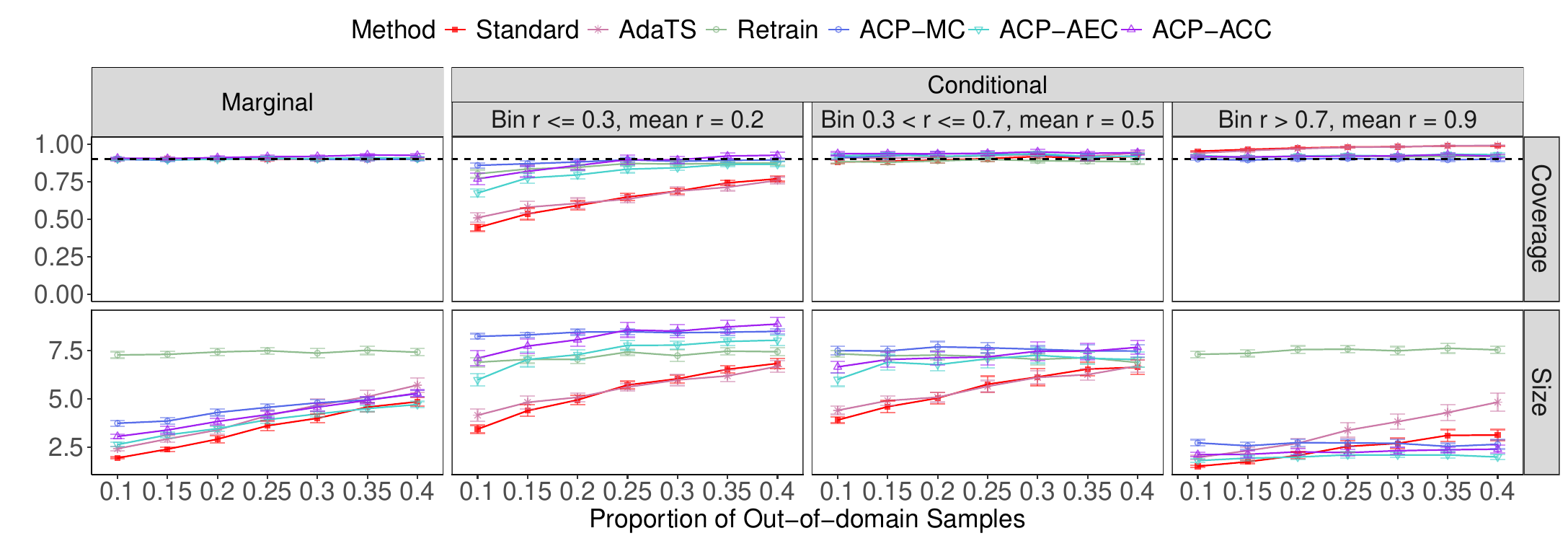}
    \caption{Performance of conformal prediction sets for CIFAR-10 and CIFAR-10-C datasets as a function of the total calibration sample size. Left Panel: marginal coverage and average prediction set size. All methods achieve the marginal coverage of $90\%$, while our methods (ACP-MC, ACP-AEC, ACP-ACC) retain practically efficient sets sizes. Right Panel: coverage conditional on estimated reliability bin. Our methods achieve higher conditional coverage.
    Error bars denote two standard errors. See Table~\ref{tab:supp_exp_real_beta_cifar10} for further details.}
    \label{fig:supp_exp_real_beta_overall_cifar10}
\end{figure*}

\begin{table}[!htb]
\centering
    \caption{Performance of conformal prediction sets on the CIFAR-10 and CIFAR-10-C datasets as a function of the proportion of out-of-domain samples. All methods achieve the target marginal coverage of \(90\%\), while our methods (ACP-MC, ACP-AEC, and ACP-ACC) retain practically efficient set sizes and are more uncertainty-aware, assigning larger sets to more unreliable samples and smaller sets to more reliable ones. Red numbers indicate the three smallest average set sizes and the three highest conditional coverages. See the corresponding plots in Figure~\ref{fig:supp_exp_real_beta_overall_cifar10}.}
  \label{tab:supp_exp_real_beta_cifar10}
\centering
\fontsize{6}{6}\selectfont
\begin{tabular}[t]{rlllllllll}
\toprule
\multicolumn{2}{c}{ } & \multicolumn{2}{c}{\makecell{Marginal}} & \multicolumn{2}{c}{\makecell{$r^* \leq 0.33$ \\ mean $r^* = 0.18$}} & \multicolumn{2}{c}{\makecell{$0.33 < r^* \leq 0.67$ \\ mean $r^* = 0.53$}} & \multicolumn{2}{c}{\makecell{$r^* > 0.67$ \\ mean $r^* = 0.92$}} \\
\cmidrule(l{3pt}r{3pt}){3-4} \cmidrule(l{3pt}r{3pt}){5-6} \cmidrule(l{3pt}r{3pt}){7-8} \cmidrule(l{3pt}r{3pt}){9-10}
\makecell{Proportion \\ of \\ Hard \\ samples} & Method & Coverage & Size & Coverage & Size & Coverage & Size & Coverage & Size \\
\midrule
0.1 & ACP-ACC & \makecell{0.906 \\ (0.003)} & \makecell{3.055 \\ (0.054)} & \textcolor{red}{\makecell{0.768 \\ (0.019)}} & \makecell{7.097 \\ (0.194)} & \textcolor{red}{\makecell{0.936 \\ (0.006)}} & \makecell{6.650 \\ (0.149)} & \makecell{0.917 \\ (0.003)} & \makecell{2.118 \\ (0.054)} \\
 & ACP-AEC & \makecell{0.902 \\ (0.003)} & \textcolor{red}{\makecell{2.636 \\ (0.055)}} & \makecell{0.674 \\ (0.013)} & \makecell{5.982 \\ (0.159)} & \textcolor{red}{\makecell{0.922 \\ (0.008)}} & \makecell{6.002 \\ (0.175)} & \textcolor{red}{\makecell{0.924 \\ (0.004)}} & \makecell{1.807 \\ (0.043)} \\
 & ACP-MC & \makecell{0.898 \\ (0.003)} & \makecell{3.738 \\ (0.077)} & \textcolor{red}{\makecell{0.859 \\ (0.009)}} & \makecell{8.227 \\ (0.072)} & \textcolor{red}{\makecell{0.911 \\ (0.007)}} & \makecell{7.488 \\ (0.113)} & \makecell{0.900 \\ (0.004)} & \makecell{2.724 \\ (0.082)} \\
 & AdaTS & \makecell{0.899 \\ (0.004)} & \textcolor{red}{\makecell{2.425 \\ (0.054)}} & \makecell{0.509 \\ (0.016)} & \makecell{4.152 \\ (0.155)} & \makecell{0.909 \\ (0.007)} & \makecell{4.403 \\ (0.106)} & \textcolor{red}{\makecell{0.940 \\ (0.004)}} & \makecell{1.968 \\ (0.051)} \\
 & Retrain & \makecell{0.900 \\ (0.003)} & \makecell{7.268 \\ (0.084)} & \textcolor{red}{\makecell{0.802 \\ (0.013)}} & \makecell{6.891 \\ (0.119)} & \makecell{0.884 \\ (0.006)} & \makecell{7.331 \\ (0.088)} & \makecell{0.913 \\ (0.003)} & \makecell{7.300 \\ (0.084)} \\
 & Standard & \makecell{0.900 \\ (0.003)} & \textcolor{red}{\makecell{1.943 \\ (0.026)}} & \makecell{0.443 \\ (0.012)} & \makecell{3.433 \\ (0.103)} & \makecell{0.881 \\ (0.006)} & \makecell{3.904 \\ (0.080)} & \textcolor{red}{\makecell{0.952 \\ (0.002)}} & \makecell{1.512 \\ (0.023)} \\
\midrule
0.15 & ACP-ACC & \makecell{0.904 \\ (0.004)} & \makecell{3.388 \\ (0.100)} & \textcolor{red}{\makecell{0.819 \\ (0.019)}} & \makecell{7.725 \\ (0.188)} & \textcolor{red}{\makecell{0.937 \\ (0.007)}} & \makecell{7.031 \\ (0.200)} & \textcolor{red}{\makecell{0.914 \\ (0.004)}} & \makecell{2.131 \\ (0.096)} \\
 & ACP-AEC & \makecell{0.896 \\ (0.004)} & \textcolor{red}{\makecell{3.136 \\ (0.078)}} & \makecell{0.774 \\ (0.017)} & \makecell{7.033 \\ (0.190)} & \textcolor{red}{\makecell{0.929 \\ (0.008)}} & \makecell{6.893 \\ (0.198)} & \makecell{0.912 \\ (0.005)} & \makecell{1.936 \\ (0.067)} \\
 & ACP-MC & \makecell{0.894 \\ (0.005)} & \makecell{3.853 \\ (0.087)} & \textcolor{red}{\makecell{0.869 \\ (0.009)}} & \makecell{8.301 \\ (0.066)} & \textcolor{red}{\makecell{0.919 \\ (0.010)}} & \makecell{7.473 \\ (0.125)} & \makecell{0.894 \\ (0.005)} & \makecell{2.573 \\ (0.091)} \\
 & AdaTS & \makecell{0.903 \\ (0.004)} & \textcolor{red}{\makecell{2.925 \\ (0.084)}} & \makecell{0.579 \\ (0.019)} & \makecell{4.819 \\ (0.169)} & \makecell{0.906 \\ (0.009)} & \makecell{4.905 \\ (0.127)} & \textcolor{red}{\makecell{0.957 \\ (0.004)}} & \makecell{2.323 \\ (0.084)} \\
 & Retrain & \makecell{0.897 \\ (0.005)} & \makecell{7.298 \\ (0.085)} & \textcolor{red}{\makecell{0.834 \\ (0.012)}} & \makecell{7.040 \\ (0.097)} & \makecell{0.879 \\ (0.008)} & \makecell{7.221 \\ (0.107)} & \makecell{0.910 \\ (0.005)} & \makecell{7.350 \\ (0.088)} \\
 & Standard & \makecell{0.901 \\ (0.004)} & \textcolor{red}{\makecell{2.399 \\ (0.056)}} & \makecell{0.535 \\ (0.019)} & \makecell{4.387 \\ (0.149)} & \makecell{0.887 \\ (0.010)} & \makecell{4.593 \\ (0.156)} & \textcolor{red}{\makecell{0.964 \\ (0.003)}} & \makecell{1.749 \\ (0.044)} \\
\midrule
0.2 & ACP-ACC & \makecell{0.910 \\ (0.005)} & \makecell{3.828 \\ (0.082)} & \textcolor{red}{\makecell{0.858 \\ (0.015)}} & \makecell{8.054 \\ (0.173)} & \textcolor{red}{\makecell{0.935 \\ (0.007)}} & \makecell{7.112 \\ (0.177)} & \makecell{0.918 \\ (0.005)} & \makecell{2.260 \\ (0.083)} \\
 & ACP-AEC & \makecell{0.899 \\ (0.005)} & \textcolor{red}{\makecell{3.459 \\ (0.064)}} & \makecell{0.795 \\ (0.013)} & \makecell{7.281 \\ (0.126)} & \textcolor{red}{\makecell{0.918 \\ (0.007)}} & \makecell{6.767 \\ (0.153)} & \textcolor{red}{\makecell{0.921 \\ (0.005)}} & \makecell{1.990 \\ (0.063)} \\
 & ACP-MC & \makecell{0.902 \\ (0.005)} & \makecell{4.296 \\ (0.087)} & \textcolor{red}{\makecell{0.881 \\ (0.010)}} & \makecell{8.446 \\ (0.073)} & \textcolor{red}{\makecell{0.937 \\ (0.007)}} & \makecell{7.685 \\ (0.139)} & \makecell{0.901 \\ (0.006)} & \makecell{2.735 \\ (0.094)} \\
 & AdaTS & \makecell{0.901 \\ (0.004)} & \textcolor{red}{\makecell{3.390 \\ (0.112)}} & \makecell{0.605 \\ (0.018)} & \makecell{5.077 \\ (0.114)} & \makecell{0.912 \\ (0.007)} & \makecell{5.081 \\ (0.125)} & \textcolor{red}{\makecell{0.969 \\ (0.003)}} & \makecell{2.702 \\ (0.127)} \\
 & Retrain & \makecell{0.904 \\ (0.004)} & \makecell{7.422 \\ (0.093)} & \textcolor{red}{\makecell{0.845 \\ (0.008)}} & \makecell{7.036 \\ (0.105)} & \makecell{0.888 \\ (0.007)} & \makecell{7.262 \\ (0.122)} & \makecell{0.920 \\ (0.004)} & \makecell{7.543 \\ (0.096)} \\
 & Standard & \makecell{0.899 \\ (0.003)} & \textcolor{red}{\makecell{2.916 \\ (0.090)}} & \makecell{0.591 \\ (0.015)} & \makecell{4.943 \\ (0.123)} & \makecell{0.893 \\ (0.009)} & \makecell{5.037 \\ (0.152)} & \textcolor{red}{\makecell{0.974 \\ (0.002)}} & \makecell{2.074 \\ (0.093)} \\
\midrule
0.25 & ACP-ACC & \makecell{0.917 \\ (0.006)} & \makecell{4.184 \\ (0.088)} & \textcolor{red}{\makecell{0.897 \\ (0.016)}} & \makecell{8.559 \\ (0.189)} & \textcolor{red}{\makecell{0.939 \\ (0.008)}} & \makecell{7.166 \\ (0.206)} & \makecell{0.919 \\ (0.005)} & \makecell{2.221 \\ (0.089)} \\
 & ACP-AEC & \makecell{0.904 \\ (0.006)} & \textcolor{red}{\makecell{3.923 \\ (0.093)}} & \makecell{0.833 \\ (0.012)} & \makecell{7.765 \\ (0.126)} & \textcolor{red}{\makecell{0.925 \\ (0.010)}} & \makecell{7.069 \\ (0.235)} & \makecell{0.924 \\ (0.006)} & \makecell{2.103 \\ (0.077)} \\
 & ACP-MC & \makecell{0.906 \\ (0.005)} & \makecell{4.558 \\ (0.085)} & \textcolor{red}{\makecell{0.895 \\ (0.008)}} & \makecell{8.468 \\ (0.077)} & \textcolor{red}{\makecell{0.935 \\ (0.007)}} & \makecell{7.638 \\ (0.139)} & \makecell{0.904 \\ (0.006)} & \makecell{2.722 \\ (0.091)} \\
 & AdaTS & \makecell{0.898 \\ (0.004)} & \textcolor{red}{\makecell{4.132 \\ (0.150)}} & \makecell{0.633 \\ (0.013)} & \makecell{5.606 \\ (0.090)} & \makecell{0.923 \\ (0.009)} & \makecell{5.648 \\ (0.149)} & \textcolor{red}{\makecell{0.979 \\ (0.003)}} & \makecell{3.378 \\ (0.189)} \\
 & Retrain & \makecell{0.910 \\ (0.004)} & \makecell{7.485 \\ (0.083)} & \textcolor{red}{\makecell{0.871 \\ (0.007)}} & \makecell{7.419 \\ (0.082)} & \makecell{0.898 \\ (0.007)} & \makecell{7.185 \\ (0.094)} & \textcolor{red}{\makecell{0.924 \\ (0.005)}} & \makecell{7.566 \\ (0.090)} \\
 & Standard & \makecell{0.899 \\ (0.004)} & \textcolor{red}{\makecell{3.611 \\ (0.125)}} & \makecell{0.648 \\ (0.012)} & \makecell{5.715 \\ (0.099)} & \makecell{0.904 \\ (0.009)} & \makecell{5.754 \\ (0.207)} & \textcolor{red}{\makecell{0.980 \\ (0.003)}} & \makecell{2.539 \\ (0.144)} \\
\midrule
0.3 & ACP-ACC & \makecell{0.918 \\ (0.005)} & \textcolor{red}{\makecell{4.570 \\ (0.102)}} & \textcolor{red}{\makecell{0.895 \\ (0.012)}} & \makecell{8.497 \\ (0.165)} & \textcolor{red}{\makecell{0.949 \\ (0.008)}} & \makecell{7.463 \\ (0.243)} & \makecell{0.921 \\ (0.006)} & \makecell{2.318 \\ (0.089)} \\
 & ACP-AEC & \makecell{0.903 \\ (0.005)} & \textcolor{red}{\makecell{4.220 \\ (0.077)}} & \makecell{0.842 \\ (0.008)} & \makecell{7.772 \\ (0.102)} & \textcolor{red}{\makecell{0.937 \\ (0.007)}} & \makecell{7.249 \\ (0.202)} & \textcolor{red}{\makecell{0.922 \\ (0.006)}} & \makecell{2.099 \\ (0.065)} \\
 & ACP-MC & \makecell{0.899 \\ (0.005)} & \makecell{4.794 \\ (0.085)} & \textcolor{red}{\makecell{0.885 \\ (0.006)}} & \makecell{8.415 \\ (0.075)} & \textcolor{red}{\makecell{0.931 \\ (0.006)}} & \makecell{7.562 \\ (0.085)} & \makecell{0.899 \\ (0.005)} & \makecell{2.699 \\ (0.105)} \\
 & AdaTS & \makecell{0.900 \\ (0.005)} & \makecell{4.668 \\ (0.160)} & \makecell{0.686 \\ (0.015)} & \makecell{5.978 \\ (0.144)} & \makecell{0.930 \\ (0.007)} & \makecell{6.104 \\ (0.167)} & \textcolor{red}{\makecell{0.982 \\ (0.002)}} & \makecell{3.819 \\ (0.196)} \\
 & Retrain & \makecell{0.899 \\ (0.005)} & \makecell{7.358 \\ (0.117)} & \textcolor{red}{\makecell{0.867 \\ (0.009)}} & \makecell{7.232 \\ (0.143)} & \makecell{0.889 \\ (0.006)} & \makecell{7.041 \\ (0.143)} & \makecell{0.915 \\ (0.006)} & \makecell{7.481 \\ (0.111)} \\
 & Standard & \makecell{0.899 \\ (0.003)} & \textcolor{red}{\makecell{3.997 \\ (0.108)}} & \makecell{0.687 \\ (0.010)} & \makecell{6.034 \\ (0.103)} & \makecell{0.917 \\ (0.007)} & \makecell{6.128 \\ (0.221)} & \textcolor{red}{\makecell{0.982 \\ (0.002)}} & \makecell{2.698 \\ (0.132)} \\
\midrule
0.35 & ACP-ACC & \makecell{0.928 \\ (0.005)} & \textcolor{red}{\makecell{4.925 \\ (0.087)}} & \textcolor{red}{\makecell{0.921 \\ (0.010)}} & \makecell{8.718 \\ (0.167)} & \textcolor{red}{\makecell{0.938 \\ (0.007)}} & \makecell{7.453 \\ (0.214)} & \makecell{0.930 \\ (0.006)} & \makecell{2.364 \\ (0.072)} \\
 & ACP-AEC & \makecell{0.910 \\ (0.004)} & \textcolor{red}{\makecell{4.498 \\ (0.077)}} & \makecell{0.865 \\ (0.006)} & \makecell{7.965 \\ (0.107)} & \textcolor{red}{\makecell{0.918 \\ (0.007)}} & \makecell{7.093 \\ (0.161)} & \textcolor{red}{\makecell{0.932 \\ (0.005)}} & \makecell{2.102 \\ (0.076)} \\
 & ACP-MC & \makecell{0.897 \\ (0.005)} & \makecell{4.946 \\ (0.072)} & \textcolor{red}{\makecell{0.892 \\ (0.006)}} & \makecell{8.446 \\ (0.091)} & \makecell{0.912 \\ (0.011)} & \makecell{7.476 \\ (0.165)} & \makecell{0.895 \\ (0.005)} & \makecell{2.541 \\ (0.081)} \\
 & AdaTS & \makecell{0.897 \\ (0.004)} & \makecell{5.108 \\ (0.165)} & \makecell{0.713 \\ (0.012)} & \makecell{6.184 \\ (0.147)} & \textcolor{red}{\makecell{0.918 \\ (0.006)}} & \makecell{6.252 \\ (0.150)} & \textcolor{red}{\makecell{0.988 \\ (0.002)}} & \makecell{4.290 \\ (0.200)} \\
 & Retrain & \makecell{0.900 \\ (0.004)} & \makecell{7.509 \\ (0.110)} & \textcolor{red}{\makecell{0.872 \\ (0.006)}} & \makecell{7.463 \\ (0.093)} & \makecell{0.886 \\ (0.008)} & \makecell{7.125 \\ (0.144)} & \makecell{0.918 \\ (0.005)} & \makecell{7.613 \\ (0.125)} \\
 & Standard & \makecell{0.904 \\ (0.003)} & \textcolor{red}{\makecell{4.572 \\ (0.120)}} & \makecell{0.742 \\ (0.008)} & \makecell{6.525 \\ (0.095)} & \makecell{0.907 \\ (0.008)} & \makecell{6.537 \\ (0.175)} & \textcolor{red}{\makecell{0.988 \\ (0.002)}} & \makecell{3.111 \\ (0.151)} \\
\midrule
0.4 & ACP-ACC & \makecell{0.925 \\ (0.006)} & \makecell{5.292 \\ (0.087)} & \textcolor{red}{\makecell{0.926 \\ (0.011)}} & \makecell{8.863 \\ (0.176)} & \textcolor{red}{\makecell{0.944 \\ (0.008)}} & \makecell{7.661 \\ (0.183)} & \makecell{0.919 \\ (0.006)} & \makecell{2.399 \\ (0.106)} \\
 & ACP-AEC & \makecell{0.906 \\ (0.005)} & \textcolor{red}{\makecell{4.704 \\ (0.080)}} & \makecell{0.865 \\ (0.008)} & \makecell{8.027 \\ (0.131)} & \makecell{0.917 \\ (0.010)} & \makecell{7.029 \\ (0.193)} & \textcolor{red}{\makecell{0.929 \\ (0.007)}} & \makecell{1.992 \\ (0.069)} \\
 & ACP-MC & \makecell{0.901 \\ (0.005)} & \textcolor{red}{\makecell{5.266 \\ (0.082)}} & \textcolor{red}{\makecell{0.895 \\ (0.006)}} & \makecell{8.486 \\ (0.066)} & \textcolor{red}{\makecell{0.922 \\ (0.008)}} & \makecell{7.470 \\ (0.157)} & \makecell{0.901 \\ (0.007)} & \makecell{2.651 \\ (0.121)} \\
 & AdaTS & \makecell{0.906 \\ (0.004)} & \makecell{5.707 \\ (0.184)} & \makecell{0.758 \\ (0.011)} & \makecell{6.675 \\ (0.148)} & \textcolor{red}{\makecell{0.941 \\ (0.005)}} & \makecell{6.698 \\ (0.156)} & \textcolor{red}{\makecell{0.992 \\ (0.002)}} & \makecell{4.826 \\ (0.238)} \\
 & Retrain & \makecell{0.899 \\ (0.003)} & \makecell{7.411 \\ (0.093)} & \textcolor{red}{\makecell{0.874 \\ (0.007)}} & \makecell{7.434 \\ (0.104)} & \makecell{0.883 \\ (0.008)} & \makecell{6.875 \\ (0.122)} & \makecell{0.919 \\ (0.004)} & \makecell{7.531 \\ (0.094)} \\
 & Standard & \makecell{0.904 \\ (0.003)} & \textcolor{red}{\makecell{4.849 \\ (0.116)}} & \makecell{0.769 \\ (0.009)} & \makecell{6.829 \\ (0.134)} & \makecell{0.918 \\ (0.008)} & \makecell{6.628 \\ (0.184)} & \textcolor{red}{\makecell{0.988 \\ (0.002)}} & \makecell{3.129 \\ (0.140)} \\
\bottomrule
\end{tabular}

\end{table}

\FloatBarrier

\section*{NeurIPS Paper Checklist}

\begin{enumerate}

\item {\bf Claims}
    \item[] Question: Do the main claims made in the abstract and introduction accurately reflect the paper's contributions and scope?
    \item[] Answer: \answerYes{} 
    \item[] Justification: The main claims made in the abstract and introduction accurately reflect the paper’s contributions and scope. 
    \item[] Guidelines:
    \begin{itemize}
        \item The answer \answerNA{} means that the abstract and introduction do not include the claims made in the paper.
        \item The abstract and/or introduction should clearly state the claims made, including the contributions made in the paper and important assumptions and limitations. A \answerNo{} or \answerNA{} answer to this question will not be perceived well by the reviewers. 
        \item The claims made should match theoretical and experimental results, and reflect how much the results can be expected to generalize to other settings. 
        \item It is fine to include aspirational goals as motivation as long as it is clear that these goals are not attained by the paper. 
    \end{itemize}

\item {\bf Limitations}
    \item[] Question: Does the paper discuss the limitations of the work performed by the authors?
    \item[] Answer: \answerYes{} 
    \item[] Justification: This paper discusses the limitations of the work in the Discussion section.
    \item[] Guidelines:
    \begin{itemize}
        \item The answer \answerNA{} means that the paper has no limitation while the answer \answerNo{} means that the paper has limitations, but those are not discussed in the paper. 
        \item The authors are encouraged to create a separate ``Limitations'' section in their paper.
        \item The paper should point out any strong assumptions and how robust the results are to violations of these assumptions (e.g., independence assumptions, noiseless settings, model well-specification, asymptotic approximations only holding locally). The authors should reflect on how these assumptions might be violated in practice and what the implications would be.
        \item The authors should reflect on the scope of the claims made, e.g., if the approach was only tested on a few datasets or with a few runs. In general, empirical results often depend on implicit assumptions, which should be articulated.
        \item The authors should reflect on the factors that influence the performance of the approach. For example, a facial recognition algorithm may perform poorly when image resolution is low or images are taken in low lighting. Or a speech-to-text system might not be used reliably to provide closed captions for online lectures because it fails to handle technical jargon.
        \item The authors should discuss the computational efficiency of the proposed algorithms and how they scale with dataset size.
        \item If applicable, the authors should discuss possible limitations of their approach to address problems of privacy and fairness.
        \item While the authors might fear that complete honesty about limitations might be used by reviewers as grounds for rejection, a worse outcome might be that reviewers discover limitations that aren't acknowledged in the paper. The authors should use their best judgment and recognize that individual actions in favor of transparency play an important role in developing norms that preserve the integrity of the community. Reviewers will be specifically instructed to not penalize honesty concerning limitations.
    \end{itemize}

\item {\bf Theory assumptions and proofs}
    \item[] Question: For each theoretical result, does the paper provide the full set of assumptions and a complete (and correct) proof?
    \item[] Answer: \answerYes{} 
    \item[] Justification: All assumptions are stated in the paper and all proofs are provided in the Appendix. 
    \item[] Guidelines:
    \begin{itemize}
        \item The answer \answerNA{} means that the paper does not include theoretical results. 
        \item All the theorems, formulas, and proofs in the paper should be numbered and cross-referenced.
        \item All assumptions should be clearly stated or referenced in the statement of any theorems.
        \item The proofs can either appear in the main paper or the supplemental material, but if they appear in the supplemental material, the authors are encouraged to provide a short proof sketch to provide intuition. 
        \item Inversely, any informal proof provided in the core of the paper should be complemented by formal proofs provided in appendix or supplemental material.
        \item Theorems and Lemmas that the proof relies upon should be properly referenced. 
    \end{itemize}

    \item {\bf Experimental result reproducibility}
    \item[] Question: Does the paper fully disclose all the information needed to reproduce the main experimental results of the paper to the extent that it affects the main claims and/or conclusions of the paper (regardless of whether the code and data are provided or not)?
    \item[] Answer: \answerYes{} 
    \item[] Justification: The paper fully discloses all information needed to reproduce the main experimental results, and the implementation code is also provided.
    \item[] Guidelines:
    \begin{itemize}
        \item The answer \answerNA{} means that the paper does not include experiments.
        \item If the paper includes experiments, a \answerNo{} answer to this question will not be perceived well by the reviewers: Making the paper reproducible is important, regardless of whether the code and data are provided or not.
        \item If the contribution is a dataset and\slash or model, the authors should describe the steps taken to make their results reproducible or verifiable. 
        \item Depending on the contribution, reproducibility can be accomplished in various ways. For example, if the contribution is a novel architecture, describing the architecture fully might suffice, or if the contribution is a specific model and empirical evaluation, it may be necessary to either make it possible for others to replicate the model with the same dataset, or provide access to the model. In general. releasing code and data is often one good way to accomplish this, but reproducibility can also be provided via detailed instructions for how to replicate the results, access to a hosted model (e.g., in the case of a large language model), releasing of a model checkpoint, or other means that are appropriate to the research performed.
        \item While NeurIPS does not require releasing code, the conference does require all submissions to provide some reasonable avenue for reproducibility, which may depend on the nature of the contribution. For example
        \begin{enumerate}
            \item If the contribution is primarily a new algorithm, the paper should make it clear how to reproduce that algorithm.
            \item If the contribution is primarily a new model architecture, the paper should describe the architecture clearly and fully.
            \item If the contribution is a new model (e.g., a large language model), then there should either be a way to access this model for reproducing the results or a way to reproduce the model (e.g., with an open-source dataset or instructions for how to construct the dataset).
            \item We recognize that reproducibility may be tricky in some cases, in which case authors are welcome to describe the particular way they provide for reproducibility. In the case of closed-source models, it may be that access to the model is limited in some way (e.g., to registered users), but it should be possible for other researchers to have some path to reproducing or verifying the results.
        \end{enumerate}
    \end{itemize}

\item {\bf Open access to data and code}
    \item[] Question: Does the paper provide open access to the data and code, with sufficient instructions to faithfully reproduce the main experimental results, as described in supplemental material?
    \item[] Answer: \answerYes{} 
    \item[] Justification: This paper provides open access to the code required to reproduce the main experimental results, along with detailed information on how to preprocess the public benchmark data. 
    \item[] Guidelines:
    \begin{itemize}
        \item The answer \answerNA{} means that paper does not include experiments requiring code.
        \item Please see the NeurIPS code and data submission guidelines (\url{https://neurips.cc/public/guides/CodeSubmissionPolicy}) for more details.
        \item While we encourage the release of code and data, we understand that this might not be possible, so \answerNo{} is an acceptable answer. Papers cannot be rejected simply for not including code, unless this is central to the contribution (e.g., for a new open-source benchmark).
        \item The instructions should contain the exact command and environment needed to run to reproduce the results. See the NeurIPS code and data submission guidelines (\url{https://neurips.cc/public/guides/CodeSubmissionPolicy}) for more details.
        \item The authors should provide instructions on data access and preparation, including how to access the raw data, preprocessed data, intermediate data, and generated data, etc.
        \item The authors should provide scripts to reproduce all experimental results for the new proposed method and baselines. If only a subset of experiments are reproducible, they should state which ones are omitted from the script and why.
        \item At submission time, to preserve anonymity, the authors should release anonymized versions (if applicable).
        \item Providing as much information as possible in supplemental material (appended to the paper) is recommended, but including URLs to data and code is permitted.
    \end{itemize}

\item {\bf Experimental setting/details}
    \item[] Question: Does the paper specify all the training and test details (e.g., data splits, hyperparameters, how they were chosen, type of optimizer) necessary to understand the results?
    \item[] Answer: \answerYes{}
    \item[] Justification: The paper specifies all the training and test details necessary to understand the
results. 
    \item[] Guidelines:
    \begin{itemize}
        \item The answer \answerNA{} means that the paper does not include experiments.
        \item The experimental setting should be presented in the core of the paper to a level of detail that is necessary to appreciate the results and make sense of them.
        \item The full details can be provided either with the code, in appendix, or as supplemental material.
    \end{itemize}

\item {\bf Experiment statistical significance}
    \item[] Question: Does the paper report error bars suitably and correctly defined or other appropriate information about the statistical significance of the experiments?
    \item[] Answer: \answerYes{}
    \item[] Justification: The paper reports error bars or standard deviations where appropriate. 
    \item[] Guidelines:
    \begin{itemize}
        \item The answer \answerNA{} means that the paper does not include experiments.
        \item The authors should answer \answerYes{} if the results are accompanied by error bars, confidence intervals, or statistical significance tests, at least for the experiments that support the main claims of the paper.
        \item The factors of variability that the error bars are capturing should be clearly stated (for example, train/test split, initialization, random drawing of some parameter, or overall run with given experimental conditions).
        \item The method for calculating the error bars should be explained (closed form formula, call to a library function, bootstrap, etc.)
        \item The assumptions made should be given (e.g., Normally distributed errors).
        \item It should be clear whether the error bar is the standard deviation or the standard error of the mean.
        \item It is OK to report 1-sigma error bars, but one should state it. The authors should preferably report a 2-sigma error bar than state that they have a 96\% CI, if the hypothesis of Normality of errors is not verified.
        \item For asymmetric distributions, the authors should be careful not to show in tables or figures symmetric error bars that would yield results that are out of range (e.g., negative error rates).
        \item If error bars are reported in tables or plots, the authors should explain in the text how they were calculated and reference the corresponding figures or tables in the text.
    \end{itemize}

\item {\bf Experiments compute resources}
    \item[] Question: For each experiment, does the paper provide sufficient information on the computer resources (type of compute workers, memory, time of execution) needed to reproduce the experiments?
    \item[] Answer: \answerYes{} 
    \item[] Justification: The paper provides information on the computer resources in the Appendix. 
    \item[] Guidelines:
    \begin{itemize}
        \item The answer \answerNA{} means that the paper does not include experiments.
        \item The paper should indicate the type of compute workers CPU or GPU, internal cluster, or cloud provider, including relevant memory and storage.
        \item The paper should provide the amount of compute required for each of the individual experimental runs as well as estimate the total compute. 
        \item The paper should disclose whether the full research project required more compute than the experiments reported in the paper (e.g., preliminary or failed experiments that didn't make it into the paper). 
    \end{itemize}
    
\item {\bf Code of ethics}
    \item[] Question: Does the research conducted in the paper conform, in every respect, with the NeurIPS Code of Ethics \url{https://neurips.cc/public/EthicsGuidelines}?
    \item[] Answer: \answerYes{} 
    \item[] Justification:  The research conducted in the paper conforms, in every respect, with the NeurIPS Code of Ethics. 
    \item[] Guidelines:
    \begin{itemize}
        \item The answer \answerNA{} means that the authors have not reviewed the NeurIPS Code of Ethics.
        \item If the authors answer \answerNo, they should explain the special circumstances that require a deviation from the Code of Ethics.
        \item The authors should make sure to preserve anonymity (e.g., if there is a special consideration due to laws or regulations in their jurisdiction).
    \end{itemize}

\item {\bf Broader impacts}
    \item[] Question: Does the paper discuss both potential positive societal impacts and negative societal impacts of the work performed?
    \item[] Answer: \answerYes{} 
    \item[] Justification: This work is motivated by the need to maintain reliable uncertainty estimates for machine learning models deployed under distribution shift, a challenge with broad potential impact across real-world applications. However, the paper focuses on developing a general methodological framework rather than targeting any specific application domain.
    \item[] Guidelines:
    \begin{itemize}
        \item The answer \answerNA{} means that there is no societal impact of the work performed.
        \item If the authors answer \answerNA{} or \answerNo, they should explain why their work has no societal impact or why the paper does not address societal impact.
        \item Examples of negative societal impacts include potential malicious or unintended uses (e.g., disinformation, generating fake profiles, surveillance), fairness considerations (e.g., deployment of technologies that could make decisions that unfairly impact specific groups), privacy considerations, and security considerations.
        \item The conference expects that many papers will be foundational research and not tied to particular applications, let alone deployments. However, if there is a direct path to any negative applications, the authors should point it out. For example, it is legitimate to point out that an improvement in the quality of generative models could be used to generate Deepfakes for disinformation. On the other hand, it is not needed to point out that a generic algorithm for optimizing neural networks could enable people to train models that generate Deepfakes faster.
        \item The authors should consider possible harms that could arise when the technology is being used as intended and functioning correctly, harms that could arise when the technology is being used as intended but gives incorrect results, and harms following from (intentional or unintentional) misuse of the technology.
        \item If there are negative societal impacts, the authors could also discuss possible mitigation strategies (e.g., gated release of models, providing defenses in addition to attacks, mechanisms for monitoring misuse, mechanisms to monitor how a system learns from feedback over time, improving the efficiency and accessibility of ML).
    \end{itemize}
    
\item {\bf Safeguards}
    \item[] Question: Does the paper describe safeguards that have been put in place for responsible release of data or models that have a high risk for misuse (e.g., pre-trained language models, image generators, or scraped datasets)?
    \item[] Answer: \answerNA{} 
    \item[] Justification: This paper does not release or introduce data or models with a high risk of misuse. 
    \item[] Guidelines:
    \begin{itemize}
        \item The answer \answerNA{} means that the paper poses no such risks.
        \item Released models that have a high risk for misuse or dual-use should be released with necessary safeguards to allow for controlled use of the model, for example by requiring that users adhere to usage guidelines or restrictions to access the model or implementing safety filters. 
        \item Datasets that have been scraped from the Internet could pose safety risks. The authors should describe how they avoided releasing unsafe images.
        \item We recognize that providing effective safeguards is challenging, and many papers do not require this, but we encourage authors to take this into account and make a best faith effort.
    \end{itemize}

\item {\bf Licenses for existing assets}
    \item[] Question: Are the creators or original owners of assets (e.g., code, data, models), used in the paper, properly credited and are the license and terms of use explicitly mentioned and properly respected?
    \item[] Answer:\answerYes{} 
    \item[] Justification: This paper uses open-domain data, properly crediting the license and creators.
    \item[] Guidelines:
    \begin{itemize}
        \item The answer \answerNA{} means that the paper does not use existing assets.
        \item The authors should cite the original paper that produced the code package or dataset.
        \item The authors should state which version of the asset is used and, if possible, include a URL.
        \item The name of the license (e.g., CC-BY 4.0) should be included for each asset.
        \item For scraped data from a particular source (e.g., website), the copyright and terms of service of that source should be provided.
        \item If assets are released, the license, copyright information, and terms of use in the package should be provided. For popular datasets, \url{paperswithcode.com/datasets} has curated licenses for some datasets. Their licensing guide can help determine the license of a dataset.
        \item For existing datasets that are re-packaged, both the original license and the license of the derived asset (if it has changed) should be provided.
        \item If this information is not available online, the authors are encouraged to reach out to the asset's creators.
    \end{itemize}

\item {\bf New assets}
    \item[] Question: Are new assets introduced in the paper well documented and is the documentation provided alongside the assets?
    \item[] Answer: \answerYes{}
    \item[] Justification: The new code accompanying this paper is documented. 
    \item[] Guidelines:
    \begin{itemize}
        \item The answer \answerNA{} means that the paper does not release new assets.
        \item Researchers should communicate the details of the dataset\slash code\slash model as part of their submissions via structured templates. This includes details about training, license, limitations, etc. 
        \item The paper should discuss whether and how consent was obtained from people whose asset is used.
        \item At submission time, remember to anonymize your assets (if applicable). You can either create an anonymized URL or include an anonymized zip file.
    \end{itemize}

\item {\bf Crowdsourcing and research with human subjects}
    \item[] Question: For crowdsourcing experiments and research with human subjects, does the paper include the full text of instructions given to participants and screenshots, if applicable, as well as details about compensation (if any)? 
    \item[] Answer:  \answerNA{}
    \item[] Justification: This paper does not involve crowdsourcing experiments or research with human subjects. 
    \item[] Guidelines:
    \begin{itemize}
        \item The answer \answerNA{} means that the paper does not involve crowdsourcing nor research with human subjects.
        \item Including this information in the supplemental material is fine, but if the main contribution of the paper involves human subjects, then as much detail as possible should be included in the main paper. 
        \item According to the NeurIPS Code of Ethics, workers involved in data collection, curation, or other labor should be paid at least the minimum wage in the country of the data collector. 
    \end{itemize}

\item {\bf Institutional review board (IRB) approvals or equivalent for research with human subjects}
    \item[] Question: Does the paper describe potential risks incurred by study participants, whether such risks were disclosed to the subjects, and whether Institutional Review Board (IRB) approvals (or an equivalent approval/review based on the requirements of your country or institution) were obtained?
    \item[] Answer: \answerNA{} 
    \item[] Justification: This paper does not involve research with human subjects. 
    \item[] Guidelines:
    \begin{itemize}
        \item The answer \answerNA{} means that the paper does not involve crowdsourcing nor research with human subjects.
        \item Depending on the country in which research is conducted, IRB approval (or equivalent) may be required for any human subjects research. If you obtained IRB approval, you should clearly state this in the paper. 
        \item We recognize that the procedures for this may vary significantly between institutions and locations, and we expect authors to adhere to the NeurIPS Code of Ethics and the guidelines for their institution. 
        \item For initial submissions, do not include any information that would break anonymity (if applicable), such as the institution conducting the review.
    \end{itemize}

\item {\bf Declaration of LLM usage}
    \item[] Question: Does the paper describe the usage of LLMs if it is an important, original, or non-standard component of the core methods in this research? Note that if the LLM is used only for writing, editing, or formatting purposes and does \emph{not} impact the core methodology, scientific rigor, or originality of the research, declaration is not required.
    \item[] Answer:\answerNA{} 
    \item[] Justification: LLM does not impact the core methodology, scientific rigor, or originality of the research. 
    \item[] Guidelines:
    \begin{itemize}
        \item The answer \answerNA{} means that the core method development in this research does not involve LLMs as any important, original, or non-standard components.
        \item Please refer to our LLM policy in the NeurIPS handbook for what should or should not be described.
    \end{itemize}

\end{enumerate}

\end{document}